\definecolor{PythonGreen}{HTML}{2ca02c}
\newtcolorbox{summarybox}{
  colback=blue!5,          % light background
  colframe=blue!60!black,  % border color
  boxrule=2pt,           % border thickness
  arc=4pt,                 % rounded corners
  left=2pt,
  right=2pt,
  top=3pt,
  bottom=3pt,
  halign=center,
}
\theoremstyle{plain}
\newtheorem{theorem}{Theorem}
\newtheorem{proposition}{Proposition}
\newtheorem{lemma}{Lemma}
\newtheorem{corollary}{Corollary}
\theoremstyle{definition}
\newtheorem{definition}{Definition}
\newtheorem{assumption}{Assumption}
\theoremstyle{remark}
\newtheorem{remark}{Remark}
\newlist{todolist}{itemize}{2}
\setlist[todolist]{label=$\square$}
\newcommand{\red}[1]{\textcolor{red}{#1}}
\newcommand{\E}{\mathbb{E}}
\newcommand{\R}{\mathbb{R}}
\newcommand{\N}{\mathbb{N}}
\newcommand{\x}{\mathbf{x}}
\newcommand{\lmax}{\lambda_{\max}}
\newcommand{\mednorm}[1]{\lVert #1\rVert}
\newcommand{\eos}{\textsc{EoS}}
\DeclareMathOperator{\tr}{tr}
\DeclareMathOperator{\vect}{vec}
\title{
\vspace{-1cm}
\textbf{Edge of Stochastic Stability:\\Revisiting the Edge of Stability for SGD}}
\author{%
  Arseniy~Andreyev\thanks{Equal contribution. \textit{Affiliation:} Princeton University. \textit{Emails:} \href{mailto:andreyev@princeton.edu}{\texttt{andreyev@princeton.edu}} and \href{mailto:pierb@mit.edu}{\texttt{pierb@mit.edu}}.
  \\
  Code available at
  \href{https://github.com/arseniqum/edge-of-stochastic-stability}{github.com/arseniqum/edge-of-stochastic-stability.}
  Difference between versions discussed in Sec.\ \ref{section:conclusions}.}
  \and
  Pierfrancesco~Beneventano\footnotemark[1]\,
}
\date{}
\begin{document}
\maketitle

\vspace{-0.5cm}
\begin{abstract}
    Recent findings by \citet{cohen_gradient_2021} demonstrate that when training neural networks using full-batch gradient descent with step size $\eta$, the largest eigenvalue~$\lambda_{\max}$ of the full-batch Hessian consistently stabilizes 
    around $2/\eta$.
    These results have significant implications for convergence and generalization. 
    This, however, is not the case for mini-batch optimization algorithms, limiting the broader applicability of the consequences of these findings.
    We show that mini-batch Stochastic Gradient Descent (SGD) trains in a different regime, which we term Edge of Stochastic Stability (\textsc{EoSS}).
    In this regime, what stabilizes at $2/\eta$ is \emph{Batch Sharpness}: the expected directional curvature of mini-batch Hessians along their corresponding stochastic gradients.
    As a consequence, $\lambda_{\max}$---which is generally smaller than \emph{Batch Sharpness}---is suppressed, aligning with the long-standing empirical observation that smaller batches and larger step sizes favor flatter minima.
    We further discuss implications for mathematical modeling of SGD trajectories.
\end{abstract}

%%%%%%%%%%%%%%%%%%%%%%%%%%%%%%%%%%%%%%%%%%%%%%%%%%%%%%%%%%%%%%%%%%%%%%%%%%%%%%%%
%%                                Intro                                      %%
%%%%%%%%%%%%%%%%%%%%%%%%%%%%%%%%%%%%%%%%%%%%%%%%%%%%%%%%%%%%%%%%%%%%%%%%%%%%%%%%

\vspace{-0.55cm}
\section{Introduction}
\label{section:introduction}
\vspace{-0.4cm}
The choice of training algorithm is a key ingredient in the deep learning recipe. 
Extensive evidence, e.g.\ \citet{keskar_large-batch_2016}, indeed shows that performance consistently depends on the optimizer and hyperparameters. 
% How and why this is the case is a key question in optimization theory and deep learning.
What machinery induces this optimizer‑dependence is a central question
in the
% of 
theory of deep learning.

\citet{cohen_gradient_2021,cohen_understanding_2024} answered this question for Gradient Descent (GD): it optimizes neural networks in a regime of instability---termed the Edge of Stability (\textsc{EoS}) \citep{cohen_gradient_2021}---following the Central Flow \citep{cohen_understanding_2024}. With a constant step size $\eta$, the highest eigenvalue of the Hessian of the full-batch loss---denoted here as $\lambda_{\max}$---increases towards $2/\eta$ and hovers just above it, subject to small oscillations
\citep{cohen_gradient_2021,cohen_adaptive_2022,jastrzebski_relation_2019,jastrzebski_break-even_2020,xing_walk_2018}.
Although classical convex optimization theory would deem this step size ``too large’’, the loss continues to decrease.
These works established a number of surprising facts:
\textbf{(1)} we require an optimization theory which works in more general scenarios than the classical\footnote{$L$ as in $L$-smooth function.} $\eta < 2/L$;
% \textbf{(2)} that GD for neural networks may not stabilize at stationary points but when $\lambda_{\max}=2/\eta$; \red{it can still be a stationary point, so I do not think it is valid}
% \textbf{(2)} what a source of instability of (pre-)training is; 
\textbf{(2)} that NN training happens in a special regime of instability, establishing what the source of it is; 
% progressive sharpening;
\textbf{(3)} how the solution found depends on the choice of hyperparameters.

\vspace{-0.45cm}
\paragraph{Contribution: \textsc{EoSS}.}
Real-world training is almost always \emph{mini-batch}---given the large amounts of data. However, existing \textsc{EoS} analyses \textbf{explicitly state} they do not apply to this case. It remains unclear whether mini-batch optimizers exhibits an analogous stability-saturated regime and which curvature statistic would govern it. We bridge this gap by establishing that for common vision classification tasks with MSE:
\begin{summarybox}
    \textbf{Mini-batch SGD trains in a regime of instability akin to \textsc{EoS}} which we term the Edge of Stochastic Stability (\textsc{EoSS}). Precisely, \textit{Batch Sharpness}, our notion of curvature,
\vspace{-0.25cm}
\[
\textit{Batch Sharpness}\,(\theta)
\ :=\
\E_{B \sim \mathcal{P}_b}\bigg[
\frac{
\nabla L_B(\theta)^\top\,\mathcal{H}(L_B)\,\nabla L_B(\theta)
}{
\|\nabla L_B(\theta)\|^2
}
\bigg], \qquad \substack{\text{with } L_B \text{being loss on the}\\ \text{batch }B\text{ sampled from }\mathcal{P}_b.
\\ \mathcal{H}(\cdot)\text{ Hessian matrix.}}
\vspace{-0.2cm}
\]
hovers around $2/\eta$ and implicitly functions as sharpness for SGD. This implies that:\\ \textbf{stability for SGD is stability on the mini-batch landscape.} 
\end{summarybox}

\begin{figure}[ht!]
\vspace{-0.4cm}
  \centering
  \includegraphics[width=\linewidth]{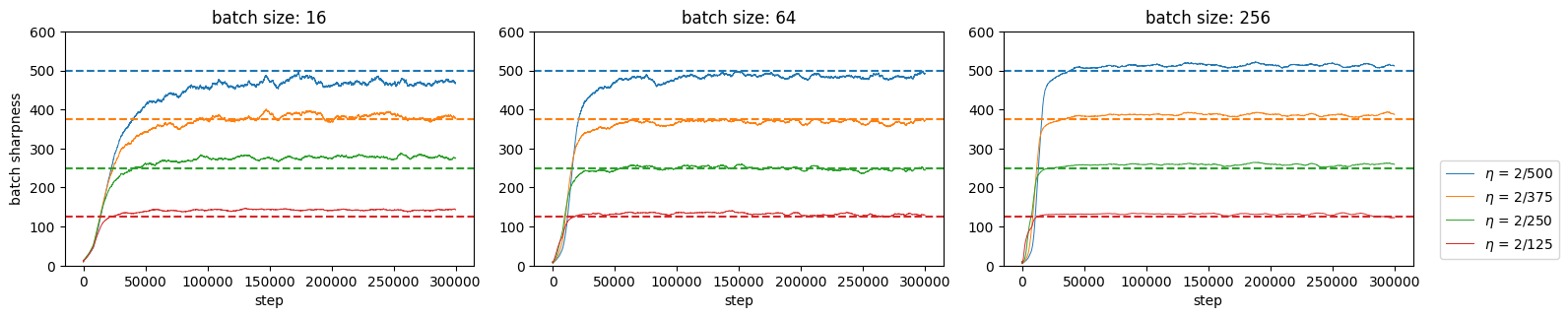}
  \vspace{-0.3cm}
  \caption{
  \small
    \textbf{SGD at \textsc{EoSS} under different step sizes and batch sizes.} 
    MLP on an 8k subset of CIFAR-10 with step size $\eta$. 
    \emph{Batch Sharpness} stabilizes at the $2/\eta$ threshold across varying batch sizes and step sizes.
  }
  \label{fig:1b_which_is_a_self-explanatory_name}
  \vspace{-0.3cm}
\end{figure}

\paragraph{From \textsc{EoS} to SGD: why oscillations are not the story.}
In full-batch GD, the \textsc{EoS} regime is easy to recognize: the loss enters a visibly oscillatory phase at the same time that the full-batch curvature approaches the quadratic stability boundary $\lambda_{\max}\approx 2/\eta$.
Mini-batch SGD fundamentally breaks this diagnostic.
With a non-vanishing step size, SGD typically exhibits loss oscillations essentially from the start---even in regimes that are perfectly stable in the classical stochastic-approximation sense.
Thus, for SGD, \emph{the presence of oscillations is not evidence of operating near an instability boundary}.

This motivates a key reframing.
Rather than defining an ``edge'' via loss plots, we define it via \emph{instability}: would the optimizer \emph{diverge} if we froze the local geometry and ran the algorithm on the resulting quadratic approximation?
In Section~\ref{section:two_types_of_oscillations} we formalize this distinction by separating
(i) \emph{noise-driven} oscillations (stable wandering induced by gradient noise) from
(ii) \emph{curvature-driven} oscillations (the hallmark of \textsc{Eo(S)S}-type behavior).
The purpose of this paper is to identify the latter regime for SGD and to isolate the curvature quantity that governs it---both mathematically (Sections~\ref{section:framework} and \ref{section:math}) and empirically (Section~\ref{section:eoss}).

\begin{summarybox}
\textbf{Why this extension is genuinely difficult.}
For stochastic dynamics, ``instability'' is not simply the negation of a standard one-step stability bound:
there are stable oscillatory regimes, multiple inequivalent Lyapunov viewpoints, and sharp quadratic stability conditions that are typically high-dimensional and not empirically trackable in modern networks.
We therefore need a \emph{computable scalar} quantity that (i) serves as a \emph{valid instability criterion} on the local quadratic model and (ii) is \emph{actually driven to saturation} along real SGD training trajectories.
\end{summarybox}

\paragraph{An instability-centric framework and a practical test.}
Section~\ref{section:framework} introduces a minimal framework built around \emph{certificates of instability}:
scalar tests whose violation is sufficient to force divergence on the local quadratic approximation.
This is deliberately one-sided.
A certificate of instability is \emph{not} the same object as a certificate of monotone descent, and this asymmetry matters for SGD, where stable non-monotone regimes are common.

A key advantage of phrasing \textsc{Eo(S)S} in terms of instability is that it yields an empirical diagnostic.
On the quadratic approximation, three phenomena coincide (Section~\ref{section:equivalence}):
breaking a valid instability criterion,
observing a \emph{catapult} (a rapid excursion out of the trusted quadratic neighborhood),
and observing a sufficiently large loss spike.
This equivalence lets us test whether training sits at an ``edge'' via \emph{hyperparameter interventions}:
if a quantity is truly saturated at its stability boundary, then a small destabilizing perturbation
(increasing $\eta$ or decreasing $b$) must trigger a catapult on the quadratic model.
In Section~\ref{section:math}, we instantiate this program on \textit{Batch Sharpness}---which we empirically identified in Section \ref{section:eoss}.

\begin{summarybox}
\textbf{Operational signature of an edge.}
At \textsc{Eo(S)S}, the dynamics is \emph{locally precarious}:\\
small destabilizing changes to $(\eta,b)$ induce catapults (and loss spikes) on the quadratic approximation, whereas the same changes are harmless earlier in training.
\end{summarybox}

\paragraph{\textsc{EoSS}: the edge quantity for SGD.}
Guided by this framework, we identify the curvature statistic that plays for SGD the role that $\lambda_{\max}$ plays for full-batch GD.
The key observation is that SGD does not update on a single loss landscape: each step sees its own mini-batch loss $L_B$, with its own gradient and curvature.
Accordingly, the relevant edge quantity must be \emph{mini-batch} and \emph{direction-aware}.
This leads to \emph{Batch Sharpness} (Definition~\ref{def:minibs}), the expected Rayleigh quotient of the mini-batch Hessian along the mini-batch gradient direction.

Empirically, across the vision-classification settings we study (with MSE), Batch Sharpness undergoes progressive sharpening and then hovers near the quadratic threshold $2/\eta$
(Figure~\ref{fig:1b_which_is_a_self-explanatory_name} and Section~\ref{section:eoss}).
Moreover, when we intervene on $(\eta,b)$ mid-training, catapults and renewed sharpening occur exactly when predicted by Batch Sharpness (Section~\ref{section:eoss_natural_generalization}).
Crucially, this is not only an empirical correlation: Section~\ref{section:math} proves that on the quadratic approximation, crossing $2/\eta$ (by any fixed margin) makes Batch Sharpness a \emph{genuine instability criterion} for SGD and forces catapults (Theorem~\ref{theo:1}).
Section~\ref{section:math} also clarifies why this criterion is:
\begin{itemize}[itemsep=0.15em,topsep=0.15em,parsep=0pt]
\vspace{-0.1cm}
\item \textbf{based on mini-batch gradient norms} (Section~\ref{section:stab_grad}), 
\item \textbf{about higher cumulant information of the distribution of the Hessians} (Section~\ref{section:stab_minibs}), and 
\item \textbf{explicitly directional}, encoding curvature--alignment interactions (Section~\ref{section:alignment}).
\end{itemize}
\vspace{-0.2cm}

\paragraph{What becomes of $\lambda_{\max}$.}
A natural question is what survives from the original \textsc{EoS} story built around the full-batch Hessian.
Section~\ref{section:lambda_max} shows that once \textsc{EoSS} is reached, the subsequent evolution of $\lambda_{\max}$ is largely a \emph{by-product} of Batch Sharpness saturation:
$\lambda_{\max}$ stops increasing when Batch Sharpness plateaus, and its eventual level depends strongly on batch size and on training history.
In particular, we provide a concrete mechanism behind the long-observed link between small-batch training and ``flatter'' solutions.
\vspace{-0.2cm}

\paragraph{Implications.}
\textsc{EoSS} makes the informal question “is the learning rate large?” operational for mini-batch training: with constant $\eta$, the relevant stability comparator is not the full-batch $\lambda_{\max}$, but the mini-batch instability threshold $\textit{Batch Sharpness}(\theta)\approx 2/\eta$ (which is a genuine one-sided certificate of divergence on the quadratic approximation; Section~\ref{section:math}).
This has two consequences.
\emph{First (optimization):} classical descent-lemma proof templates based on uniform smoothness (full-batch, or worst-case over mini-batches) do not describe the regime in which SGD operates under progressive sharpening:
training can make progress while locally saturating a mini-batch instability boundary
(Section~\ref{section:noise_vs_sgd}).
\emph{Second (location/modeling):} stabilization is governed by batch-dependent curvature and alignment, so “GD + noise” surrogates (noise injection and diffusion/SDE limits) that discard this mini-batch geometry can follow qualitatively different trajectories and stabilize in different regions of parameter space (Section~\ref{sec:location_distributional}).
\vspace{-0.2cm}

\paragraph{Contributions (informal).}
\vspace{-0.15cm}
\begin{itemize}[leftmargin=1.6em,itemsep=0.15em,topsep=0.15em,parsep=0pt]
    \item We show that, for constant-step-size SGD, loss oscillations are essentially ubiquitous and therefore not a diagnostic of operating at an instability boundary; we formalize the distinction between noise-driven and curvature-driven oscillations (Section~\ref{section:two_types_of_oscillations}).
    \item We introduce an instability-centric framework based on one-sided \emph{certificates of instability}, and derive an intervention-based diagnostic via the equivalence between criterion violation, catapults, and sufficiently large loss spikes on the quadratic approximation (Sections~\ref{section:framework}--\ref{section:equivalence}).
    \item We identify \emph{Batch Sharpness} as a direction-aware mini-batch curvature statistic and show empirically that it progressively sharpens and then saturates near $2/\eta$ across the vision-classification settings we study, governing the onset of \textsc{EoSS} (Section~\ref{section:eoss}).
    \item We characterize why smaller batch sizes lead to flatter solutions as a by-product of \textsc{EoSS} (Section~\ref{section:lambda_max}).
    % \item We discuss implications for step-size interpretation, for “flatness” phenomena, and for the limitations of noise-injected and diffusion-based models of SGD in the progressive-sharpening regime (Section~\ref{section:noise_vs_sgd}).
    \item \textsc{EoSS} places practical SGD trajectories outside the scope of classical descent-type lemmas (Section~\ref{section:noise_vs_sgd}).
    \item We show that in the progressive-sharpening regime, “where SGD stabilizes” becomes a distributional stability question governed by batch-dependent curvature and alignment, exposing concrete limitations of “GD + noise” and diffusion/SDE surrogates for predicting SGD’s implicit bias (Section~\ref{sec:location_distributional}).
\end{itemize}

\clearpage

%%%%%%%%%%%%%%%%%%%%%%%%%%%%%%%%%%%%%%%%%%%%%%%%%%%%%%%%%%%%%%%%%%%%%%%%%%%%%%%%
%%                             Related Work                                  %%
%%%%%%%%%%%%%%%%%%%%%%%%%%%%%%%%%%%%%%%%%%%%%%%%%%%%%%%%%%%%%%%%%%%%%%%%%%%%%%%%

\section{Related Work}
\label{section:related_work}

\paragraph{Progressive sharpening.} 
Early studies observed that the local shape of the loss landscape changes rapidly at the beginning of the training \citep{keskar_large-batch_2016,jastrzebski_relation_2019,lecun_efficient_2012,achille_critical_2017,jastrzebski_three_2018,fort_emergent_2019}.
Multiple papers noticed growth of different estimators of $\lambda_{\max}$ in the early training \citep{keskar_large-batch_2016,jastrzebski_relation_2019,sagun_eigenvalues_2016,fort_goldilocks_2019}.
Subsequently, \citet{jastrzebski_relation_2019,jastrzebski_break-even_2020} and \citet{cohen_gradient_2021} precisely characterized this behavior, demonstrating a steady rise in $\lambda_{\max}$ along GD and SGD trajectories, typically following a brief initial decline.
This phenomenon was termed \textit{progressive sharpening} by \citep{cohen_gradient_2021}.

\paragraph{A phase transition.}
Early works \citep{goodfellow_deep_2016,li_towards_2019,jiang_fantastic_2019,lewkowycz_large_2020} revealed that large initial step sizes often enhance generalization at the cost of initial loss reduction.
\citet{jastrzebski_break-even_2020} attributed this to a sudden phase transition, termed the break-even point, marking the end of progressive sharpening. This transition slows the convergence by confining the dynamics to a region of lower sharpness.
Unlike progressive sharpening, this phenomenon is considered to be induced by the gradient-based algorithm becoming unstable, not by the landscape.
\citet{jastrzebski_relation_2019,jastrzebski_break-even_2020,cohen_gradient_2021,cohen_adaptive_2022} indeed showed that the phase transition comes at different points for different algorithms on the same landscapes. 
\citet{jastrzebski_relation_2019,jastrzebski_break-even_2020} showed that in the case of mini-batch SGD this phase comes earlier for bigger step sizes and smaller batch sizes, without quantifying it.
\citet{cohen_gradient_2021,cohen_adaptive_2022} later showed indeed that it comes at the instability thresholds for the full-batch optimization algorithms. 
We manage here to quantify the value of the instability threshold for mini-batch SGD, thus characterizing when the phase transition happens for SGD.

\paragraph{Full-batch edge of stability.}
After the phase transition above, GD and full-batch Adam train in the \textsc{EoS} oscillatory regime \citep{cohen_gradient_2021,cohen_adaptive_2022,xing_walk_2018}, where the $\lambda_{\max}$ stabilizes and oscillates around a characteristic value.
The name is due to the fact that, in the case of full-batch GD, the $\lambda_{\max}$
hovers at $2/\eta$ which is the stability threshold for optimizing quadratics.
Observations from \citet{cohen_gradient_2021,cohen_adaptive_2022} indicate that, under mean square error (MSE), most training dynamics occur within this regime, effectively determining $\lambda_{\max}$ of the final solution.
\citet{chen_beyond_2023,lee_new_2023} explained why in this regime $\lambda_{\max}$ often slightly exceeds $2/\eta$: this deviation arises primarily from
nonlinearity of the loss gradient, which shifts the required value depending on higher-order derivatives.
% , and the \textsc{EoS} being governed by the Hessian along the gradient direction, rather than $\lambda_{\max}$ alone.

\paragraph{Existing \textsc{EoS} work is limited to full-batch methods.} 
A growing body of research analyzes the surprising mechanism underlying \textsc{EoS} dynamics observed during training with GD. 
Classically, when gradients depend linearly on parameters, divergence occurs locally if $\eta > \frac{2}{\lambda_{\max}}$, as illustrated by one-dimensional quadratic models \citep{cohen_gradient_2021}. In contrast, neural networks often converge despite violating this classical stability condition, presumably due to the problem's non-standard geometry.
\citet{damian_self-stabilization_2023} propose an explanation under some, empirically tested, assumptions of alignment of third derivatives and gradients. 
For further insights into convergence and implicit regularization phenomena in the \textsc{EoS} regime, we direct the reader to
\citep{arora_understanding_2022, wang_large_2022,ahn_understanding_2022,ahn_learning_2023,zhu_understanding_2023,lyu_understanding_2023,beneventano_gradient_2025,ghosh_learning_2025}.

\paragraph{The work on \textsc{EoS} is about full-batch methods.}
While the empirical behavior of \textsc{EoS} for full-batch algorithms is relatively well-understood, neural networks are predominantly trained using mini-batch methods. As explicitly noted by \citet[Section 6, Appendices G and H]{cohen_gradient_2021}, their observations and analysis do not directly apply to mini-batch training. In particular, they emphasize:
\vspace{-0.3cm}
\begin{quote}
    \setlength{\leftskip}{-0.4cm}   % positive   = extra indent
    \setlength{\rightskip}{-0.4cm}  % negative   = over-hang into margin
    \itshape
    [...] while the sharpness does not flatline at any value during SGD (as it does during gradient descent), the trajectory of the sharpness is heavily influenced by the step size and batch size \citet{jastrzebski_relation_2019,jastrzebski_break-even_2020}, which cannot be explained by existing optimization theory. Indeed, there are indications that the “Edge of Stability” intuition might generalize somehow to SGD, just in a way that does not center around the (full-batch) sharpness.
    [...]  In extending these findings to SGD, the question arises of how to model “stability” of SGD.
    \vspace{-0.3cm}
\end{quote}
In this work, we show that the \textsc{EoS} phenomenon does indeed generalize to SGD, and we identify the central quantity governing this generalization (\textit{Batch Sharpness} in Definition~\ref{def:minibs}). We model stability of SGD on the neural network landscapes: 
% our answer is 
our results show 
that empirically SGD is stable if \textit{on average} the step is stable on the mini-batch landscape---not on the full-batch landscape.

% \newpage
\paragraph{What was empirically known for SGD.}
\begin{wrapfigure}{r}{0.45\columnwidth}       % r = place on the right
    \vspace{-0.4cm}                             % trim top white-space
    \centering
    \includegraphics[width=\linewidth]{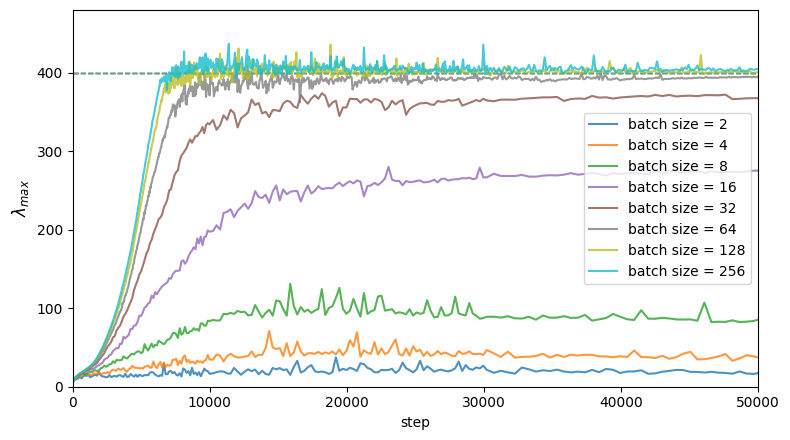}
    \vspace{-16pt}                             % tighten gap below graphic
    \caption{\small 
    \textbf{SGD on CIFAR-10:  $\eta = 1/400$.} 
    The full-batch Hessian’s $\lambda_{\max}$ plateaus \emph{below} $2/\eta$.
    Smaller batch sizes lead to lower plateau values.
    }
    \label{fig:1}
    \vspace{-0.2cm}                             % trim space before text resumes
\end{wrapfigure}

In the context of mini-batch algorithms, \textbf{(i)} \citet{jastrzebski_relation_2019,jastrzebski_break-even_2020} noticed that for SGD the phase transition happens earlier for larger $\eta$ or smaller batch size $b$, but they did not quantify when.
\textbf{(ii)} \citet{cohen_gradient_2021,gilmer_curvature_2021} established that initialization and architecture choices affect stability of SGD, without providing a definitive condition.  \textbf{(iii)} When $\lambda_{\max}$ stabilizes, that always happens at a level they could not quantify which is below the $2/\eta$ threshold \citep{cohen_gradient_2021,keskar_large-batch_2016},
see Figure \ref{fig:1}, often without a proper progressive sharpening phase.
This leaves the most basic questions open: 
\textit{In what way does the location of convergence of SGD acclimate to the choice of hyperparameters? What are the key quantities involved?}
To be more specific, can we characterize the training phenomena in \textbf{(i), (ii), (iii)} above? What determines them? Does SGD train in an unstable regime?

\paragraph{Previous works on SGD stability.}
A series of works,
\citep{wu_how_2018,ma_linear_2021,granizol_learning_2021,wu_sgd_alignment_2022,mulayoff_exact_2024},
studies constant‑step‑size SGD on quadratic losses via \emph{linear stability} analysis of the second moment of the parameters. 
% of the parameter second moment.
They obtain sharp conditions of the form
\(
\big\Vert\E_{B \sim \mathcal{P}_b}[(I-\eta H(L_B))^{\otimes 2}]\big\Vert \lessgtr 1
\),
which are well suited for analyzing behavior near minima.
From the perspective of Section~\ref{section:framework}, these are valid instability criteria for a particular Lyapunov function, but two limitations remain for our purposes:
(i) the resulting thresholds are expressed via $d^2$‑dimensional operators and are not computable for modern neural networks; and
(ii) they do not address whether, and in what sense, SGD on neural networks actually \emph{trains} in an \textsc{EoS}‑like regime of instability along its real trajectory (partially limited by the aforementioned incomputability). See further discussion in \cref{appendix:linear_stochastic_stability}.
A number of works \citep{wu_implicit_2023,agarwala2024high} showed that for SGD the regime of instability might be governed by the trace of loss Hessian/NTK, further discussed in \cref{appendix:other_quantities}.
Empirically, several works have documented oscillatory SGD dynamics in deep networks
\citep{cohen_gradient_2021,xing_walk_2018,lee_new_2023,ahn_understanding_2022}. However, these works do not establish whether any of those oscillations constitute a regime of instability; in particular, they do not distinguish between noise‑driven (Type‑1) and curvature‑driven (Type‑2) oscillations, see Section \ref{section:framework} and \ref{section:two_types_of_oscillations}.
% but without distinguishing between noise‑driven (Type‑1) and curvature‑driven (Type‑2) oscillations, and therefore without isolating which quantities (if any) 
% saturate as instability criteria in the sense of Definition~\ref{def:stab_not}.
Our work complements these efforts by (i) placing candidate criteria such as $\lambda_{\max}$, GNI, and \textit{Batch Sharpness} within a unified instability framework, and (ii) identifying \textit{Batch Sharpness} as a valid, empirically saturating, and computationally tractable instability criterion for SGD on neural networks.

\paragraph{Flatness and Generalization.}
SGD-trained networks consistently generalize better than GD-trained ones, with smaller batch sizes further enhancing generalization performance \citep{keskar_large-batch_2016,lecun_efficient_2012,jastrzebski_three_2018,goyal_accurate_2017,masters_revisiting_2018,beneventano_how_2024,smith_origin_2021}.
This advantage has been widely attributed to some notion of flatness of the minima~\citep{jiang_fantastic_2019,jastrzebski_catastrophic_2021,hochreiter_simplifying_1994,neyshabur_exploring_2017,wu_towards_2017,kleinberg_alternative_2018,xie_diffusion_2020}. Training algorithms explicitly designed to find flat minima have indeed demonstrated strong performance across various tasks \citep{izmailov_averaging_2019,foret_sharpness-aware_2021}.
Our result is inherently about mini-batch training improving flatness.
Specifically, we explain why:
    \textit{Training with smaller batches constrains the dynamics to areas with smaller eigenvalues of the full-batch Hessian}. 
This quantifies and characterizes prior observations that SGD tends to locate flat minima and that smaller batch sizes result in reduced Hessian sharpness \citep{keskar_large-batch_2016,jastrzebski_catastrophic_2021}.

%%%%%%%%%%%%%%%%%%%%%%%%%%%%%%%%%%%%%%%%%%%%%%%%%%%%%%%%%%%%%%%%%%%%%%%%%%%%%%%%
%%                    Instability Framework                                  
%%%%%%%%%%%%%%%%%%%%%%%%%%%%%%%%%%%%%%%%%%%%%%%%%%%%%%%%%%%%%%%%%%%%%%%%%%%%%%%%

\section{Theory Preliminaries: A Framework for Instability}
\label{section:framework}

In this section we set up a minimal formal framework for how we reason about \emph{(in)stability} in the rest of the paper.
We argue here that our goal is to isolate \textbf{(i)} \emph{instability criteria}---possible sufficient certificates for divergence of a (stochastic) dynamical system from a trusted region (multiple are possible here unlike in the deterministic case)---and \textbf{(ii)} practical ways to recognize the regime where such criteria \emph{empirically saturate}, which we call the Edge of (Stochastic) Stability (\textsc{Eo(S)S}).

On the quadratic approximation of the loss, we show that the following three viewpoints are equivalent:
\vspace{-0.3cm}
\begin{itemize}[itemsep=0.1em,parsep=0pt]
    \item breaking a valid instability criterion;
    \item observing a \textit{catapult}, i.e., a large excursion outside the region where the quadratic approximation is trusted;
    \item observing a spike of appropriate size in the loss.
    \vspace{-0.3cm}
\end{itemize}
Importantly, thanks to this equivalence we know how to empirically diagnose \textsc{Eo(S)S} and what we consider catapults in Section \ref{section:eoss}.

\subsection{Defining Instability}
\label{section:in_stab}
\paragraph{On the notion of stability.}
Every proof of convergence for a discrete-time optimization method imposes bounds on the hyperparameters of the algorithm (most notably, the step size). 
When we view the iterates $(\theta_t)_t$ of such an algorithm as a discrete-time dynamical system, these bounds can be interpreted as \emph{stability conditions}: they ensure that some Lyapunov function $V(\theta_t)$ decreases step-after-step along the trajectory, typically in expectation.
Several notions of stability can be considered for optimization algorithms.
For quadratic loss, where for every data point $z_i$ there exist $\mathcal{H}_i \in \R^{d\times d}$ and
$x_i \in \R^d$ such that the loss on $z_i$ is
$L_i(\theta) = \tfrac{1}{2}(\theta-x_i)^\top \mathcal{H}_i (\theta-x_i)$,
one can track any Lyapunov function $V(\theta_t)$, for instance, the loss or the squared distance $V(\theta_t)=\|\theta_t - \theta^*\|^2$ to a solution
$\theta^*$.
Typical stability requirements have the form
\begin{equation}
\label{eq:Lyap}
\E[V(\theta_{t+1})|\theta_t] \leq V(\theta_{t}),
\quad
\E\left[ \frac{V(\theta_{t+1})}{V(\theta_t)} \, \Big| \, \theta_t \right]
\leq 1\footnote{This is the one we find in our empirical work, with $V(\theta) = \mednorm{\nabla L_B(\theta)}^2$.}
, \quad \text{or} \quad 
\lim_{t \to \infty} \frac{1}{t} \log \big( V(\theta_t) \big) \leq 0,  
\quad \text{etc.}
\end{equation}
If one expands the inequalities above in a Taylor around $\theta_t$, one obtains an equivalent inequality that can be expressed in terms of a \emph{scalar Hessians-based quantity}---generally termed \emph{notion of curvature} (for instance, an eigenvalue or a combination of low-order cumulants of the distribution of $\{ \mathcal{H}_i \}_i$).
The example $V(\theta_t)=\|\theta_t - \theta^*\|^2$ above---termed as \emph{linear stability}---was studied by
\citet{wu_how_2018,ma_linear_2021,mulayoff_exact_2024}: we have one-step non-expansion if we can establish a bound on an operator of the form
\vspace{-0.1cm}
\begin{equation}
\vspace{-0.1cm}
\label{eq:ma-ying}
\E[V(\theta_{t+1})|\theta_t] \leq V(\theta_t) 
\quad \iff \quad 
    \big\Vert\E_{B \sim \mathcal{P}_b} \big[ (I-\eta H(L_B))^{\otimes 2} \big]\big\Vert \leq 1.
\end{equation}
We refer to Appendix~\ref{appendix:linear_stochastic_stability} for a detailed comparison of our results to linear stochastic stability.

\paragraph{Certifying instability.}
Classical Lyapunov-style conditions characterize regions of hyperparameters for which the iterates are non-expansive; in the quadratic case these regions can even be made necessary and sufficient, as in Equation~\eqref{eq:ma-ying}. In this work we take the complementary viewpoint and focus on \emph{certificates of instability}. Our instability test is formulated as a purely \emph{sufficient} condition for divergence: it is deliberately one-sided and may be conservative. Our goal is not to describe the full stability region, but to obtain a tractable scalar test whose violation yields a robust certificate of instability.

Importantly, a certificate of instability is not simply the converse of a certificate of stability. For most deterministic linear systems, the dynamics either monotonically diverges ($\lambda_{\max}>2/\eta$) or monotonically converges ($\lambda_{\max}<2/\eta$); this clean dichotomy is special and can be misleading. For non-linear or stochastic dynamical systems, there is no tight boundary separating instability from monotone convergence. Typical regimes include \textit{(i) monotonic convergence}, \textit{(ii) non-monotonic convergence}, \textit{(iii) periodic or chaotic orbits and random stable oscillations}, \textit{(iv) non-monotonic divergence}, and \textit{(v) monotonic divergence}. We define here certificates for regimes \textit{(iv)} and \textit{(v)}, which is not the same as simply flipping a bound that guarantees regime \textit{(i)}. In particular, SGD with fixed step size on quadratics often enters regime \textit{(iii)} \red{and (ii)?} of random stable oscillations, as described in Section~\ref{section:two_types_of_oscillations} and certified by \textit{GNI} (Definition~\ref{def:GNI}), which is not itself an instability criterion.

\begin{definition}[Instability criterion]
\label{def:stab_not}
Consider a training algorithm (a discrete‑time dynamical system)
$(\theta_t)_{t \ge 0}$ on a parameter space $\Theta \subseteq \R^d$ with fixed hyperparameters $h$ (e.g.\ learning rate, batch size).
Let $U \subseteq \Theta$ be an open set (typically, a region where a given local approximation of the loss is trusted).
Let $f : U \to \R$ and $c \in \R$.
We say that $f$ is a \emph{valid instability criterion with threshold $c$}
for the algorithm on $U$ if the following holds:

\medskip
\centerline{
$\displaystyle
f(\theta_0) > c
\quad\Longrightarrow\quad
(\theta_t)_{t \ge 0} \text{ leaves every compact subset of $U$.}
$
}

\noindent
That is, for any compact $K \subset U$ containing $\theta_0$, there exists a finite time $T$ such that $\E[\theta_T|\theta_0] \notin K$.
We say that the instability criterion $f$ is \emph{saturated} at $\theta$ if $f(\theta)$ is (approximately) equal to $c$; in practice, up to an $O(\eta \cdot poly(\log(\eta)))$ tolerance.
\end{definition}

In words: an instability criterion is a scalar quantity $f$ together with a threshold $c$ such that crossing $f>c$ is \emph{sufficient} to force divergence from the region $U$ where we trust a given local model.
For a specific $f$ we generally want the \emph{smallest} such $c$, which depends both on $f$ and on the underlying dynamical system (algorithm, data, loss, architecture).
A canonical example is GD on a quadratic, where $U = \R^d$, $f(\theta)=\lambda_{\max}(\nabla^2 L(\theta))$ and $c=2/\eta$: crossing $\lambda_{\max}>2/\eta$ makes GD linearly unstable on any compact set.

\begin{summarybox}
    The question of whether an optimizer operates at the \textit{Edge of (Stochastic) Stability} thus becomes:
    \vspace{-0.25cm}
        \begin{center}
            \textit{Are there locally-valid criteria of instability that empirically saturate during training?}
        \end{center}
\end{summarybox}

\subsection{Guiding Questions for \textsc{EoSS}}

\paragraph{Quadratic and deterministic case.}
To connect the discussion above with \citet{cohen_gradient_2021}, consider a full‑batch method (deterministic) on a quadratic loss $L$.
In this setting, the instability criteria obtained from Lyapunov functions of the form in Equation\ \eqref{eq:Lyap} simplify, up to higher order terms in $\eta$, as follows:
\begin{itemize}[itemsep=0.2em,parsep=0pt]
\vspace{-0.3cm}
    \item[\textbf{(i)}] only full‑batch quantities appear in the Taylor expansion, no higher cumulants of the Hessians $\{ \mathcal{H}_i \}_i$;
    \item[\textbf{(ii)}] shortly after reaching instability, the gradient aligns with the eigenvector corresponding to the largest eigenvalue $\lambda_{\max}$ of the Hessian (this is \textit{not} the case for mini-batch optimizers, see Appendix \ref{appendix:alignment}).
\end{itemize}
\vspace{-0.3cm}
In particular, GD is an asymptotically (and Lyapunov) stable linear dynamical system if and only if $\lambda_{\max} \leq 2/\eta$.
\citet{cohen_gradient_2021} empirically observed that GD trains neural networks in the regime where the inequality $\lambda_{\max} \leq 2/\eta$ is saturated, and \citet{damian_self-stabilization_2023} established this mathematically under an assumption of progressive sharpening, making precise that saturation means $\lambda_{\max} - 2/\eta \in [-c\eta|\log(\eta)|, +c\eta|\log(\eta)|]$. % for a constant $c \in \R$.

\paragraph{Stochastic and non-quadratic case.}
In the stochastic (mini‑batch) setting, different valid Lyapunov functions (for mini‑batch or full‑batch dynamics) typically give rise to different scalar quantities built from the Hessians—for example, different moments or cumulants of $\{\mathcal{H}_i\}_i$—and there is no a priori reason for these quantities to coincide or to saturate at the same time.
For instance, the operator in~\eqref{eq:ma-ying} depends only on the first and second cumulants of $\{\mathcal{H}_i\}_i$, whereas \textit{Batch Sharpness} (even for a quadratic loss) also involves third‑order terms, and $\lambda_{\max}$ depends only on the first.
This discrepancy leads to our first guiding question in the box above---that having a valid instability criteria is not enough, as there could be many such (not equivalent) criteria; we are looking for the one that actually saturates in the neural network setting, which can only be established empirically\footnote{Hypothetically, having a coherent theory of progressive sharpening, we could establish theoretically which criteria would saturate. Lacking such theory, we have to resort to establishing it empirically.}. Establishing if a given valid instability criterion creates an \textsc{EoSS} regime of instability therefore requires this criterion to be ``computable'' in the high-dimensional setting of modern neural networks.
The reason why this might prove to be an issue is that some natural instability criteria cannot be efficiently estimated in high dimensions.
For example, conditions based directly on operators such as~\eqref{eq:ma-ying} or on the exact instability criterion of \citet{mulayoff_exact_2024} have computational complexity $\Omega(d^2)$, which makes them infeasible to compute for modern neural networks. 
This motivates a refinement of the question above:
\begin{center}
    \emph{Are there valid instability criteria for SGD on neural networks that are both empirically saturating and computationally tractable in high dimensions?}
\end{center}
Operationally, we seek criteria that can be estimated in roughly $\mathcal{O}(d \cdot \mathrm{poly}(\log d))$ time.
Which of these candidate quantities is actually driven to saturation by the dynamics depends on how \emph{Progressive Sharpening} and \emph{self-stabilization} 
(in the sense of [\citealp{damian_self-stabilization_2023}]) act on the corresponding statistics:
\vspace{-0.3cm}
\begin{itemize}[itemsep=0.2em,parsep=0pt]
    \item progressive sharpening pushes up certain Hessian statistics during training;
    \item self-stabilization constrains them by preventing blow-up.
\vspace{-0.3cm}
\end{itemize}
When higher‑order terms in the Taylor expansion of the loss are non‑negligible, an instability condition derived from the second‑order approximation can be violated without a catapult event in the loss.
This happens for some quartic and cubic objectives and, as we observe, also for neural networks.\footnote{We give explicit examples of instability criteria are violated without affecting the dynamics in Appendix~\ref{appendix:other_quantities}.}.
In such cases, self‑stabilization effectively selects which instability criteria are actually relevant at \textsc{EoSS}.
\begin{center}
\emph{Which of these quantities are actually selected by the SGD dynamics?}
\end{center}
Understanding \emph{if and how} SGD trains at \textsc{EoSS}, and whether this notion is operationally meaningful, is therefore inherently linked to the questions above.
The rest of the paper answers these questions, at least partially:
we show that \textit{Batch Sharpness} is a valid and numerically tractable instability criterion for SGD that empirically saturates during neural network training.
However, the precise mechanism underlying self‑stabilization in SGD remains an open question, which we leave to future work.

\subsection{How to Identify a Regime of Instability}
\label{section:equivalence}
How can we detect whether such an instability criterion is saturated in practice?
In Section~\ref{section:two_types_of_oscillations} we show that, for mini‑batch algorithms, loss oscillations alone are not the right signal.
Instead, we diagnose instability through the observation of \emph{catapults} and sizeable \emph{loss spikes} under small hyperparameter perturbations.
In this subsection we formalize this connection on the local quadratic approximation.
On the second‑order Taylor approximation of the loss, the underlying property of interest is \emph{divergence of the dynamics on $U_t$}, and the three diagnostics above---which we use throughout the paper---are just different ways to detect it. 
\begin{summarybox}
The three phenomena below are \emph{equivalent manifestations of divergence on the quadratic approximation}, rather than three logically independent assumptions:
\vspace{-0.1cm}
\[
\text{breaking a valid instability criterion}
\quad\Longleftrightarrow\quad
\text{catapult (Def.~\ref{def:insta})}
\quad\Longleftrightarrow\quad
\text{loss spike of sufficient size.}
\]
\end{summarybox}

\paragraph{Catapults on the quadratic approximation.}
Fix a time $t$ and a point $\theta_t$.
Assume that for each $i$ the per‑sample loss $L_i$ is twice differentiable in a neighborhood of $\theta_t$, and let
$\widetilde L_i(\theta) := \tfrac12(\theta-x_i)^\top \mathcal{H}_i(\theta-x_i)$
denote the second‑order Taylor approximation of $L_i$ at $\theta_t$.
Let $U_t$ be an open neighborhood of $\theta_t$ such that
$\big\|\mathcal{H}(L_i(\theta)) - \mathcal{H}_i\big\| \le C\eta$ for all $\theta \in U_t$ and all $i$, for some constant $C>0$.
Consider SGD (or GD) run on the quadratic model
$\widetilde L(\theta) := \tfrac1N \sum_i \widetilde L_i(\theta)$
with the same hyperparameters as the original dynamics.

\begin{definition}[Catapults on the quadratic model]
\label{def:insta}
We say that the algorithm \emph{experiences a catapult at time $t$} if, when run on $\widetilde L$ from initialization $\theta_t$, the resulting trajectory $(\theta_s)_{s\ge t}$ leaves every compact subset of $U_t$ in finite time.
\end{definition}

This definition is deliberately phrased in the same language as Definition~\ref{def:stab_not}:
a catapult is precisely a divergence event for the quadratic dynamics on the region where the approximation is trusted. 

\paragraph{Hyperparameter perturbations and saturation.}
Let $f$ be a valid instability criterion as in Definition~\ref{def:stab_not}, with threshold $c$.
In practice, both $f$ and $c$ depend on hyperparameters $h$ (e.g.\ $f(\theta;\eta,b)$ and $c=c(\eta,b)$).
We are interested in settings where:
\begin{itemize}[itemsep=0.15em,parsep=0pt]
    \item[(a)] $f(\cdot;h)$ is a valid instability criterion for the quadratic model on $U_t$;
    \item[(b)] $f(\theta_t;h)$ is \emph{monotone} in some destabilizing direction in $h$ (e.g.\ increasing $\eta$ or decreasing $b$ increases $f(\theta_t;h)-c(h)$);
    \item[(c)] along the observed trajectory at hyperparameters $h_0$, $f(\theta_t;h_0)$ \emph{saturates}, i.e.\ $f(\theta_t;h_0)\approx c(h_0)$.
\end{itemize}

\begin{lemma}[Tight instability criterion $\Rightarrow$ catapult under perturbation]
\label{lem:perturb-catapult}
Assume $h_0$ saturates $f$ at $\theta_t$.
Under assumptions (a)–(c) above, any sufficiently small destabilizing perturbation of $h_0$ (e.g.\ $\eta\uparrow$ or $b\downarrow$) produces hyperparameters $h$ such that $f(\theta_t;h)>c(h)$.
By validity of the criterion, the quadratic‑model trajectory from $\theta_t$ then leaves every compact subset of $U_t$ in finite time, i.e., the algorithm experiences a catapult at time $t$ in the sense of Definition~\ref{def:insta}.
\end{lemma}
\vspace{-0.3cm}
See Appendix \ref{appendix:equivalence} for a proof and discussion.
Thus an instability criterion that is monotone in a hyperparameter gives a concrete way to test whether we are at the \textsc{Eo(S)S}: if it saturates along training, then a small destabilizing perturbation must trigger a catapult on the quadratic model.
Conversely, if a quantity empirically saturates but small destabilizing perturbations do \emph{not} lead to catapults, this quantity cannot be a valid instability criterion for the dynamics.

% %%%%%%%%%%%%%%%%%%%%%%%%%%%%%%%%%%%%%%%%%%%%%%%%%%%%%%%%%%%%%%%%%%%%%%%%%%%%%%%%
% %%                    \textsc{EoSS}                                
% %%%%%%%%%%%%%%%%%%%%%%%%%%%%%%%%%%%%%%%%%%%%%%%%%%%%%%%%%%%%%%%%%%%%%%%%%%%%%%%%

\section{SGD Typically Occurs at the \textsc{EoSS}}
\label{section:eoss}

Full-batch \textsc{EoS} says that, with constant step size $\eta$, training self-organizes near the quadratic stability boundary $\lambda_{\max}(\nabla^2 L)\approx 2/\eta$ \citep{cohen_gradient_2021}.
Mini-batch SGD, however, does \emph{not} move on a single loss landscape: each update sees its own mini-batch loss $L_{\mathbf{B}}$ and its own curvature.
We show here that the relevant “edge” quantity is therefore not $\lambda_{\max}$ of the full-batch Hessian, but a mini-batch, direction-aware curvature statistic.
\begin{itemize}[itemsep=0.15em,topsep=0.15em,parsep=0pt]
\vspace{-0.1cm}
    \item We introduce \emph{Batch Sharpness}, the expected Rayleigh quotient of the \emph{mini-batch} Hessian along the \emph{mini-batch} gradient direction.
    \item Empirically, we show that \textit{Batch Sharpness} undergoes progressive sharpening and then hovers near $2/\eta$ across datasets, architectures, and hyperparameters: SGD typically operates at the \emph{Edge of Stochastic Stability} (\textsc{EoSS}).
    \item By perturbing $(\eta,b)$ mid-training, we show that catapults and renewed sharpening are predicted by \textit{Batch Sharpness}, establishing it as the operational stability comparator for SGD.
\end{itemize}

\subsection{\textit{Batch Sharpness}}  
\label{section:batch_sharpness}
\vspace{-0.2cm}
If the \textit{local} curvature saturates in the directions of the steps, with respect to the hyperparameters, the updates become unstable in a manner analogous to the classic \textsc{EoS} \cite{cohen_gradient_2021}.
Empirically, \textit{Batch Sharpness} is the quantity that governs this saturation for neural networks. We will establish this empirically in the next part of this section, and we will show that it is a valid instability criterion mathematically in \ref{section:math}.
\begin{definition}[Batch Sharpness]
\label{def:minibs}
We define \emph{Batch Sharpness} as the ratio\footnote{\text{We use bold for the "\textbf{B}" to highlight the difference with Gradient-Noise Interaction in Definition \ref{def:GNI}.}}
\vspace{-0.2cm}
\begin{equation}
\textit{Batch Sharpness}\,(\theta)
\quad:=\quad
\E_{B \sim \mathcal{P}_b}\bigg[
\frac{
\nabla L_B(\theta)^\top\,\mathcal{H}(L_\mathbf{B})\,\nabla L_B(\theta)
}{
\|\nabla L_\mathbf{B}(\theta)\|^2
}
\bigg]
\,.
\end{equation}
\end{definition}

\textit{Batch Sharpness} is thus the expectation of the Rayleigh quotient between the mini-batch gradient $\nabla L_B$ (the step direction) and the mini-batch Hessian $\mathcal{H}(L_B)$.
It therefore measures the \textit{expected directional curvature of the mini-batch loss surface along the direction of the step}. 
We will discuss \textit{Batch Sharpness} and its meaning in Section \ref{section:math}.

\subsection{\textsc{EoSS} Empirically}
\vspace{-0.2cm}
Here, we empirically characterize the phenomenon of the Edge of Stochastic Stability.
We verify the emergence of \textsc{EoSS} across a range of step sizes, batch sizes, architectures (Figure \ref{fig:eoss} and Appendix \ref{appendix:further_bs}), and datasets (CIFAR-10 and SVHN, Appendix \ref{appendix:further_bs_svhn}).
\vspace{-0.2cm}

\paragraph{Stabilization of \emph{Batch Sharpness}.}
SGD typically trains in an \textsc{EoS}-like regime:
\begin{summarybox}
    \textit{SGD tends to train in a regime we call Edge of Stochastic Stability. Precisely, after a phase of progressive sharpening, \emph{Batch Sharpness} reaches a stability level of $2/\eta$, and hovers there.}
\end{summarybox}

In particular, the plateau level of \textit{Batch Sharpness} is $2/\eta$ and is independent of the batch size (Figure~\ref{fig:1b_which_is_a_self-explanatory_name}).  
Importantly, stable oscillations occur throughout most of the training, as highlighted by the quantity of Proposition \ref{prop:lee-jang} (see Figure \ref{fig:lee-jang-comparison_body}) but they do not impact progressive sharpening which leads to the second phase of \textsc{EoSS}-type oscillations. Importantly, analogously to \textsc{EoS}, training proceeds and the loss continues to decrease while \textit{Batch Sharpness} is constrained by the step size magnitude.
\vspace{-0.2cm}

\paragraph{Stabilization of $\lambda_{\max}$ and \textit{GNI}.}
% \red{I would full-ass drop GNI from here}
Crucially, stabilization of \textit{Batch Sharpness} induces a stabilization of $\lambda_{\max}$. However, $\lambda_{\max}$ consistently settles at a lower level due to a batch-size-dependent gap between the two (Figure \ref{fig:eoss}). This is also influenced by the specific optimization trajectory, as shown in Figure \ref{fig:lr_bs_changes}; for further discussion see Section~\ref{section:lambda_max}. Moreover, stabilization of \textit{Batch Sharpness} around $2/\eta$ happens when \textit{GNI} has already been above $2/\eta$, as it independently stabilized at or above $2/\eta$ at the beginning of the training, see Figure \ref{fig:lee-jang-comparison_body}.
\vspace{-0.2cm}

\paragraph{Catapults.}
\begin{wrapfigure}{r}{0.48\columnwidth}   % r = right side, width < \columnwidth
    \vspace{-0.3cm}                         % trim white-space above
    \centering
    \includegraphics[width=\linewidth]{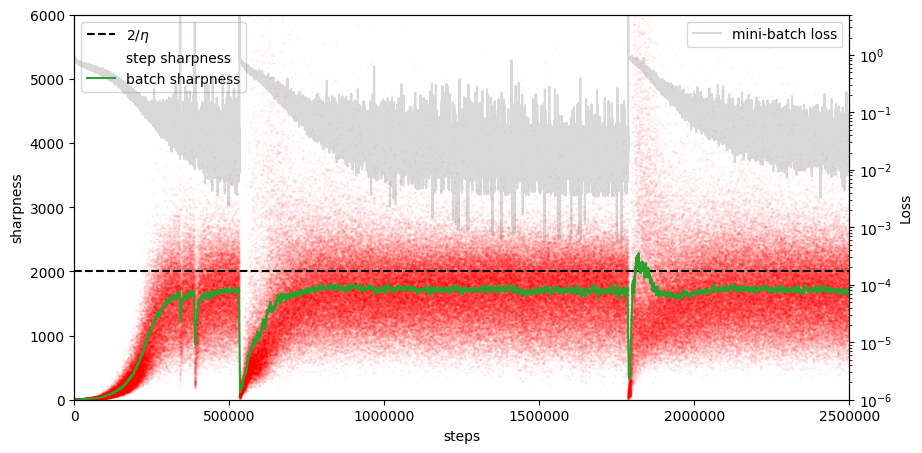}
    \vspace{-8pt}
    \caption{\small
    \textbf{Catapults at \textsc{EoSS}.}
    During \textsc{EoSS}, randomness in batch sampling might cause catapults, leading to renewed PS, and \textsc{EoSS} again. Notation follows Fig.\ \ref{fig:eoss}.
  }
  \label{fig:catapult}
    \vspace{-0.5cm}                         % trim before text resumes
\end{wrapfigure}
Unlike in \textsc{EoS}, in the \textsc{EoSS} regime what stabilizes is the \textit{expectation} of a quantity which the algorithm observes one step at a time. Occasionally, a sequence of sampled batches exhibits anomalously high sharpness---too high for the stable regime---and steps overshoot, triggering a catapult (Figure~\ref{fig:catapult}).
This causes a spike in the loss (Section \ref{section:equivalence}), after which the trajectory either diverges or re-enters a stable region. If in this region \textit{Batch Sharpness} is strictly less than $2/\eta$, this leads to a renewed phase of progressive sharpening, eventually returning to the \textsc{EoSS} regime. This aligns with, and may help explain, previous observations about catapult behavior, e.g., \citep{lewkowycz_large_2020,zhu_quadratic_2024}.

\begin{figure}[ht]
  \centering
\vspace{-0.3cm}
  \begin{tabular}{ccc}
    \includegraphics[width=0.31\textwidth]{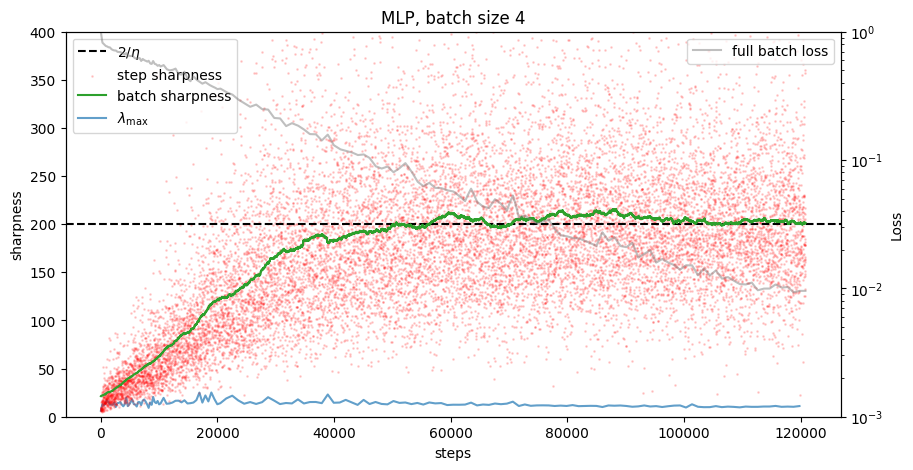} &
    \includegraphics[width=0.31\textwidth]{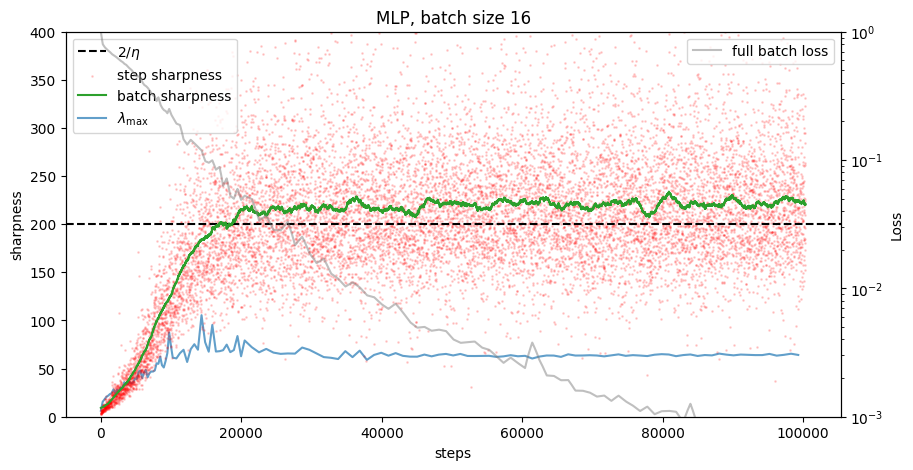} &
    \includegraphics[width=0.31\textwidth]{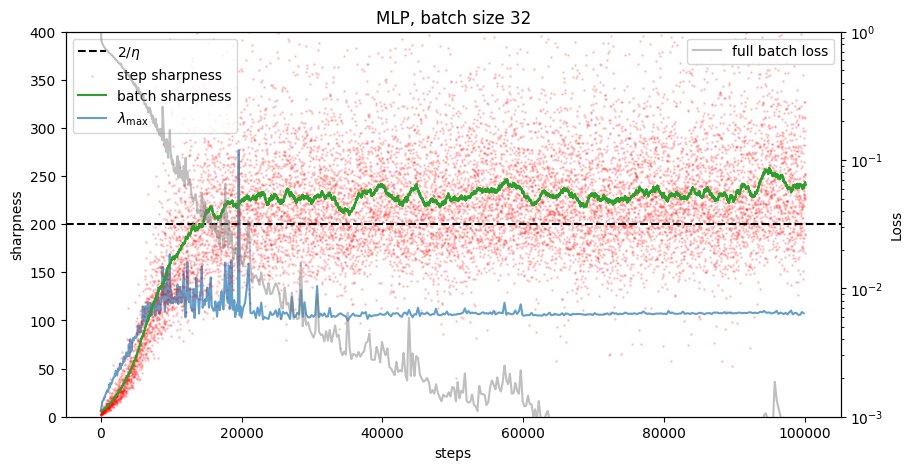} \\
    \includegraphics[width=0.31\textwidth]{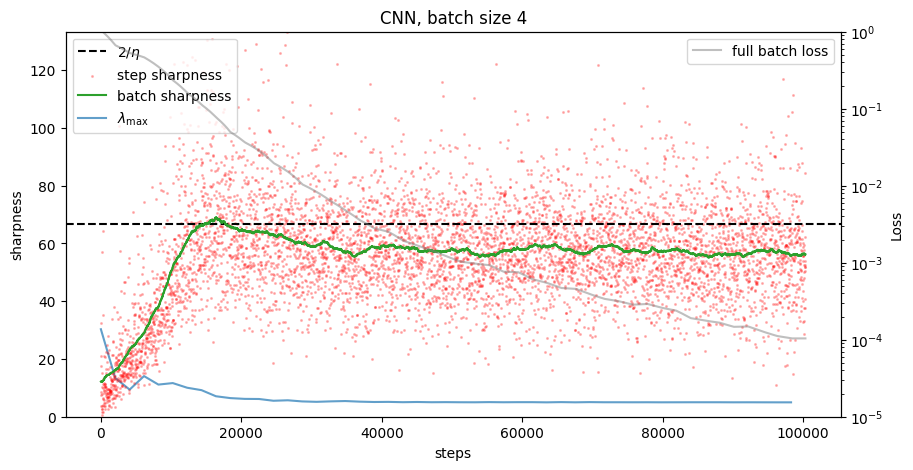} &
    \includegraphics[width=0.31\textwidth]{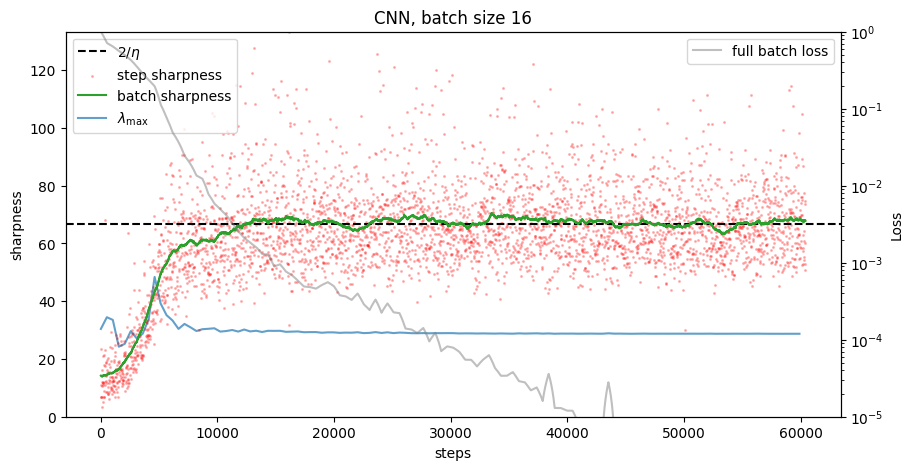} &
    \includegraphics[width=0.31\textwidth]{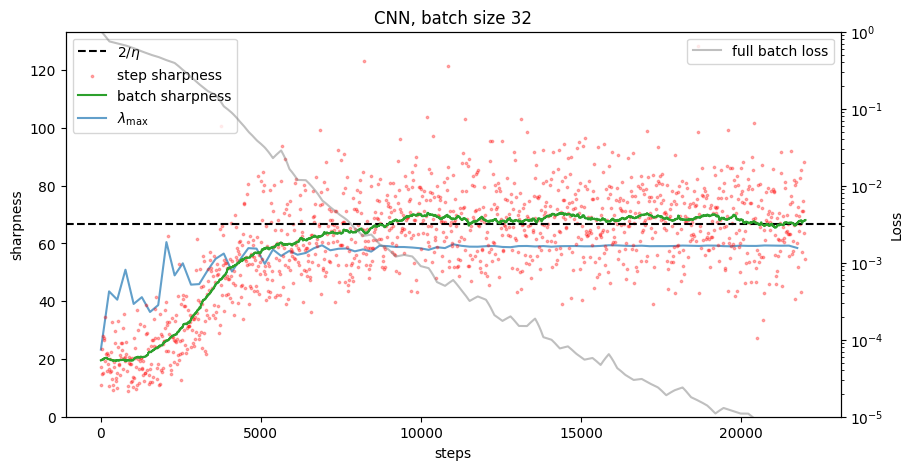} \\
    \includegraphics[width=0.31\textwidth]{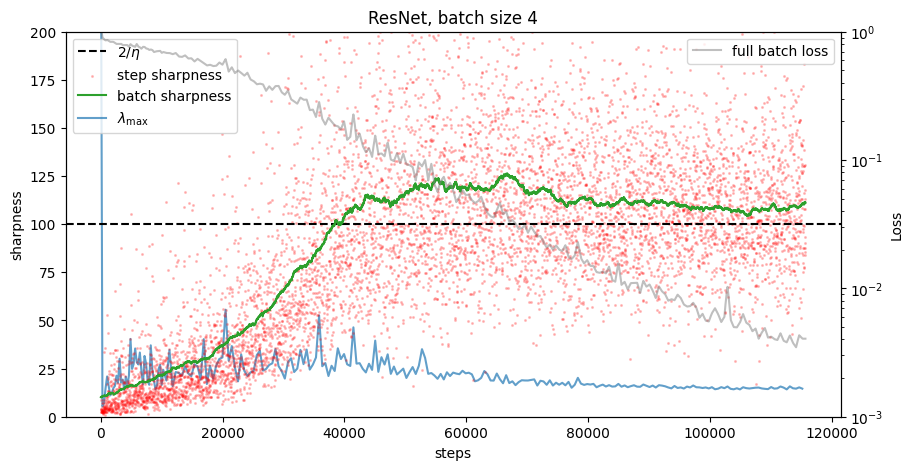} &
    \includegraphics[width=0.31\textwidth]{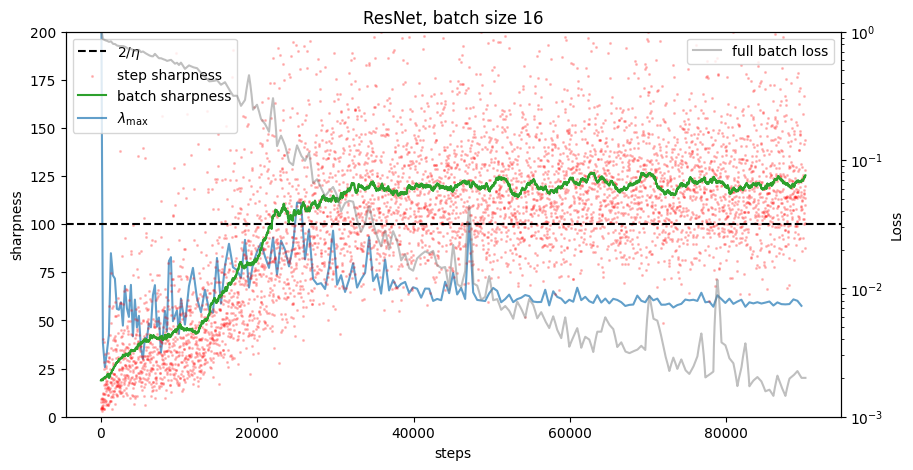} &
    \includegraphics[width=0.31\textwidth]{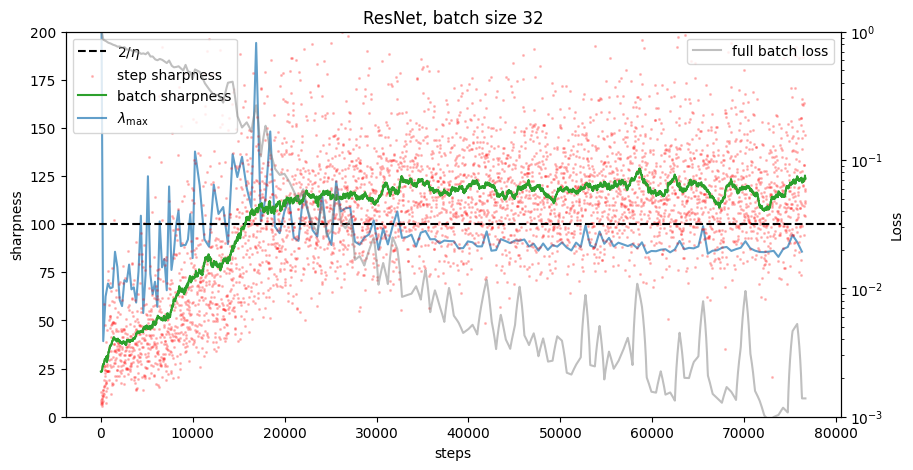}
  \end{tabular}
  \vspace{-0.5cm}
  \caption{\small
    \textbf{Comparing different sharpness measures.}
    \textcolor{red}{Red}: \textit{step sharpness}, observed sharpness on the current step's mini-batch---essentially \textit{Batch Sharpness} without the expectation; 
    \textcolor{PythonGreen}{Green}: \textit{Batch Sharpness} (Definition~\ref{def:minibs}); 
    \textcolor{blue}{Blue}: full-batch $\lambda_{\max}$. 
    Top row: MLP (2 hidden layers of width 512); 
    middle: 5-layer CNN; 
    bottom: ResNet-14; 
    all trained on an 8k subset of CIFAR-10.
  }
  \label{fig:eoss}
% \vspace{-0.3cm}
\end{figure}

\begin{figure}[ht!]
    \centering
    \begin{subfigure}[b]{0.23\linewidth}
        \centering
        \includegraphics[width=\linewidth]{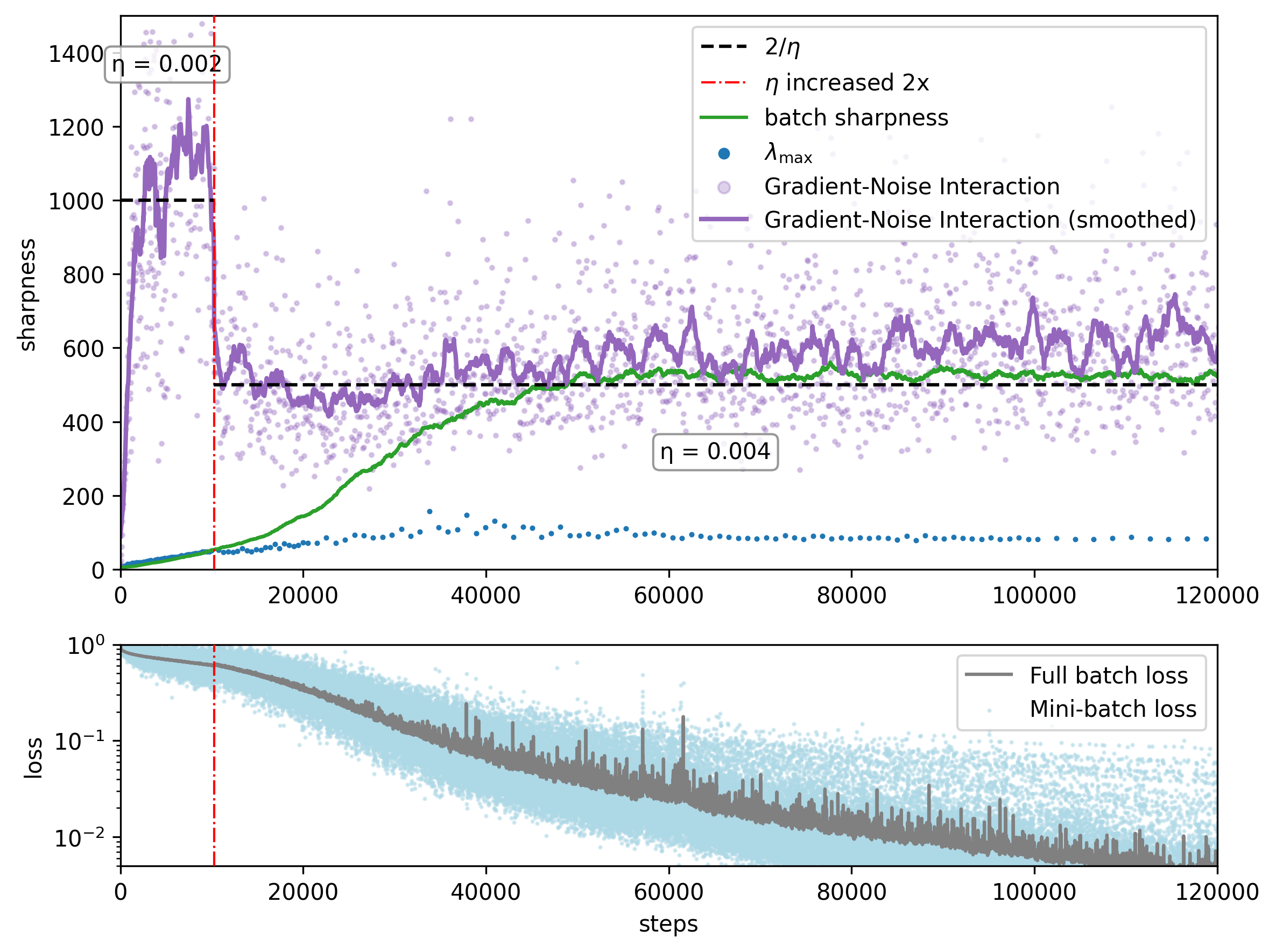}
        \caption{LR increased early}
        \label{fig:lr-early_body}
    \end{subfigure}
    \hfill
    \begin{subfigure}[b]{0.23\linewidth}
        \centering
        \includegraphics[width=\linewidth]{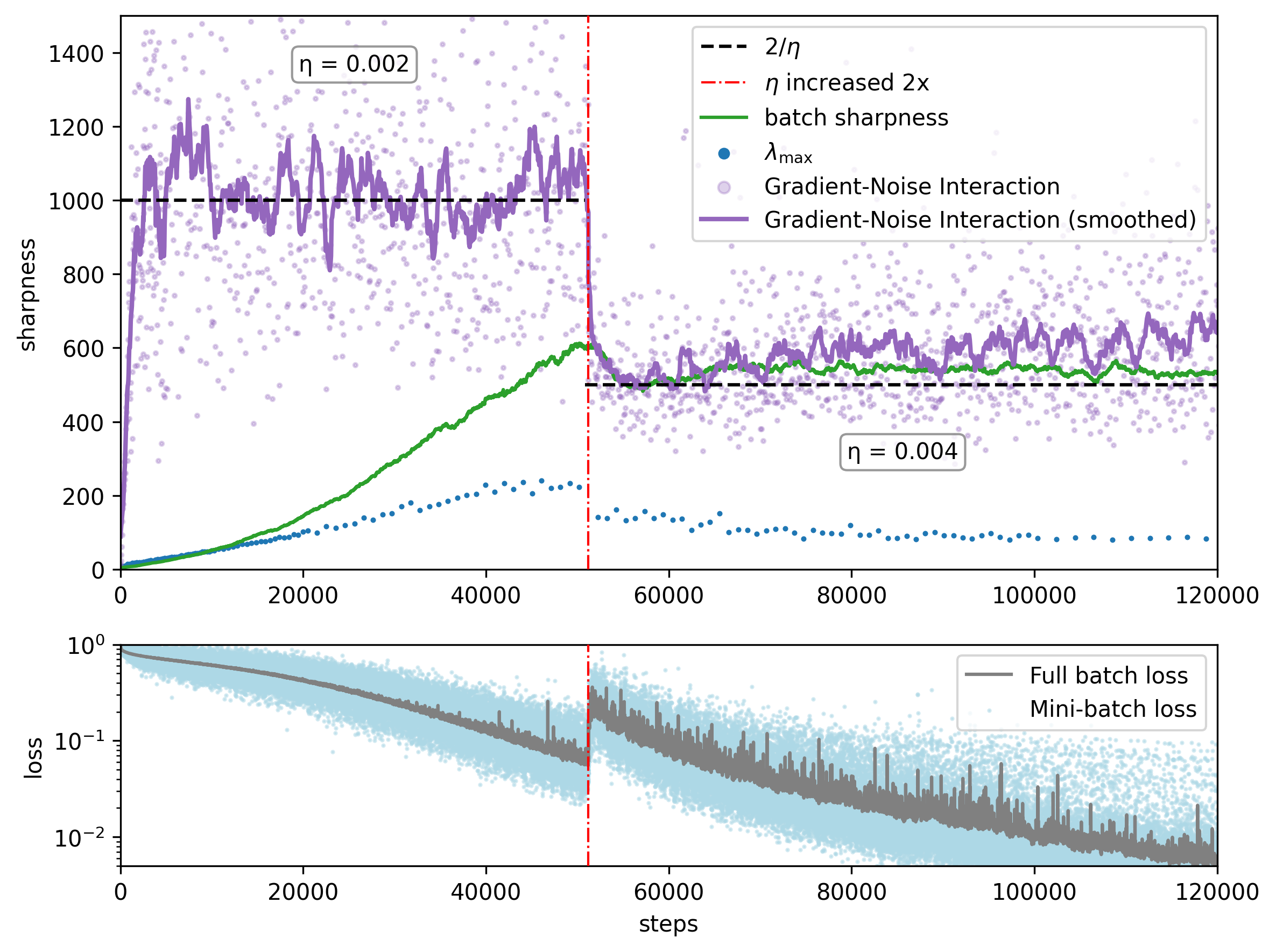}
        \caption{LR increased late}
        \label{fig:lr-late_body}
    \end{subfigure}
    \hfill
    \begin{subfigure}[b]{0.23\linewidth}
        \centering
        \includegraphics[width=\linewidth]{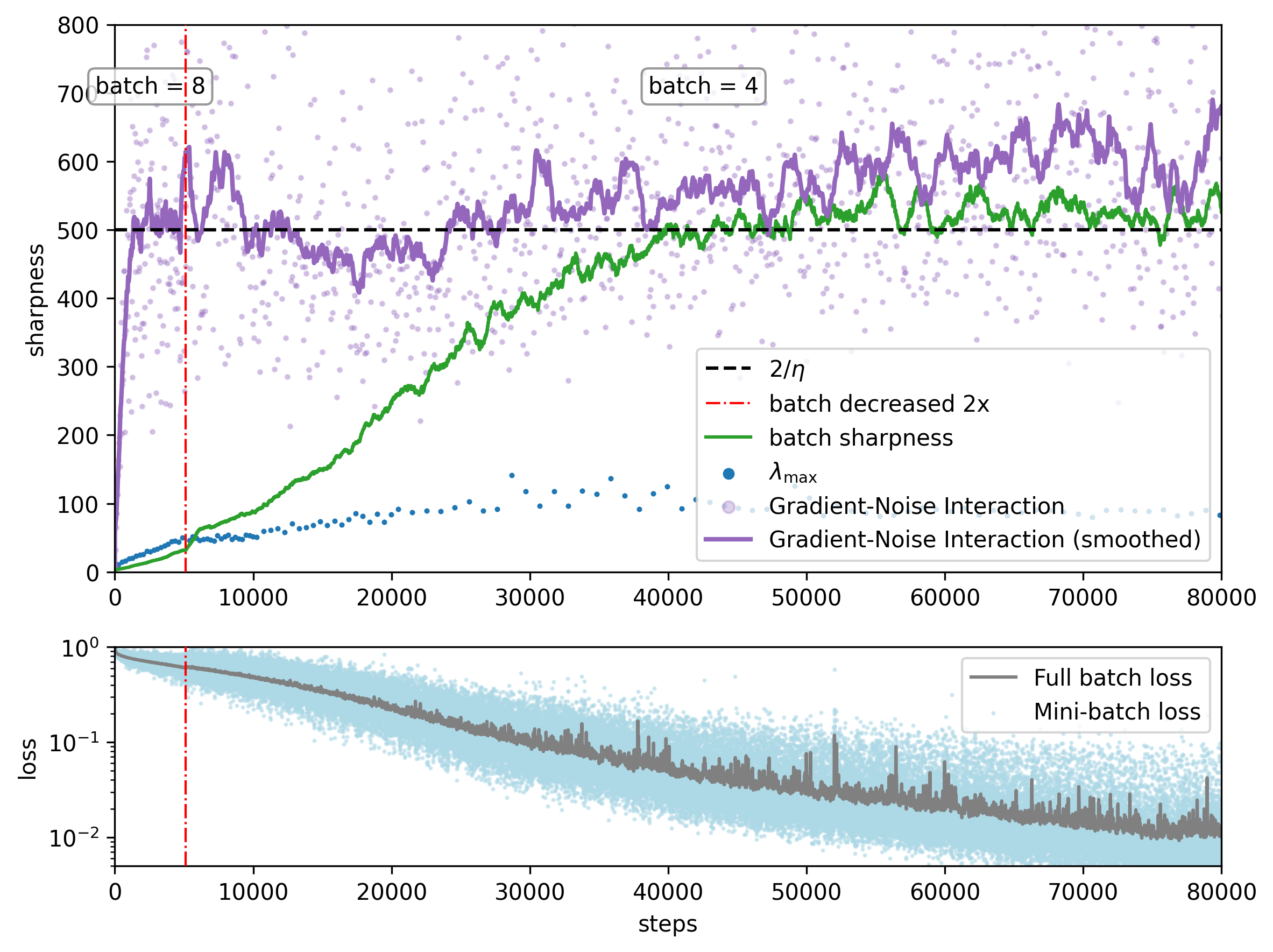}
        \caption{Batch decreased early}
        \label{fig:lee-jang-early_body}
    \end{subfigure}
    \hfill
    \begin{subfigure}[b]{0.23\linewidth}
        \centering
        \includegraphics[width=\linewidth]{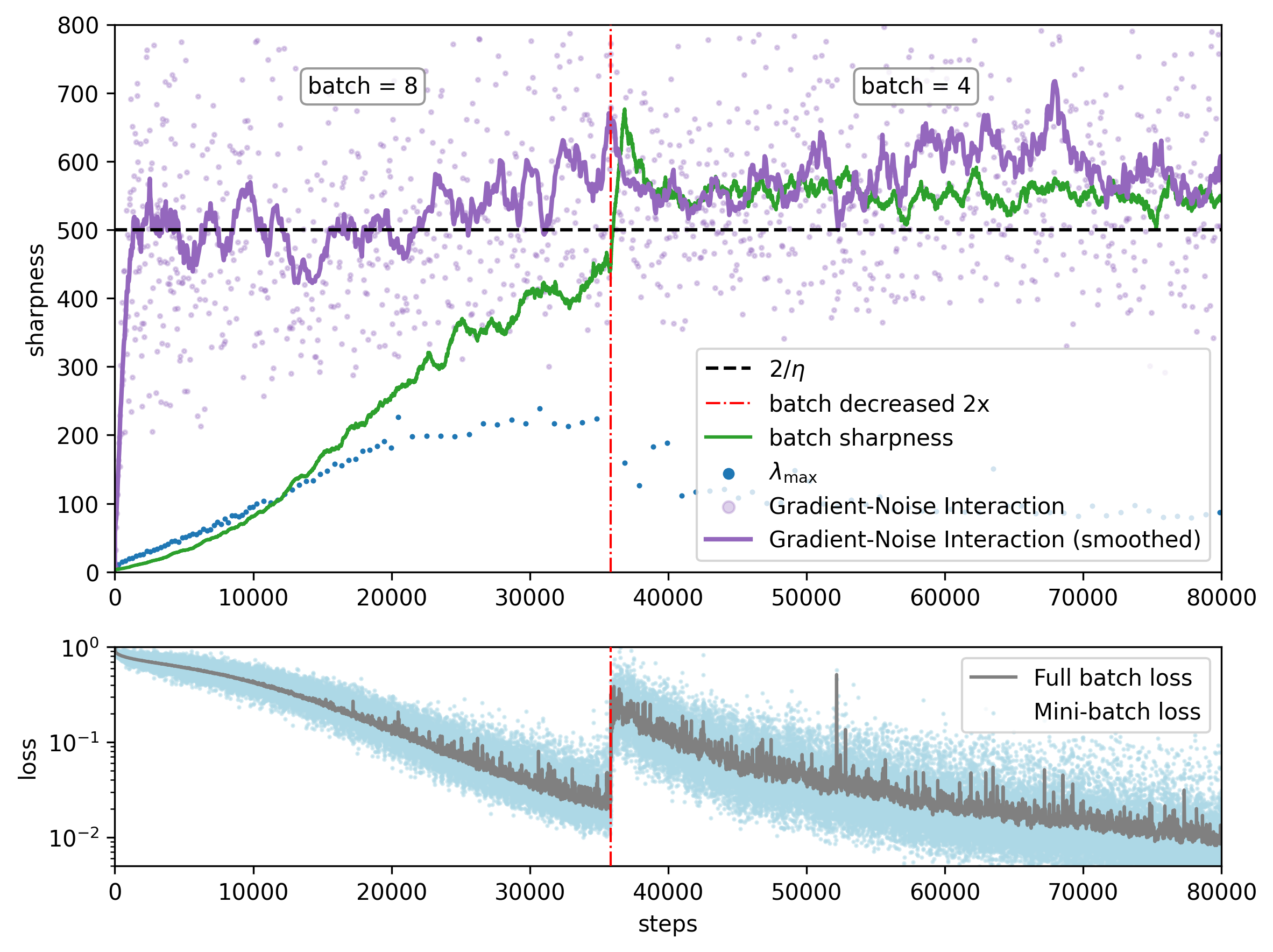}
        \caption{Batch decreased late}
        \label{fig:lee-jang-late_body}
    \end{subfigure}
    \vspace{-0.2cm}
    \caption{\small
    % \red{change this to the new figure that's in the appendix?}
    \textbf{(1)} The whole training happens with \textit{Type-1} oscillations (see Proposition \ref{prop:lee-jang}, \textit{GNI}$\approx 2/\eta$), however, \textbf{(2)} \textit{GNI} being $2/\eta$ does not govern \textit{Type-2} oscillations---in particular, highlighting the difference between the two types of oscillations.
    \textbf{(3)} \textit{Batch Sharpness} is instead an indicator of \textit{Type-2} oscillations, as illustrated by the fact that catapults happen only when the shift in hyperparameters occurs \textbf{after} \textit{Batch Sharpness} reaches $2/\eta$.}
    \label{fig:lee-jang-comparison_body}
    % \vspace{-0.3cm}
\end{figure}

\subsection{\textit{Batch Sharpness} Governs \textsc{EoSS}}
\label{section:eoss_natural_generalization}

Following \citet{cohen_gradient_2021} and the discussion in Section~\ref{section:framework}, we track how the training dynamics change when perturbing the hyperparameters mid-training. 
Overall, we find that \emph{Batch Sharpness} governs \textsc{EoSS} behavior---mirroring how \(\lambda_{\max}\) operates in the full-batch \textsc{EoS}---while the full-batch \(\lambda_{\max}\) lags behind or settles inconsistently, underlining the mini-batch nature of SGD stability.
\begin{figure}[ht!]
% \vspace{-0.2cm}
  \centering
  % (a)
  \begin{subfigure}[b]{0.24\linewidth}
    \centering
    \includegraphics[width=\linewidth]{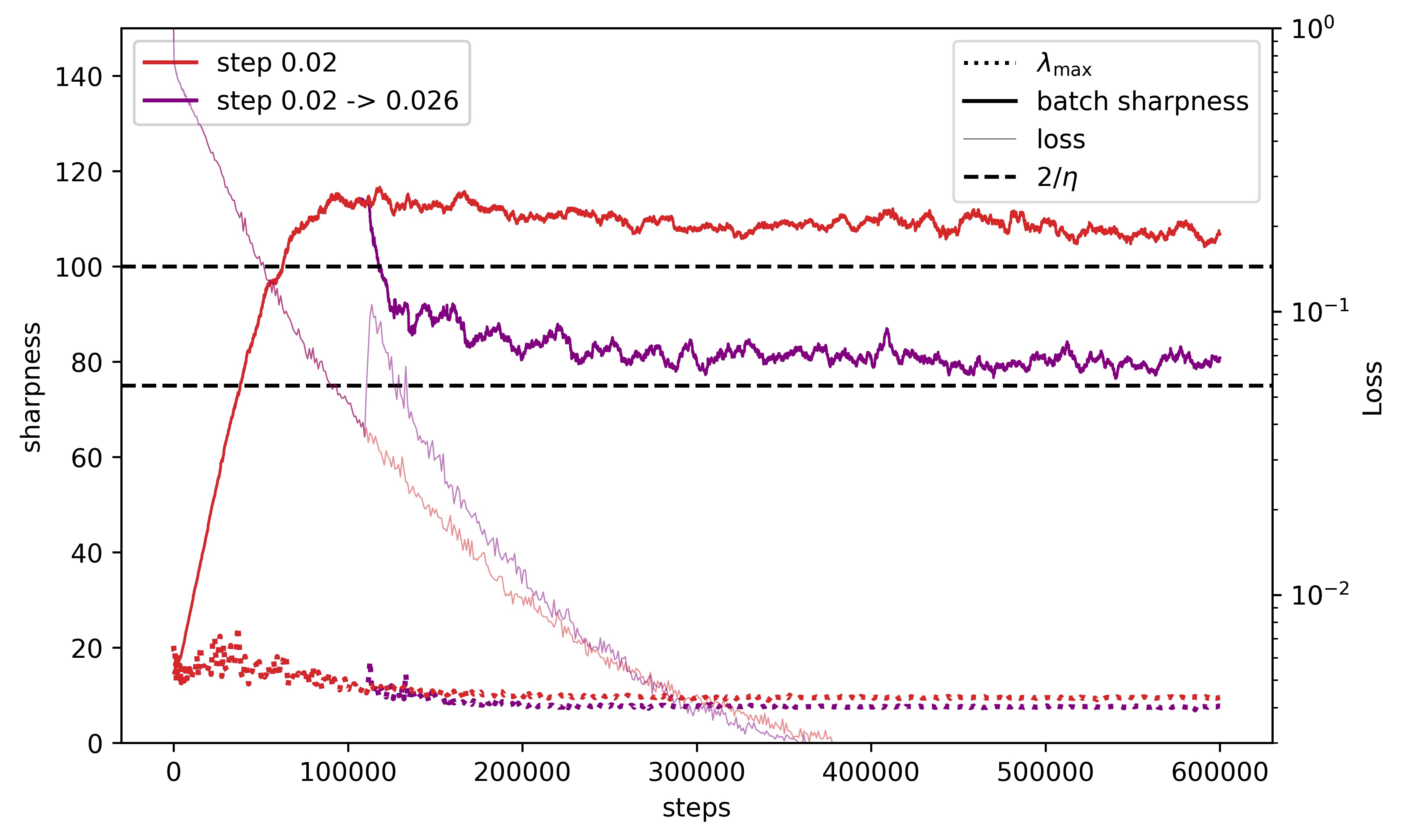}
    \caption{Increasing step}
    \label{fig:lr_bs_changes:a}
  \end{subfigure}
  % (b)
  \begin{subfigure}[b]{0.24\linewidth}
    \centering
    \includegraphics[width=\linewidth]{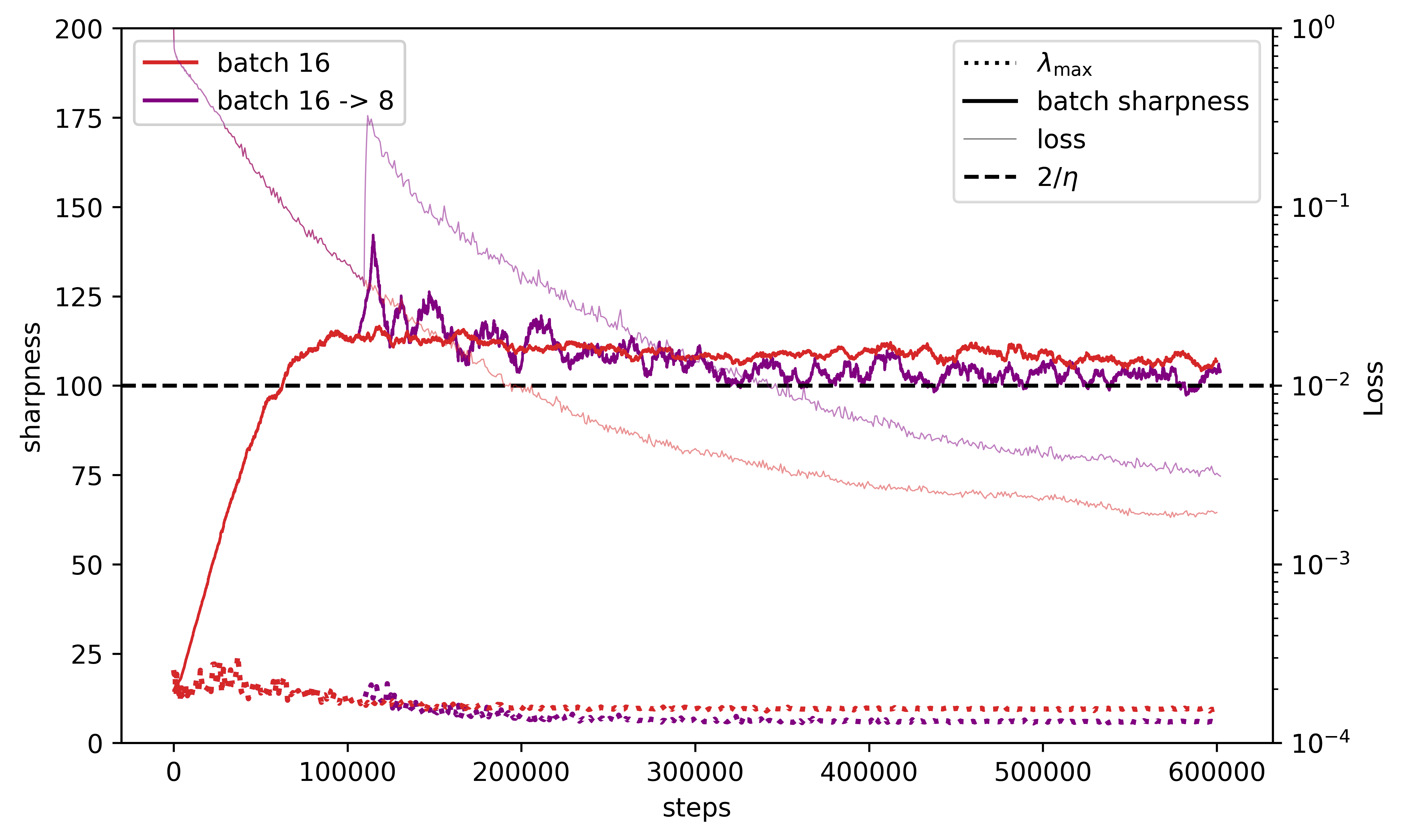}
    \caption{Decreasing batch}
    \label{fig:lr_bs_changes:b}
  \end{subfigure}
  % \hfill
  % (c)
  \begin{subfigure}[b]{0.24\linewidth}
    \centering
    \includegraphics[width=\linewidth]{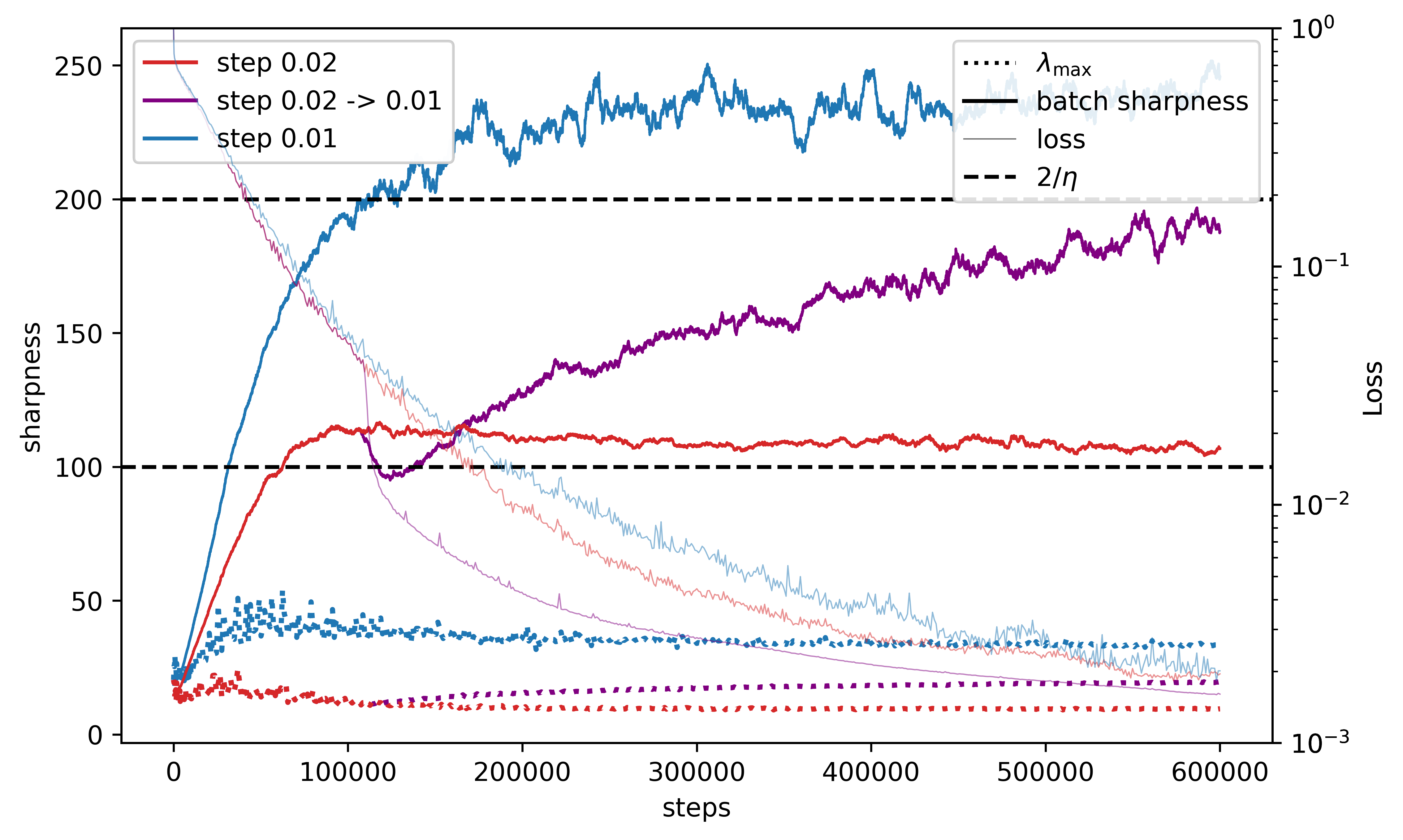}
    \caption{Decreasing step}
    \label{fig:lr_bs_changes:c}
  \end{subfigure}
  % \hfill
  % (d)
  \begin{subfigure}[b]{0.24\linewidth}
    \centering
    \includegraphics[width=\linewidth]{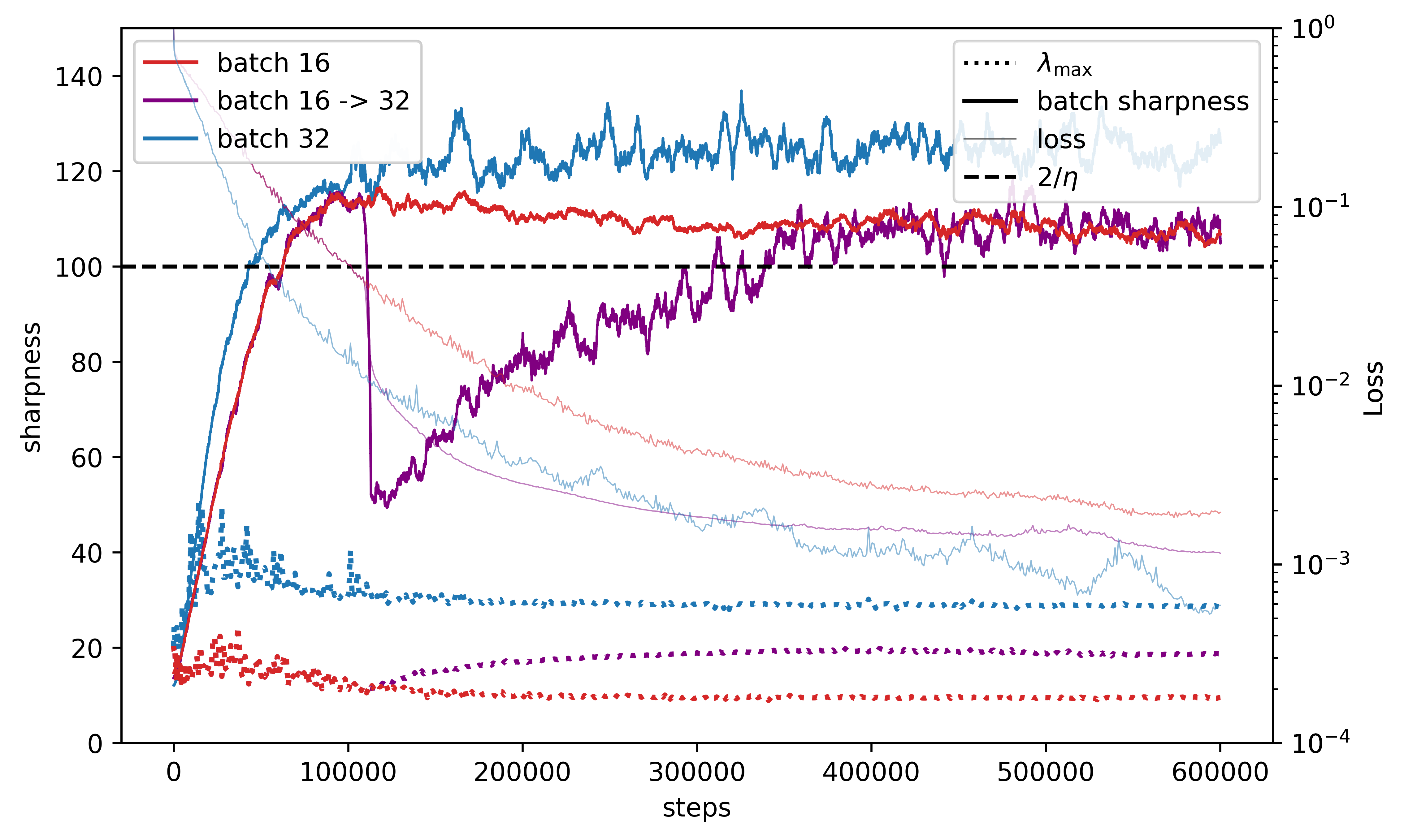}
    \caption{Increasing batch}
    \label{fig:lr_bs_changes:d}
  \end{subfigure}
  
  \vspace{-0.2cm}
  \caption{\small
    \textbf{Effects of changing step size or batch size in \textsc{EoSS}.}
    \textit{Catapults:} (a) Increasing the step size $\eta$ causes a catapult spike before \textit{Batch Sharpness} re-settles at the new $2/\eta$. 
    (b) Decreasing the batch size $b$ increases \textit{Batch Sharpness} and causes a catapult.
    \textit{Restarting PS:} (c) Decreasing $\eta$ prompts renewed progressive sharpening. 
    (d) Increasing $b$ lowers \textit{Batch Sharpness} and re-starts progressive sharpening. 
    The experiments are conducted on a 32k subset of CIFAR-10 to ensure sufficient training-set complexity, which is necessary for observing renewed progressive sharpening, consistent with observations by \citet{cohen_gradient_2021}. 
    }
  \label{fig:lr_bs_changes}
  \vspace{-0.3cm}
\end{figure}
Increasing the step size \(\eta\) or decreasing the batch size $b$ triggers a \emph{catapult} spike in all the quantities of interest and in the training loss, before \emph{Batch Sharpness} re-stabilizes near the updated threshold \(2/\eta\), see Figures \ref{fig:lr_bs_changes:a} and \ref{fig:lr_bs_changes:b}. This therefore pushes \(\lambda_{\max}\) lower.
% Crucially, if stability was governed solely by \(\lambda_{\max}\), this step-size or batch-size adjustment would have left the dynamics unchanged, as \(\lambda_{\max}\) remains below both the original and revised \(2/\eta\) thresholds.
Conversely, reducing \(\eta\) raises the $2/\eta$ threshold. Analogously, increasing the batch size leaves $\lambda_{\max}$ unchanged but reduces \textit{Batch Sharpness}. These changes prompt a new phase of progressive sharpening, see Figures \ref{fig:lr_bs_changes:c} and \ref{fig:lr_bs_changes:d}. Notice that, \textit{instantaneously}, the change in batch size does not change the full-batch loss landscape, but only changes the mini-batch landscapes---the fact that this causes a catapult/restarts PS is an indicator that it is indeed the mini-batch landscape (and therefore \emph{Batch Sharpness}) that governs the stability/instability of SGD. 
In such cases, \(\lambda_{\max}\) also rises, but ultimately stabilizes at a lower value than if the entire training had run with the smaller step size/larger batch size.
Again, if stability was governed by \(\lambda_{\max}\), this step-size adjustment would have had the same effect as starting from scratch with the new step size. \

% \newpage
%%%%%%%%%%%%%%%%%%%%%%%%%%%%%%%%%%%%%%%%%%%%%%%%%%%%%%%%%%%%%%%%%%%%%%%%%%%%%%%%
%%                       Two Types of Oscillations                           %%
%%%%%%%%%%%%%%%%%%%%%%%%%%%%%%%%%%%%%%%%%%%%%%%%%%%%%%%%%%%%%%%%%%%%%%%%%%%%%%%%

{
% \vspace{-0.1cm}
\captionsetup[subfigure]{font=footnotesize}  % any size you like
\begin{figure}[ht]
  \centering
  \begin{subfigure}[b]{0.19\linewidth}
    \centering
    \includegraphics[width=\linewidth]{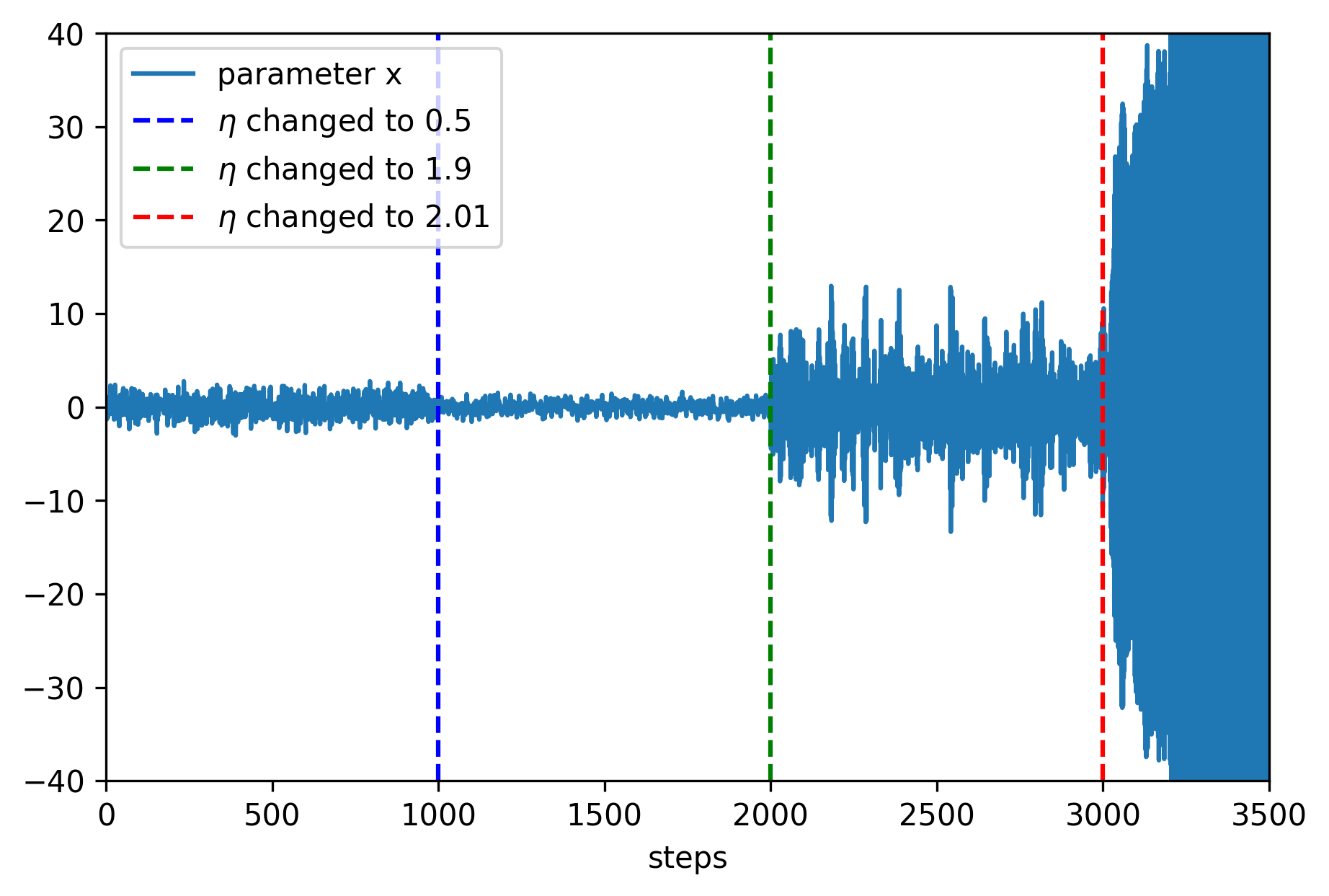}
    \caption{Trajectory}
    \label{fig:quadratic:a}
  \end{subfigure}
  % \hfill
  % (b)
  % \hfill
  % (c)
  \begin{subfigure}[b]{0.19\linewidth}
    \centering
    \includegraphics[width=\linewidth]{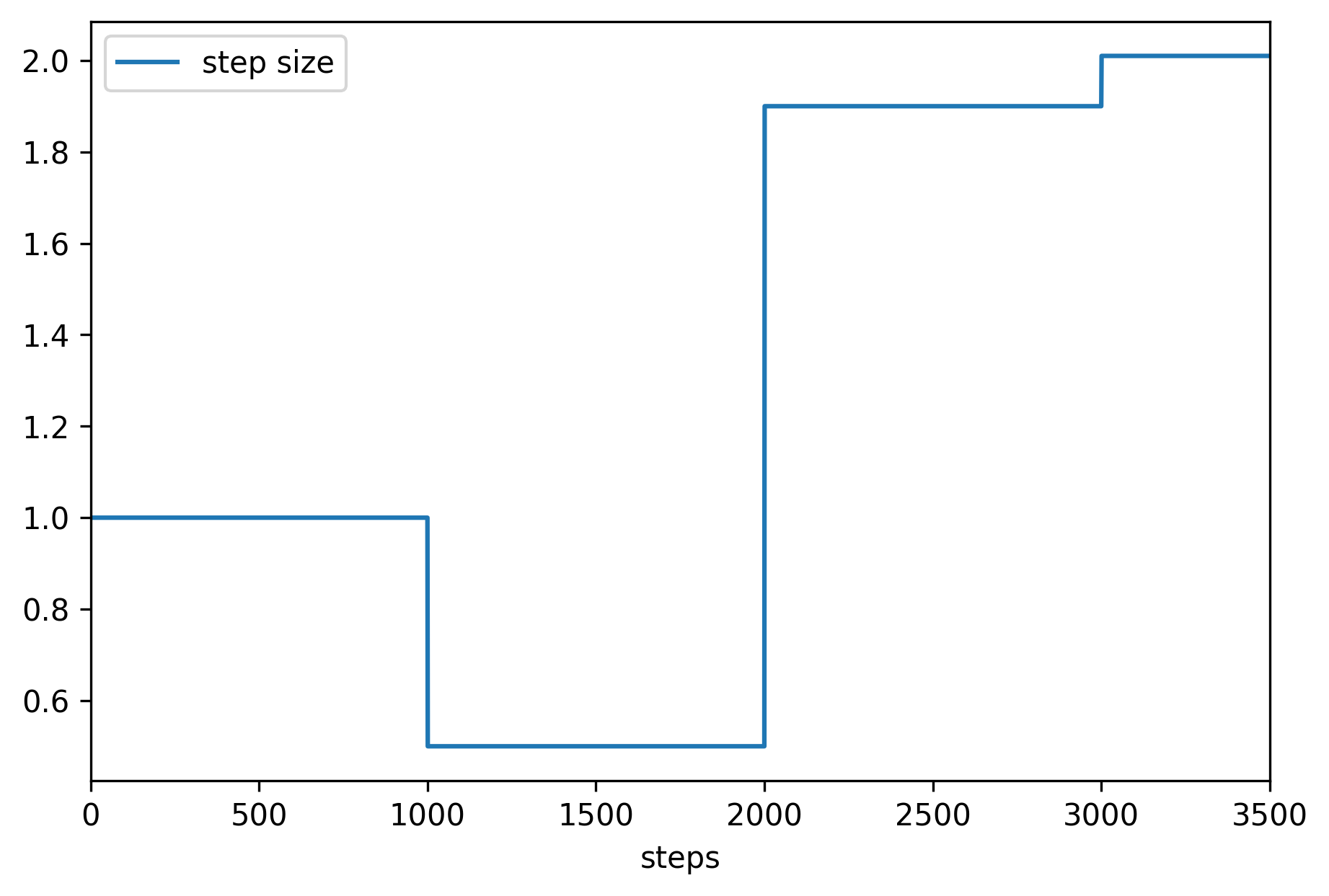}
    \caption{ Step size schedule}
    \label{fig:quadratic:c}
  \end{subfigure}
  % \hfill
  % (d)
  \begin{subfigure}[b]{0.19\linewidth}
    \centering
    \includegraphics[width=\linewidth]{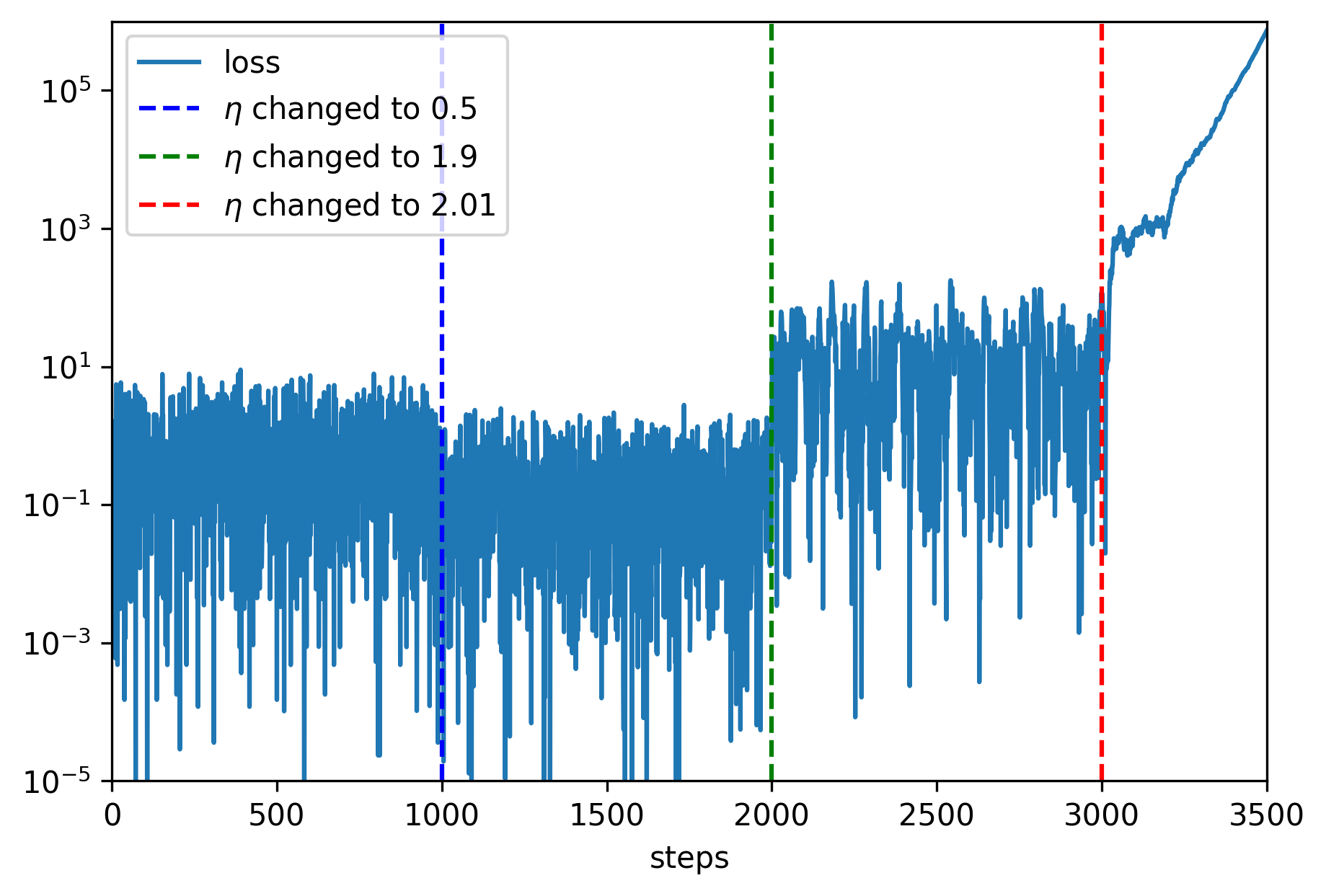}
    \caption{ Loss}
    \label{fig:quadratic:loss}
  \end{subfigure}
  \begin{subfigure}[b]{0.19\linewidth}
    \centering
    \includegraphics[width=\linewidth]{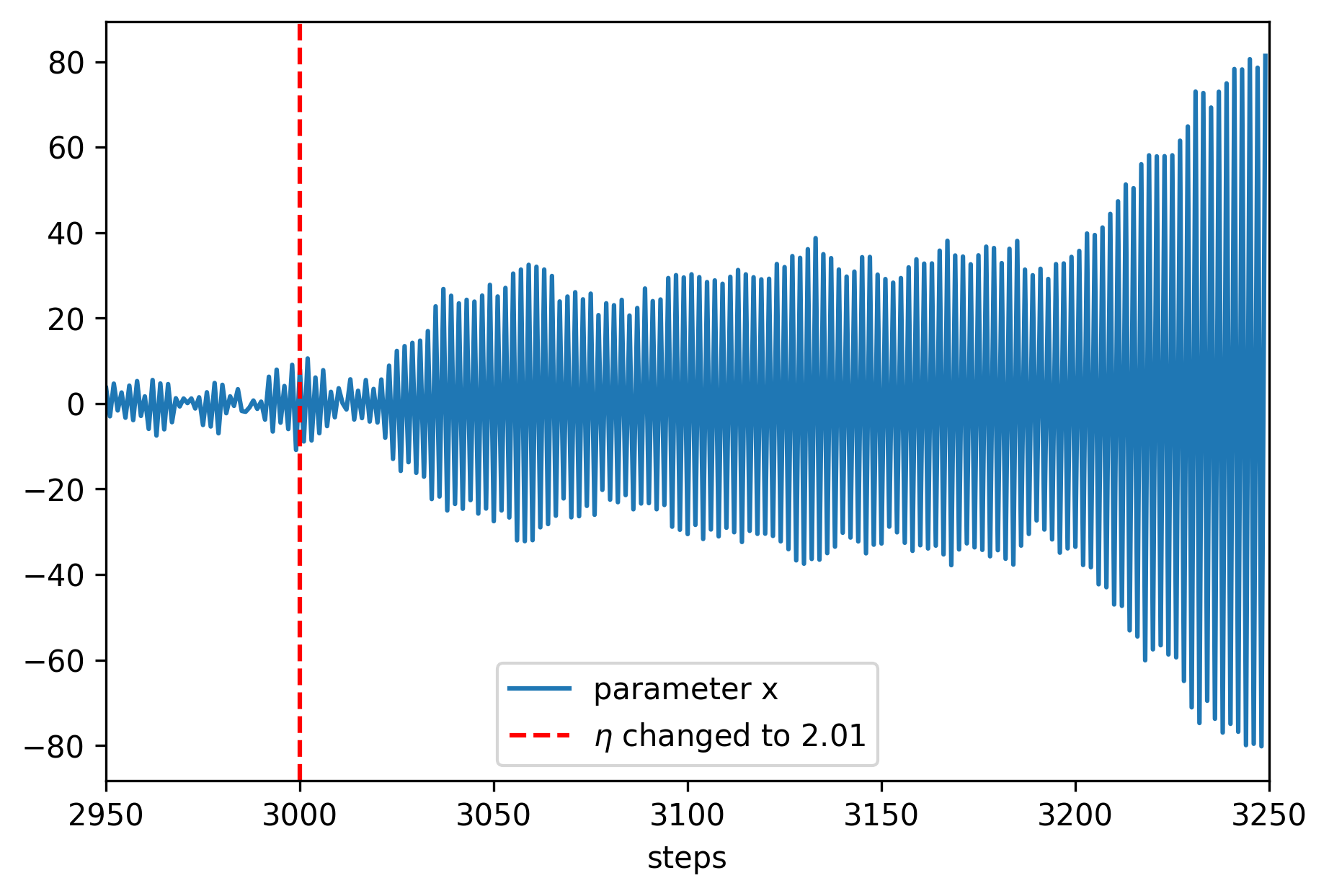}
    \caption{Zoomed-in}
    \label{fig:quadratic:zoom}
  \end{subfigure}
  \begin{subfigure}[b]{0.19\linewidth}
    \centering
    \includegraphics[width=\linewidth]{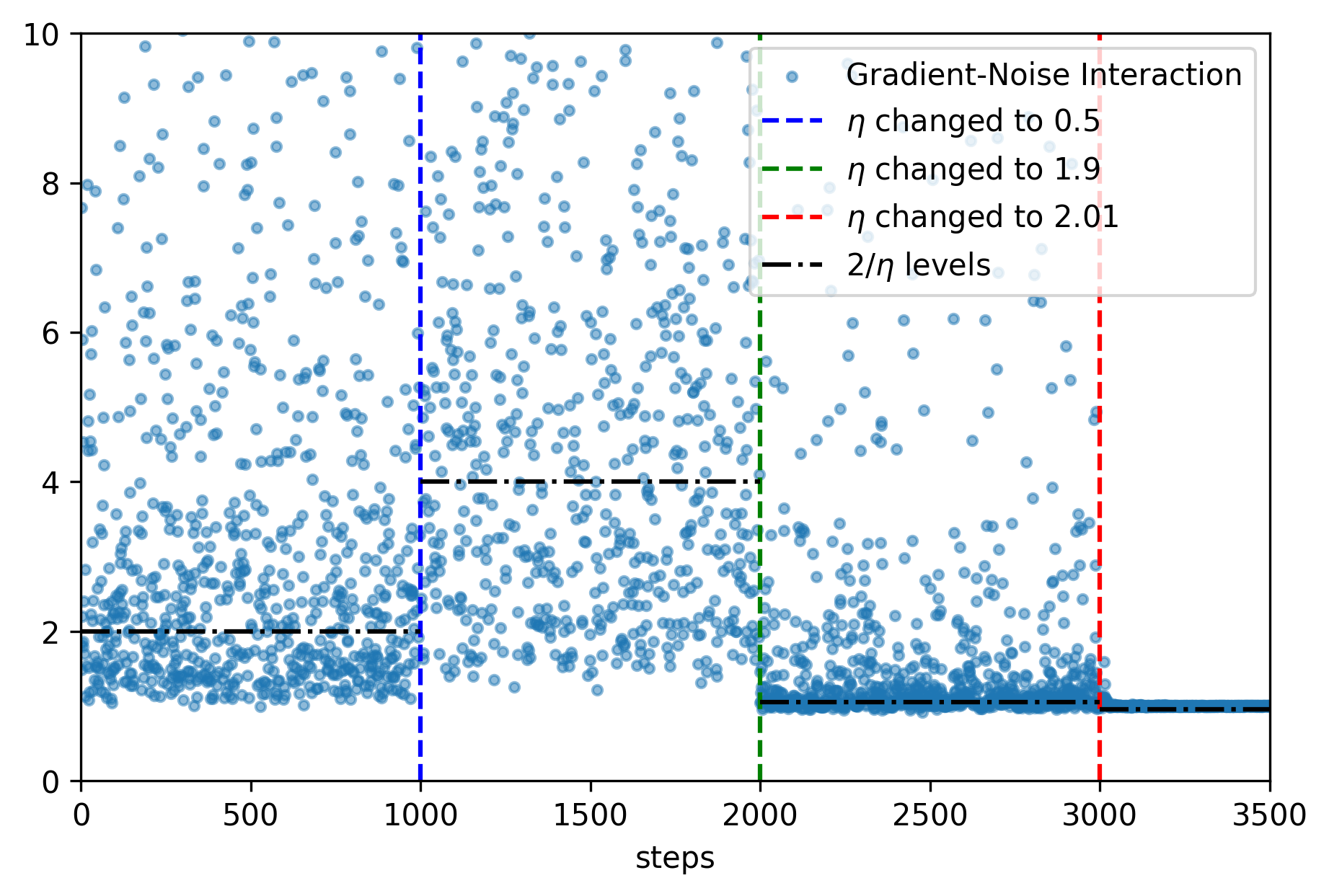}
    \caption{GNI of Eq.\ \eqref{eq:GNI}}
    \label{fig:quadratic:e}
  \end{subfigure}
  \caption{\small
    \textbf{Quadratics:}
    Dynamics of SGD on a 1-D quadratic with $N$ datapoints, $L(x) = \frac{1}{2N} \sum_i (x-a_i)^2$, where $a_i \sim \mathcal{N}(0, 1)$.
    At step size, oscillations are present. Yet, only when the step size becomes larger than 2/sharpness = 2 (last step size change, \red{red} line), the oscillations become unstable (d), the loss diverges (c). Meanwhile, the quantity of \eqref{eq:GNI} consistently stays at $2/\eta$, indicating the presence of \textit{Type-1} oscillations.
    }
  \label{fig:quadratic}
\end{figure}
}

%%%%%%%%%%%%%%%%%%%%%%%%%%%%%%%%%%%%%%%%%%%%%%%%%%%%%%%%%%%%%%%%%%%%%%%%%%%%%%%%
%%                       Two Types of Oscillations                           %%
%%%%%%%%%%%%%%%%%%%%%%%%%%%%%%%%%%%%%%%%%%%%%%%%%%%%%%%%%%%%%%%%%%%%%%%%%%%%%%%%

% \section{Experiment Preliminaries: Noise-Driven \textit{vs} Curvature-Driven Oscillations}
\section{Why Oscillations do not Signal \textsc{EoSS}}
\label{section:two_types_of_oscillations}

In the case of full-batch algorithms, \textsc{EoS} manifests through the emergence of an \textbf{oscillatory regime}.
Practically speaking, one recognizes that full-batch GD trains at \textsc{EoS} when the loss starts oscillating.
Mini-batch SGD, however, always oscillates because its gradient is noisy and the step size does not vanish. 
In this section, we disentangle \emph{which} oscillations signal curvature-limited (or stability-saturated) dynamics as \textsc{EoS} and \textit{which} are just due to noise in the gradients.

\subsection{Two notions of oscillation}
\label{subsec:two-osc-defs}

We first isolate the purely kinematic notion of ``loss wobbling'' and then refine it using the instability framework of
Section~\ref{section:framework}.

\begin{definition}[Oscillatory time window]
\label{def:osc-window}
Let $(\theta_t)_{t\ge0}$ be a training trajectory and write
$\Delta L(\theta_t) := L(\theta_{t+1}) - L(\theta_t)$.
Fix an interval of integer times $I=[t_0,t_1]$ and a constant
$\alpha\in(0,\tfrac12)$.
We say that the loss \emph{oscillates on $I$ at level $\alpha$} if
the loss changes direction with non–negligible frequency:
\[
  \frac{1}{|I|-1}\,\bigl|\{\,t\in I:\Delta L_t\cdot\Delta L_{t+1}<0\,\}\bigr|
  \;\ge\;\alpha.
\]
\end{definition}

This definition is still purely kinematic: it is agnostic to the
underlying mechanism and only records the empirical fact that, on the time window $I$, the loss repeatedly goes up and down and these upward and downward moves are interleaved with non--negligible frequency.
The definition is also independent of the long--term behavior of the
dynamics: the running minimum or running average of the loss could either go down, up, or stay roughly constant.
In what follows we connect oscillations with (in)stability by defining
\textit{Type--1} (stable, noise--dominated wobbling) and \textit{Type--2}
(unstable, curvature--driven) oscillations and characterizing them.

When the trajectory exhibits oscillations (Definition \ref{def:osc-window}) but a valid instability criterion on the local approximation is saturated---or analogously when a perturbation of hyperparameters induces cataputs---we say the oscillations are of \textit{Type-2}.
Unfortunately, \textit{Type-2} oscillations generally look like \textit{Type-1} when just inspecting the loss, thus the most reliable way to understand if the oscillations are stable or not is through the method developed in Section \ref{section:equivalence}.

\subsection{A Certificate of Oscillation: \textit{GNI}}  
We now define the Gradient--Noise Interaction (GNI), which emerges from the second--order Taylor expansion of the loss. This will serve as a certificate of oscillations.

\begin{definition}
\label{def:GNI}
    We denote \underline{G}radient-\underline{N}oise \underline{I}nteraction \textit{(GNI)}:\footnote{\text{Note that both the Hessian $\mathcal H$ and the gradient at the denominator are on the full-batch loss.}}
\vspace{-0.2cm}
\begin{equation}
    \textit{GNI}\,(\theta) \quad := \quad \frac{\E_{B\sim\mathcal{P}_b} \big[ \nabla L_B(\theta)^\top \mathcal{H}(L) \, \nabla L_B(\theta) \big]}{\mednorm{\nabla L(\theta)}^2}.
\end{equation}
\end{definition}
Note that as a direct implication of the descent lemmas, the trajectory oscillates---as for definition above 
% $L(\theta_{t+1}) \geq \max\{L(\theta_t), L(\theta_{t+2})\}$
$\Delta L_t\cdot\Delta L_{t+1}<0$---if and only if \textit{GNI} is bigger than $2/\eta$ at step $t$, smaller at $t+1$, or vice versa:
\begin{lemma}[\textit{GNI} is a certificate of oscillations]
\label{lem:GNI}
The trajectory oscillates at level $\alpha$ if and only if
\[
\frac{\nabla L_B(\theta)^\top \mathcal{H} \, \nabla L_B(\theta)}{\mednorm{\nabla L(\theta)}^2} + \mathcal{O}(\eta^2) \ - \ \frac{2}{\eta} \qquad
\text{changes sign at consecutive steps $\alpha$ times.}
\]
\end{lemma}
% Our \textit{GNI} is $\tr(HS_b)/\tr(S_n)$ of \citet{lee_new_2023}.
The formal version of Lemma \ref{lem:GNI} is Proposition \ref{prop:GNI_formal} in Appendix \ref{appendix:proof_lem_GNI}. The oscillatory regime of SGD has been extensively documented empirically, e.g, by measuring the expected total loss decrease \citet[Appendix H]{cohen_gradient_2021}, \citet{ahn_understanding_2022}. Moreover, \citet{lee_new_2023} explicitly tracked GNI for neural networks\footnote{In their notations $\tr(HS_b)/\tr(S_n)$. They, however, did not established that any form of oscillation depends on it, as for Lemma \ref{lem:GNI}. See Appendix \ref{appendix:comparison} for further detailed comparison with previous work.}. 
Importantly, we defined \textit{GNI} as an expectation over sampling the batches of the quantity in Lemma \ref{lem:GNI}. \textit{GNI} thus oscillates whenever the expected behavior of the loss is oscillatory---no matter its stability, see Figure \ref{fig:lee-jang-comparison_body}. 
Note also, that oscillations as in Lemma \ref{lem:GNI} may appear even though the time-average or the running minimum of the loss decreases, as we see in the experiments.

\subsection{\textit{GNI} is not an Instability Criterion: Oscillations in SGD Are Expected}
\label{subsec:gni-not-instability}

Mini-batch SGD with a fixed, linearly stable step size typically exhibits noisy oscillations around a minimum, even in flat regions where the Hessian is small. This is classical in stochastic approximation \citep{robbins_stochastic_1951,mandt_variational_2016,bottou_optimization_2018,mishchenko_random_2020} and is precisely why step sizes are traditionally annealed like $t^{-1}$. In the language of Section~\ref{section:framework}, such behavior is \emph{stable}: the iterates wiggle but do not diverge, so oscillations alone cannot be taken as evidence of instability.

Our framework of Section \ref{section:framework} makes explicit what instability is and what an instability criterion must achieve:
\begin{itemize}[leftmargin=1.5em,itemsep=0.2em,parsep=0pt]
\vspace{-0.4cm}
    \item\textbf{Not simply:} once its threshold is crossed, a single SGD step increases a quantity linked to a Lyapunov function of the dynamics.
    \item\textbf{But, importantly:} this increase persists over many steps, forcing the trajectory into a catapult regime.
\vspace{-0.5cm}
\end{itemize}
Thus a valid instability criterion for the local dynamics must rule out absorption into a stationary oscillatory state. \textit{GNI} fails precisely at this requirement. In convex settings with a fixed, linearly stable, step size $\eta$, SGD converges to a stationary distribution $\pi$ supported near a local minimum, even when initialized at that minimum \citep{robbins_stochastic_1951,mandt_variational_2016,bottou_optimization_2018,mishchenko_random_2020}.
In Proposition~\ref{prop:lee-jang} and Appendix~\ref{appendix:oscillations_nn} we go further showing that, if $\theta\sim\pi$, then:
\begin{itemize}[leftmargin=1.2em,itemsep=0.2em,parsep=0pt]
\vspace{-0.3cm}
    \item $\textit{GNI}(\theta)$ is centered at (or above) $2/\eta$ with large variance;
    \item The training loss time-averages at $\E[L(\theta_t)] \approx \eta^2/2$.
    \item If one perturbs slightly $\eta \curvearrowleft \tilde \eta$, \textit{GNI} and the dynamics locally diverge to restabilize again to a new $\tilde \pi$.
\vspace{-0.3cm}
\end{itemize}
These are exactly the hallmark properties of stable Type--1 (noise-driven) oscillations.

\begin{proposition}
\label{prop:lee-jang}
Assume each $L_i$ is convex.
Around a local minimum $\theta^*$, fix a linearly stable\footnote{In the sense either of \citet{ma_linear_2021} or Definition \ref{def:insta}. }  $\eta>0$
Then the trajectory of SGD settles in a stationary distribution $\theta \sim \pi$ characterized by \textit{Type-1} oscillations but not \textit{Type-2} and satisfying
\begin{equation}
\label{eq:GNI}
    \frac{\E_{\theta \sim \pi} \big[\E_{B\sim\mathcal{P}_b} \big[ \nabla L_B(\theta)^\top \mathcal{H} \, \nabla L_B(\theta) \big]\big]}{\E_{\theta \sim \pi}\big[\mednorm{\nabla L(\theta)}^2\big]}
    \quad = \quad \frac{2}{\eta} \Bigl[\,1+\mathcal{O}(\eta)\Bigr],
\end{equation}
independently of the size of $\mathcal{H}$ the moments of the distribution of $\mathcal{H}(L_B)$, $B \sim \mathcal{P}_b$.
\end{proposition}

This has two \textit{consequences}:
\begin{itemize}[leftmargin=1.5em,itemsep=0.2em,parsep=0pt]
\vspace{-0.3cm}
    \item[\textbf{1)}] The \emph{\textbf{typical} long-run behavior} of SGD is to have $\textit{GNI}(\theta)$ centered near $2/\eta$---and to be in an oscillatory regime. 
    \item[\textbf{2)}]  Observing values around this level or oscillations is not, by itself, a sign of being close to instability. Increasing slightly the step size, increases the variance of the stationary distribution and the time-averaged loss; does \emph{not} induce a catapult or explosion.
\vspace{-0.3cm}
\end{itemize}
\textit{GNI} is therefore \emph{only a certificate of oscillation}, not an instability criterion in the sense of Definition~\ref{def:stab_not} and we expect oscillations to happen even in the strongly convex setting.

% \newpage

\section{On Stability}
\label{section:math}

Section~\ref{section:eoss} showed \textit{empirically} that \textbf{(i)} mini-batch SGD typically enters an \textsc{EoSS} regime where \textit{Batch Sharpness} stabilizes near \(2/\eta\), and that \textbf{(ii)} small destabilizing perturbations of the hyperparameters (raising \(\eta\), decreasing batch size) then trigger catapults, whereas the same perturbations are harmless before saturation. This latter observation is enough to show that SGD trains at \textsc{EoSS} and that the phase transition happen at the same time as \textit{Batch Sharpness} stabilizing, see Section \ref{section:equivalence}. However, this is not yet a proof of the \textbf{dependence} on \textit{Batch Sharpness} of this phenomenon.

Section~\ref{section:framework} introduced \emph{instability criteria}:
scalar tests \(f(\theta)\) with thresholds \(c\) such that
\(f(\theta_0)>c\) is sufficient to ensure divergence of the local dynamics on a trusted region~\(U\)---Definition \ref{def:stab_not}. 
In this section we instantiate that framework with \textit{Batch Sharpness} and show that the empirical picture above is not accidental: on the quadratic approximation, \textit{Batch Sharpness} plays for stochastic optimization exactly the role that \(\lambda_{\max}\) plays for full-batch gradient descent. 

\emph{Batch Sharpness at \(2/\eta\) corresponds with the saturation of a genuine instability criterion}.

Beyond Theorem \ref{theo:1} itself, this section introduces three key conceptual consequences of our experiments:
\begin{itemize}[leftmargin=1.2em,itemsep=0.2em,parsep=0pt]
  \item \textbf{Stability of gradients rather than parameters}
        (Section~\ref{section:stab_grad}):
        for neural networks the relevant mini-batch Lyapunov function is $\|\nabla L_B\|^2$, not the global \(\|\theta\|^2\) as in classical linear stability analyses.
  \item \textbf{Average stability on mini--batch landscapes}
        (Section~\ref{section:stab_minibs}):
        \textit{Batch Sharpness} emerges as an expected Rayleigh quotient on the \emph{mini-batch} landscapes; from a stability perspective, SGD is better viewed as a dynamics on a random ensemble of landscapes than as a noisy
        trajectory on a single averaged one.
  \item \textbf{Generalizing \(\lambda_{\max}\) via alignment}
        (Section~\ref{section:alignment}):
        in full-batch GD alignment with the top-eigenvector comes for free, so \(\lambda_{\max}\) suffices; in SGD the alignment between \(\nabla L_B\) and \(\mathcal{H}(L_B)\) is itself random and must be built into the curvature statistic, which is exactly what Batch Sharpness does.
\end{itemize}

\subsection{Batch Sharpness as an instability criterion}
\label{section:bs-instability}

We first make precise in what sense Batch Sharpness governs instability on the quadratic approximation.

Recall the mini-batch Rayleigh quotient
\[
S_B(\theta)
\;:=\;
\frac{\nabla L_B(\theta)^\top H_B(\theta)\,\nabla L_B(\theta)}
     {\|\nabla L_B(\theta)\|^2},
\]
whenever $\nabla L_B(\theta)\neq 0$ and otherwise $S_B(\theta) = \lambda_{\max}(H_B)$.
\textit{Batch Sharpness} is the expected Rayleigh quotient
\begin{equation}
\label{eq:bs-def-again}
Batch\ Sharpness(\theta)
\;:=\;
\E_{B\sim\mathcal P_b}\big[ S_B(\theta)\big].
\end{equation}

over the distribution of batch-sampling $\mathcal{P}_b$ used by the optimizer. On the quadratic model of Definition~\ref{def:insta}, on the mini-batch $B$ the update
$\theta_{t+1}=\theta_t-\eta \nabla L_B(\theta_t)$ is linear in~$\theta_t$, and a natural Lyapunov function is the squared norm of the mini-batch gradients,
\[
V(\theta_t) \;:=\; \|\nabla L_B(\theta_t)\|^2.
\]
The following result, proved in Appendix~\ref{appendix:proof_theo}, shows that \textit{Batch Sharpness} crossing the level $2/\eta$ by a fixed margin induces a catapult by forcing $\E_{B \sim \mathcal{P}_b}[ V(\theta_{t+1})/V(\theta_t) | \theta_t ] > 1$.

\begin{theorem}[Batch Sharpness is an instability criterion]
\label{theo:1}
Work on the quadratic approximation of Definition~\ref{def:insta}, and fix a stepsize $\eta>0$.
If at some time $t$ there exists $\epsilon>0$ such that
\[
Batch\ Sharpness(\theta_t) \;\ge\; \frac{2+\epsilon}{\eta},
\]
then along the quadratic approximation, the expectation over batch sampling of the parameter vector obtained by SGD started at $\theta_t$ experiences a catapult. 
In particular, $Batch\ Sharpness(\theta)$ with threshold $c=2/\eta$ is a valid instability criterion in the sense of Definition~\ref{def:stab_not}.
\end{theorem}

The proof uses only the second-order Taylor expansion of the mini-batch losses, together with Jensen and Cauchy–Schwarz inequalities, to show that $Batch\ Sharpness(\theta_t)>(2+\epsilon)/\eta$ forces a multiplicative drift on $V(\theta_t)$.
Theorem~\ref{theo:1} is intentionally one-sided: it does not claim that $Batch\ Sharpness(\theta_t)\le 2/\eta$ is \emph{necessary} for stability.
This matches our framework from Section~\ref{section:framework}: for diagnosing \textsc{EoSS} we need a scalar quantity whose threshold is \emph{sufficient} for instability, not a complete characterization of all stable regimes.

Combining Theorem~\ref{theo:1} with Lemma~\ref{lem:perturb-catapult} yields the link to the empirical picture of Section~\ref{section:eoss}: once \textit{Batch Sharpness} empirically saturates near $2/\eta$, any small destabilizing perturbation of $\eta$ or $b$ pushes it above $(2+\epsilon)/\eta$ and must trigger a catapult on the quadratic model. 
This is precisely the operational notion of \textsc{EoSS} used in our experiments.

\subsection{Key Implications: Gradients and Mini Batches}
\label{section:stab_grad}

Classical linear stability analyses for SGD on quadratics typically track the second moment of the parameters $\|\theta_t\|^2$ as a Lyapunov function for \textbf{the full trajectory}, and derive conditions under which this quantity stays bounded, see e.g.\ \citet{ma_linear_2021,mulayoff_exact_2024}.
Theorem~\ref{theo:1} instead identifies the squared norm of the mini-batch gradients $V_B:=\|\nabla L_B \|^2$ as the relevant Lyapunov function for every batch $B$ independently as the Lyapunov functions $\{V_B\}$ characterizing the \textsc{EoSS} regime.

\begin{summarybox}
    \textit{Theorem \ref{theo:1} thus has two key consequences:}
\vspace{-0.2cm}
\begin{itemize}[leftmargin=1.8em,itemsep=0.2em,parsep=0pt]
    \item SGD is stable when the steps are, on average, stable on their own mini-batch bowls, not when they are stable with respect to a single averaged loss surface.
    \item \textsc{EoSS} is about mini-batch gradients norm $\E_{B}\|\nabla L_B(\theta_t)\|^2$, rather than, e.g., $\|\theta_t - \theta^*\|^2$ or $L(\theta_t)$.
\end{itemize}
\end{summarybox}

\paragraph{Stability of mini-batch gradients, not parameters.}
On the quadratic model, the two key steps in the proof of Theorem~\ref{theo:1} can be informally summarized as
\begin{equation}
\label{eq:grad-explosion}
Batch\ Sharpness(\theta_t)  > \frac{2+\epsilon}{\eta}
\quad \Longrightarrow\quad
\E_B \! \Bigg[
\frac{\|\nabla L_B(\theta_{t+1})\|^2}{\|\nabla L_B(\theta_t)\|^2}
\,\Bigg|\,
\theta_t
\Bigg]
 > (1+\epsilon)^2
\end{equation}
and
\begin{equation}
\E_B \! \Bigg[
\frac{\|\nabla L_B(\theta_{t+1})\|^2}{\|\nabla L_B(\theta_t)\|^2}
\,\Bigg|\,
\theta_t
\Bigg]
 > (1+\epsilon)^2
\quad \Longrightarrow\quad
\E_B \! \Bigg[
\frac{\|\nabla L_B(\theta_{\red{\mathcal T}})\|^2}{\|\nabla L_B(\theta_0)\|^2}
\,\Bigg|\,
\theta_0
\Bigg] 
 > (1 + \epsilon)^{2 \red{\mathcal T}}
\end{equation}
for all sufficiently small stepsizes.
Iterating \eqref{eq:grad-explosion} shows that
$t\mapsto \E\|\nabla L_B(\theta_t)\|^2$ grows geometrically as long as $Batch\ Sharpness(\theta_t)$ stays above $2/\eta$.
From a stability viewpoint, \textsc{EoSS} is therefore best understood as a transition in the behavior of $\E_{B}\|\nabla L_B(\theta_t)\|^2$, rather than in $\|\theta_t-\theta^*\|^2$ or the loss $L$ itself.
In particular, one chooses a Lyapunov function to have a practical measure to control the dynamics. We showed in Section \ref{section:two_types_of_oscillations} that the loss $L(\theta)$ is not a good such local measure, as descent lemma is not informative of convergence and oscillations are present. 
The functions $\E[ \theta \theta^\top]$ or $\|\theta_t-\theta^*\|^2$ are not informative of the behavior as on this dynamical system subject to progressive sharpening, and often moving towards solutions at infinity, its divergence pattern may not be indicative of stability of the trajectory.
We showed empirically that a combination of the measures $\| \nabla L_B (\theta)\|^2$, which are Lyapunov functions for the quadratic approximation of the landscape on the single batches $B$, are the right point of view to take when discussing instability or convergence of SGD for neural network.

This perspective matches the phenomenology of the \textit{Type-2} trajectories in Section~\ref{section:two_types_of_oscillations}: parameters may remain confined to a bounded region of parameter space for a long time, while the mini-batch gradients become increasingly erratic and large, eventually generating loss spikes once the local quadratic approximation breaks down.
This perspective aligns with some observations from \citet{zhang_neural_2022}.
Last, note that also full-batch \textsc{EoS} is about gradients' instability, just in the full-batch setting any Lyapunov function behaves in the same way, see Section \ref{section:equivalence}.

\paragraph{Average stability on mini-batch landscapes.}
\label{section:stab_minibs}

We now clarify why Batch Sharpness appears as an \emph{expectation over batches}, and why this means that instability in argument of SGD is naturally measured on the \emph{mini-batch landscapes} rather than on the averaged loss.

Consider a single mini-batch $B$ and the SGD step
$\theta_{t+1}=\theta_t-\eta \nabla L_B(\theta_t)$.
On the quadratic approximation and for small~$\eta$, the second-order Taylor expansion of $L_B$ gives the usual descent lemma:
\begin{equation}
\label{eq:mini-desc-body}
L_B(\theta_{t+1}) - L_B(\theta_t)
\;=\;
-\eta\,\|\nabla L_B(\theta_t)\|^2
\;+\;
\frac{\eta^2}{2}\,\nabla L_B(\theta_t)^\top H_B(\theta_t) \nabla L_B(\theta_t)
\;+\;\mathcal O(\eta^3).
\end{equation}
Up to $\mathcal O(\eta^2)$ corrections, the sign of the right-hand side is controlled by the Rayleigh quotient $S_B(\theta_t)$:
\[
S_B(\theta_t) \;\lessgtr\; \frac{2}{\eta}
\quad\Longleftrightarrow\quad
L_B(\theta_{t+1}) - L_B(\theta_t) \;\lessgtr\; 0.
\]
In other words, a single step on batch $B$ is locally stable on its own mini-batch quadratic approximation if and only if its directional curvature satisfies $R_B(\theta_t)\le 2/\eta$.
Taking expectations of $S_B(\theta)$ in \eqref{eq:mini-desc-body} over the mini-batch sampling distribution $\mathcal P_b$ yields the expected Rayleigh quotient $Batch\ Sharpness(\theta_t)$ in~\eqref{eq:bs-def-again}:
\begin{center}
    \textit{Stability (on average) on the mini-batch landscape, not stability on the averaged landscape},
\end{center}

A “max over batches” criterion, such as $\max_B S_B(\theta_t)\le 2/\eta$, would enforce stability of \emph{every} mini-batch step separately, monotonic convergence with probability 1.
A “min over batches” $\max_B S_B(\theta_t)\ge 2/\eta$ would induce divergence with probability 1.
The expectation of the Rayleigh quotient $Batch\ Sharpness(\theta_t) \sim 2/\eta$ implies that the multiplier to $\E_B \big[ \| \nabla L_B (\theta)\|^2 \big]$ is approximately 1.

Thus the expectation in Batch Sharpness is precisely the scalar that:
\vspace{-0.2cm }
\begin{itemize}[itemsep=0.2em,parsep=0pt]
  \item arises from the Taylor expansion of natural Lyapunov functions;
  \item is monotone in destabilizing hyperparameters such as $\eta$ and $1/b$; and
  \item is estimable in high dimension by sampling mini-batches.
\end{itemize}

\subsection{Alignment and Relation to \texorpdfstring{$\lambda_{\max}$}{lambda\_max}}
\label{section:alignment}

We showed in the previous part of this section in what sense \textit{Batch Sharpness} generalizes $\lambda_{\max}$ in the mini-batch setting: it is the instability criterion that empirically saturates. The question though, is \textit{why}. Meaning, why is the case that more cumulants play a role here, unlike in the classical linear stability? 
\begin{center}
\emph{
Why is Batch Sharpness about mini batch gradients, why is Batch Sharpness directional,\\ while $\lambda_{\max}$ is just static curvature?}
\end{center}
Both $\lambda_{\max}$ and the quantities governing classical linear stochastic stability \citep{wu_how_2018,ma_linear_2021,mulayoff_exact_2024} do not change along the quadratic approximation.
This is a major issue in trying to develop an self-stabilization argument as in \citep{damian_self-stabilization_2023}. Indeed, to develop an analogous argument one needs a well-defined value of curvature along the central flow \citep{cohen_understanding_2024} (minimum of the quadratic approximation, while \textit{Batch Sharpness} changes.

\paragraph{In SGD alignment no longer comes for free.}
For mini-batch SGD, different batches see different Hessians $H_B(\theta_t)$ and gradients $\nabla L_B(\theta_t)$. In neural networks, the gradients do not align neither under instability and the mini-batch Hessians do not commute---do not share a single eigenbasis.
The deterministic quantity $\lambda_{\max}$ governs \textit{the} direction of divergence in GD---in SGD there is no such direction.

\textit{Batch Sharpness}, on the other hand, precisely captures this interaction between curvature and alignment:
\[
S_B(\theta_t)
\;=\;
\sum_j \lambda_{B,j}(\theta_t)\,
\frac{|g_{B,j}(\theta_t)|^2}{\|\nabla L_B(\theta_t)\|^2},
\]
where $(\lambda_{B,j},u_{B,j})$ are the eigenpairs of $H_B(\theta_t)$ and
$g_{B,j}$ are the gradient components in that basis.
Each eigenvalue is weighted by how much of the step actually lies in its direction, and $Batch\ Sharpness(\theta_t)$ averages this directional curvature over batches.
Alignment, which was an automatic byproduct of the dynamics in full-batch GD, becomes an explicit ingredient of the curvature statistic in the stochastic setting.
See Appendix \ref{appendix:alignment} for empirical plots of the evolution of the alignments of minibatch gradients.

\newpage

\section{On the Fate of $\lmax$}
\label{section:lambda_max}
\begin{figure}[htbp]
  \centering
  \begin{subfigure}[t]{0.32\textwidth}
    \includegraphics[width=\linewidth]{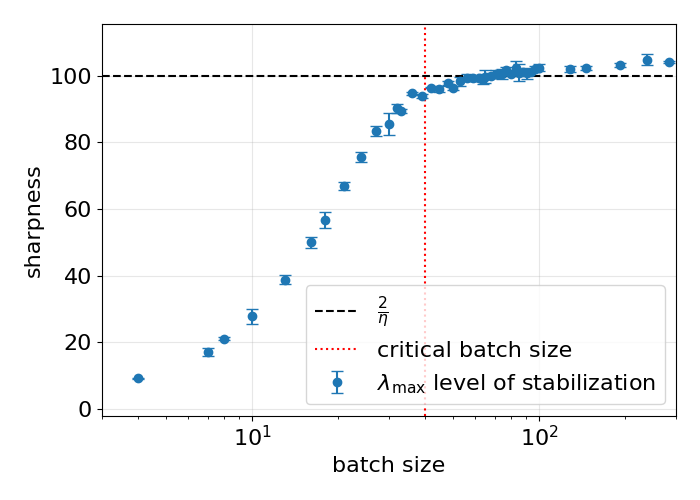}% <- replace with your file
    \caption{$\lmax$ level, MLP}
    \label{fig:a}
  \end{subfigure}\hfill
  \begin{subfigure}[t]{0.32\textwidth}
    \includegraphics[width=\linewidth]{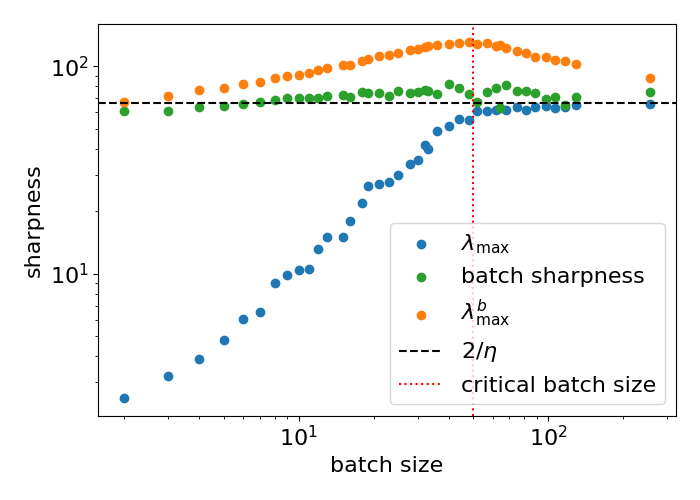}
    \caption{$\lmax$ and others, CNN}
    \label{fig:b}
  \end{subfigure}\hfill
  \begin{subfigure}[t]{0.32\textwidth}
    \includegraphics[width=\linewidth]{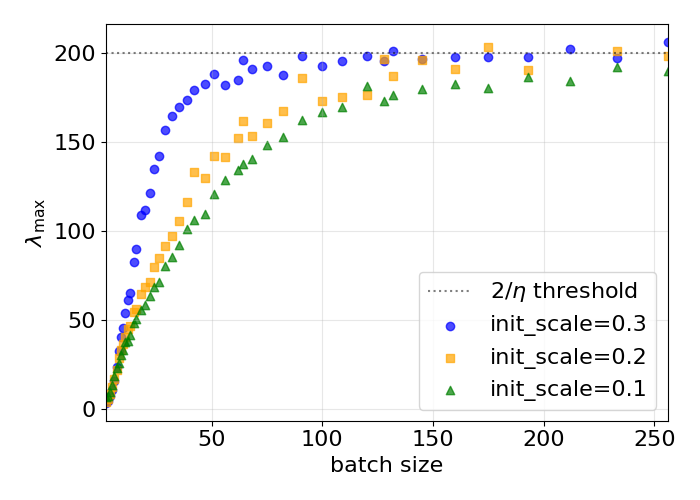}
    \caption{$\lmax$ level, varying init-scale}
    \label{fig:c}
  \end{subfigure}
  \caption{\textbf{Level of Stabilization of $\lmax$.} We track the final level of $\lmax$ at convergence vs batch size. (a) Stabilization level of $\lmax$ for MLP, $\eta=0.02$, error bars show spread over initialization seeds. (b) Stabilization levels of $\lmax$, $\lmax^b$ and \textit{Batch Sharpness} for CNN, $\eta=0.03$. Note how only \textit{Batch Sharpness} has a consistent level of stabilization, see also Appendix \ref{appendix:other_quantities}. (c) Variance in the level of stabilization of $\lmax$ depending on the scale of weights at \textit{initialization}.}
  \label{fig:lmax_levels}
\end{figure}
In previous sections, we have established that in the case of mini-batch training, it is the \textit{Batch Sharpness} that stabilizes at $2/\eta$ and governs the \textsc{EoSS} dynamics. Here, we examine the behavior of $\lambda_{\max}$ once
\textsc{EoSS} is reached. A key aspect of the original \textsc{EoS} analysis is, indeed, that the controlling
quantity---$\lambda_{\max}$---has an immediate geometric interpretation, is widely studied in neural network literature, and plays a fundamental role in convergence analyses.  
In contrast, the \textsc{EoSS} framework replaces $\lambda_{\max}$ with \emph{Batch Sharpness}, a statistic whose connection to generalization and role
in optimization theory is largely unexplored.

Below, we describe the phenomena we observed in vision classification tasks trained with MSE, ablating on batch sizes, step sizes, architectures and datasets:
\begin{itemize}[leftmargin=0.8em,itemsep=0.2em,parsep=0pt]
    \item \textbf{Progressive Sharpening. }
    $\lambda_{\max}$ also undergoes progressive sharpening, 
    increasing at most as long as \textit{Batch Sharpness} does.
    \item \textbf{Phase Transition.}
    Once \textit{Batch Sharpness} plateaus at $2/\eta$, $\lambda_{\max}$ stops increasing. If it changes subsequently, it only decreases from this time on.
    \item \textbf{By-product of \textsc{EoSS}.} Stabilization of $\lambda_{\max}$ arises as a consequence of \textit{Batch Sharpness} stabilizing, as demonstrated in Section \ref{section:eoss_natural_generalization}.
    \item \textbf{Smaller Batches $\Rightarrow$ Flatter minima.} Reducing the batch size monotonically decreases the plateau level of $\lambda_{\max}$. This occurs due to a gap between it and \textit{Batch Sharpness}, which becomes bigger with smaller batch size---due to the misalignment of per-sample Hessians, see Appendix \ref{appendix:alignment}. This behavior aligns with the long‑standing empirical observation that smaller batches locate flatter minima \citep{keskar_large-batch_2016,jastrzebski_catastrophic_2021}). Notably, the gap itself grows dynamically during progressive sharpening, sometimes even forcing $\lambda_{\max}$ downward for small batch sizes (see, e.g., Figure \ref{fig:full-cifar}).
    \item \textbf{A Critical Batch Size Marks the SGD\,$\to$\,GD Crossover. }
    If we analyze the dependence of stabilization level of $\lambda_{\max}$ vs 
    batch size, a distinct transition emerges at what we term the \emph{critical batch size}, $b_c$, see Figure \ref{fig:lmax_levels}. For $b < b_c$, the plateau levels of $\lambda_{\max}$ decrease sharply as $b$ becomes smaller; for $b > b_c$, the curve flattens, gradually converging toward the full‑batch value.     This transition marks the point at which the mini‑batch loss landscapes begin to approximate the full‑batch landscape closely enough, restoring GD-like dynamics.
    \item \textbf{Path-dependence.}
    The trajectory of $\lambda_{\max}$ 
    is not determined solely by final hyperparameter values; rather, the level of $\lambda_{\max}$ is \emph{path‑dependent}: it inherits the history of progressive sharpening up to the moment \textsc{EoSS} is reached (see Figure \ref{fig:lr_bs_changes}). This underscores the complex role of step size and batch size scheduling.
    \item \textbf{Dependence on Other Hyperparameters.} Another manifestation of 
    path-dependence is shown through dependence of level of stabilization on seemingly inconsequential hyperparameters---e.g. the scaling of the random weights \textit{at initialization} (Figure  \ref{fig:lmax_levels}c). 
    Although one would expect the initialization to be "forgotten" throughout the training, it instead affects the dynamics of progressive sharpening, highlighting the inconsistency of stabilization level of $\lmax$.
\end{itemize}
These observations clarify that fully understanding SGD dynamics will necessitate a detailed dynamical theory encompassing both the progressive sharpening phase and subsequent stabilization. Additional insights regarding the dynamics of other quantities relevant to SGD dynamics---trace of the loss Hessian and $\lmax^b := \E_B[\lmax(\mathcal H_B)]$---are provided in Appendix \ref{appendix:other_quantities}.

% \newpage

\section{Implications}
\label{section:noise_vs_sgd}

\subsection{Failure of Classical Optimization Arguments}

Classical optimization theory comes with a stability 
% rails
constraints, e.g.,
for GD pick a step size $\eta \le 2/\lambda_{\max}$ along the trajectory, see Section \ref{section:in_stab}.
These are required to prove a descent-type lemma telling the loss goes down at \emph{every step}.
\textsc{EoS} result was so disruptive precisely because it makes the violation impossible to ignore: full-batch GD routinely operates with curvature at (or beyond) the textbook stability boundary, the loss is \emph{not} monotone, and yet training keeps making progress.
In fact, this is not merely “nonconvexity is hard”---it is a regime where the standard proof templates do not even get off the ground for convex functions, and where trajectories need not converge to stationary points in the usual sense \citep{ahn_understanding_2022,zhang_neural_2022}.

One could hope that SGD was “safe”. Indeed, (1) along the mini-batch trajectories it was well known that $\lambda_{\max}$ is smaller \citep{keskar_large-batch_2016,jastrzebski_three_2018}; and (2) a clean, tight, stability threshold to use for mini-batch proof was not previously known.
In the mini-batch case, most analyses defensively assume a \emph{uniform} smoothness bound over all mini-batch objectives (see e.g.\ \citet{gurbuzbalaban_why_2021,mishchenko_random_2020}).
But in practice that worst-case Lipschitz constant is so large\footnote{It is infinite on the whole parameter space and anyways practically too large in a neighborhood of the trajectory.} that it turns the question “is $\eta$ too large?” into a non-answer: \emph{everything} would be “too large,” including the settings that train perfectly well.
In this article we finally unify all these issues by characterizing the dynamics and explaining a number of previous observations:
\begin{summarybox}
\begin{enumerate}[leftmargin=1.8em,itemsep=0.2em,parsep=0pt]
    \item SGD is not safe either, descent-type lemmas are violated too.
    \item We found a mechanism for which smaller batches force smaller Hessian \citep{keskar_large-batch_2016}.
    \item We have a much better, tighter, guess for what a too large step size is, one such that
    \vspace{-0.2cm}
    \[\textit{Batch Sharpness} \geq 2/\eta.\]
\end{enumerate}
\end{summarybox}

Our empirical answer is simple and clean:
SGD, as GD, “shouldn’t work, but does” in neural networks.
Precisely, SGD has its own edge, which is not set by $\lambda_{\max}$.
What saturates at the $2/\eta$ threshold is \emph{Batch Sharpness}---the expected directional curvature of the \emph{mini-batch} loss along its own stochastic gradient.
Equivalently, the meaningful instability comparator for SGD is the mini-batch landscape
seen by the step, not the full-batch landscape seen by the theorist.
In short, \textsc{EoSS} says SGD breaks the monotone-descent intuition---but with the “edge” measured on the mini-batch geometry that SGD actually experiences.
Even better, this condition, unlike some proposed previously, is not a $d\times d\times d\times d$ object \citep{ma_linear_2021,mulayoff_exact_2024}---which is theoretically sharp but operationally opaque.

\subsection{Failure of Arguments about Minima}
\label{sec:location_distributional}

Following claimes by \citet{hochreiter_simplifying_1994,hochreiter_flat_1997} first, and \citet{keskar_large-batch_2016,jastrzebski_three_2018} later, a large body of theory asked a location question: \emph{where does SGD end up, and why?}
There, intuition that the noise enables escaping local minima and traveling towards flatter areas has been established in a number of settings---all of them in the stable regime.
In most of the implicit-bias literature, indeed, one assumes (or proves) that after some time the iterates behave like descent on a fixed landscape and converge to stationary points, so it is mathematically consistent to attribute ``selection'' to \emph{gradient noise} on top of a deterministic drift
(e.g., \citep{beneventano_how_2024,smith_origin_2021,blanc_implicit_2019,haochen_shape_2020,damian_label_2021,li_what_2022,beneventano_trajectories_2023,shalova_singular-limit_2024}.).

\textsc{EoSS} forces an uncomfortable reframing. 
In the progressive-sharpening regime, not only SGD is not merely “GD + noise” exploring a fixed landscape, but location of convergence needs to be changed to area of stabilization.
Most importantly, the search for \textit{the} stationary point is ill-defined, the area of stabilization is often not stationary \cite{ahn_understanding_2022,damian_self-stabilization_2023}. We show here that this is the case also for mini-batch algorithms. 
Moreover, under \emph{Progressive Sharpening}, SGD does not interact with a single curvature object: each step sees a different mini-batch Hessian, and stability is controlled by the \emph{mini-batch geometry that SGD actually experiences}.
Consequently, location is not a property of the full-batch Hessian alone, nor of a covariance of gradient noise on a static landscape:
it can depend qualitatively on the \emph{distribution} of mini-batch Hessians---including higher cumulants.
\begin{summarybox}
% \vspace{-0.cm}
\begin{itemize}[leftmargin=1.25em,itemsep=0.15em,topsep=0.15em,parsep=0pt]
    \item \textbf{It is not about minima:} studying the properties of stationary points and minima is misleading. Mini-batch algorithms may stabilize in regions without any of them, as for GD \citep{damian_self-stabilization_2023}.
    \item \textbf{Location becomes distributional:} in the progressive-sharpening regime, stabilization can depend on higher-order statistics of $\{\mathcal{H}(L_{\mathbf{B}})\}_{\mathbf{B}}$, not just on $\mathbb{E}\,\mathcal{H}(L_{\mathbf{B}})=\mathcal{H}$.
    \item \textbf{``Flatness'' becomes stability-grounded:} what SGD must control in \textsc{EoSS} is \emph{Batch Sharpness} hovering near $2/\eta$, so any link between generalization and curvature proxies (trace/Fisher/$\lambda_{\max}$, etc.) must factor through how those proxies relate to \emph{Batch Sharpness} along the trajectory.
\end{itemize}
% \vspace{-0.2em}
\end{summarybox}

\begin{wrapfigure}{r}{0.48\columnwidth}
\vspace{-0.2cm}
    \centering
    \includegraphics[width=\linewidth]{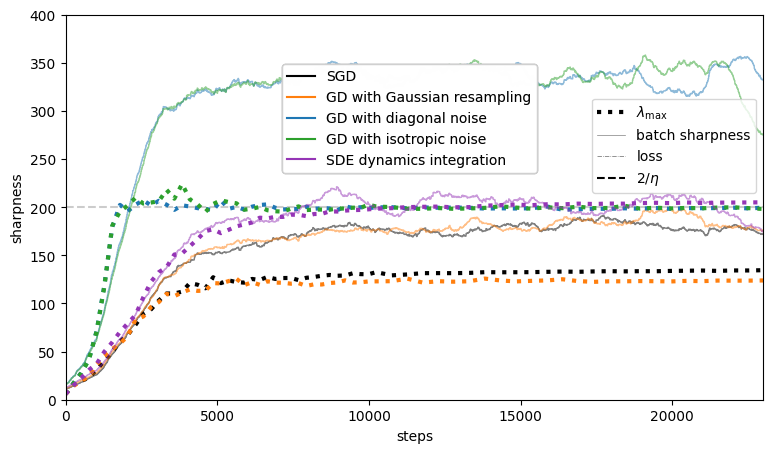}
    \caption{\small
    \textbf{SGD vs.\ Noisy GD vs.\ SDE.}
    Only Gaussian resampling (which preserves the mini-batch sampling structure) reproduces the \textsc{EoSS}-like suppression of $\lambda_{\max}$; diagonal/isotropic noise injection and SDE integration do not.
    }
    \label{fig:noise_gd}
    \vspace{-0.6cm}
\end{wrapfigure}

\paragraph{Noise injection as modeling.}
This distributional viewpoint also sharpens what it even means to ``model SGD by noise''.
Synthetic noise-injection schemes (and diffusion limits) are typically motivated as proxies for the \emph{gradient noise} of SGD.
But if the governing object is batch-dependent curvature, then a proxy that perturbs gradients while leaving the mini-batch curvature statistics essentially wrong is not a controlled approximation—it is a different dynamical system, and it may land in different regions.
Empirically (Figure~\ref{fig:noise_gd}; Appendix~\ref{appendix:noisy_gd_sde}), only noise that preserves the relevant mini-batch sampling structure (Gaussian resampling) reproduces the \textsc{EoSS}-like regime where $\lambda_{\max}$ plateaus well below $2/\eta$; more generic diagonal/isotropic injections and standard SDE integration fail to do so.

\paragraph{Implication for SDE modeling.}
\label{sec:challenges_sde}
Standard SDE-style accounts replace discrete mini-batch geometry by a diffusion with (full-batch) drift plus a noise term, thereby discarding higher-order information about $\{\mathcal{H}(L_{\mathbf{B}})\}_{\mathbf{B}}$.
Our results isolate a concrete obstruction in the progressive-sharpening regime: when batches are small, both eigenvalues \emph{and eigenspaces} of mini-batch Hessians can differ substantially from the full-batch ones, while \textsc{EoSS} is governed by this mini-batch geometry.
This is compatible with (and makes more operational) previously noted limitations of diffusion-based explanations of SGD implicit regularization
(e.g., ill-posedness \citep{yaida_fluctuation-dissipation_2018}, failure outside restrictive regimes \citep{li_validity_2021}, and qualitative mismatches in selected solutions \citep{haochen_shape_2020}).
Developing faithful continuous-time models for \textsc{EoSS} therefore seems to require explicitly modeling batch-dependent curvature beyond mean-field or covariance-only descriptions.

% \newpage

\section{Conclusions, Limitations, and Future Work} 
\label{section:conclusions}

%%% Here until 07/19/25 there was the previous version of this section commented

\paragraph{Conclusions.}
We have addressed the longstanding question of \textit{if} and \textit{how} mini-batch SGD enters a regime reminiscent of the “Edge of Stability” previously observed in full-batch methods. Contrary to the usual focus on the global Hessian’s top eigenvalue, we uncovered that \emph{Batch Sharpness}---the expected directional curvature of the mini-batch landscape in the direction of its own gradient---consistently rises (progressive sharpening) and then hovers around $2/\eta$, independent of batch size. This behavior characterizes a new regime “Edge of Stochastic Stability”, which explains how mini-batch training can exhibit catapult-like surges and settle into flatter minima even when the full-batch Hessian remains below $2/\eta$. Our analysis clarifies why smaller batch sizes and larger step sizes both constrain the final curvature to a lower level, thereby linking these hyperparameters to flatter solutions and often improved generalization. Furthermore, we show that this phenomenon depends on the noise injected into the Hessians by mini-batch optimizers, highlighting important limitations of SDE-based approximations. Overall, the \textsc{EoSS} framework unifies several empirically observed effects---catapult phases, dependence on batch size, and progressive sharpening---under a single perspective focused on the \emph{mini-batch} landscape and its directional curvature.

\paragraph{Limitations.}
(\emph{i}) We have tested only image-classification tasks, leaving open whether similar phenomena arise in NLP, RL, or other domains.
% (\emph{ii}) Approximating $\lambda_{\max}(\nabla^2 L)$ can be imprecise when the gap is small relative to individual magnitudes.
(\emph{ii}) Our experiments mainly use fixed step sizes and standard architectures, so very large-scale or large-batch settings remain less explored.
(\emph{iii}) We have not analyzed momentum-based or adaptive methods (e.g.\ Adam), even though full-batch \textsc{EoS} has been seen there \citep{cohen_adaptive_2022}.
(\emph{iv}) Our experiments are run with MSE, a complete study of training with cross entropy is needed.

\paragraph{Future Work.}
Beyond addressing these limitations, several fundamental directions remain unexplored:
Understanding (\emph{i}) \textit{where} $\lambda_{\max}$ stabilizes; 
(\emph{ii}) how \textsc{EoSS} and \textsc{EoS} affect performances and the features learned by the neural network, e.g. \citep{lyu_understanding_2023,arora_understanding_2022,ahn_learning_2023,zhu_understanding_2023,wang_large_2022,beneventano_gradient_2025};
(\emph{iii}) consequently if it is benign effect or not;
(\emph{iv}) what the \textit{other} sources of instability are there in the (pre-)training; (\emph{v}) better describing the phenomenon of progressive sharpening and understanding its causes; (\emph{vi}) establishing the self-stabilization mechanism \citep{damian_self-stabilization_2023} for SGD.

\paragraph{Versions.}
In the version of June 2025, we added Theorem \ref{theo:1} and heavily reworked the text---not the experiments and the message.
In the version of December 2025, we added the discussion about the framework for stability in Section \ref{section:framework} and we reorganized the text. Note that the notion of \textit{Batch Sharpness} changed by pulling the expectation outside in the definition: importantly, in the experiments it was already outside in the previous versions. The disagreement between the quantity checked in the experiments and the one defined in the Definition \ref{def:minibs} of the previous versions was a mistake of the authors.

\paragraph{Acknowledgement.}
A special thanks to Stanis\l{}aw Jastrz\k{e}bski, Alex Damian, Jeremy Cohen, Afonso Bandeira, Boris Hanin, Renée Carmona, and Ziyin Liu for invaluable discussions, which were crucial to the development of this project. We also want to thank Mark Lowell, Yee Whye Teh, and the referees of NeurIPS 2025, for comments and feedbacks on the first version of the preprint.
We acknowledge the use of ChatGPT, DeepSeek and Claude for providing code assistance, debugging support, and editing suggestions.

\newpage

% \red{
% Arseniy:
% \begin{itemize}
%     \item Synthetic data?
%     \item Address the fact that MiniBS has a pattern in going down.
%     \item Adam?
%     \item Longer runs and same for SGD + momentum.
%     \item Cross-entropy.
%     \item Do a different dataset like SVHM?
% \end{itemize}
% Pier:
% \begin{itemize}
%     \item Address the fact that at the begining Sharpness goes down.
%     \item Address the first part of FullBS going up.
%     \item Put in the appendix all the 0.7 things.
%     \item Specify that for misalignment goes as 1/b and aligned as square root. Say that when they are aligned then sigma squared is the sigma squared of the multipliers to that direction vector and we have CLT. When they are not aligned, then sigma squared goes as 1/b in any direction. Thus it results in a 1/b rate.
% \end{itemize}
% }

% \newpage

\bibliography{reference}
\bibliographystyle{unsrtnat}

%%%%%%%%%%%%%%%%%%%%%%%%%%%%%%%%%%%%%%%%%%%%%%%%%%%%%%%%%%%%%%%%%%%%%%%%%%%%%%%
%%%%%%%%%%%%%%%%%%%%%%%%%%%%%%%%%%%%%%%%%%%%%%%%%%%%%%%%%%%%%%%%%%%%%%%%%%%%%%%
% APPENDIX
%%%%%%%%%%%%%%%%%%%%%%%%%%%%%%%%%%%%%%%%%%%%%%%%%%%%%%%%%%%%%%%%%%%%%%%%%%%%%%%
%%%%%%%%%%%%%%%%%%%%%%%%%%%%%%%%%%%%%%%%%%%%%%%%%%%%%%%%%%%%%%%%%%%%%%%%%%%%%%%
\newpage
\appendix

\tableofcontents

\bigskip

\section{Comparison with Previous Empirical Work}
\label{appendix:comparison}

\label{appendix:comparison_with_empirical}
\begin{figure}[ht!]
    \centering
    \begin{subfigure}[b]{0.32\linewidth}
        \centering
        \includegraphics[width=\linewidth]{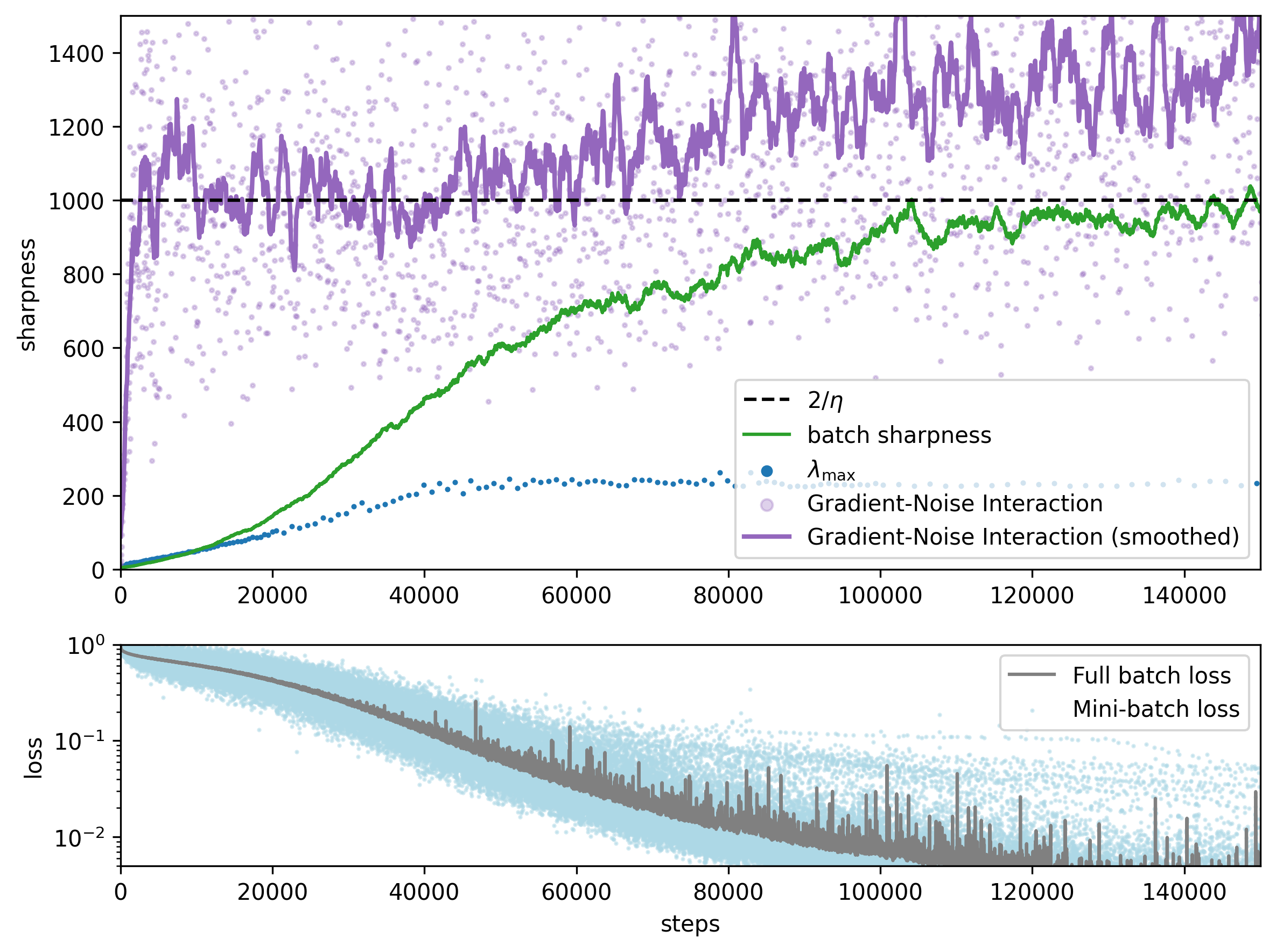}
        \caption{Constant step size}
        \label{fig:lr-const}
    \end{subfigure}
    \hfill
    \begin{subfigure}[b]{0.32\linewidth}
        \centering
        \includegraphics[width=\linewidth]{img/lee_jang_v_us/lr_early.png}
        \caption{Step size increased early}
        \label{fig:lr-early}
    \end{subfigure}
    \hfill
    \begin{subfigure}[b]{0.32\linewidth}
        \centering
        \includegraphics[width=\linewidth]{img/lee_jang_v_us/lr_late.png}
        \caption{Step size increased late}
        \label{fig:lr-late}
    \end{subfigure}

    % \captionsetup{labelformat=empty}
    
    \caption{We demonstrate that the saturation of \textit{GNI} does not govern a sharpness-related regime of instability typical of Type-2 oscillations - and in particular, highlighting the difference between the two types of oscillations. When we double the step size after \emph{batch sharpness} is at least half of $2/\eta$ threshold (so that it is beyond the new $2/\eta$ level), training exhibits a catapult surge in the loss (c). But if we make the same change \emph{before} \textit{Batch Sharpness} crosses that level—despite \textit{GNI} already saturating—no catapult occurs. (b)}
    \label{fig:lee-jang-comparison-lr}
\end{figure}

\begin{figure}[ht!]
    \centering
    \begin{subfigure}[b]{0.32\linewidth}
        \centering
        \includegraphics[width=\linewidth]{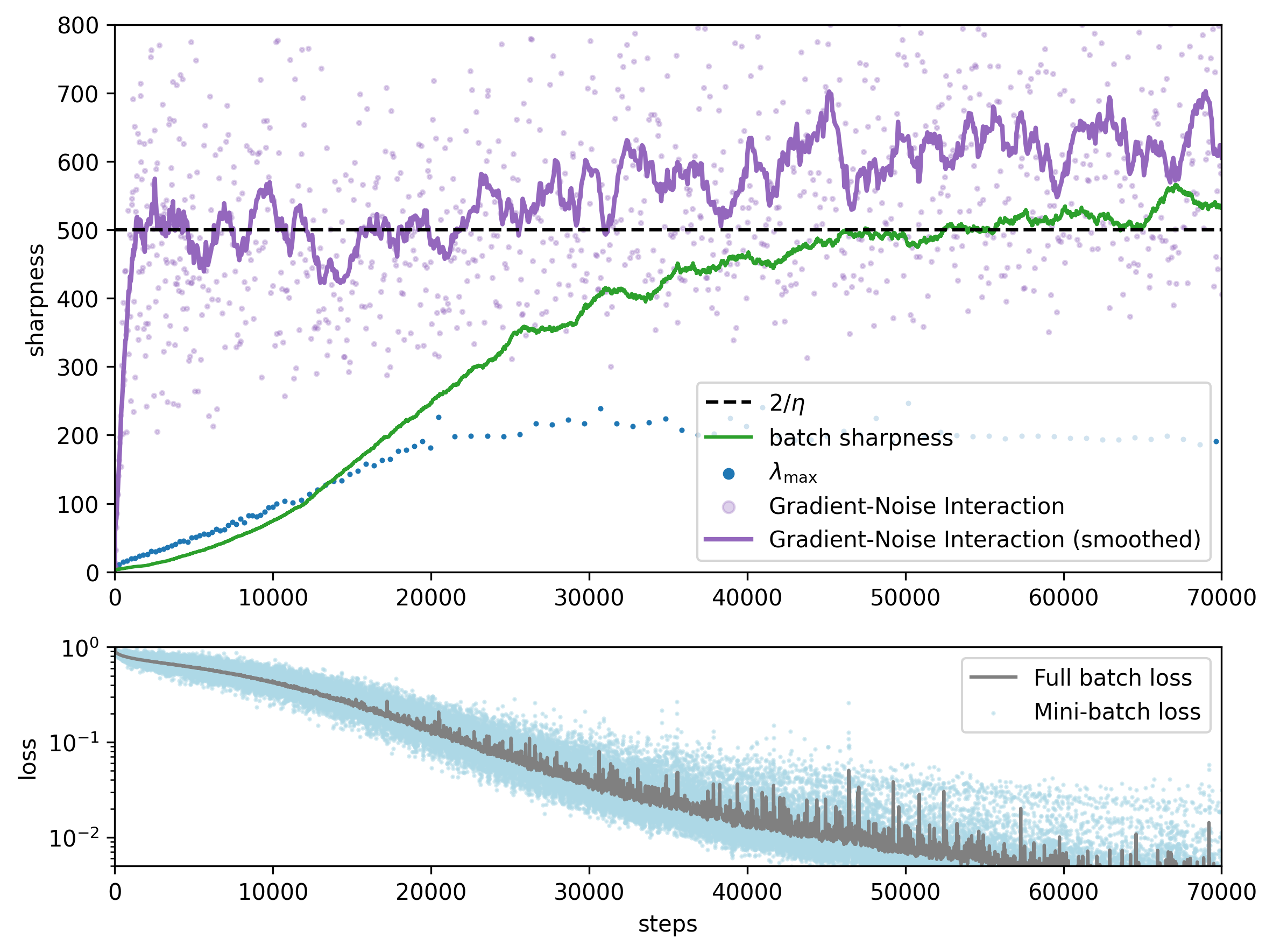}
        \caption{Constant batch size}
        \label{fig:lee-jang-const}
    \end{subfigure}
    \hfill
    \begin{subfigure}[b]{0.32\linewidth}
        \centering
        \includegraphics[width=\linewidth]{img/lee_jang_v_us/bs_early.png}
        \caption{Batch size decreased early}
        \label{fig:lee-jang-early}
    \end{subfigure}
    \hfill
    \begin{subfigure}[b]{0.32\linewidth}
        \centering
        \includegraphics[width=\linewidth]{img/lee_jang_v_us/bs_late.png}
        \caption{Batch size decreased late}
        \label{fig:lee-jang-late}
    \end{subfigure}

    % \captionsetup{labelformat=empty}
    
    \caption{Similarly, reducing the batch size only triggers catapults if batch sharpness, not \textit{GNI}, exceeds the threshold.}
    \label{fig:lee-jang-comparison-bs}
\end{figure}

As mentioned in Section \ref{section:two_types_of_oscillations}, the oscillations in SGD were documented throughout multiple works, starting with \citet{xing_walk_2018}. In particular, there, the authors demonstrate the presence of oscillations in SGD from very early phases of the training. Yet, they do not explain the origin of those oscillations. \citet{cohen_gradient_2021}

\citet{lee_new_2023} introduce several quantities crucial for understanding neural network training dynamics. Below, we discuss the relationships among $\lambda_{\max}$, \emph{Batch Sharpness}, and Interaction-Aware Sharpness (IAS, \citet{lee_new_2023}), emphasizing that a comprehensive theory of mini-batch dynamics should explain their distinct plateau timings and interconnected behaviors. We conjecture that a complete theory of stochastic gradient descent (SGD) dynamics would elucidate these metrics' precise interrelations and their different plateau timings.

\paragraph{Interaction-Aware Sharpness.}
\citet{lee_new_2023} introduce Interaction-Aware Sharpness (IAS), denoted $\|\mathcal{H}\|_{S_b}$:
\[
\|\mathcal{H}\|_{S_b} \quad := \quad \frac{\E_{B\sim\mathcal{P}_b} \big[ \nabla L_B(\x)^\top \mathcal{H} \, \nabla L_B(\x) \big]}{\E_{B \sim \mathcal{P}_b}\bigl[\|\nabla L_B\|^2\bigr]}.
\]
This quantity shares structural similarities with both \textit{Batch Sharpness} (Definition \ref{def:minibs}) and the \textit{Gradient-Noise Interaction} (Proposition \ref{prop:lee-jang}), differing from the latter only in the denominator. The key distinction from \textit{Batch Sharpness} lies in which Hessian is evaluated: IAS measures the directional curvature of the \textbf{full-batch} loss landscape $L$ along mini-batch gradient directions, while \textit{Batch Sharpness} measures the directional curvature of \textbf{mini-batch} loss landscape $L_B$ along their corresponding gradients. This distinction is crucial, as mini-batch Hessians vary with batch selection while the full-batch Hessian remains fixed.

Notably, with full-batch GD, IAS serves as a directional alternative to the maximal Hessian eigenvalue, $\lambda_{\max}$, introduced by \citet{cohen_gradient_2021}. 
IAS aligns closely with the $2/\eta$ threshold, unlike $\lambda_{\max}$, which often remains slightly above this threshold during \eos, especially at the beginning of it.
Since IAS measures \textit{directional} curvature, we have $\|\mathcal{H}\|_{S_n} \leq \lambda_{\max}$. Consequently, in the mini-batch setting, IAS stabilizes below $2/\eta$, consistent with empirical observations from \citet{jastrzebski_relation_2019,jastrzebski_break-even_2020, cohen_gradient_2021} and our Figure \ref{fig:1}. Notably, when $B=n$, our \textit{Batch Sharpness} coincides with IAS rather than $\lambda{\max}$, reinforcing the interpretation of \textit{Batch Sharpness} as the relevant metric stabilizing at $2/\eta$ even under full-batch conditions.

\paragraph{Relation to Gradient-Noise Interaction.}
Another metric from \citet{lee_new_2023} is defined as:
\[
\frac{\tr(H S_b)}{\tr(S_n)} = \frac{\E_{B\sim\mathcal{P}_b} \big[ \nabla L_B(\x)^\top \mathcal{H} \, \nabla L_B(\x) \big]}{\mednorm{\nabla L}^2} 
\]
which coincides exactly with our definition of GNI (Proposition \ref{prop:lee-jang}). 
As detailed in Section \ref{section:two_types_of_oscillations} and Appendix \ref{appendix:oscillations_nn}, the stabilization of GNI around $2/\eta$ signals the presence of oscillations, at least Type-1 oscillations.
\citet{lee_new_2023} provide extensive empirical evidence demonstrating that neural networks spend much of their training within this oscillatory regime (see also Figures \ref{fig:lee-jang-comparison-lr}a and \ref{fig:lee-jang-comparison-bs}a). This contrasts traditional theoretical analyses (\citet{bottou_optimization_2018, mandt_variational_2016}), which consider oscillations only near the manifold of minima.

\paragraph{Distinguishing oscillation types.} 
It is crucial to note that GNI around $2/\eta$ does not inherently indicate instability.
As clarified in  Sections \ref{section:two_types_of_oscillations}, \ref{section:eoss} and Appendix \ref{appendix:oscillations_nn}, not all oscillations are inherently unstable. 
Figures \ref{fig:lee-jang-comparison_body}, \ref{fig:lee-jang-comparison-lr}b, \ref{fig:lee-jang-comparison-bs}b illustrate that altering hyperparameters when GNI is around $2/\eta$ typically does not trigger instability (catapult-like divergence), contrary to expectations if the system was in an \eos-like regime of instability.
Instead, as shown in Figures \ref{fig:lee-jang-comparison_body}, \ref{fig:lee-jang-comparison-lr}c, \ref{fig:lee-jang-comparison-bs}c, \textit{Batch Sharpness} more reliably predicts a regime of instability.
Additionally, Figure \ref{fig:gni_cifar_ez} highlights GNI's independence from progressive sharpening, a necessary precursor to Type-2 (curvature-driven) oscillations and \eos-like instabilities, as detailed in Appendix \ref{appendix:oscillations_nn}.

\paragraph{Missing Progressive Sharpening.}
Extensively, both in our experiments and in the ones of \citet{lee_new_2023}, GNI grows to $2/\eta$ in a few initial steps (and sometimes from the very beginning if the initialization size is large) without ever being in subject to a phase of progressive sharpening unlike \textit{Batch Sharpness} and $\lambda_{\max}$. The phase of growth of GNI is generally short and independent of the size, the behavior, and the phase in which \textit{Batch Sharpness} and $\lambda_{\max}$ are.

\begin{figure}[ht!]
    \centering
    \includegraphics[width=0.5\linewidth]{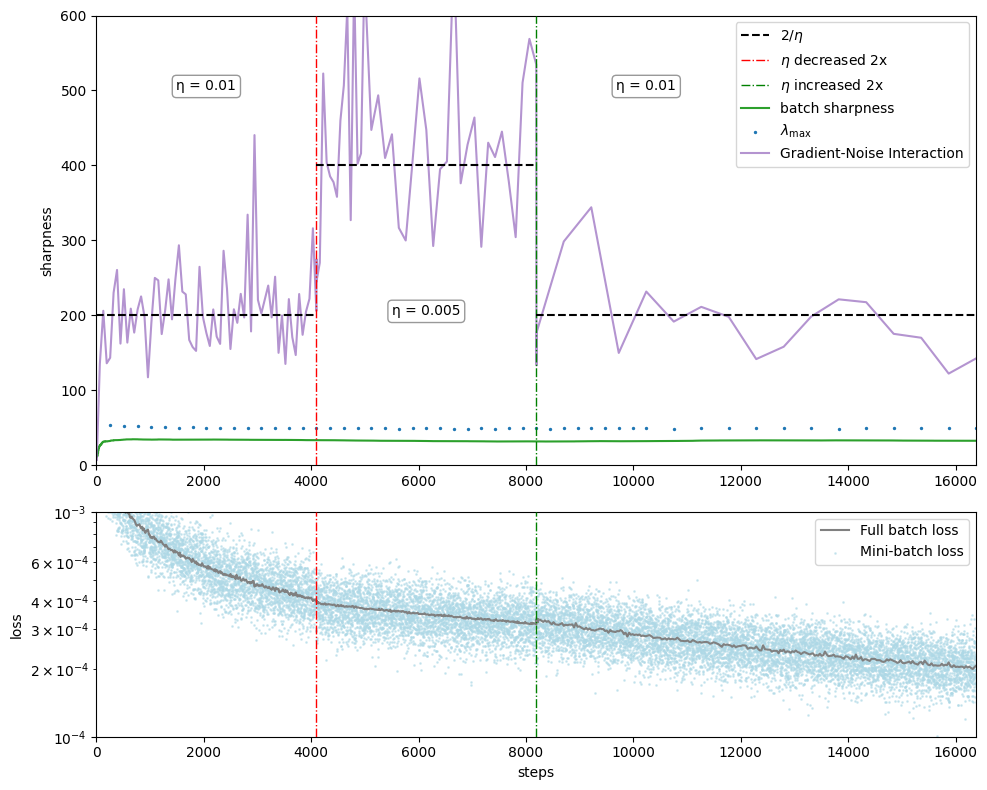}
    \caption{We construct a 32k-point "easy" CIFAR-10, where we "pull apart" all the 10 classes, so the classes become linearly separable. In this case, there is virtually no "learning" to be done, and therefore, there is barely any progressive sharpening happening (as established \citet{cohen_gradient_2021}, progressive sharpening does not happen if the dataset "is not complex enough"). Yet, \textit{GNI} still stabilizes at the initial level of $2/\eta$. More importantly, when we decrease and then increase the step size, the \textit{GNI} measure restabilizes to the corresponding new thresholds, while $\lambda_{\max}$ does not change. That means that \textit{GNI} is \textit{independent of the curvature of the loss landscape} and is unrelated to progressive sharpening, and thus Type-2 oscillations and \eos-like instability regimes.
    % (alghough it is already quite "high" - i.e. does not have relation to the landscape of the problem). Most importantly when we decrease the step size 2x, the measure from Lee\&Jang restabilizes at the new (higher) level, although this change happened way after the network overfitted the dataset, and the training entered the regime of precisely matching the 1-hot encoding of the Ys. This shows that the measure from Lee\&Jang \red{change Lee and Jang} is measuring the purely noise-driven, Type 1 oscillations that occur even in 'flat' landscapes}
    }
    \label{fig:gni_cifar_ez}
\end{figure}

\clearpage

\section{On the Two Types of Oscillations in SGD Dynamics}
\label{appendix:oscillations_nn}
% The difficulty of analysis of SGD compared to GD is that SGD naturally oscillates due to the noise in the mini-batch gradients. In this appendix, together with the Appendix \ref{app:rigorous} we extend the discussion in Section \ref{section:two_types_of_oscillations} differentiating the two types of oscillations: noise-driven(Type-1) and curvature-driven(Type-2). The importance of differentiating between the two is that the former happens independently on the curvature of the loss landscape, and thus does not provide a regularizing effect on the curvature or \textit{sharpness} of the loss landscape. Instead, the latter is caused by the curvature of the landscape and creates an implicit regularization effect on the curvature of the final solution. \red{In this appendix, we focus on} treatement of Type-1 oscillations. 
A fundamental challenge in analyzing SGD compared to GD stems from the inherent oscillations induced by mini-batch gradient noise. This appendix, together with Appendix \ref{appendix:comparison} (also see proofs in \ref{app:proof_gni} and \ref{appendix:proof_theo}), extends the discussion in Section \ref{section:two_types_of_oscillations} by formally distinguishing between two distinct types of oscillations: noise-driven (Type-1) and curvature-driven (Type-2). This distinction is crucial because Type-1 oscillations occur independently of the loss landscape's curvature and thus do not exert a regularizing effect on the sharpness of the final solution. In contrast, Type-2 oscillations are directly caused by landscape curvature and induce an implicit regularization effect by discouraging convergence towards sharp minima.

We begin with a minimalistic example to illustrate the nature of Type-1
\subsection{A Minimalistic Quadratic Example.} 
\label{appendix:type_1_one_dim}
Here we show mathematically what we see empirically in Figure \ref{fig:quadratic} (the simplified version---only two data points).
Consider a regression problem with two datapoints, $1$ and $-1$, and a linear model $f(x)=x$ under the quadratic loss. The (scaled) full-batch loss is given by:
\[
L(x)\;=\;\frac{1}{4}(x-1)^2\;+\;\frac{1}{4}(x+1)^2.
\] 
Batch-1 SGD updates with step-size $0<\eta<2$ result in oscillatory behavior around the optimum $x=0$ due entirely to gradient noise, with amplitude approximately $\sqrt{\frac{\eta}{2-\eta}}$. Crucially, the Hessian in this example is small ($\frac{d^2L}{dx^2}=1$), demonstrating that these persistent oscillations are entirely noise-driven (Type-1). 

Formally, the SGD update is:
% With a fixed step size \(0<\eta<2\), SGD on this 1D regression bounces forever around 0 
% with amplitude $\approx \sqrt{\frac{\eta}{2-\eta}}$, never settling at \(x=0\). Crucially, the Hessian here is \emph{small}---so the persistent oscillations are entirely noise-driven (\textit{Type-1})
% With a fixed step size $0<\eta<2$, batch-1 SGD on this 1D regression problem results in perpetually around $x=0$ with amplitude approximately $\sqrt{\frac{\eta}{2-\eta}}$, never converging precisely to $x=0$. 
\[
x_{t+1} = x_t - \eta\nabla \ell_{i_t} = (1-\eta)x_t + \eta\xi_{t}
\]
where $l_{i_t}$s are the individual datapoint losses, and $\xi_t$s are i.i.d Rademacher random variables. Thus, we obtain the first two moments explicitly:
\[
\E[x_{t}] = (1-\eta)\E[x_{t-1}] = (1-\eta)^t x_0 
\]
\[
\E[x_t^2] = (1-\eta)^2 \E[x_{t-1}^2] + \eta^2 = (1-\eta)^{2t}x_0 + \frac{\eta^{2}}{1-(1-\eta)^{2}}\Bigl(1-(1-\eta)^{2t}\Bigr)
\]
This implies convergence in expectation for $0<\eta<2$, with a limiting variance given by:
\[
\lim_{t\to\infty}\mathbb{E}[x_t^{2}] = \frac{\eta}{2-\eta}
\]
and divergence for $\eta > 2$.

A key observation is that increasing $\eta$ to any value $\eta_1 < 2$ merely changes the amplitude of oscillations to $\sqrt{\frac{\eta_1}{2-\eta_1}}$ without triggering any catapult-like behavior. The only step size for which we observe Type-2 (curvature-driven) oscillations and an \eos-like\footnote{The key difference between these oscillations and genuine \textsc{EoS} behavior in neural networks is that, in the quadratic case, the full-batch loss does not decrease, making this scenario inherently less informative. In contrast, neural networks exhibit a surprising, albeit non-monotonic, decrease in loss within this instability regime, an effect arising from the multidimensional nature of their optimization landscape \citep{damian_self-stabilization_2023}} instability is precisely $\eta=2$, where the dynamics effectively become a random walk, and any larger step size leads to divergence.

% Increasing \(\eta\) to $\eta_1< 2$ simply changes the amplitude of the oscillations to $\sqrt{\frac{\eta}{2-\eta}}$, without triggering any catapult.
% % The only step size for which we have \textit{Type-2}--curvarture-driven--oscillations, and \textsc{EoS}-like instability, is $\eta=2$, where the dynamics are effectively a random walk, and any increase in the step size leads to divergence. 

% See Figure \ref{fig:quadratic} for a plot of this phenomena for a one-dimensional example with many datapoints, with the same stability thresholds (also see Example 3.1 in \citet{bottou_optimization_2018}).

\begin{mdframed}
\centering
    \textit{Crucially, when $\eta<2$ oscillations occur persistently on the full-batch loss, despite the individual steps on the mini-batch loss remaining \textit{stable}. }
\end{mdframed}

The oscillation is due to the fact that the mini-batch loss landscape shifts from step to step, not to the fact that the steps are unstable.

\subsection{Proof of Lemma \ref{lem:GNI}}
\label{appendix:proof_lem_GNI}
We propose here the formal version of Lemma \ref{lem:GNI}.

\begin{proposition}[Loss increment and GNI]
\label{prop:GNI_formal}
Assume $L$ is three times continuously differentiable and its Hessian is
$L_2$–Lipschitz in a neighborhood of $\theta$.
Then for any mini–batch $B$ and step size $\eta>0$ small enough we have
\begin{equation}
\label{eq:descent-lemma-GNI}
\E_{B}\bigl[L(\theta - \eta \nabla L_B(\theta)) - L(\theta)\,\big|\,\theta\bigr]
= -\eta \|\nabla L(\theta)\|^2
+ \frac{\eta^2}{2}\,\E_B\bigl[\nabla L_B(\theta)^\top \mathcal{H}(\theta)\,\nabla L_B(\theta)\bigr]
+ \mathcal{O}(\eta^3),
\end{equation}
where the $\mathcal{O}(\eta^3)$ constant depends only on $L_2$ and an upper
bound on $\|\nabla L_B(\theta)\|$.
Equivalently,
\begin{equation}
\label{eq:GNI-sign}
\E_{B}\bigl[L(\theta_{t+1}) - L(\theta_t)\,\big|\,\theta_t\bigr]
= -\eta \|\nabla L(\theta_t)\|^2
\Bigl(1 - \frac{\eta}{2}\,\mathrm{GNI}(\theta_t) + \mathcal{O}(\eta^2)\Bigr),
\end{equation}
with
\[
\mathrm{GNI}(\theta)
:= \frac{\E_{B}[\nabla L_B(\theta)^\top \mathcal{H}(\theta)\,\nabla L_B(\theta)]}
        {\|\nabla L(\theta)\|^2}.
\]
In particular, there exists $c>0$ such that for all sufficiently small $\eta$:
if
\[
\bigl|\mathrm{GNI}(\theta_t) - 2/\eta \bigr| \ge c\,\eta,
\]
then the sign of the expected loss increment satisfies
\[
\operatorname{sign}\,\E[L(\theta_{t+1}) - L(\theta_t)\mid\theta_t]
= \operatorname{sign}\,\bigl(2/\eta - \mathrm{GNI}(\theta_t)\bigr).
\]    
\end{proposition}

\begin{proof}[Proof of Proposition \ref{prop:GNI_formal}]
Fix $\theta$ and a mini–batch $B$.
Write a third–order Taylor expansion of $L$ around $\theta$:
\[
L(\theta - \eta \nabla L_B(\theta))
= L(\theta) - \eta \nabla L(\theta)^\top \nabla L_B(\theta)
+ \frac{\eta^2}{2}\,\nabla L_B(\theta)^\top \mathcal{H}(\theta)\,\nabla L_B(\theta)
+ R(\eta,\nabla L_B(\theta)),
\]
where $\nabla L_B(\theta) := \nabla L_B(\theta)$ and
$\mathcal{H}(\theta)$ is the full–batch Hessian.
The third–order remainder satisfies the standard bound
$\vert R(\eta,\nabla L_B(\theta))\vert \le C L_2 \eta^3 \|\nabla L_B(\theta)\|^3$ for some numerical $C$,
because the Hessian is $L_2$–Lipschitz.

Now take the expectation over $B\sim\mathcal{P}_b$.
Using $\E_B[\nabla L_B(\theta)] = \nabla L(\theta)$ we obtain
\[
\E_B\bigl[L(\theta - \eta \nabla L_B(\theta)) - L(\theta) \mid \theta\bigr]
=
-\eta \|\nabla L(\theta)\|^2
+ \frac{\eta^2}{2}\,\E_B\bigl[\nabla L_B(\theta)^\top \mathcal{H}(\theta)\,\nabla L_B(\theta)\bigr]
+ \mathcal{O}(\eta^3),
\]
which is \eqref{eq:descent-lemma-GNI}.

Divide the right–hand side by $-\eta \|\nabla L(\theta)\|^2$ to get
\[
\E_B[L(\theta_{t+1}) - L(\theta_t) \mid \theta_t]
= -\eta \|\nabla L(\theta_t)\|^2
\left( 1 - \frac{\eta}{2}\,\mathrm{GNI}(\theta_t)
+ \mathcal{O}(\eta^2)\right),
\]
since
\[
\mathrm{GNI}(\theta_t)
=
\frac{\E_B[\nabla L_B(\theta)^\top \mathcal{H}(\theta_t)\,\nabla L_B(\theta)]}
{\|\nabla L(\theta_t)\|^2}.
\]

The last claim (sign agreement) follows immediately:
for sufficiently small~$\eta$, the $\mathcal{O}(\eta^2)$ term is dominated
whenever $|\mathrm{GNI}-2/\eta|\ge c\eta$ for a fixed constant $c>0$.
\end{proof}

\subsection{General Case: From One-Dimensional Toy to Multidimensional Lyapunov Analysis}
\label{subsec:general_case_bridge}

The simple one-dimensional regression in \S\ref{appendix:oscillations_nn}.1 already
demonstrates how \emph{noise-driven (Type-1) oscillations} can persist
indefinitely and yield a stable “two-cycle” around the optimum, independent
of the actual Hessian magnitude. In higher dimensions, the story is similar:
when the step size \(\eta\) is fixed, the \emph{randomness} in mini-batch
gradients still injects a continual “kick” at each iteration, causing the
iterates to hover in a noisy neighborhood of the minimum. The main difference
is that now there can be many directions---some with higher curvature than
others, or even flat (\(\lambda=0\)) directions. Nonetheless, the essential
mechanism remains:
\[
  \Delta_{t+1}
  \;=\;
  \bigl(I-\eta\,\mathcal{H}\bigr)\;\Delta_t
  \;-\;\eta\,\xi_t,
\]
where \(\Delta_t=x_t-x^\star\) is the displacement from the optimum,
\(\mathcal{H}\) is the Hessian at \(x^\star\), and \(\xi_t\) encodes the
random fluctuation (gradient noise). Once \(\Delta_t\) settles into a
\emph{stationary distribution}, the covariance \(\Sigma_x\) can be found by
solving a discrete Lyapunov equation similar to the one-dimensional case.

\paragraph{Key References.}
A number of works formalize this “SGD noise equilibrium” by viewing the
updates as a linear Markov chain in a neighborhood of \(x^\star\).  
Classical references include \citet{mandt_variational_2016} for the
stochastic differential analogy (Ornstein–Uhlenbeck process), and
\citet{bottou_optimization_2018} for a thorough discussion of how the
constant stepsize prevents exact convergence. 
Intuitively, the argument for Proposition \ref{prop:lee-jang} goes thus as follows:
\begin{enumerate}[leftmargin=2em,itemsep=0.2em,parsep=0.1em]
% \vspace{-0.4cm}
    \item The iterates oscillate with a stationary covariance $\Sigma_x$ around $x^*$.
    \item Full-batch (expected) gradient is zero at $x^*$ and grows roughly linearly with distance for small deviations (by Taylor expansion $\nabla L(x) \approx \mathcal{H}(x - x^*)$). So on average \textit{over the iterations} we have
    % \vspace{-0.2cm}
    \[ 
    \E_k[\mednorm{\nabla L(x_k)}^2] = \mathrm{Tr}\big(\mathcal{H} \Sigma_x \mathcal{H}).
    \]
    \item The stationary covariance of the gradients is $\Sigma_g$ satisfies: 
    % \teal{why is this the case? - a point of why we need the appendix}
    % \vspace{-0.2cm}
    \[
    \Sigma_x \approx \frac{\eta}{2} (\mathcal{H}^{-1} \Sigma_g).
    \]
\end{enumerate}
Putting all together, this implies that $\Sigma_g$ of the gradients is about $\frac{2}{\eta}\mathcal{H}^{-1}$ bigger than the full-batch $\nabla L \nabla L^\top$. Precisely the following quantity (where $L_i$ is the loss on the $i-th$ data point):
\begin{equation}
    \frac{\E_i \big[ \nabla L_i(\x)^\top \mathcal{H} \nabla L_i(\x) \big]}{\mednorm{\nabla L}^2}
    \,=\, \frac{\mathrm{Tr}\big( \mathcal{H} \Sigma_g \big)}{\mathrm{Tr}\big(\mathcal{H} \Sigma_x \mathcal{H})},
\end{equation}
and this can be rewritten as
\begin{equation}
\label{eq:type-1}
    \textit{Gradient-Noise Interaction (GNI)}
    \, = \,
    \frac{\E_i \big[ \nabla L_i(\x)^\top \mathcal{H} \nabla L_i(\x) \big]}{\mednorm{\nabla L}^2}
    \, = \,
    \frac{2}{\eta} \cdot \underbrace{\frac{\mathrm{Tr}\big( \mathcal{H} \Sigma_x \mathcal{H} \big)}{\mathrm{Tr}\big(\mathcal{H} \Sigma_x \mathcal{H})}}_{=1}
    \,=\,
    \frac{2}{\eta}.
\end{equation}
This $2/\eta$ thus comes out of the only fact of oscillating and it is \textit{unrelated} to the Hessian value. Moreover, \textsc{EoS} happens only in the eigenspace of the highest eigenvalue, \textit{Type-1} noise on the whole subspace spanned by the eigenspaces of the \textit{non-negative} eigenvalues.

\paragraph{On Stability}
For this reason, linear stability analyses of stochastic gradient descent and noise-injected gradient descent on quadratic objectives--originally explored by \citet{wu_how_2018} and further developed by \citet{ma_linear_2021, wu_sgd_alignment_2022, mulayoff_exact_2024}---explicitly exclude \textit{Type-1} oscillations by categorizing them as \textit{stable}. Specifically, \citet{ma_linear_2021} restrict their analysis to interpolating minima, where all individual gradients vanish, thus effectively eliminating noise-driven oscillations and isolating the curvature-driven (\textit{Type-2}) scenario. \citet{mulayoff_exact_2024} extends this to a more general class of minima by restricting the analysis to the orthogonal complement of the null space of the Hessian, and demonstrating that the noise-driven oscillations do not affect stability. 
\citet{lee_new_2023} empirically established that Gradient-Noise Interaction (\textit{GNI}) consistently remains around $2/\eta$ throughout training. From the above, this implies that most training occurs in an oscillatory regime (at least \textit{Type-1})---see Figure \ref{fig:lee-jang-comparison-bs} and Appendix \ref{appendix:comparison_with_empirical}. In contrast, our study specifically investigates the emergence and implications of \textit{Type-2} oscillations, given their significant role in implicitly regularizing the loss landscape.

\paragraph{From Informal to Formal.}
In the next Appendix \ref{app:proof_gni}, we present a general
\emph{discrete Lyapunov} proof of Lemma~\ref{prop:lee-jang}, allowing
\(\mathcal{H}\succeq0\) to be possibly degenerate and not necessarily
commuting with the noise covariance.  The result is summarized in
Proposition~\ref{prop:general_GNI}, showing rigorously that
\(\mathrm{GNI}\approx2/\eta\) arises under \emph{any} stable constant-stepsize
mini-batch SGD orbit.  This “\(\tfrac{2}{\eta}\)-law” is precisely the
high-dimensional extension of the toy one-dimensional phenomenon above.
\begin{mdframed}
\textit{
    In particular, we show here that the appearance of some quantity being $2/\eta$ means that the system is oscillating but does not mean in principle that the landscape or the curvature \textbf{adapted} to the hyper parameters.}
\end{mdframed}

\subsection{On the Importance of Type-2 Oscillations Compared to Type-1}
Noise-induced (Type-1) oscillations are not unstable when introducing slight perturbations (increase step size or decrease batch size), as showcased in Figure \ref{fig:lee-jang-comparison-lr} and \ref{fig:lee-jang-comparison-bs}. Therefore, they do not constitute an \textsc{EoS}-type phenomena, where slight perturbations do cause divergence (”complete” divergence as long as we consider just the quadratic terms and can ignore higher terms — the fact that it does not fully diverge is exactly the higher-terms effect). Instead, after a perturbation, noise-induced oscillations quickly re-stabilize at a higher level. 

Crucially, a lack of such divergence means that noise-induced oscillations would not exhibit the self-stabilization mechanism of \citet{damian_self-stabilization_2023} characteristic of \textsc{EoS} (differing it from classical convex optimization). Moreover, as shown in the quadratic example and in the proofs, noise-induced oscillations happen for any quadratic, for a wide range of step sizes, making them inherently “unsurprising”, while \textsc{EoS} is a beyond-quadratic phenomena (and, as far as we know, a deep-learning-specific phenomena), as it relies on both progressive sharpening and the aforementioned self-stabilization, both being an effects of higher order terms. And the reason why we care specifically about effects of beyond-quadratic terms is specifically the adaptation of the landscape to the hyper-parameters, which is, by definition, an effect of higher order terms. That is the reason we specifically care about curvature-driven oscillations. 

Now, with all of the above, GNI, being an indicator of those noise-induced oscillations, is therefore not an indicator of \textsc{EoS}-like regime. This is despite the fact that GNI in SGD comes from the same place as $\lambda_{\max}$/Rayleigh quotient in GD — i.e. from the descent lemma; yet, it does not mean that the two quantities serve the same role. Instead, it is the presence of the natural noise in SGD that makes the analysis much more complex. Instead, GNI has its usefulness as a measure of the level of noise coming from SGD. That is, noise-induced oscillations are influenced by the Hessian, but are also strongly influenced by the ratio between the noise covariance and the norm of full batch gradient, with the latter being the leading cause of change. In particular, GNI is decoupled from the Hessian, and can change drastically without any change of landscape sharpness, as showcased in our experiments. Lastly, another important consequence of \textsc{EoS} is that the landscape adapts to the hyper-parameters (rather than the other way around in classical optimization). With GNI being decoupled from the Hessian, GNI being at 2/eta is not an indication of landscape adopting to the hyper-parameters, as is the case with $\lambda_{\max}$ being at $2/\eta$ during GD.

\clearpage

\section{Proof of Proposition \ref{prop:lee-jang}}

%---------------------------------------------------------------------------%
% Rigorous statement and proof of Proposition \ref{prop:lee-jang}                  %
%---------------------------------------------------------------------------%

\label{app:proof_gni}

\subsection{Setup and notation for Proposition~\ref{prop:lee-jang}}

Let
\[
  L(\theta)\;=\;\frac{1}{n}\sum_{i=1}^n \ell_i(\theta)
\]
be three-times continuously differentiable, and let $\theta^\star$ be a (possibly non‑isolated) local minimiser.
Denote the full‑batch Hessian at $\theta^\star$ by
\[
  \mathcal{H} \;:=\; \nabla^2 L(\theta^\star)\succeq 0.
\]

For each sample \(i\), define its (local) Hessian at $\theta^\star$,
\[
  \mathcal{H}_i \;:=\; \nabla^2 \ell_i(\theta^\star),
\]
so that $\mathcal{H} = \tfrac{1}{n}\sum_{i=1}^n \mathcal{H}_i$.
We also define the (single‑sample) gradient noise covariance at $\theta^\star$ as
\[
  \Sigma_g \;:=\; \E_i\bigl[\nabla \ell_i(\theta^\star)\,\nabla \ell_i(\theta^\star)^\top\bigr].
\]

We decompose the parameter space as
\[
  \R^d \;=\; E_+ \oplus E_0,
  \qquad
  E_+ := \operatorname{Im}(\mathcal{H}),\quad
  E_0 := \ker(\mathcal{H}),
\]
with associated orthogonal projectors $P_+$ and $P_0$.
We will only require control of the dynamics in $E_+$ and assume that
gradient noise in the flat subspace $E_0$ is not too large.

For each iteration $t$, a mini‑batch $B_t$ of size $b$ is drawn
(with or without replacement) and the SGD update is
\[
  \theta_{t+1}
  \;=\;
  \theta_t \;-\;\eta\,\nabla L_{B_t}(\theta_t),
  \qquad
  \nabla L_{B_t}(\theta)
  := \frac{1}{b}\sum_{i\in B_t} \nabla \ell_i(\theta).
\]

Finally, define the Kronecker–sum operator
\[
  \mathcal{K} : \R^{d\times d} \to \R^{d\times d},
  \qquad
  \mathcal{K}(X) := \mathcal{H} X + X\mathcal{H}.
\]
On $E_+\otimes E_+$, $\mathcal{K}$ is positive definite and has a Moore–Penrose
pseudoinverse $\mathcal{K}^\dagger$.

We work under the following assumptions in a neighbourhood of $\theta^\star$.

\begin{enumerate}[label=\textbf{(A\arabic*)}, leftmargin=3em]
\item \textbf{Local quadratic approximation.}\;
  Each $\ell_i$ is twice differentiable with $L_2$–Lipschitz Hessian near $\theta^\star$,
  and admits the Taylor expansion
  \[
    \nabla \ell_i(\theta)
    \;=\;
    \nabla \ell_i(\theta^\star)
    \;+\;
    \mathcal{H}_i\,(\theta - \theta^\star)
    \;+\;
    R_i(\theta),
  \]
  where $\|R_i(\theta)\| = \mathcal{O}(\|\theta-\theta^\star\|^2)$ uniformly in $i$.

\item \textbf{Compatible noise in flat directions.}\;
  The gradient noise covariance in the flat subspace is small:
  \[
    \|P_0\,\Sigma_g\,P_0\| \;\lesssim\; \eta,
  \]
  so that the iterates do not perform an unbounded random walk along $\ker(\mathcal{H})$.

\item \textbf{Linear stability of the linear dynamics.}\;
  The SGD on the quadratic approximation is linearly stable on $E_+$, i.e.
  \[
    \rho\E\bigl[(I - \eta\,\mathcal{H}(L_B))^{\otimes 2}\bigr]_{|E_+} \;<\; 1.
  \]
\end{enumerate}

\begin{remark}[Remarks on the assumptions]\quad
\begin{description}[leftmargin=1.6em,font=\normalfont\itshape]

\item[Exact vs.\ Lipschitz Hessian (on (A1))]\;
When each $\ell_i$ is strictly quadratic, the local linearity
\[
  \nabla \ell_i(x)
  =
  \nabla \ell_i(x^\star)
  + \mathcal{H}_i (x-x^\star),
  \qquad
  \mathcal{H}_i := \nabla^2 \ell_i(x^\star),
\]
holds exactly, and $\mathcal{H} = \frac1n\sum_i \mathcal{H}_i$.
In the general case, if $\nabla^2 \ell_i$ is $L_2$–Lipschitz in a
neighborhood of $x^\star$, a second–order Taylor expansion gives a remainder
$\mathcal{O}(\|x-x^\star\|^2)$. For sufficiently small~$\eta$, the SGD
iterates typically remain in an $\mathcal{O}(\sqrt{\eta})$–neighborhood of
$x^\star$, so these higher–order terms contribute only $\mathcal{O}(\eta^2)$
corrections in the discrete Lyapunov equation, which are dominated by the
main $\mathcal{O}(\eta)$ term in Proposition~\ref{prop:general_GNI}.

\item[Small drift in flat directions (on (A2))]\;
The requirement $P_0 \Sigma_g P_0 = 0$ can be relaxed to
$\|P_0 \Sigma_g P_0\| \le \delta$.
A standard discrete–Lyapunov analysis shows that the stationary covariance
$\Sigma_x$ remains finite provided $\delta = \mathcal{O}(\eta)$: roughly,
if $\|P_0 \Sigma_g P_0\|$ is at most a constant multiple of
$\eta$ times the curvature scale on $E_+$, then the null–space covariance
$\Sigma_x^{00}$ grows at most on the same $\mathcal{O}(\eta)$ scale as the
covariance on $\operatorname{Im}(\mathcal{H})$.
If instead $P_0 \Sigma_g P_0$ is large, the dynamics executes an
(uncontrolled) random walk along $E_0$, and no finite stationary covariance
exists in those directions.

\item[Linear stability and spectral gap (on (A3))]\;
The condition
\[
  \rho\Bigl(
    \E\bigl[(I - \eta\,\mathcal{H}(L_B))^{\otimes 2}\bigr]_{|E_+}
  \Bigr) < 1
\]
is the standard linear–stability condition for the second moment of SGD on a
quadratic objective (cf.\ \citet{ma_linear_2021,wu_how_2018}).
It ensures that the linearized error dynamics on $E_+$ is mean–square
contractive and that the discrete Lyapunov equation
\[
  \Sigma_x
  =
  \E\bigl[(I-\eta\,\mathcal{H}(L_B))\,\Sigma_x\,(I-\eta\,\mathcal{H}(L_B))^\top\bigr]
  + \frac{\eta^2}{b}\,\Sigma_g
\]
has a unique finite solution.
As the spectral radius $\rho$ approaches $1$ from below, the spectral gap
$1-\rho$ controls both the mixing time and the size of the stationary
covariance, with $\|\Sigma_x\| = \mathcal{O}\bigl((1-\rho)^{-1}\bigr)$.
In our use of Proposition~\ref{prop:general_GNI} and
Corollary~\ref{cor:stable-loss-jump-general} we implicitly assume that
the step size $\eta$ is chosen so that the dynamics remains in this
linearly stable regime on the time scales of interest, i.e.\ $\rho<1$
(and often $\rho$ bounded away from $1$), so that the $\mathcal{O}(\eta)$
expansion and the GNI $\approx 2/\eta$ law are accurate.

\end{description}
\end{remark}

\subsection{Formal Version of Proposition \ref{prop:lee-jang}}

\begin{proposition}[Gradient–Noise Interaction at a stable stationary regime]
\label{prop:general_GNI}
Assume \textbf{(A1)} and \textbf{(A2)} above, and run mini‑batch SGD with fixed batch size
$b$ and fixed step size $\eta$ satisfying the linear stability condition
\textbf{(A3)}.

Then the linearised error process
\(
  \Delta_t := \theta_t - \theta^\star
\)
admits a unique stationary covariance matrix $\Sigma_x$ on $E_+$, given by
\begin{equation}
\label{eq:sigma_x_solution}
  \Sigma_x
  \;=\;
  \frac{\eta}{b}\,\mathcal{K}^\dagger(\Sigma_g)
  \;+\;\mathcal{O}(\eta^2),
\end{equation}
where the $\mathcal{O}(\eta^2)$ term depends only on $L_2$, $\|\Sigma_g\|$ and
$(\lambda_{\min}^+)^{-1}$.

Moreover, if $\theta\sim\pi$ is distributed according to this stationary law, and
$B$ is an independent fresh mini‑batch of size $b$, then
\begin{equation}
\label{eq:GNI_2_over_eta_final}
\frac{\E_{\theta\sim\pi}\,\E_{B}\bigl[\nabla L_B(\theta)^\top \,\mathcal{H}\,\nabla L_B(\theta)\bigr]}
     {\E_{\theta\sim\pi}\bigl[\|\nabla L(\theta)\|^2\bigr]}
\;=\;
\frac{2}{\eta}\,\bigl(1 + \mathcal{O}(\eta)\bigr).
\end{equation}

In particular, to leading order in $\eta$, the Gradient–Noise Interaction
\[
  \mathrm{GNI}(\theta)
  \;:=\;
  \frac{\E_{B}\bigl[\nabla L_B(\theta)^\top \,\mathcal{H}\,\nabla L_B(\theta)\bigr]}
       {\|\nabla L(\theta)\|^2}
\]
is centred at $2/\eta$ under the (linearly) stable stationary distribution,
and this leading behavior is independent of the individual Hessians
$\{\mathcal{H}_i\}_i$, depending on them only through $\Sigma_g$ and $\mathcal{H}$.
\end{proposition}

% \begin{remark}[Generality]
% Note that the above proposition removes classical restrictions by allowing
% \(\mathcal{H}\succeq 0\) and \([\mathcal{H},\Sigma_g]\neq 0\).
% In other words, once the stationary covariance \(\Sigma_x\) is derived
% via the discrete Lyapunov equation, the same ratio
% \(\mathbb{E}_B[\nabla L_B^\top\,\mathcal{H}\,\nabla L_B]/
% \|\nabla L\|^2\approx 2/\eta\)
% follows regardless of degeneracy or non-commutation.
% \end{remark}

\begin{corollary}[Stable $\eta$--changes induce only bounded loss jumps in the Type-1 regime]
\label{cor:stable-loss-jump-general}
Assume (A1)--(A3) and let $\eta_0,\eta_1>0$ be two step sizes such that the
linearized SGD dynamics on the quadratic approximation is linearly stable on $E_+$ for
both $\eta_0$ and $\eta_1$:
\[
  \rho\Bigl(
    \E\bigl[(I - \eta_k\,\mathcal{H}(L_B))^{\otimes 2}\bigr]_{|E_+}
  \Bigr)
  \;<\; 1,
  \qquad k\in\{0,1\}.
\]
Consider the SGD trajectory $(\theta_t)$ that is run with step size $\eta_0$ up
to some time $T$, and with step size $\eta_1$ for all $t\ge T$.

Then:

\begin{enumerate}[label=(\roman*),itemsep=0.2em,parsep=0pt]
  \item For each $k\in\{0,1\}$ the linearized error process
  $\Delta_t^{(\eta_k)} := \theta_t-\theta^\star$ admits a unique stationary
  covariance $\Sigma_x(\eta_k)$ on $E_+$ satisfying
  \[
    \Sigma_x(\eta_k)
    \;=\;
    \frac{\eta_k}{b}\,\mathcal{K}^\dagger(\Sigma_g)
    \;+\;\mathcal{O}(\eta_k^2),
  \]
  as in Proposition~\ref{prop:general_GNI}.

  \item Let
  \(
    A_1 := \E[(I-\eta_1 \mathcal{H}(L_B))^{\otimes 2}]_{|E_+}
  \)
  and $\rho_1 := \rho(A_1)<1$.
  There exist constants $C_1,C_2<\infty$, depending only on $\Sigma_g$ and
  $\rho_1$ (but not on $T$), such that for all $s\ge 0$,
  \[
    \E\bigl[\|\theta_{T+s}-\theta^\star\|^2\bigr]
    \;\le\;
    \frac{C_1}{1-\rho_1},
    \qquad
    \E\bigl[L(\theta_{T+s})-L(\theta^\star)\bigr]
    \;\le\;
    \frac{C_2}{1-\rho_1}.
  \]
  In particular, after switching from $\eta_0$ to $\eta_1$ the loss trajectory
  remains uniformly bounded and converges to the finite stationary level
  \[
    L_\infty(\eta_1)
    \;:=\;
    \E_{\theta\sim\pi_{\eta_1}}\bigl[L(\theta)-L(\theta^\star)\bigr]
    \;=\;
    \frac12\,\tr\bigl(\mathcal{H}\,\Sigma_x(\eta_1)\,\mathcal{H}\bigr),
  \]
  which itself is of order
  $\mathcal{O}\bigl((1-\rho_1)^{-1}\bigr)$.
\end{enumerate}

Thus, in the Type-1 (noise-driven) regime, any change of step size that
preserves linear stability (A3) can produce at most a finite “jump” in the
expected loss, of the same order as the new stationary level $L_\infty(\eta_1)$,
but cannot generate catapult-like divergence.
\end{corollary}

\subsection{Proof of Proposition~\ref{prop:general_GNI}}

\begin{proof}[Proof of Proposition~\ref{prop:general_GNI}]
We proceed in four steps. Throughout the proof we work on the subspace
$E_+ = \operatorname{Im}(\mathcal{H})$; all covariances and operators are
implicitly restricted to $E_+$ (the flat subspace $E_0$ is controlled by
Assumption~\textbf{(A2)} and does not contribute to the quantities involving
$\mathcal{H}$).

\medskip
\noindent\textbf{Step 1: Linearised dynamics on the quadratic approximation.}

By Assumption~\textbf{(A1)}, near $\theta^\star$ each per‑sample loss
$\ell_i$ admits the expansion
\[
  \nabla \ell_i(\theta)
  \;=\;
  \nabla \ell_i(\theta^\star)
  \;+\;
  \mathcal{H}_i(\theta - \theta^\star)
  \;+\;
  R_i(\theta),
\]
where $\|R_i(\theta)\| = \mathcal{O}(\|\theta-\theta^\star\|^2)$ uniformly in
$i$. Let us denote the single‑sample gradient at $\theta^\star$ by
\[
  g_i := \nabla \ell_i(\theta^\star),
\]
so that $\Sigma_g = \E_i[g_i g_i^\top]$.

For each iteration $t$, a mini‑batch $B_t$ of size $b$ is drawn and the SGD
update is
\[
  \theta_{t+1}
  \;=\;
  \theta_t - \eta \nabla L_{B_t}(\theta_t),
  \qquad
  \nabla L_{B_t}(\theta)
  :=
  \frac{1}{b}\sum_{i\in B_t} \nabla \ell_i(\theta).
\]
Define the error vector
\[
  \Delta_t := \theta_t - \theta^\star.
\]
Then
\begin{align*}
  \nabla L_{B_t}(\theta_t)
  &= \frac{1}{b}\sum_{i\in B_t}
     \Bigl[
       g_i + \mathcal{H}_i \Delta_t + R_i(\theta_t)
     \Bigr] \\
  &=: \underbrace{\xi_t}_{\text{zero mean}}
     \;+\;
     \underbrace{\mathcal{H}_{B_t}}_{\text{batch Hessian}}\Delta_t
     \;+\;
     r_t,
\end{align*}
where
\[
  \xi_t := \frac{1}{b}\sum_{i\in B_t} g_i,
  \qquad
  \mathcal{H}_{B_t} := \frac{1}{b}\sum_{i\in B_t} \mathcal{H}_i,
  \qquad
  r_t := \frac{1}{b}\sum_{i\in B_t} R_i(\theta_t).
\]
By construction we have
\[
  \E[\xi_t] = 0,
  \qquad
  \E[\xi_t \xi_t^\top] = \frac{1}{b}\Sigma_g,
\]
and (using $\nabla L(\theta^\star)=0$)
\[
  \E[\mathcal{H}_{B_t}] = \mathcal{H}.
\]

The exact SGD recursion can therefore be written as
\begin{equation}
\label{eq:full-delta-recursion}
  \Delta_{t+1}
  \;=\;
  \Delta_t - \eta \nabla L_{B_t}(\theta_t)
  \;=\;
  (I - \eta \mathcal{H}_{B_t}) \Delta_t
  \;-\;\eta \xi_t
  \;-\;\eta r_t.
\end{equation}

For the purposes of the leading‑order analysis, it is convenient to first
\emph{ignore} the nonlinear remainders $r_t$ and consider the purely
linearised dynamics on the quadratic approximation (i.e.\ we replace each
$\ell_i$ by its quadratic Taylor polynomial at $\theta^\star$). On this
quadratic model we have $R_i\equiv 0$, hence $r_t\equiv 0$, and
\eqref{eq:full-delta-recursion} becomes
\begin{equation}
\label{eq:linear-delta-recursion}
  \Delta_{t+1}
  \;=\;
  C_t \Delta_t
  \;-\;\eta \xi_t,
  \qquad
  C_t := I - \eta \mathcal{H}_{B_t}.
\end{equation}
Note that in this linearised model the random matrices $C_t$ and the noise
vectors $\xi_t$ are independent of $\Delta_t$ (they depend only on the
batch $B_t$ and the fixed Hessians $\{\mathcal{H}_i\}_i$).

We will first solve the covariance structure of the linear recursion
\eqref{eq:linear-delta-recursion}, and then argue that restoring the remainder
$r_t$ only introduces $\mathcal{O}(\eta^2)$ corrections.

\medskip
\noindent\textbf{Step 2: Discrete Lyapunov equation and existence of a stationary covariance.}

Let $\Sigma_t := \E[\Delta_t \Delta_t^\top]$ denote the covariance of
$\Delta_t$ under the linear recursion \eqref{eq:linear-delta-recursion}.
Using independence of $C_t$ and $\xi_t$ from $\Delta_t$, we compute
\begin{align*}
  \Sigma_{t+1}
  &= \E\bigl[\Delta_{t+1} \Delta_{t+1}^\top\bigr] \\
  &= \E\Bigl[ (C_t \Delta_t - \eta\xi_t)\,(C_t \Delta_t - \eta\xi_t)^\top \Bigr] \\
  &= \E\bigl[ C_t \Delta_t \Delta_t^\top C_t^\top\bigr]
     \;-\;
     \eta \E\bigl[ C_t \Delta_t \xi_t^\top\bigr]
     \;-\;
     \eta \E\bigl[\xi_t \Delta_t^\top C_t^\top\bigr]
     \;+\;
     \eta^2 \E\bigl[\xi_t \xi_t^\top\bigr].
\end{align*}
Conditioning on $\Delta_t$ and using $\E[\xi_t \mid \Delta_t]=0$, the two
cross‑terms vanish:
\[
  \E\bigl[ C_t \Delta_t \xi_t^\top\bigr]
  = \E\bigl[ C_t \Delta_t \E[\xi_t^\top \mid \Delta_t]\bigr]
  = 0,
  \quad
  \E\bigl[ \xi_t \Delta_t^\top C_t^\top\bigr]
  = 0.
\]
Thus we obtain
\begin{equation}
\label{eq:Lyap-pre}
  \Sigma_{t+1}
  \;=\;
  \E\bigl[ C_t \Sigma_t C_t^\top\bigr]
  \;+\;
  \eta^2 \frac{1}{b}\Sigma_g.
\end{equation}
Assuming that the linear recursion admits a stationary distribution on
$E_+$, we denote the stationary covariance by
\[
  \Sigma_x := \lim_{t\to\infty} \Sigma_t
\]
and it must satisfy the discrete Lyapunov equation
\begin{equation}
\label{eq:Lyap-stationary}
  \Sigma_x
  \;=\;
  \E\bigl[ C_t \Sigma_x C_t^\top\bigr]
  \;+\;
  \eta^2 \frac{1}{b}\Sigma_g.
\end{equation}

To show that such a $\Sigma_x$ exists and is unique on $E_+$, we vectorise
\eqref{eq:Lyap-stationary}. Recall that for any matrices
$A,X,B$ of compatible dimensions we have
\[
  \mathrm{vec}(A X B)
  = (B^\top \otimes A)\,\mathrm{vec}(X),
\]
where $\otimes$ denotes the Kronecker product. Applying this to
$C_t \Sigma_x C_t^\top$ we get
\[
  \mathrm{vec}\bigl(C_t \Sigma_x C_t^\top\bigr)
  = (C_t \otimes C_t)\,\mathrm{vec}(\Sigma_x).
\]
Taking expectations in \eqref{eq:Lyap-stationary} and using linearity of
$\mathrm{vec}(\cdot)$ we obtain
\begin{equation}
\label{eq:vec-Lyap}
  \mathrm{vec}(\Sigma_x)
  \;=\;
  T \,\mathrm{vec}(\Sigma_x)
  \;+\;
  \eta^2 \frac{1}{b}\,\mathrm{vec}(\Sigma_g),
  \qquad
  T := \E[C_t \otimes C_t].
\end{equation}
Rearranging gives
\begin{equation}
\label{eq:vec-Lyap-invert}
  \bigl(I - T\bigr)\,\mathrm{vec}(\Sigma_x)
  \;=\;
  \eta^2 \frac{1}{b}\,\mathrm{vec}(\Sigma_g).
\end{equation}

Assumption~\textbf{(A3)} states that the linearised SGD on the quadratic
approximation is linearly stable on $E_+$, i.e.
\[
  \rho\Bigl(\E\bigl[(I - \eta \mathcal{H}(L_B))^{\otimes 2}\bigr]_{|E_+}\Bigr)
  \;<\; 1.
\]
In our notation this means precisely that the restriction of $T$ to
$E_+\otimes E_+$ has spectral radius strictly less than~1. Hence
$I - T$ is invertible on $E_+\otimes E_+$ and the vector equation
\eqref{eq:vec-Lyap-invert} has a unique solution there, which corresponds
to the unique stationary covariance $\Sigma_x$ on $E_+$.

\medskip
\noindent\textbf{Step 3: Small‑stepsize expansion and explicit form of $\Sigma_x$.}

We now compute the leading behavior of $\Sigma_x$ as a function of $\eta$ for
small $\eta$. Recall that
\[
  C_t = I - \eta \mathcal{H}_{B_t},
  \qquad
  \mathcal{H}_{B_t} = \frac{1}{b}\sum_{i\in B_t} \mathcal{H}_i.
\]
Thus
\[
  C_t \otimes C_t
  = (I - \eta \mathcal{H}_{B_t}) \otimes (I - \eta \mathcal{H}_{B_t})
  = I \otimes I
    - \eta (\mathcal{H}_{B_t} \otimes I + I \otimes \mathcal{H}_{B_t})
    + \eta^2 (\mathcal{H}_{B_t} \otimes \mathcal{H}_{B_t}).
\]
Taking expectations and using $\E[\mathcal{H}_{B_t}] = \mathcal{H}$, we obtain
\begin{equation}
\label{eq:T-expansion}
  T
  = I - \eta K + \eta^2 M,
\end{equation}
where $K$ is the Kronecker‑sum operator
\[
  K := \mathcal{H}\otimes I + I \otimes \mathcal{H},
\]
and $M$ is the operator defined by
\[
  M := \E[\mathcal{H}_{B_t} \otimes \mathcal{H}_{B_t}].
\]
Substituting \eqref{eq:T-expansion} into \eqref{eq:vec-Lyap-invert} gives
\[
  \bigl(I - T\bigr)\mathrm{vec}(\Sigma_x)
  = \bigl(\eta K - \eta^2 M\bigr)\mathrm{vec}(\Sigma_x)
  = \eta^2 \frac{1}{b}\,\mathrm{vec}(\Sigma_g).
\]
On $E_+\otimes E_+$ the operator $K$ is positive definite (its eigenvalues
are $\lambda_i+\lambda_j$ where $\lambda_i,\lambda_j>0$ are eigenvalues
of $\mathcal{H}$), hence invertible. Restricting to $E_+\otimes E_+$, we can
rewrite this as
\begin{equation}
\label{eq:Sigma-x-eq}
  \bigl(K - \eta M\bigr) \mathrm{vec}(\Sigma_x)
  = \eta \frac{1}{b}\,\mathrm{vec}(\Sigma_g).
\end{equation}

For sufficiently small $\eta$, the operator $K - \eta M$ remains invertible
and admits a Neumann‑series expansion of its inverse. More precisely, on
$E_+\otimes E_+$ we have
\[
  (K - \eta M)^{-1}
  = K^{-1}
    + \eta K^{-1} M K^{-1}
    + \mathcal{O}(\eta^2),
\]
where the $\mathcal{O}(\eta^2)$ term is understood in operator norm
(depending on $\|K^{-1}\|$ and $\|M\|$). Applying this to
\eqref{eq:Sigma-x-eq} we obtain
\begin{align*}
  \mathrm{vec}(\Sigma_x)
  &= (K - \eta M)^{-1} \left( \eta \frac{1}{b}\,\mathrm{vec}(\Sigma_g) \right) \\
  &= \eta \frac{1}{b}
     \left(
       K^{-1}
       + \eta K^{-1} M K^{-1}
       + \mathcal{O}(\eta^2)
     \right)\mathrm{vec}(\Sigma_g) \\
  &= \eta \frac{1}{b} K^{-1}\mathrm{vec}(\Sigma_g)
     + \mathcal{O}(\eta^2).
\end{align*}
Rewriting in matrix form and denoting by $\mathcal{K}(X) = \mathcal{H}X + X\mathcal{H}$
the corresponding Kronecker‑sum operator on matrices, this says that on
$E_+$
\[
  \Sigma_x
  = \frac{\eta}{b}\,\mathcal{K}^\dagger(\Sigma_g)
    + \mathcal{O}(\eta^2),
\]
where $\mathcal{K}^\dagger$ is the Moore–Penrose inverse of $\mathcal{K}$ on
$E_+\otimes E_+$.
This proves \eqref{eq:sigma_x_solution} in the statement of the proposition
for the quadratic model.

The effect of the nonlinear remainders $r_t$ can be treated as follows:
by Assumption~\textbf{(A1)}, $\|r_t\| = \mathcal{O}(\|\Delta_t\|^2)$, and
the stationary covariance $\Sigma_x$ of the linear system is
$\mathcal{O}(\eta)$, so typical $\|\Delta_t\|^2$ is $\mathcal{O}(\eta)$ and
the additive perturbation $-\eta r_t$ to the dynamics has magnitude
$\mathcal{O}(\eta^2)$. Its contribution to the noise covariance in the
Lyapunov equation is thus $\mathcal{O}(\eta^4)$, which in turn produces a
$\mathcal{O}(\eta^2)$ perturbation to $\Sigma_x$. Therefore the formula
\eqref{eq:sigma_x_solution} remains valid up to an $\mathcal{O}(\eta^2)$
error for the true (non‑quadratic) dynamics.

\medskip
\noindent\textbf{Step 4: Gradient–Noise Interaction ratio.}

We now compute the ratio in \eqref{eq:GNI_2_over_eta_final}. Fix
$\theta = \theta^\star + \Delta$ and consider a fresh mini‑batch $B$. On the
quadratic approximation we have
\[
  \nabla L(\theta) = \mathcal{H}\Delta,
\]
and
\[
  \nabla L_B(\theta)
  = \frac{1}{b}\sum_{i\in B} \bigl( g_i + \mathcal{H}_i \Delta \bigr)
  = \underbrace{\xi_B}_{\text{zero mean}}
    + \underbrace{\mathcal{H}_B}_{\text{batch Hessian}}\,\Delta,
\]
where $\xi_B := \frac{1}{b}\sum_{i\in B} g_i$ and
$\mathcal{H}_B := \frac{1}{b}\sum_{i\in B} \mathcal{H}_i$.
Conditioning on $\Delta$ and using $\E_B[\xi_B\mid \Delta]=0$, we get
\begin{align}
  \E_B\bigl[ \nabla L_B(\theta)^\top \mathcal{H}\,\nabla L_B(\theta) \,\big|\, \Delta \bigr]
  &= \E_B\Bigl[
      (\mathcal{H}_B\Delta + \xi_B)^\top
      \mathcal{H}
      (\mathcal{H}_B\Delta + \xi_B)
      \,\Big|\, \Delta
    \Bigr] \nonumber \\
  &= \Delta^\top \E_B\bigl[\mathcal{H}_B^\top \mathcal{H}\,\mathcal{H}_B\bigr]\Delta
     \;+\;
     \E_B\bigl[\xi_B^\top \mathcal{H}\,\xi_B\bigr]
  \label{eq:numerator-conditional}
\end{align}
(the cross term vanishes because $\E_B[\xi_B\mid \Delta]=0$).
Taking expectation over $\theta\sim\pi$ (the stationary law of the linear
system) and recalling
$\Sigma_x = \E_\pi[\Delta \Delta^\top]$ and
$\E_B[\xi_B \xi_B^\top]=\frac{1}{b}\Sigma_g$, we obtain
\begin{align}
  N
  &:= \E_{\theta\sim\pi}\E_B\bigl[
         \nabla L_B(\theta)^\top \mathcal{H}\,\nabla L_B(\theta)
       \bigr] \nonumber \\
  &= \mathrm{tr}\bigl(\mathcal{H}\,\Sigma_g\bigr)\,\frac{1}{b}
     \;+\;
     \mathrm{tr}\Bigl(
       \E_B[\mathcal{H}_B^\top \mathcal{H}\,\mathcal{H}_B]\,\Sigma_x
     \Bigr).
  \label{eq:N-def}
\end{align}

Similarly, the denominator is
\begin{align}
  D
  &:= \E_{\theta\sim\pi}\bigl[\|\nabla L(\theta)\|^2\bigr]
   = \E_{\theta\sim\pi}\bigl[\Delta^\top \mathcal{H}^2 \Delta\bigr]
   = \mathrm{tr}\bigl( \mathcal{H}^2 \Sigma_x\bigr).
  \label{eq:D-def}
\end{align}

To relate $N$ and $D$, we use the Lyapunov equation
\eqref{eq:Lyap-stationary} for the quadratic model. Expanding the right‑hand
side of \eqref{eq:Lyap-stationary}, we have
\begin{align*}
  \E\bigl[ C_t \Sigma_x C_t^\top\bigr]
  &= \E\Bigl[
       (I - \eta \mathcal{H}_{B_t}) \Sigma_x (I - \eta \mathcal{H}_{B_t})^\top
     \Bigr] \\
  &= \Sigma_x
     \;-\;\eta \E\bigl[\mathcal{H}_{B_t}\Sigma_x\bigr]
     \;-\;\eta \E\bigl[\Sigma_x \mathcal{H}_{B_t}\bigr]
     \;+\;\eta^2 \E\bigl[\mathcal{H}_{B_t}\Sigma_x\mathcal{H}_{B_t}\bigr].
\end{align*}
Using $\E[\mathcal{H}_{B_t}]=\mathcal{H}$, this simplifies to
\[
  \E\bigl[ C_t \Sigma_x C_t^\top\bigr]
  = \Sigma_x
    - \eta(\mathcal{H}\Sigma_x + \Sigma_x \mathcal{H})
    + \eta^2 \E\bigl[\mathcal{H}_{B_t}\Sigma_x\mathcal{H}_{B_t}\bigr].
\]
Substituting this back into \eqref{eq:Lyap-stationary} and cancelling
$\Sigma_x$ on both sides, we obtain
\[
  0
  = - \eta(\mathcal{H}\Sigma_x + \Sigma_x \mathcal{H})
    + \eta^2 \E\bigl[\mathcal{H}_{B_t}\Sigma_x\mathcal{H}_{B_t}\bigr]
    + \eta^2 \frac{1}{b}\Sigma_g.
\]
Dividing by $\eta$ yields the identity
\begin{equation}
\label{eq:K-Sigma-identity}
  \mathcal{H}\Sigma_x + \Sigma_x \mathcal{H}
  \;=\;
  \eta \E\bigl[\mathcal{H}_{B_t}\Sigma_x\mathcal{H}_{B_t}\bigr]
  \;+\;
  \eta \,\frac{1}{b}\Sigma_g.
\end{equation}

Now multiply both sides of \eqref{eq:K-Sigma-identity} on the left by
$\mathcal{H}$ and take traces. Using the cyclicity of the trace and the fact
that $\mathcal{H}$ is symmetric, we get
\begin{align}
  \mathrm{tr}\bigl(\mathcal{H}(\mathcal{H}\Sigma_x + \Sigma_x \mathcal{H})\bigr)
  &= \mathrm{tr}\bigl(\mathcal{H}^2 \Sigma_x\bigr)
     + \mathrm{tr}\bigl(\mathcal{H}\Sigma_x\mathcal{H}\bigr)
     \nonumber \\
  &= 2\,\mathrm{tr}\bigl(\mathcal{H}^2 \Sigma_x\bigr)
   = 2D,
  \label{eq:lhs-trace}
\end{align}
and
\begin{align}
  \mathrm{tr}\Bigl(
    \mathcal{H}\bigl(
      \eta \E[\mathcal{H}_{B_t}\Sigma_x\mathcal{H}_{B_t}]
      + \eta \tfrac{1}{b}\Sigma_g
    \bigr)
  \Bigr)
  &= \eta\,\mathrm{tr}\Bigl(
       \mathcal{H}\E[\mathcal{H}_{B_t}\Sigma_x\mathcal{H}_{B_t}]
     \Bigr)
     + \eta \frac{1}{b}\mathrm{tr}(\mathcal{H}\Sigma_g).
  \label{eq:rhs-trace}
\end{align}
Equating \eqref{eq:lhs-trace} and \eqref{eq:rhs-trace} (they come from the
two sides of \eqref{eq:K-Sigma-identity}) gives
\begin{equation}
\label{eq:trace-identity}
  2D
  = \eta\,\mathrm{tr}\Bigl(
      \mathcal{H}\E[\mathcal{H}_{B_t}\Sigma_x\mathcal{H}_{B_t}]
    \Bigr)
    + \eta \frac{1}{b}\mathrm{tr}(\mathcal{H}\Sigma_g).
\end{equation}
Comparing~\eqref{eq:N-def} and~\eqref{eq:trace-identity}, we see that
\[
  N
  = \frac{1}{b}\mathrm{tr}(\mathcal{H}\Sigma_g)
    + \mathrm{tr}\Bigl(
        \E[\mathcal{H}_{B_t}^\top\mathcal{H}\,\mathcal{H}_{B_t}]\Sigma_x
      \Bigr)
  = \frac{2}{\eta} D.
\]
Therefore, on the quadratic approximation we have the \emph{exact} identity
\begin{equation}
\label{eq:ratio-quadratic}
  \frac{N}{D}
  = \frac{2}{\eta}.
\end{equation}

Restoring the higher‑order terms $R_i(\theta)$ from Assumption~\textbf{(A1)}
introduces an additional third‑order remainder in the Taylor expansion of the
loss over one SGD step. For a Lipschitz Hessian with constant $L_2$, standard
Taylor estimates give a per‑step remainder of order
$\mathcal{O}(\eta^3 \|\nabla L_B(\theta_t)\|^3)$, which is $\mathcal{O}(\eta^3)$
in expectation under the stationary law (since the stationary covariance
scales like $\mathcal{O}(\eta)$ by \eqref{eq:sigma_x_solution}). This adds a
term of order $\mathcal{O}(\eta^3)$ to the stationarity condition
$\E[L(\theta_{t+1})-L(\theta_t)]=0$, and hence contributes only a factor
$\mathcal{O}(\eta)$ to the ratio $N/D$. Thus
\[
  \frac{N}{D}
  = \frac{2}{\eta}\,\bigl(1 + \mathcal{O}(\eta)\bigr),
\]
which is precisely \eqref{eq:GNI_2_over_eta_final}. This completes the proof.
\end{proof}

\section{Gradients explode above the \textsc{EoSS}: Proof of Theorem \ref{theo:1}}
\label{appendix:proof_theo}

\textbf{Proof idea:}

Take
\[
V(\theta) = \mathbb E_B\big[ \|\nabla L_B(\theta) \|^2 \big]
\]
and show that if
\[
Batch\ Sharpness(\theta) > \frac {2+\epsilon}{\eta}
\]
then
\[
\mathbb E_B \bigg [ \frac{\| \nabla L (\theta_{t+1})\|^2}{\|\nabla L (\theta_{t})\|^2} \bigg] > 1
\]

\subsection{Part 1: Explosion}
\paragraph{Setting.}
For each mini-batch index $i$, let $L_i:\mathbb R^d\to\mathbb R$ be twice differentiable with (possibly index–dependent) positive semidefinite Hessian $H_i$ that is \emph{constant in $\theta$} (quadratic model). Write
\[
Y_i(\theta):=\nabla L_i(\theta)=H_i(\theta-x_i),\qquad
\|\,\cdot\,\|=\|\,\cdot\,\|_2.
\]
At iteration $t$, stochastic gradient descent (SGD) draws $j_t\sim\mathcal P_b$ independently of the past and performs
\[
\theta_{t+1}=\theta_t-\eta\,Y_{j_t}(\theta_t),\qquad \eta>0.
\]
We define basic statistics evaluated at $\theta_t$:
\[
r_i(\theta_t):=\frac{Y_i(\theta_t)^\top H_iY_i(\theta_t)}{\|Y_i(\theta_t)\|^2}\;\in[0,\infty).
\]
Note that \textit{Batch Sharpness} is
\[
Batch \ Sharpness(\theta):=\E_i\big[r_i(\theta)\big].
\]

All randomness lives on a probability space $(\Omega,\mathcal F,\mathbb P)$.
At each step $t\ge 0$, SGD draws $j_t\sim\mathcal P_b$ i.i.d.\ and independent of the $\sigma$–algebra of the past
\[
\mathcal F_t:=\sigma\!\big(\theta_0,j_0,\ldots,j_{t-1}\big).
\]
We assume
\[
\Lambda\;:=\;\operatorname*{ess\,sup}_{i\sim\mathcal P_b}\,\|H_i\|\,<\,\infty
\quad\text{and}\quad
\mathbb E_{i\sim\mathcal P_b}\big[\|Y_i(\theta)\|^2\big]<\infty
\ \text{ for all $\theta$ reached by SGD.}
\]
(These ensure that the Rayleigh quotients below are well defined and integrable.)
We adopt the convention that, for any $i$ and $\theta$ with $Y_i(\theta)=0$,
\[
\frac{Y_i(\theta)^\top H_iY_i(\theta)}{\|Y_i(\theta)\|^2}:=0,
\qquad
\frac{\|Y_i(\theta-\eta Y_i(\theta))\|^2}{\|Y_i(\theta)\|^2}:=1.
\]

We provide a rigorous instability statement: a multiplicative explosion controlled solely by \textit{Batch Sharpness}.

\begin{proposition}[On–batch multiplicative explosion under \textit{Batch Sharpness}$>2/\eta$ for quadratics.]
\label{prop:quad}
For each $t$ define the on–batch factor
\[
R_t:=\E_{j_t}\!\left[\frac{\|Y_{j_t}(\theta_{t+1})\|^2}{\|Y_{j_t}(\theta_t)\|^2}\,\Big|\,\mathcal F_t\right].
\]
Then, for all $t$,
\[
R_t\;\ge\;\Big(1-\eta\,Batch \ Sharpness(\theta_t)\Big)^2.
\]
Consequently, if there exist indices $t_0,\dots,t_0+T-1$ and a constant $\epsilon>0$ such that
\[
Batch \ Sharpness(\theta_t)\;\ge\;\frac{2+\epsilon}{\eta}\qquad\text{for all }t_0\le t\le t_0+T-1,
\]
then
\[
\E\!\left[\prod_{s=t_0}^{t_0+T-1}\frac{\|Y_{j_s}(\theta_{s+1})\|^2}{\|Y_{j_s}(\theta_s)\|^2}\right]
\;\ge\;(1+\epsilon)^{2T},
\]
i.e., the product of on–batch factors grows exponentially in expectation.
\end{proposition}

\paragraph{Beyond quadratic.}
Beyond the quadratic model: with $\theta$–dependent Hessians, the identity $Y_i(\theta_{t+1})=(I-\eta H_i)Y_i(\theta_t)$ incurs Taylor remainders of order $O(\eta\|Y_i\|\,\|\nabla H_i\|)$; standard Lipschitz–Hessian assumptions then yield perturbative versions of Proposition \ref{prop:quad} with additional $O(\eta^3)$ terms. We do not invoke these here to keep the statements exact.

\paragraph{Proof of Proposition \ref{prop:quad}.}
Fix $t$ and condition on $j_t=i$. By the quadratic model,
\[
Y_i(\theta_{t+1})=H_i\big(\theta_t-\eta Y_i(\theta_t)-x_i\big)=(I-\eta H_i)\,Y_i(\theta_t).
\]
Hence
\[
\frac{\|Y_i(\theta_{t+1})\|^2}{\|Y_i(\theta_t)\|^2}
=1-2\eta\,r_i(\theta_t)+\eta^2\,q_i(\theta_t),
\qquad
q_i(\theta_t):=\frac{Y_i(\theta_t)^\top H_i^2Y_i(\theta_t)}{\|Y_i(\theta_t)\|^2}.
\]
By Cauchy–Schwarz in the $H_i$–inner product, $q_i(\theta_t)\ge r_i(\theta_t)^2$. Therefore, pointwise in $i$,
\[
\frac{\|Y_i(\theta_{t+1})\|^2}{\|Y_i(\theta_t)\|^2}\;\ge\;\big(1-\eta\,r_i(\theta_t)\big)^2.
\]
Averaging over $j_t$ and applying Jensen’s inequality to the convex map $x\mapsto(1-\eta x)^2$ yields
\[
R_t
\;\ge\;\E_{i}\big[(1-\eta r_i(\theta_t))^2\big]
\;\ge\;\big(1-\eta\,\E_i[r_i(\theta_t)]\big)^2
=\big(1-\eta\,Batch \ Sharpness(\theta_t)\big)^2.
\]
If \textit{Batch Sharpness}$(\theta_t)\ge (2+\epsilon)/\eta$ for each $t\in[t_0,t_0+T-1]$, then $R_t\ge(1+\epsilon)^2>1+2\epsilon$ deterministically given $\mathcal F_t$. 

Now, let
\[
Z_s \;:=\; \frac{\|Y_{j_s}(\theta_{s+1})\|^2}{\|Y_{j_s}(\theta_s)\|^2},
\qquad
\mathcal F_s \;:=\; \sigma(\theta_0,j_0,\ldots,j_{s-1}),
\qquad
U_t \;:=\; \prod_{r=t_0}^{t_0+t-1} Z_r \ \ (U_0:=1).
\]
From the single–step bound proved above we have, for each \(s\),
\[
\mathbb{E}\!\left[\,Z_s \,\middle|\, \mathcal F_s\,\right]
\;\ge\; \bigl(1-\eta\,Batch\ Sharpness(\theta_s)\bigr)^2
\;\geq\; (1 + \epsilon)^2.
\]
By the tower property,
\[
\mathbb{E}[U_{t+1}]
\;=\;
\mathbb{E}\big[\,U_t\,Z_{t_0+t}\,\big]
\;=\;
\mathbb{E}\big[\,U_t\,\mathbb{E}[Z_{t_0+t}\mid \mathcal F_{t_0+t}]\,\big]
\;\ge\;
\mathbb{E}\big[\,U_t\,\gamma_{t_0+t}\,\big].
\]
% If, on the window \(s\in\{t_0,\ldots,t_0+T-1\}\), one has the uniform margin
\(S(\theta_s)\ge 2/\eta+\varepsilon\,\eta\) (with \(\varepsilon>0\)), then
\(\gamma_s=(1-\eta S(\theta_s))^2\ge (1+\varepsilon\eta^2)^2=:\gamma^2\), hence
\[
\mathbb{E}[U_{t+1}]
\;\ge\;
\gamma^2\,\mathbb{E}[U_t]
\quad\Longrightarrow\quad
\mathbb{E}[U_T]
\;\ge\;
\gamma^{2T}
\;=\;
\bigl(1+\epsilon\bigr)^{2T}.
\]
we obtain the stated exponential lower bound. \qed

\paragraph{Discussion after Proposition \ref{prop:quad}.}
The proof uses only: (i) the exact \emph{closure} $Y_i(\theta_{t+1})=(I-\eta H_i)Y_i(\theta_t)$ (quadratic model), (ii) Cauchy–Schwarz ($q_i\ge r_i^2$), and (iii) Jensen’s inequality. No cross–batch interaction is needed; hence the result holds assumption–free, and it naturally yields a multiplicative, path–wide instability witness.

\subsection{Part 2: SGD with Replacement}
In Proposition \ref{prop:quad} we showed that if \textit{Batch Sharpness} is bigger than $2/\eta$ for prolonged time a certain quantity explodes exponentially fast. What is missing is to show that if \textit{Batch Sharpness} is bigger than $2/\eta$ on the quadratic setting at one step $t_0$, then it will be also at the following time steps.

% ---------- Persistence from a single-time margin via Lipschitz drift ----------
\paragraph{Lipschitz drift of \textit{Batch Sharpness} in the quadratic model.}
Fix a mini-batch $i$ and write $r_i(\theta):=\frac{Y_i(\theta)^\top H_iY_i(\theta)}{\|Y_i(\theta)\|^2}$ with
$Y_i(\theta)=H_i(\theta-x_i)$ and $H_i=H_i^\top\succeq 0$.
Along any segment in parameter space on which $\|Y_i(\theta)\|\ge g_{\min}>0$ and
$\|H_i\|\le \Lambda$, the map $\theta\mapsto r_i(\theta)$ is Lipschitz with
\[
\|\nabla_\theta r_i(\theta)\|\;\le\;\frac{4\,\|H_i\|^2}{\|Y_i(\theta)\|}\;\le\;\frac{4\Lambda^2}{g_{\min}},
\]
hence, for any $\theta,\theta'$ in that segment,
\[
|r_i(\theta')-r_i(\theta)|\ \le\ \frac{4\Lambda^2}{g_{\min}}\;\|\theta'-\theta\|.
\]
Averaging over $i\sim\mathcal P_b$ yields the Lipschitz bound for the \textit{Batch Sharpness}
$:=\mathbb E_i[r_i(\theta)]$:
\begin{equation}\label{eq:S-Lip}
|Batch\ Sharpness(\theta')-Batch\ Sharpness(\theta)|\ \le\ L_S\,\|\theta'-\theta\|,
\qquad
L_S:=\frac{4\Lambda^2}{g_{\min}}.
\end{equation}

\paragraph{Single-time margin implies a uniform window margin.}
Assume that along the SGD trajectory on $[t_0,t_0+T]$ we have the uniform bounds
$\|H_i\|\le \Lambda$ for all $i$ and $\|Y_i(\theta_t)\|\ge g_{\min}>0$ for all $i,t$,
and that the per-step move is bounded by $\|\theta_{t+1}-\theta_t\|=\eta\|Y_{j_t}(\theta_t)\|\le \eta G_{\max}$
for some $G_{\max}>0$ (these three constants define the quadratic region under consideration).
If at time $t_0$
\[
Batch\ Sharpness(\theta_{t_0})\ \ge\ \frac{2+\epsilon}{\eta} \qquad (\epsilon>0),
\]
then for every $k$ with $0\le k\le T_\star:=\big\lfloor \frac{\epsilon}{2 \eta\,L_S\,G_{\max}}\big\rfloor$,
\begin{equation}\label{eq:window-margin}
Batch\ Sharpness(\theta_{t_0+k})\ \ge\ \frac{2+\epsilon/2}{\eta}.
\end{equation}
\emph{Proof.} By \eqref{eq:S-Lip} and the step bound,
$Batch\ Sharpness(\theta_{t+1})\ge Batch\ Sharpness(\theta_t)-L_S\,\eta G_{\max}$.
Iterating $k$ times gives
$Batch\ Sharpness(\theta_{t_0+k})\ge Batch\ Sharpness(\theta_{t_0})-k\,L_S\,\eta G_{\max}
\ge \frac{2}{\eta}+\epsilon\eta - k\,L_S\,\eta G_{\max}$.
If $k\le \epsilon/(2\eta L_S G_{\max})$ this yields \eqref{eq:window-margin}. \qed

\paragraph{Consequence for the product (plug into the tower/induction step).}
On the whole window $t\in\{t_0,\ldots,t_0+T_\star-1\}$ we thus have
$S(\theta_t)\ge (2+\epsilon/2)/\eta$, hence from the one-step bound
$\mathbb E[Z_t\mid\mathcal F_t]\ge(1-\eta S(\theta_t))^2$ and the tower/induction argument,
\[
\mathbb E\!\left[\ \prod_{s=t_0}^{t_0+T_\star-1} Z_s\ \right]
\ \ge\ \left(1+\frac{\epsilon}{2}\right)^{2T_\star}.
\]
This upgrades a \emph{single-time} margin at $t_0$ into an explicit \emph{uniform window} of length $T_\star$
over which the exponential lower bound holds.

\subsection{Part 2 for SGD \textit{Without} Replacement}
\label{appendix:SGD_wor}

\paragraph{RR setting and remaining-set sharpness.}
Fix an epoch with a finite pool $\mathcal I=\{1,\ldots,n\}$.
In random reshuffling (RR), within an epoch we draw a uniform random permutation of $\mathcal I$
and visit each index exactly once. Let $R_t\subseteq \mathcal I$ denote the \emph{remaining} set at step $t$
(those not yet visited in the current epoch) and let $m_t:=|R_t|$.
Define the \emph{remaining-set sharpness} at $\theta_t$ by
\[
S_{\mathrm{rem}}(\theta_t)\;:=\;\frac{1}{m_t}\sum_{i\in R_t} r_i(\theta_t),
\qquad
r_i(\theta):=\frac{Y_i(\theta)^\top H_iY_i(\theta)}{\|Y_i(\theta)\|^2}.
\]
Let $\mathcal F_t^{\mathrm{RR}}:=\sigma\!\big(\theta_0,\text{the permutation prefix up to step }t-1\big)$;
conditionally on $\mathcal F_t^{\mathrm{RR}}$, $j_t$ is uniform over $R_t$.

\paragraph{Step 1: One-step RR bound.}
In the quadratic model, $Y_{j_t}(\theta_{t+1})=(I-\eta H_{j_t})Y_{j_t}(\theta_t)$ and
\[
\frac{\|Y_{j_t}(\theta_{t+1})\|^2}{\|Y_{j_t}(\theta_t)\|^2}
=1-2\eta\,r_{j_t}(\theta_t)+\eta^2\,q_{j_t}(\theta_t)
\ \ge\ (1-\eta\,r_{j_t}(\theta_t))^2,
\quad
q_i(\theta):=\frac{Y_i(\theta)^\top H_i^2Y_i(\theta)}{\|Y_i(\theta)\|^2}\ge r_i(\theta)^2,
\]
by Cauchy–Schwarz. Averaging uniformly over $R_t$ and using convexity of $x\mapsto(1-\eta x)^2$ gives
\begin{equation}
\label{eq:RR_step_bound}
\mathbb E\!\left[\frac{\|Y_{j_t}(\theta_{t+1})\|^2}{\|Y_{j_t}(\theta_t)\|^2}\,\middle|\,\mathcal F_t^{\mathrm{RR}}\right]
\;\ge\;
\frac{1}{m_t}\sum_{i\in R_t}(1-\eta r_i(\theta_t))^2
\;\ge\;
\bigl(1-\eta\,S_{\mathrm{rem}}(\theta_t)\bigr)^2.
\end{equation}

\paragraph{Step 2: Persistence of a single-time margin (high probability).}
Assume the quadratic region bounds used in Part~1:
$\|H_i\|\le \Lambda$ for all $i$, $\|Y_i(\theta_t)\|\ge g_{\min}>0$ for all $i,t$,
and $\|\theta_{t+1}-\theta_t\|=\eta\|Y_{j_t}(\theta_t)\|\le \eta G_{\max}$.
As established there,
\begin{equation}
\label{eq:S_Lip_reuse}
\begin{split}
    |\,r_i(\theta')-r_i(\theta)\,|\ &\le\ \frac{4\Lambda^2}{g_{\min}}\;\|\theta'-\theta\| 
    \\
    |\,Batch\ Sharpness(\theta')-Batch\ Sharpness(\theta)\,|\ &\le\ L_S\|\theta'-\theta\|,
\;\;L_S:=\frac{4\Lambda^2}{g_{\min}}.
\end{split}
\end{equation}
Suppose at the beginning of the epoch (time $t_0$) we have a single-time margin
\begin{equation}
\label{eq:single_time_margin}
Batch\ Sharpness(\theta_{t_0})\ \ge\ \frac{2}{\eta}+\varepsilon
\qquad(\varepsilon>0).
\end{equation}
Fix integers $K\in\{1,\ldots,n-1\}$ and define the minimal remaining size $m_{\min}:=n-K$.
For any $\delta\in(0,1)$, with probability at least $1-\delta$ over the RR permutation,
\begin{equation}
\label{eq:finite_pop_uniform}
\max_{0\le k\le K}\ \left|\,\frac{1}{|R_{t_0+k}|}\sum_{i\in R_{t_0+k}} r_i(\theta_{t_0})
\;-\; Batch\ Sharpness(\theta_{t_0})\,\right|
\ \le\ 
\Delta_K(\delta),
\qquad
\Delta_K(\delta):=\Lambda\,\sqrt{\frac{2\log(2K/\delta)}{\,m_{\min}\,}},
\end{equation}
(Hoeffding–Serfling inequality; we only use the simple range bound $r_i(\theta_{t_0})\in[0,\Lambda]$).
Combining \eqref{eq:S_Lip_reuse} and \eqref{eq:finite_pop_uniform}, for each $k\in\{0,\ldots,K\}$ we obtain on the same event
\begin{equation}
\label{eq:Srem_persistence}
S_{\mathrm{rem}}(\theta_{t_0+k})
\ \ge\ 
\underbrace{\frac{1}{|R_{t_0+k}|}\sum_{i\in R_{t_0+k}} r_i(\theta_{t_0})}_{\text{finite-pop mean at }t_0}
\;-\; L_S\,\|\theta_{t_0+k}-\theta_{t_0}\|
\ \ge\
Batch\ Sharpness(\theta_{t_0})\;-\;\Delta_K(\delta)\;-\;L_S\,\eta G_{\max}\,k.
\end{equation}
Hence, if $K$ and $\delta$ are chosen so that
\begin{equation}
\label{eq:margin_budget}
\Delta_K(\delta)\;+\;L_S\,\eta G_{\max}\,K\ \le\ \frac{\varepsilon}{2},
\end{equation}
then \eqref{eq:single_time_margin} and \eqref{eq:Srem_persistence} imply the uniform window margin
\begin{equation}
\label{eq:RR_window_margin}
S_{\mathrm{rem}}(\theta_{t_0+k})\ \ge\ \frac{2}{\eta}+\frac{\varepsilon}{2}
\qquad\text{for all }k\in\{0,\ldots,K\},
\quad\text{with probability at least }1-\delta.
\end{equation}

\paragraph{Step 3: Exponential growth over the RR window.}
On the event \eqref{eq:RR_window_margin}, the one-step bound \eqref{eq:RR_step_bound} yields
\[
\mathbb E\!\left[Z_{t_0+k}\,\middle|\,\mathcal F_{t_0+k}^{\mathrm{RR}}\right]
\ \ge\ \bigl(1-\eta\,S_{\mathrm{rem}}(\theta_{t_0+k})\bigr)^2
\ \ge\ \bigl(1+\varepsilon/2\bigr)^2
\quad\text{for all }k\in\{0,\ldots,K-1\},
\]
where $Z_t:=\frac{\|Y_{j_t}(\theta_{t+1})\|^2}{\|Y_{j_t}(\theta_t)\|^2}$.
By the tower property and induction,
\[
\mathbb E\!\left[\ \prod_{s=t_0}^{t_0+K-1} Z_s\ \right]
\ \ge\ (1-\delta)\,\bigl(1+\varepsilon/2\bigr)^{2K}.
\]
In words: a single-time margin \eqref{eq:single_time_margin} at the start of the epoch persists, with high probability under RR, over a whole window whose length $K$ is explicitly controlled by the drift budget \eqref{eq:S_Lip_reuse} and the finite-population deviation \eqref{eq:finite_pop_uniform}. On that window the product of on–batch factors explodes exponentially.

\clearpage

% ============================
% Unifying instability notions
% ============================

\section{Proof of the Equivalence of Section~\ref{section:equivalence}}
\label{appendix:equivalence}

In this appendix we make precise the informal statement in
Section~\ref{section:equivalence} that, on the local quadratic
approximation of the loss, the following three viewpoints are equivalent:
\begin{enumerate}[label=(\roman*),itemsep=0.2em,parsep=0pt]
    \item breaking a valid instability criterion (Definition~\ref{def:stab_not});
    \item experiencing a catapult on the quadratic model (Definition~\ref{def:insta});
    \item observing a loss spike of ``sufficient'' size.
\end{enumerate}
Throughout we work on the local quadratic model introduced in
Section~\ref{section:framework}, and make explicit the mild conditions
under which the equivalence holds.

\subsection{Setting and basic assumption}

Fix a time $t$ and a point $\theta_t$, and consider the quadratic model
$\widetilde L$ of the loss around $\theta_t$ as defined in
Section~\ref{section:framework}. We recall that $\widetilde L$ is of the form
\[
  \widetilde L(\theta)
  \;=\;
  \frac{1}{N}\sum_{i=1}^N \widetilde L_i(\theta),
  \qquad
  \widetilde L_i(\theta)
  := \frac12(\theta-x_i)^\top \mathcal{H}_i(\theta-x_i)
\]
for some matrices $\mathcal{H}_i$ and points $x_i$.

We continue to denote by $U_t$ an open neighborhood of $\theta_t$ on which
the quadratic approximation is accurate, in the sense described
informally in Section~\ref{section:framework}.  For the arguments below, we
only need the following simple condition.

\begin{assumption}[Coercive quadratic model on $U_t$]
\label{ass:coercive}
There exists a symmetric positive semi-definite matrix
$\widehat{\mathcal{H}}$ and constants $0 < \mu \le L < \infty$ such that:
\begin{enumerate}[label=(\alph*),itemsep=0.2em,parsep=0pt]
    \item $\widetilde L(\theta) = \tfrac12(\theta-\theta^\star)^\top
      \widehat{\mathcal{H}}(\theta-\theta^\star)$ for some
      $\theta^\star$ (i.e.\ $\widetilde L$ is exactly quadratic);
    \item all non-zero eigenvalues of $\widehat{\mathcal{H}}$ lie in
      $[\mu,L]$;
    \item the connected component of $\theta_t$ in the sublevel set
      $\{\theta:\widetilde L(\theta)\le R\}$ is contained in $U_t$ for
      all $R$ in a neighborhood of $\widetilde L(\theta_t)$.
\end{enumerate}
\end{assumption}

Assumption~\ref{ass:coercive} is satisfied, for instance, when the full-batch
Hessian of the original loss at $\theta_t$ is positive definite in the
relevant directions and the neighborhood $U_t$ is chosen small enough.
It implies in particular that:
\begin{itemize}[itemsep=0.2em,parsep=0pt]
    \item for any $R<\infty$, the sublevel set
      $\{\theta:\widetilde L(\theta)\le R\}$ is compact;
    \item the level sets of $\widetilde L$ define a family of nested
      compacts around $\theta_t$ inside $U_t$.
\end{itemize}

\subsection{Breaking a valid instability criterion $\Longleftrightarrow$ catapult}

We first clarify the relationship between valid instability criteria
(Definition~\ref{def:stab_not}) and catapults (Definition~\ref{def:insta}).

Recall that Definition~\ref{def:stab_not} is stated for a discrete-time
dynamical system $(\theta_s)_{s\ge 0}$ on a parameter space $\Theta$,
with an open set $U\subseteq\Theta$, a scalar map $f:U\to\R$ and a
threshold $c\in\R$.
It says that $f$ is a \emph{valid instability criterion with threshold
$c$ on $U$} if
\[
  f(\theta_0) > c
  \quad\Longrightarrow\quad
  (\theta_s)_{s\ge 0} \text{ leaves every compact subset of $U$ in finite time.}
\]

On the quadratic model at time $t$, Definition~\ref{def:insta} simply
re-uses this notion with $U:=U_t$ and $\theta_0:=\theta_t$:

\begin{quote}
We say that the algorithm \emph{experiences a catapult at time $t$} if,
when run on $\widetilde L$ from initialization $\theta_t$, the resulting
trajectory $(\theta_s)_{s\ge t}$ leaves every compact subset of $U_t$ in
finite time.
\end{quote}

Thus, formally:

\begin{lemma}[Breaking a valid criterion gives a catapult]
\label{lem:crit-implies-catapult}
Let $f:U_t\to\R$ be a valid instability criterion with threshold $c$ for
the quadratic dynamics on $U_t$ in the sense of Definition~\ref{def:stab_not}.
Fix a time $t$ and consider the trajectory of the quadratic model
initialized at $\theta_t$.

If $f(\theta_t)>c$, then the quadratic trajectory experiences a catapult
at time $t$ in the sense of Definition~\ref{def:insta}.
\end{lemma}

\begin{proof}
Apply Definition~\ref{def:stab_not} with $U:=U_t$ and initial condition
$\theta_0:=\theta_t$.  Since $f(\theta_t)>c$ and $f$ is a valid instability
criterion on $U_t$, we have that the trajectory $(\theta_s)_{s\ge t}$
leaves every compact subset of $U_t$ in finite time.  But this is
precisely the definition of a catapult at time $t$ on the quadratic
model (Definition~\ref{def:insta}).
\end{proof}

Conversely, if a catapult occurs at time $t$, one can always \emph{construct}
a (possibly highly non-smooth) instability criterion which is broken at
$\theta_t$:

\begin{lemma}[Catapult implies existence of a valid criterion]
\label{lem:catapult-implies-criterion}
Assume that the quadratic trajectory from $\theta_t$ experiences a
catapult on $U_t$ in the sense of Definition~\ref{def:insta}.  Then
there exists a map $f:U_t\to\R$ and a threshold $c\in\R$ such that:
\begin{enumerate}[label=(\alph*),itemsep=0.2em,parsep=0pt]
    \item $f$ is a valid instability criterion with threshold $c$ for
          the quadratic dynamics on $U_t$;
    \item $f(\theta_t)>c$.
\end{enumerate}
\end{lemma}

\begin{proof}
Define $f:U_t\to\R$ by
\[
  f(\theta_0)
  \;:=\;
  \begin{cases}
    1, &\text{if the quadratic trajectory initialized at $\theta_0$}\\
       &\text{leaves every compact subset of $U_t$ in finite time};\\[0.3em]
    0, &\text{otherwise.}
  \end{cases}
\]
Set $c:=\tfrac12$.  Then, by construction, $f(\theta_0)>c$ if and only if
the trajectory initialized at $\theta_0$ leaves every compact subset of
$U_t$ in finite time.  Hence $f$ is a valid instability criterion with
threshold $c$ in the sense of Definition~\ref{def:stab_not}.  Since, by
assumption, the trajectory from $\theta_t$ experiences a catapult, we
have $f(\theta_t)=1>c$, as desired.
\end{proof}

Lemmas~\ref{lem:crit-implies-catapult} and~\ref{lem:catapult-implies-criterion}
make precise the first equivalence in the box of
Section~\ref{section:equivalence}: on the quadratic model, ``breaking a
(valid) instability criterion'' is just another way to state the
occurrence of a catapult, and conversely any catapult defines at least
one (possibly non-unique) valid instability criterion which is broken.

\subsection{Catapults $\Longleftrightarrow$ loss spikes of sufficient size}

We now formalize the relationship between catapults and loss spikes
for the quadratic model under Assumption~\ref{ass:coercive}.

\begin{definition}[Loss spike of size $\alpha$ on the quadratic model]
\label{def:loss-spike}
Fix $t$ and $\theta_t$, and let $\widetilde L$ be the quadratic model as
above.  For $\alpha>1$ we say that the quadratic trajectory
$(\theta_s)_{s\ge t}$ has a \emph{loss spike of relative size at least
$\alpha$ at time $t$} if there exists a time $s\ge t$ such that
\[
  \widetilde L(\theta_s)
  \;\ge\;
  \alpha\,\widetilde L(\theta_t).
\]
We say that it has loss spikes of \emph{arbitrarily large} relative size
if this holds for all $\alpha>1$.
\end{definition}

Under Assumption~\ref{ass:coercive}, sublevel sets of $\widetilde L$ are
compact, and they form a convenient family of compact sets to test the
definition of catapult.

\begin{lemma}[Catapult $\Longleftrightarrow$ unbounded quadratic loss]
\label{lem:catapult-unbounded-loss}
Under Assumption~\ref{ass:coercive}, the following are equivalent for the
quadratic trajectory $(\theta_s)_{s\ge t}$:
\begin{enumerate}[label=(\roman*),itemsep=0.2em,parsep=0pt]
    \item the trajectory experiences a catapult at time $t$ on $U_t$,
          i.e.\ it leaves every compact subset of $U_t$ in finite time;
    \item the quadratic loss along the trajectory is unbounded above,
          i.e.
          \(
            \sup_{s\ge t}\widetilde L(\theta_s)=+\infty.
          \)
\end{enumerate}
In particular, under Assumption~\ref{ass:coercive}, a catapult is
equivalent to the existence of loss spikes of arbitrarily large relative
size in the sense of Definition~\ref{def:loss-spike}.
\end{lemma}

\begin{proof}
\emph{(i) $\Rightarrow$ (ii).}
Assume the trajectory experiences a catapult on $U_t$.  Suppose for
contradiction that $\sup_{s\ge t}\widetilde L(\theta_s)\le R$ for some
finite $R$.  Then the whole trajectory is contained in the sublevel set
\[
  K_R
  \;:=\;
  \bigl\{\theta\in U_t : \widetilde L(\theta)\le R\bigr\}.
\]
By Assumption~\ref{ass:coercive}(b)--(c), $K_R$ is compact and contained
in $U_t$.  Hence the trajectory \emph{never} leaves $K_R$, which
contradicts the definition of a catapult (it must leave \emph{every}
compact subset of $U_t$ in finite time).  Thus
$\sup_{s\ge t}\widetilde L(\theta_s)=+\infty$.

\medskip\noindent
\emph{(ii) $\Rightarrow$ (i).}
Conversely, assume that $\sup_{s\ge t}\widetilde L(\theta_s)=+\infty$.
Let $K\subset U_t$ be any compact set containing $\theta_t$.
Compactness and continuity of $\widetilde L$ imply that
\[
  R_K
  \;:=\;
  \sup_{\theta\in K} \widetilde L(\theta)
  \;<\;\infty.
\]
Since $\sup_{s\ge t}\widetilde L(\theta_s)=+\infty$, there exists a time
$s_K\ge t$ with $\widetilde L(\theta_{s_K})>R_K$, hence
$\theta_{s_K}\notin K$.

It remains to check that the trajectory eventually \emph{stays} outside
$K$: but this follows immediately from the fact that once
$\widetilde L(\theta_s)>R_K$, the iterate cannot re-enter $K$ without
violating the definition of $R_K$ as the supremum of $\widetilde L$ on $K$.
Thus, for every compact $K\subset U_t$ containing $\theta_t$, there is a
finite time $s_K$ such that $\theta_s\notin K$ for all $s\ge s_K$.
This is exactly the definition of a catapult on $U_t$.
\end{proof}

Combining Lemma~\ref{lem:catapult-unbounded-loss} with
Definition~\ref{def:loss-spike} immediately yields:

\begin{corollary}[Catapult $\Longleftrightarrow$ loss spikes of sufficient size]
\label{cor:catapult-loss-spike}
Under Assumption~\ref{ass:coercive}, for the quadratic trajectory
$(\theta_s)_{s\ge t}$ the following are equivalent:
\begin{enumerate}[label=(\roman*),itemsep=0.2em,parsep=0pt]
    \item the trajectory experiences a catapult at time $t$ on $U_t$;
    \item for every $\alpha>1$ there exists $s\ge t$ such that
          $\widetilde L(\theta_s)\ge \alpha\,\widetilde L(\theta_t)$,
          i.e.\ the trajectory has loss spikes of arbitrarily large
          relative size in the sense of Definition~\ref{def:loss-spike}.
\end{enumerate}
\end{corollary}

From the point of view of Section~\ref{section:equivalence}, we can now
interpret a \emph{loss spike of sufficient size} as a spike whose
relative height exceeds any upper bound that would be compatible with
bounded (linearly stable) dynamics on the quadratic model.  In
particular, in regimes where the quadratic dynamics is linearly stable
and confined to a given compact subset of $U_t$, the loss is uniformly
bounded and only admits spikes of bounded relative size; Corollary
\ref{cor:stable-loss-jump} in Appendix~\ref{appendix:oscillations_nn}
quantifies this explicitly in a one-dimensional example.

\subsection{Putting the pieces together}

We can now summarize the equivalence more succinctly.

\begin{theorem}[Equivalence of the three viewpoints on the quadratic model]
\label{thm:equivalence-quadratic}
Work on the quadratic approximation $\widetilde L$ at time $t$ under
Assumption~\ref{ass:coercive}.  For the quadratic trajectory
$(\theta_s)_{s\ge t}$ on $U_t$, the following statements are
equivalent:
\begin{enumerate}[label=(\roman*),itemsep=0.2em,parsep=0pt]
    \item there exists a valid instability criterion $f:U_t\to\R$ with
          threshold $c$ in the sense of Definition~\ref{def:stab_not}
          such that $f(\theta_t)>c$;
    \item the trajectory experiences a catapult at time $t$ on $U_t$ in
          the sense of Definition~\ref{def:insta};
    \item the quadratic loss along the trajectory has loss spikes of
          arbitrarily large relative size, i.e.\ for every $\alpha>1$
          there exists $s\ge t$ with
          $\widetilde L(\theta_s)\ge \alpha\,\widetilde L(\theta_t)$.
\end{enumerate}
\end{theorem}

\begin{proof}
(i) $\Rightarrow$ (ii) is Lemma~\ref{lem:crit-implies-catapult}.
(ii) $\Rightarrow$ (i) is Lemma~\ref{lem:catapult-implies-criterion}.

(ii) $\Leftrightarrow$ (iii) is exactly Lemma~\ref{lem:catapult-unbounded-loss}
together with the definition of loss spikes
(Definition~\ref{def:loss-spike}) and Corollary~\ref{cor:catapult-loss-spike}.
\end{proof}

Theorem~\ref{thm:equivalence-quadratic} formalizes the slogan in
Section~\ref{section:equivalence}: on the quadratic approximation, the
underlying property is \emph{divergence of the dynamics on $U_t$}, and
the three diagnostics we use throughout the paper—breaking an instability
criterion, observing a catapult, and observing a large enough loss
spike—are just different ways of detecting the same phenomenon.

\section{Modeling of SGD: SGD vs Noisy GD vs SDE}
\label{appendix:noisy_gd_sde}
In this appendix we will discuss the common ways to approximate SGD -- with noisy GD and SDE.

We also present Figure \ref{fig:app_noise_gd} -- a version of the Figure \ref{fig:noise_gd}, but with the loss curves added to showcase the difference in the loss decrease dynamics, together with overall convergence with all the dynamics.
\begin{figure}
    \centering
    \includegraphics[width=.99\linewidth]{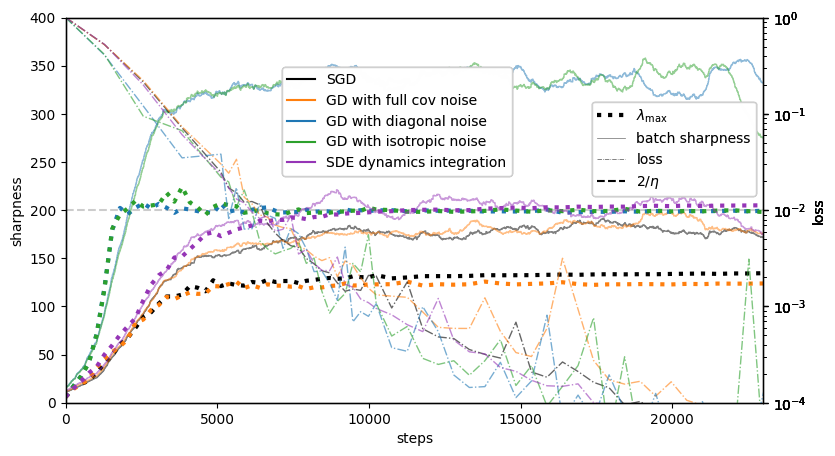}
    \caption{Version of Figure \ref{fig:noise_gd}, with the loss curves added.}
    \label{fig:app_noise_gd}
\end{figure}

\subsection{Longer Discussion on SGD vs.\ Noisy Gradient Descent}

If you are tempted to say “SGD is just GD plus some Gaussian noise,” here is the litmus test:
\emph{inject noise into GD and see if you recover the same curvature-limited regime.}
Under progressive sharpening, the answer is brutally simple: most noise models miss the phenomenon.

% % minipage is necessary, because otherwise the mdframed gets onto the image
% \begin{minipage}{1\linewidth}
% \centering
% \begin{summarybox}
% \centering
% \textit{Under progressive sharpening, only noise models that preserve the \emph{distribution of mini-batch Hessians}
% are faithful proxies of SGD’s stabilization and location behavior.}
% \end{summarybox}

% \end{minipage}
% \begin{wrapfigure}{r}{0.48\columnwidth}
%     \centering
%     \includegraphics[width=\linewidth]{img/noise_injection_new.png}
%   \caption{\small
%     \textbf{SGD vs.\ Noisy GD vs SDE.}
%     Only noise that preserves the mini-batch structure of SGD yields $\lambda_{\max}$ plateauing \emph{below} $2/\eta$
%     (as in \textsc{EoSS} and consistent with \citep{keskar_large-batch_2016}).
%     Generic noise injection fails to reproduce this behavior.
%   }
%   \label{fig:noise_gd}
% \end{wrapfigure}
To test this, we compare mini-batch SGD (batch size 16) against three noisy GD variants and an SDE integration
(see Appendix~\ref{appendix:noisy_gd_sde} for details):
\textbf{(i)} \emph{GD with Gaussian resampling:}
Gaussian reweighting on samples \citep{wu_noisy_gd_2020}, which preserves the mini-batch structure (and injects noise into the Hessians).
\textbf{(ii)} \emph{Diagonal noise injection:}
Gaussian noise restricted to the diagonal of the SGD noise covariance \citep{zhu_anisotropic_2019}.
\textbf{(iii)} \emph{Isotropic noise injection:}
Gaussian noise with isotropic covariance \citep{zhu_anisotropic_2019}.
\textbf{(iv)} \emph{SDE dynamics integration:}
as in \citep{li_stochastic_2017}.

As shown in Figure~\ref{fig:noise_gd} and Appendix~\ref{appendix:noisy_gd_sde},
only noise that preserves the higher moments of the mini-batch Hessians (and thus the mini-batch landscape structure)
yields an \textsc{EoSS}-like regime with $\lambda_{\max}$ stabilizing well below $2/\eta$.
More generic noise (diagonal or isotropic) fails to reproduce this behavior and instead behaves in a GD-like way.
These experiments suggest a structural separation:
\emph{the stability threshold for mini-batch SGD is governed by \textit{Batch Sharpness}, while noise-injected GD/SDE proxies are governed by $\lambda_{\max}$.}
Notably, this is consistent with the message in \citet{zhu_anisotropic_2019} (though their focus is generalization).
Unsurprisingly, when the injected noise perturbs only gradients (and not the Hessians),
the relevant edge reverts to the GD one---$\lambda_{\max} \approx 2/\eta$---as in \citep{ma_linear_2021,mulayoff_exact_2024}.

\paragraph{Challenges for SDE Modeling.}

A standard response to “SGD is noisy” is to write down an SDE and move on.
Our results explain why this can be qualitatively misleading under progressive sharpening.
\begin{summarybox}
    \textit{Standard SDE (and related) approximations cannot capture the location of stabilization of SGD}\\
\textit{in neural networks under progressive sharpening, because they collapse mini-batch geometry to a mean-field picture.}
\end{summarybox}

Classical analyses typically ignore all Hessian statistics beyond the mean and work on a static landscape:
(\emph{i}) an \textbf{online} view where each step uses an unbiased noisy gradient estimator, or
(\emph{ii}) an \textbf{offline} view treating SGD as noisy GD on the empirical loss.
In both perspectives, the \emph{full-batch} Hessian is treated as the curvature object that matters.

Our results highlight a different mechanism:
each update experiences a Hessian $\mathcal{H}(L_\mathbf{B})$ that can differ substantially from $\mathcal{H}(L)$,
and it is the induced \textit{Batch Sharpness} that stabilizes at $2/\eta$ even when $\lambda_{\max}$ remains smaller.
Prior work already flags serious limitations of SDE-based approaches for implicit regularization:
they can be mathematically ill-posed \citep{yaida_fluctuation-dissipation_2018},
invalid except under restrictive conditions \citep{li_validity_2021},
converge to qualitatively different minima \citep{haochen_shape_2020},
or miss higher-order effects \citep{damian_label_2021,li_what_2022}.
Recent discrete analyses \citep{smith_origin_2021,beneventano_trajectories_2023,roberts_SGD_2021}
aim to address some of these gaps.
Nonetheless, \textsc{EoSS} points to a deeper obstacle:
when batches are small, the geometry of $\mathcal{H}(L_\mathbf{B})$ differs markedly from that of $\mathcal{H}(L)$,
altering both eigenvalues and eigenvector alignments---exactly the information that location and stabilization depend on.

\subsection{Noisy GD}
We are running a number of different versions of noisy GD.
\subsubsection{Noisy GD with Anisotropic Noise (Gaussian Resampling)}
This is version is us running noisy GD with a random sampling vector.This version of noisy GD essentially preserves the second-moment structure of mini-batch landscapes. Specifically, it takes a Gaussian sampling vector with the same first and second moments as the true sampling vector of SGD. This trivially makes the expectation of the mini-batch Hessians to be the same between SGD and Gaussian resampling (and equal to the full-batch Hessian). Importantly, though, this also makes the covariance of the mini-batch Hessians to be the same between SGD and GD with Gaussian resampling noise. Precisely, we have a sampling vector $w \in \R^{N}$ (where $N$ is the number of samples in the dataset), with mean $\mu$ and covariance $\Sigma_w$, and $\mathcal H_i \in \R^{d \times d}$ are the per-sample Hessians of the loss. Now, let's introduce a matrix stacking vectorized these per-sample Hessians, $\mathscr{H} \in \R^{d^2 \times N}$:
\[
\mathscr{H} := \bigg [ 
\vect\left(\mathcal H_1\right), \dots, \vect \left(\mathcal H_N \right)
\bigg]
\]
Then the mini-batch Hessian is
\[
\vect (\mathcal H_B) = \mathscr H w
\]
Therefore, the (uncentered) second moment of the mini-batch Hessian is 
\begin{multline*}
    M_2(\mathcal H_B) = \E_B[\vect(H_B)\vect(H_B)^\top] = \E_B [\mathscr H w w^\top \mathscr H^\top] =
    \mathscr H \E [w w^\top] \mathscr H = \mathscr H (\Sigma_w + \mu\mu^\top)\mathscr H
\end{multline*}
Therefore, the second moment of mini-batch Hessians is fully determined by the first two moments of the sampling vector. Matching those two to the ones of the true SGD sampling vector allows one to preserve the second moment of the mini-batch Hessians. Together with the fact that GD with the Gaussian resampling behaves in the same manner as SGD from the point of view of stability, \emph{Batch Sharpness}, and suppression of $\lmax$ (as opposed to the version of noisy GD that do not preserve the second moment of the Hessians)---it is an indicator of the fact that it is the higher moments of the mini-batch Hessians that determine the dynamics of SGD (at least up to the second moment); and it is indeed the noise in the Hessians that creates the instability regime of \textsc{EoSS} and its consequences.
% As a weaker consequence, it also preserves the covariance of the noise of SGD. 

For implementation details refer to \citet{wu_noisy_gd_2020}. In summary, we re-draw the sampling vector at each step with the corresponding covariance.
% \red{with replacement}

\subsubsection{Noisy GD with Diagonal Noise}
This implementation follows \citet{zhu_anisotropic_2019} --- it recreates what they refer to as "GLD diagonal". This is essentially noisy GD with the noise covariance being equal to the diagonal of the covariance of the noise produced by SGD. This preserves each parameter’s marginal variance while ignoring off‑diagonal correlations.
Conceptually, we are approximating SGD’s noise by $\mathcal{N}\left(0, \frac{1}{b}\mathrm{diag}(\Sigma(\theta)\right )$ and add it to the full‑batch gradient before the optimizer step.
Essentially, this is one step further from a true SGD then the aforementioned GD with anisotropic noise. In particular, it does not preserve the mini-batch landscape structure. As a result, the behavior of GD with diagonal noise differs from SGD from the point of view of $\lmax$ stabilizing below $2/\eta$, and instead stabilizing at $2/\eta$. We refer the reader to \citep{zhu_anisotropic_2019} for the details of implementation. In our implementation, we compute the diagonal of the covariance every 30 steps and reuse it on those 30 steps (as it is too computationally expensive to compute it at every step).

\subsubsection{Noisy GD with Isotropic Noise}
This implementation follows \citet{zhu_anisotropic_2019} --- it recreates what they refer to as "GLD dynamic". This is essentially noisy GD with the noise covariance being identity (hence the "isotropic"), scaled such that the magnitude of the noise conincides with that of SGD. That is, this is isotropic gradient noise that matches the average variance of SGD noise but ignores both parameter-wise variability and correlations. 
Conceptually, we are approximating SGD’s noise by
$\mathcal{N}\!\left(0, \tfrac{\sigma^{2}}{b} I\right)$ add it to the full‑batch gradient before the optimizer step, where \(\sigma^{2} = \tfrac{\operatorname{tr}(\Sigma)}{d}\) 
is the mean per-parameter variance from the per-sample gradient covariance \(\Sigma\), 
\(b\) is the target batch size, and \(d\) is the number of parameters.
This is one step "further" from SGD then the noisy GD with diagonal noise. Consequently, this sort of noisy GD does not preserve the regularization effect of SGD on $\lmax$ either.

\begin{figure}
    \centering
    \includegraphics[width=0.99\linewidth]{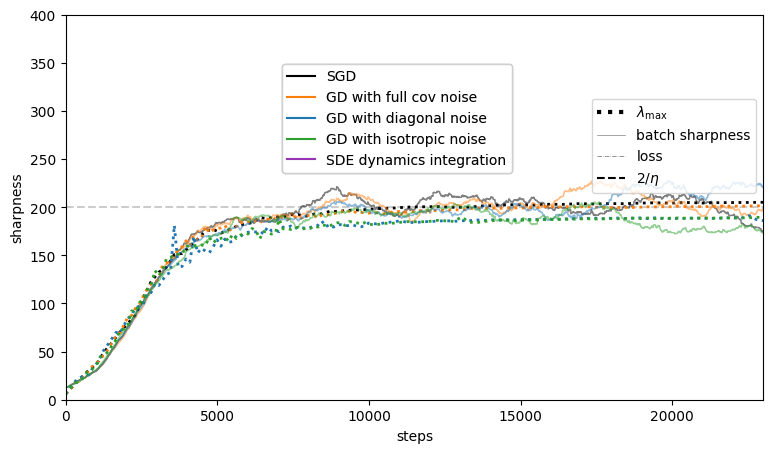}
    \caption{\textbf{SDE sample paths} Multiple realizations of SDE trajectory to showcase the similarity of the solutions found by SDE dynamics}
    \label{fig:sde_trajectories}
\end{figure}

\subsection{SDE}
We are taking the standard SDE approximation of SGD:  (see e.g. \citet{li_sme_2018}) 
\[
d\theta_t \;=\; -\nabla f(\theta_t)\,dt \;+\; \sqrt{\eta}\,\,\Sigma^{1/2}(\theta_t)\,dW_t
\;
\]
where $dW_t$ is the standard d-dimensional Wiener process, and $\Sigma$ is the covariance matrix of mini-batch gradients.

To simulate its dynamics, we are using the Euler–Maruyama discretization with a step size of $0.0005$, chosen to be sufficiently small compared to  $\eta$ ($1/20$th of $\eta=0.01$ in this example). In Figure \ref{fig:sde_trajectories} we are showing a number of sample paths of the SDE trajectory illustrate the similarity in the properties of the solutions found by those dynamics -- in particular, that $\lmax$ stabilizes around $2/\eta$, rather than below as it does for SGD dynamics. In all the experiments, batch size is 16, and $\eta$ is $0.01$.

\clearpage

%%%%%%%%%%%%%%%%%%%%%%%%%%%%%%%%%%%%%%%%%%%%%%%%%%%%%%%%%%%%%%%%%%%
%%%%%%%%%%%%%%%%%%%%%%%%%%%%%%%%%%%%%%%%%%%%%%%%%%%%%%%%%%%%%%%%%%%
%%%%%%%%%%%%%%%%%%%%%%%%%%%%%%%%%%%%%%%%%%%%%%%%%%%%%%%%%%%%%%%%%%%
%%%%%%%%%%%%%%%%%%%%%%%%%%%%%%%%%%%%%%%%%%%%%%%%%%%%%%%%%%%%%%%%%%%

\section{Alignment}
\label{appendix:alignment}
The stability of mini-batch SGD is governed by the geometry of the mini-batch loss landscape, rather than solely the full-batch landscape. 
% As established, it is the mini-batch landscape geometry, as opposed to soley the full-batch one, that govern the instability of SGD training. 
As further discussed in Sections \ref{section:framework}, \ref{section:math}, \ref{appendix:linear_stochastic_stability}, \ref{appendix:other_quantities}, the stability of SGD depends not only on the magnitude of mini-batch Hessians but also critically on their alignment (both pairwise and with the loss gradients).
This appendix offers a limited characterization of the alignment structure relevant to mini-batch Hessians.

We approximate both the full-batch and mini-batch Hessians by their Gauss–Newton matrices, an approximation commonly used in analyses of SGD (e.g., \citep{wu_how_2018, ma_linear_2021, mulayoff_exact_2024}), valid at convergence, and supported empirically (e.g., \citep{papyan_full_2019} and Appendix \ref{appendix:FIM}). Concretely,
\[
\nabla^2_\theta L_B(\theta) \approx \frac{1}{b}\sum_{i=1}^B J_i^\top H_{z, i} J_i
\]
where $J_i$ is the Jacobian of the model output with respect to $\theta$ and $H_{z,i}$ is the loss Hessian with respect to the output evaluated at the $i$-th sample. 
We work with MSE and, for simplicity, consider a two-class setting (i.e., a single model output), which yields:
\[
\nabla^2_\theta L_B(\theta) \approx \frac{1}{b}\sum_{i=1}^b \nabla_\theta f_i \nabla_\theta f_i^\top
\]
where $\nabla f_i$ is the per-sample model gradient. 

Under this structure, properties of mini-batch Hessians---including their top eigenvalues and the cross-batch alignment/commutativity---are controlled (though not fully determined) by the pairwise alignment of the per-sample model gradients.
We report the empirical distributions of these pairwise alignments in Figure~\ref{fig:alignmnet_mlp} (MLP) and Figure~\ref{fig:alignmnet_cnn} (CNN), plotting pairwise dot products, plotting pairwise dot products, cosine similarity and individual norms. The evolution of the distributions throughout the training indicates the effects of progressive sharpening---for example, the growth of norm of model gradients corresponds to the increase of $\lmax$ and $\lmax^b$, see Appendix \ref{appendix:other_quantities}. A complete description of the dynamics would require a precise account of progressive sharpening and falls outside the scope of this work.

The important observation for our purposes is that per-sample model gradients are only \emph{weakly} aligned, with cosine similarities clustered around $0.1$---which is still much higher than random $d$-dimensional vectors would have. Two immediate implications follow. First, mini-batch Hessians are generically non-commuting (as the model gradients are not orthogonal or completely collinear)---an aspect that matters for linear stochastic stability (Appendix \ref{appendix:linear_stochastic_stability}).
\ref{appendix:linear_stochastic_stability}. Second, if we fix the parameters $\theta$ and vary the batch size b, the eigenspaces of the mini-batch Hessian mix gradually, which induces a gap between \textit{Batch Sharpness} and \(\lmax^b\) (see Appendix \ref{appendix:other_quantities}). We leave full characterization of the mini-batch Hessians, which would depend on the higher moments of model gradients for future work. Still, these observations underscore that a comprehensive, training-time characterization of the structure of mini-batch Hessians is an important future direction of research for understanding SGD dynamics.

\begin{figure}[htbp]
  \centering

  % Row: step 0 (init)
\begin{subfigure}[t]{0.32\textwidth}
  \centering \includegraphics[height=3.2cm]{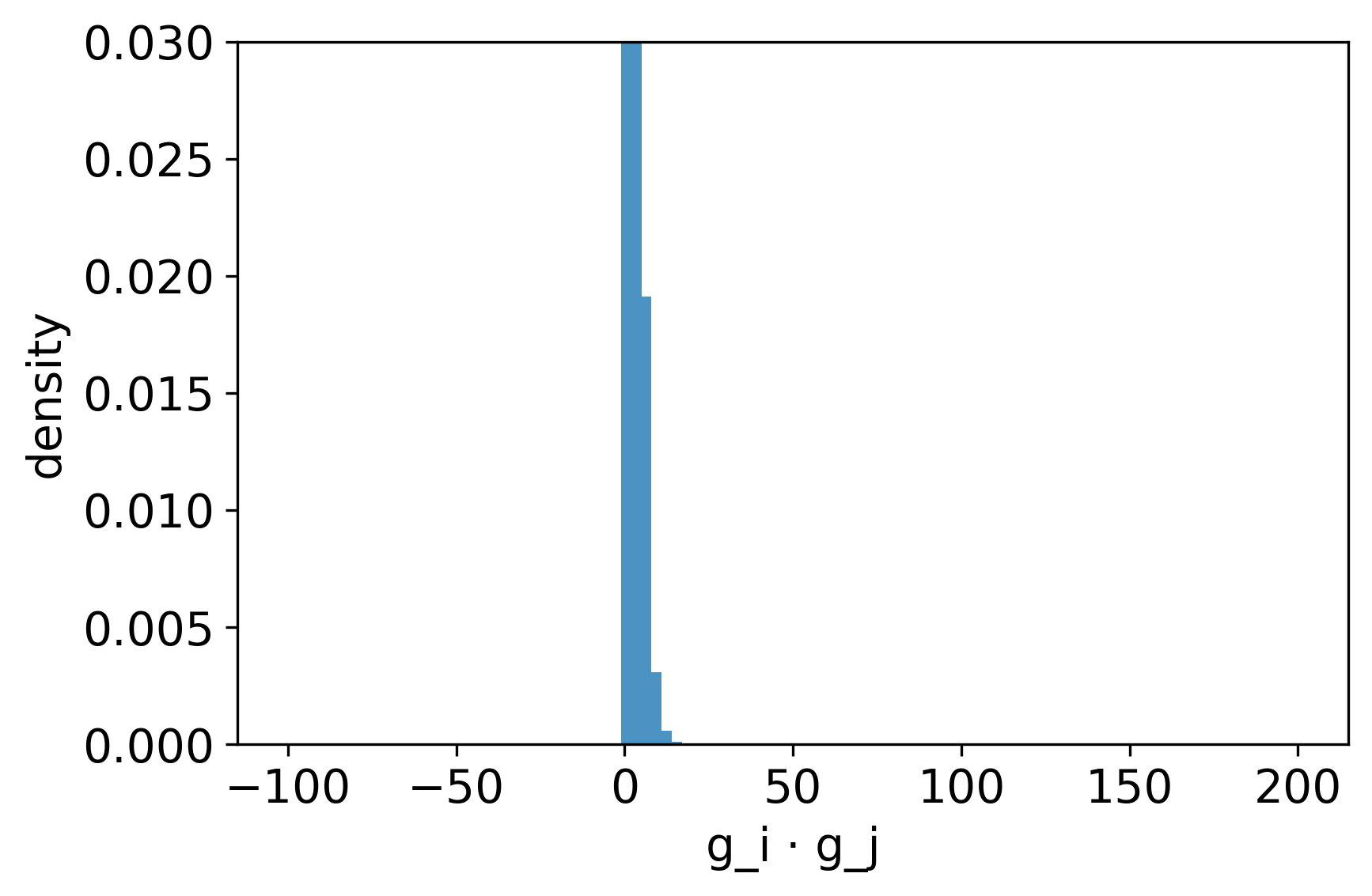}
  \caption{Dot product at init}
\end{subfigure}\hfill
\begin{subfigure}[t]{0.32\textwidth}
  \centering \includegraphics[height=3.2cm]{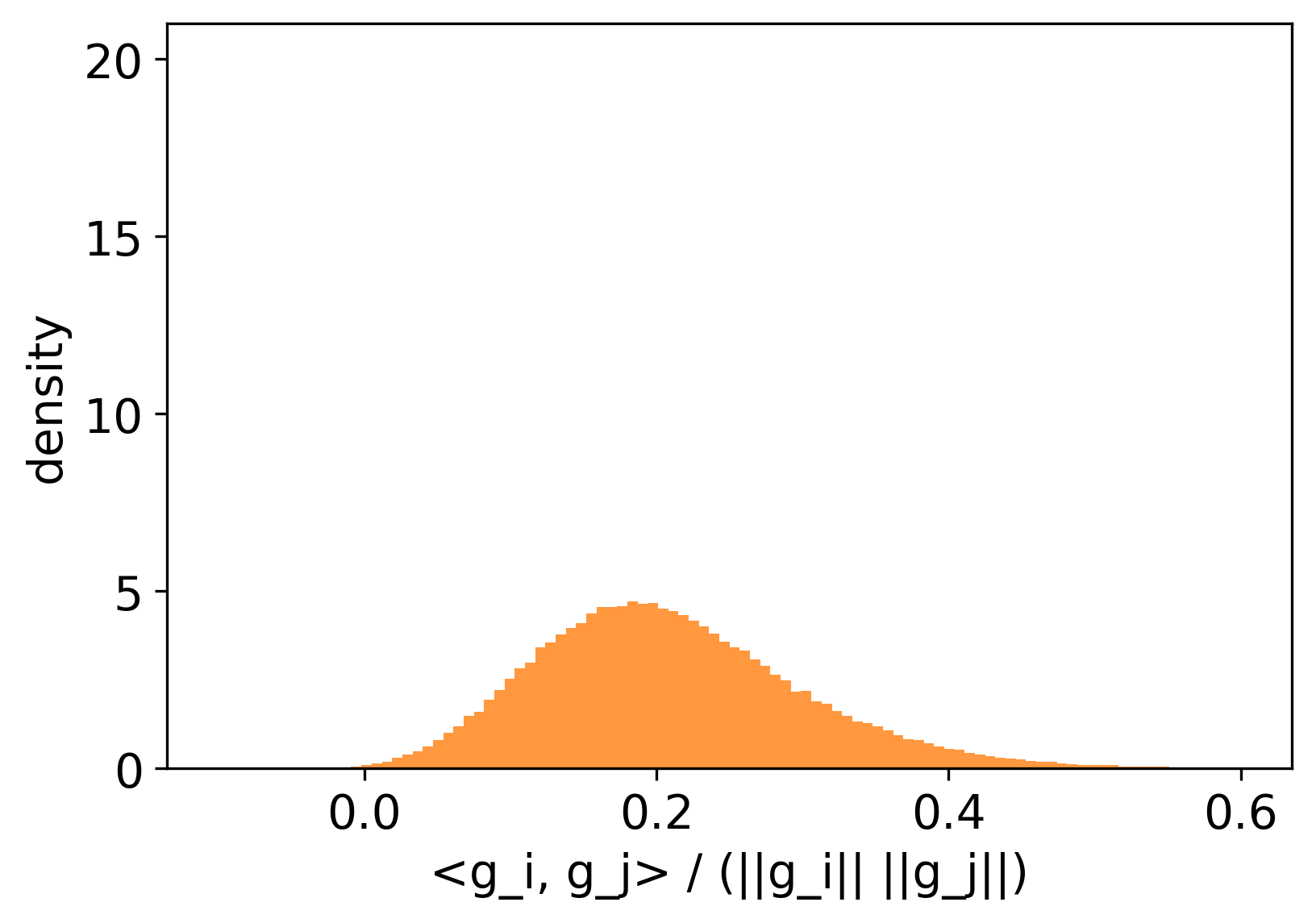}
  \caption{Cosine similarity at init}
\end{subfigure}\hfill
\begin{subfigure}[t]{0.32\textwidth}
  \centering \includegraphics[height=3.2cm]{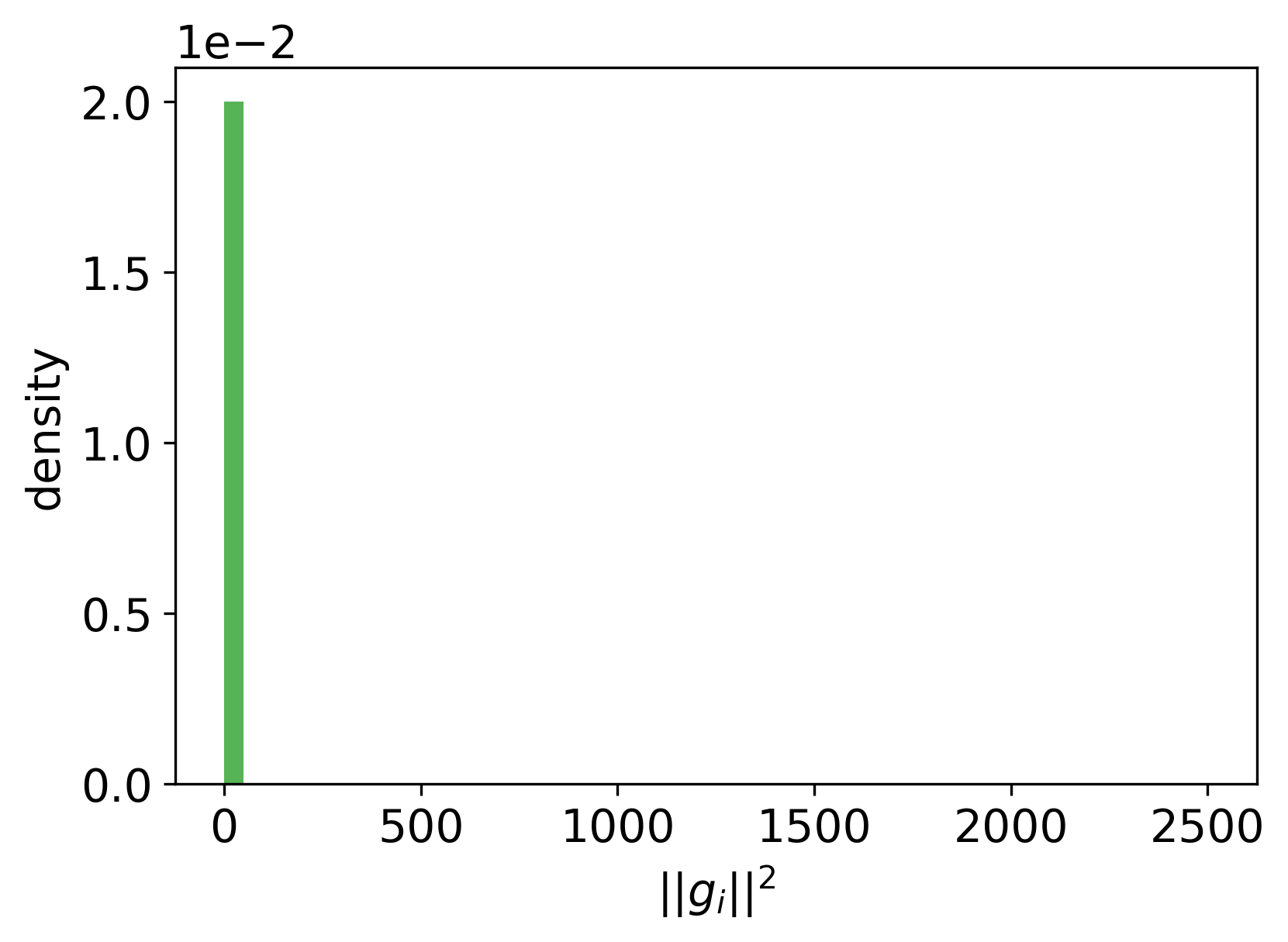}
  \caption{Norms at init}
\end{subfigure}

\medskip

% Row: step 1000 (early-training)
\begin{subfigure}[t]{0.32\textwidth}
  \centering \includegraphics[height=3.2cm]{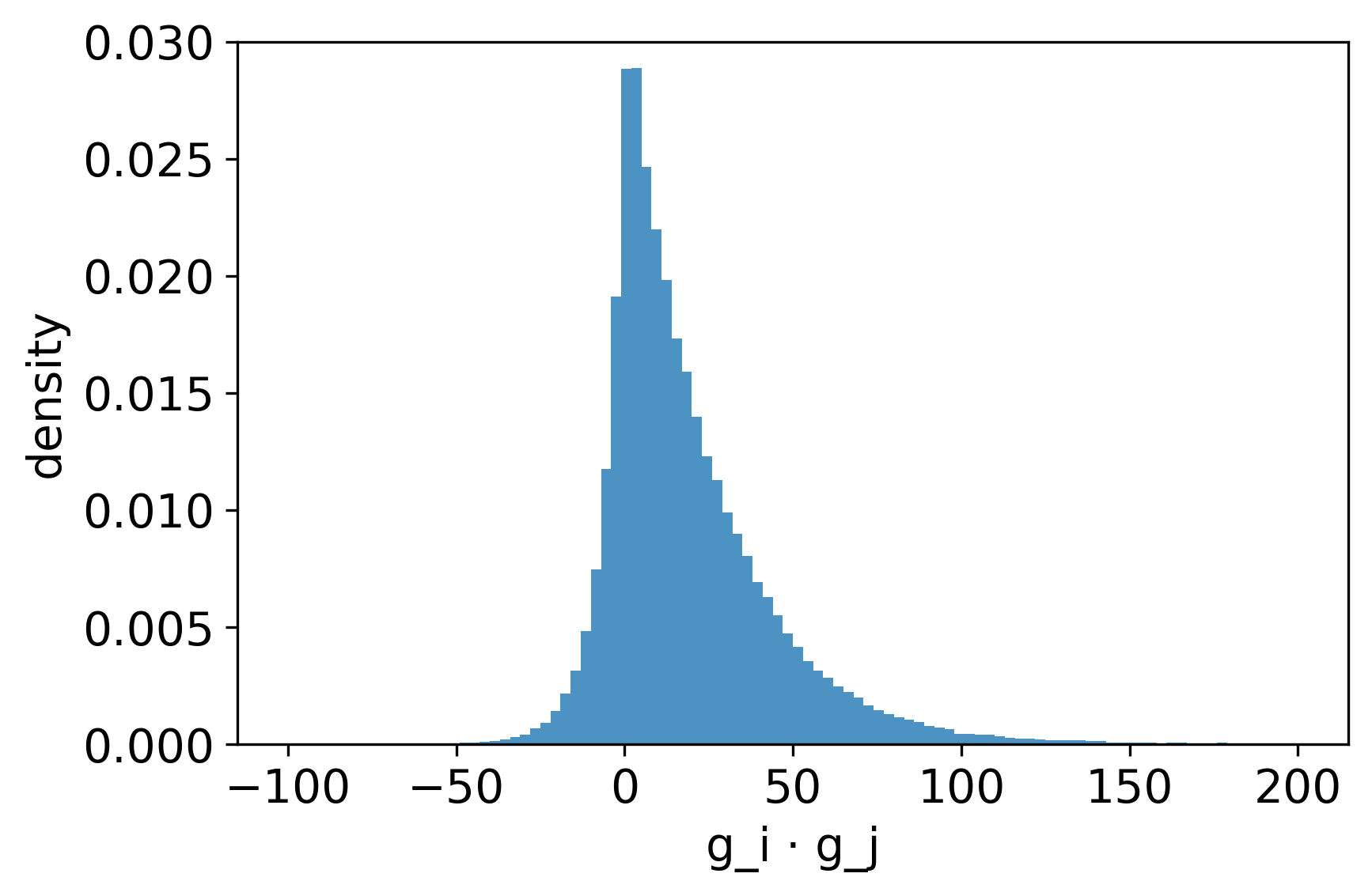}
  \caption{Dot product early-training}
\end{subfigure}\hfill
\begin{subfigure}[t]{0.32\textwidth}
  \centering \includegraphics[height=3.2cm]{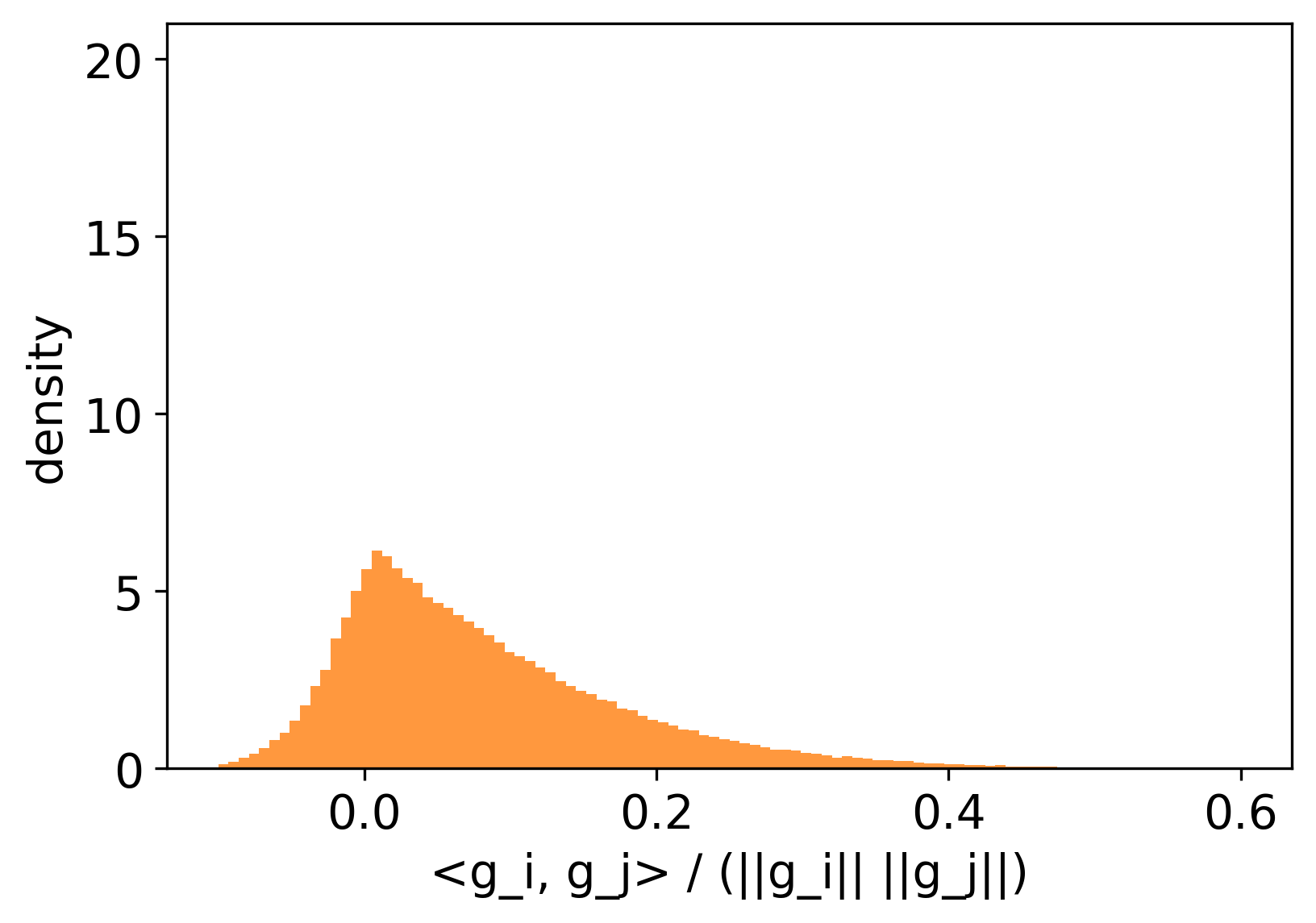}
  \caption{Cosine similarity early-training}
\end{subfigure}\hfill
\begin{subfigure}[t]{0.32\textwidth}
  \centering \includegraphics[height=3.2cm]{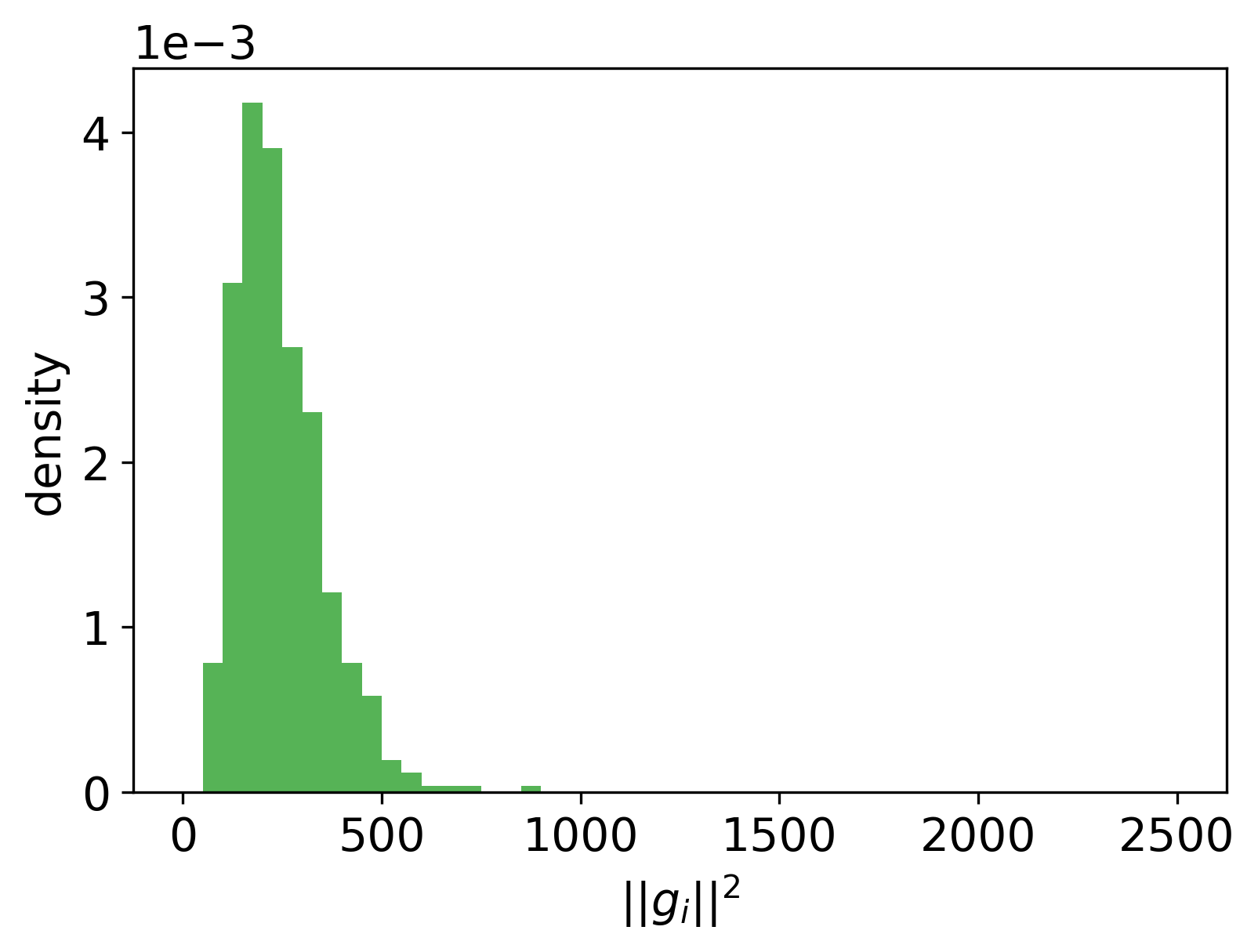}
  \caption{Norms early-training}
\end{subfigure}

\medskip

% Row: step 2000 (mid-training)
\begin{subfigure}[t]{0.32\textwidth}
  \centering \includegraphics[height=3.2cm]{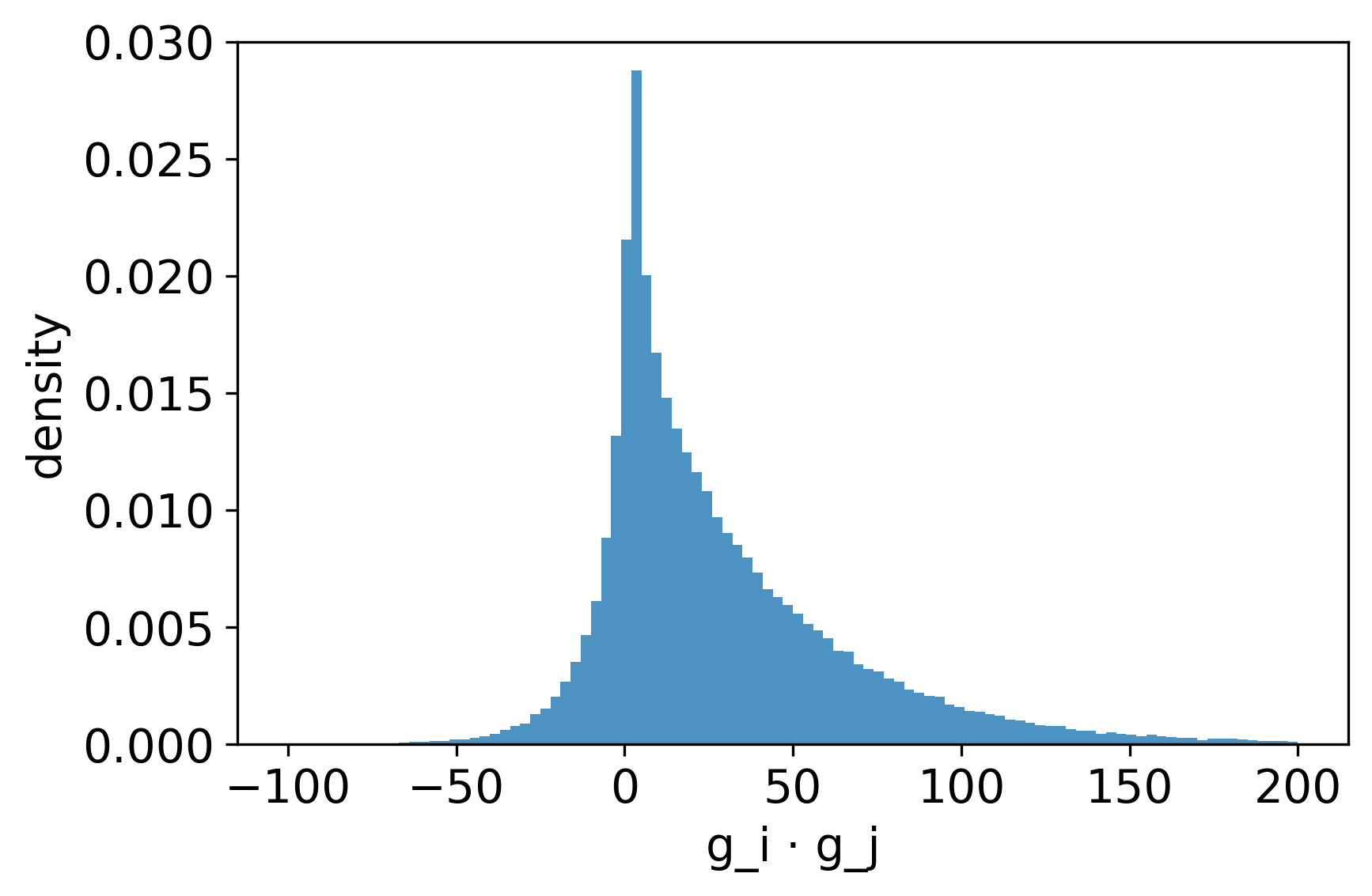}
  \caption{Dot product mid-training}
\end{subfigure}\hfill
\begin{subfigure}[t]{0.32\textwidth}
  \centering \includegraphics[height=3.2cm]{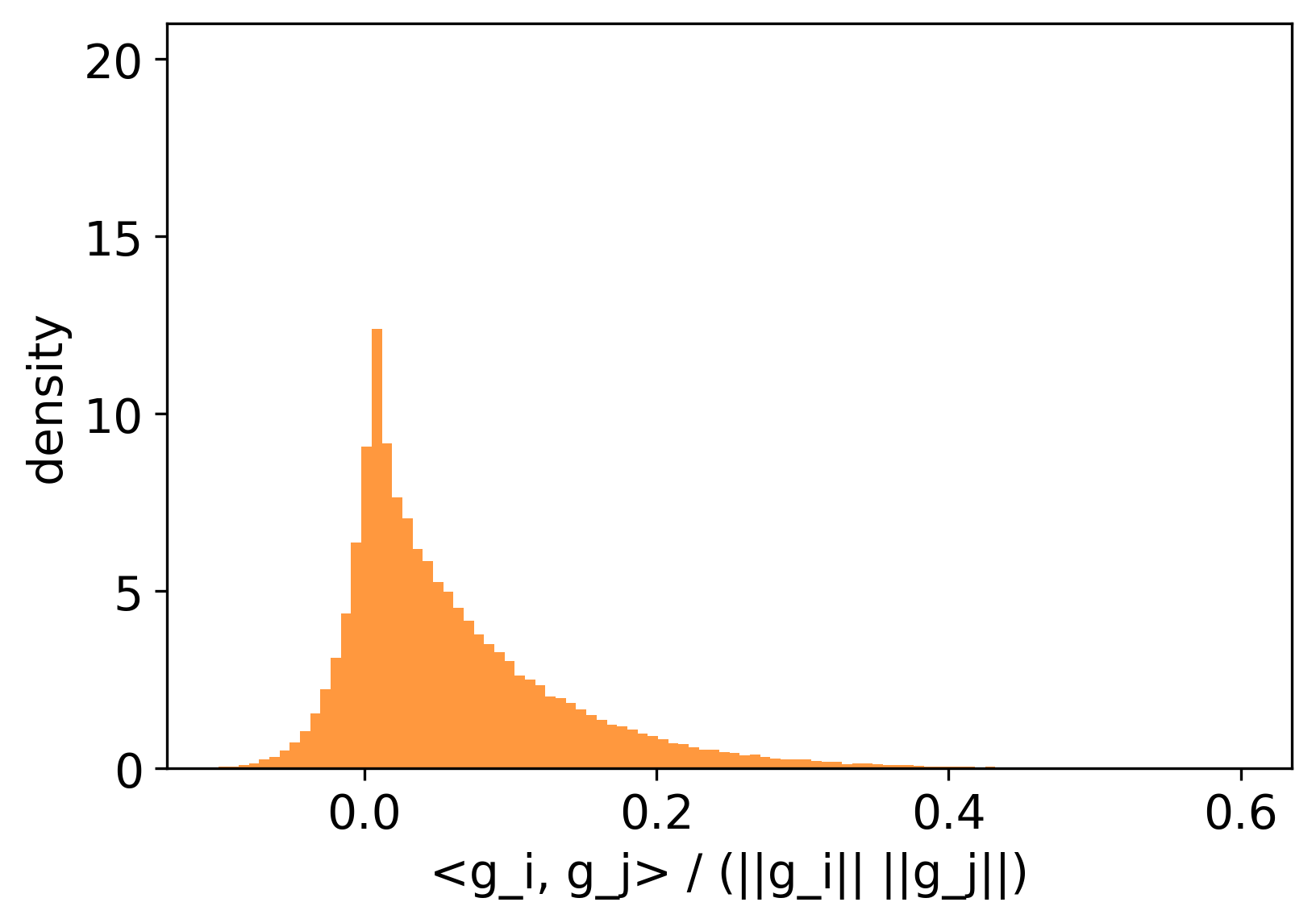}
  \caption{Cosine similarity mid-training}
\end{subfigure}\hfill
\begin{subfigure}[t]{0.32\textwidth}
  \centering \includegraphics[height=3.2cm]{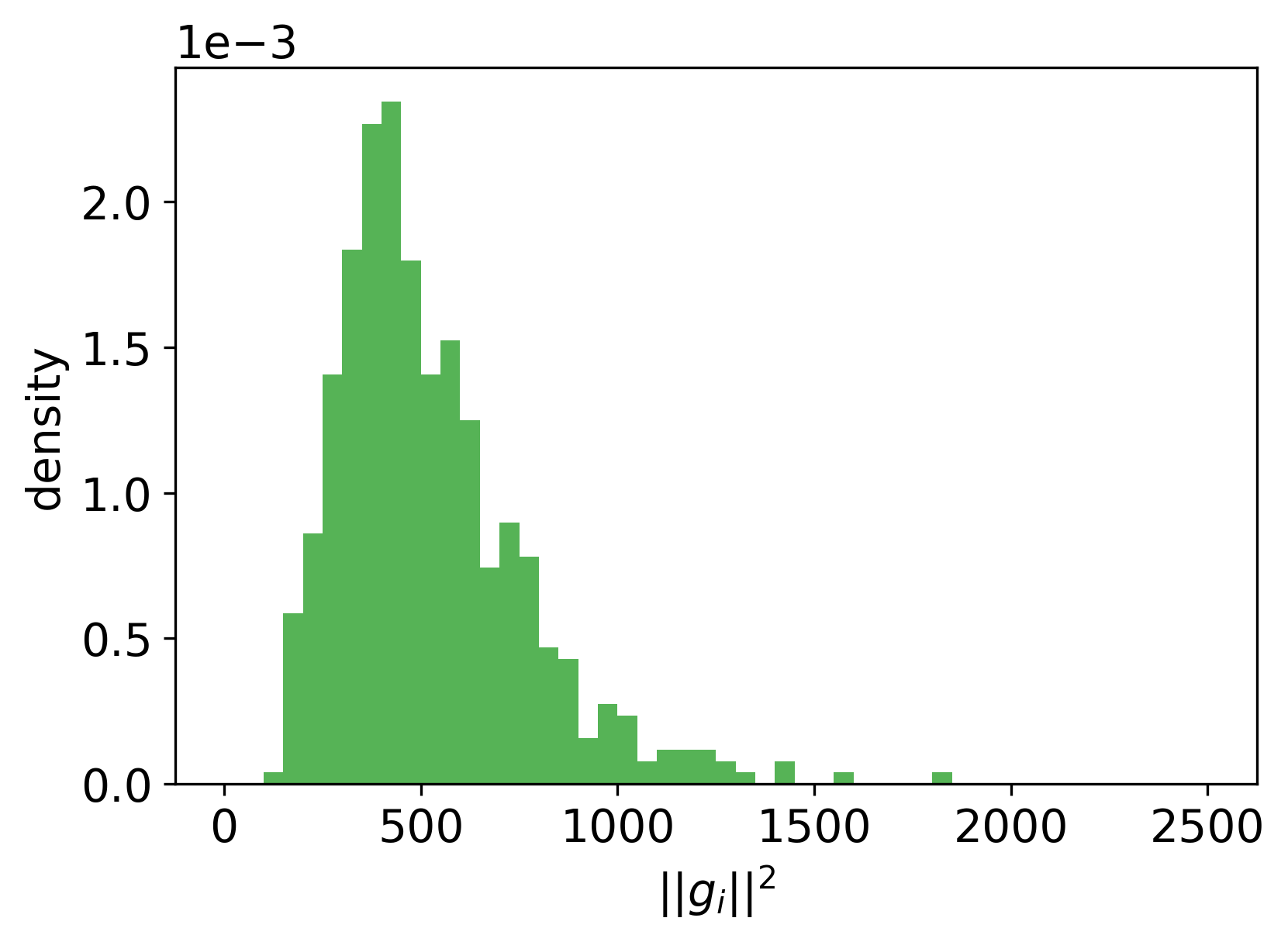}
  \caption{Norms mid-training}
\end{subfigure}

\medskip

% Row: step 4000 (at \textsc{EoSS})
\begin{subfigure}[t]{0.32\textwidth}
  \centering \includegraphics[height=3.2cm]{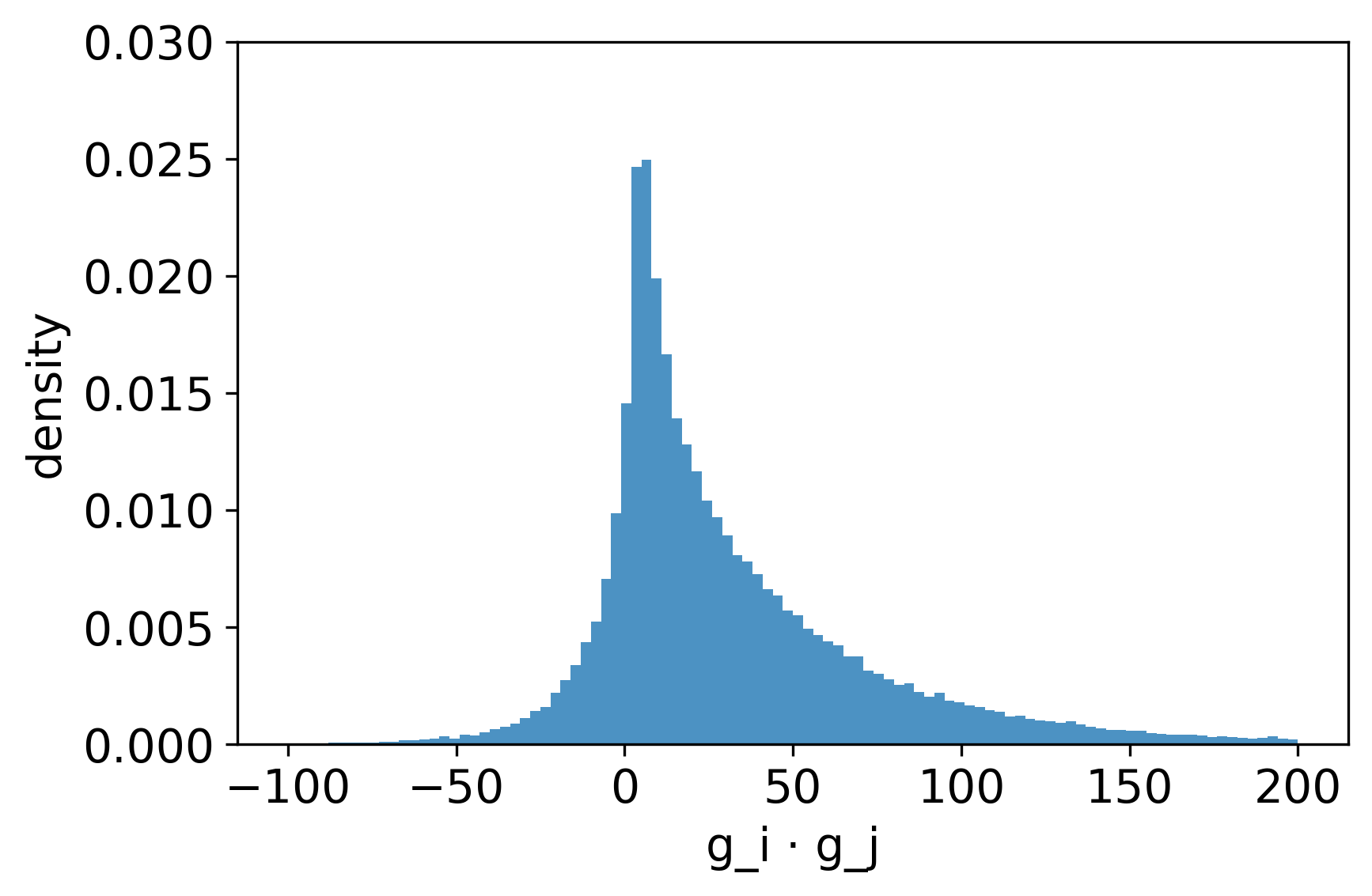}
  \caption{Dot product at \textsc{EoSS}}
\end{subfigure}\hfill
\begin{subfigure}[t]{0.32\textwidth}
  \centering \includegraphics[height=3.2cm]{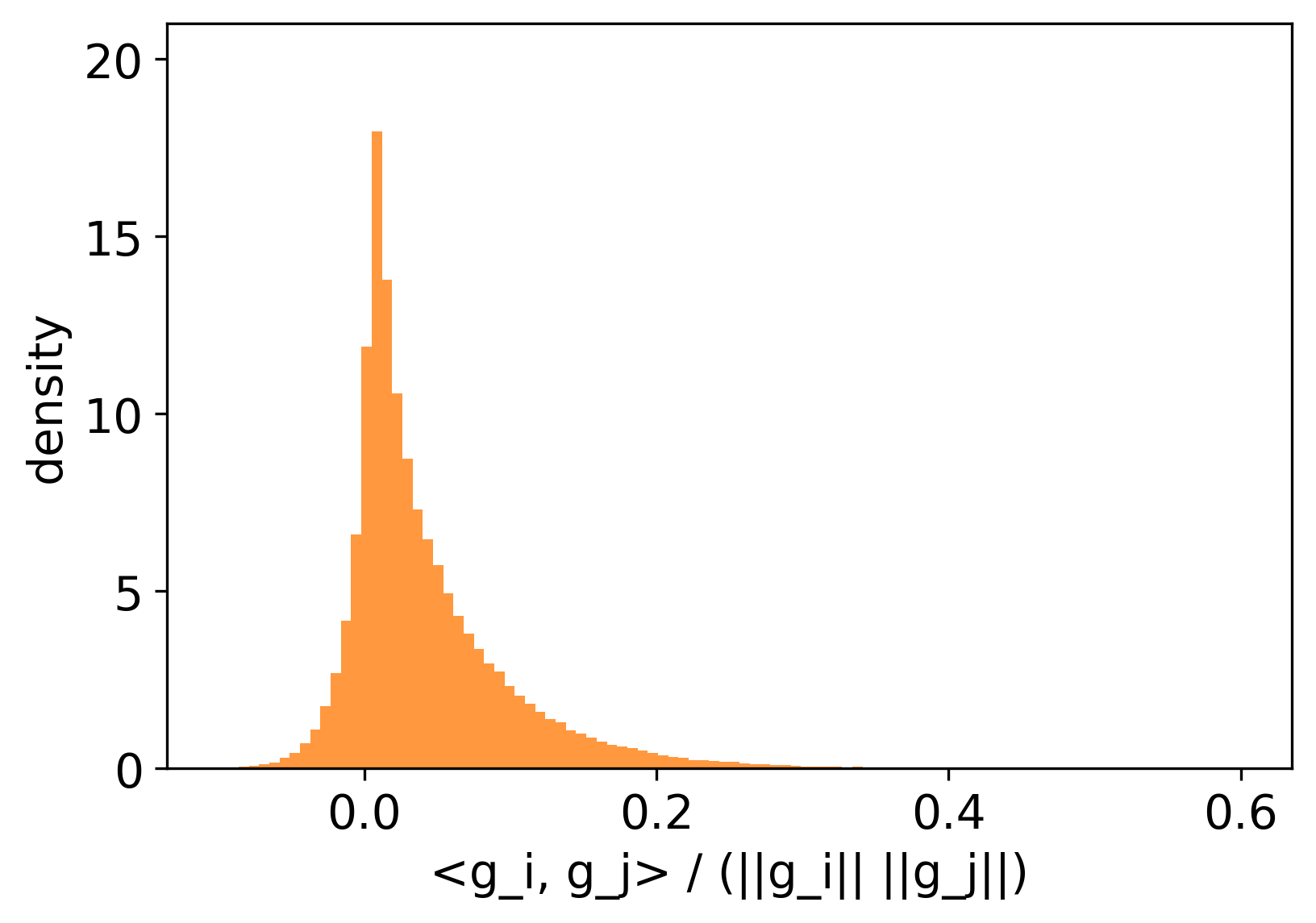}
  \caption{Cosine similarity at \textsc{EoSS}}
\end{subfigure}\hfill
\begin{subfigure}[t]{0.32\textwidth}
  \centering \includegraphics[height=3.2cm]{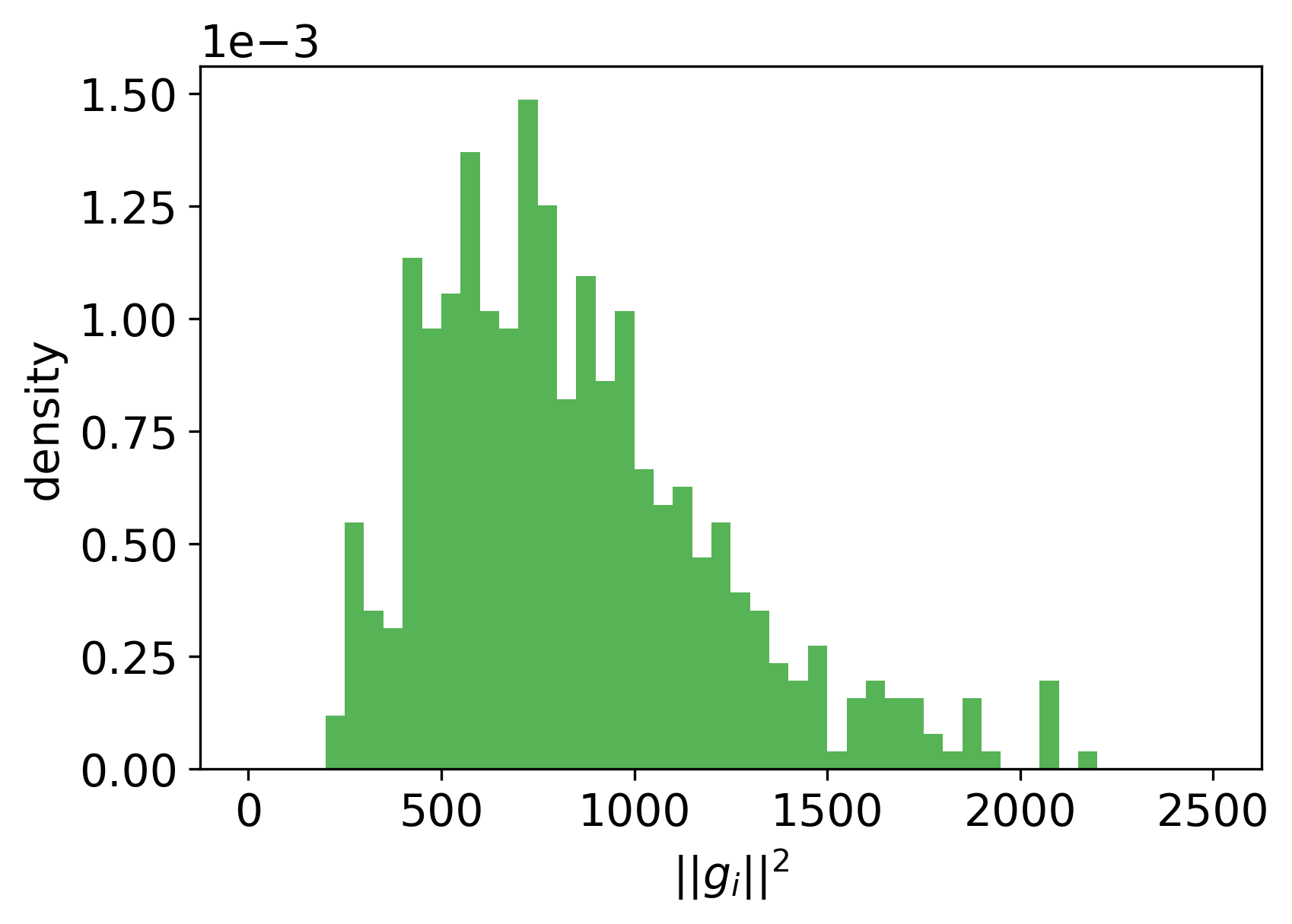}
  \caption{Norms at \textsc{EoSS}}
\end{subfigure}

\medskip

% Row: step 14000 (at convergence)
\begin{subfigure}[t]{0.32\textwidth}
  \centering \includegraphics[height=3.2cm]{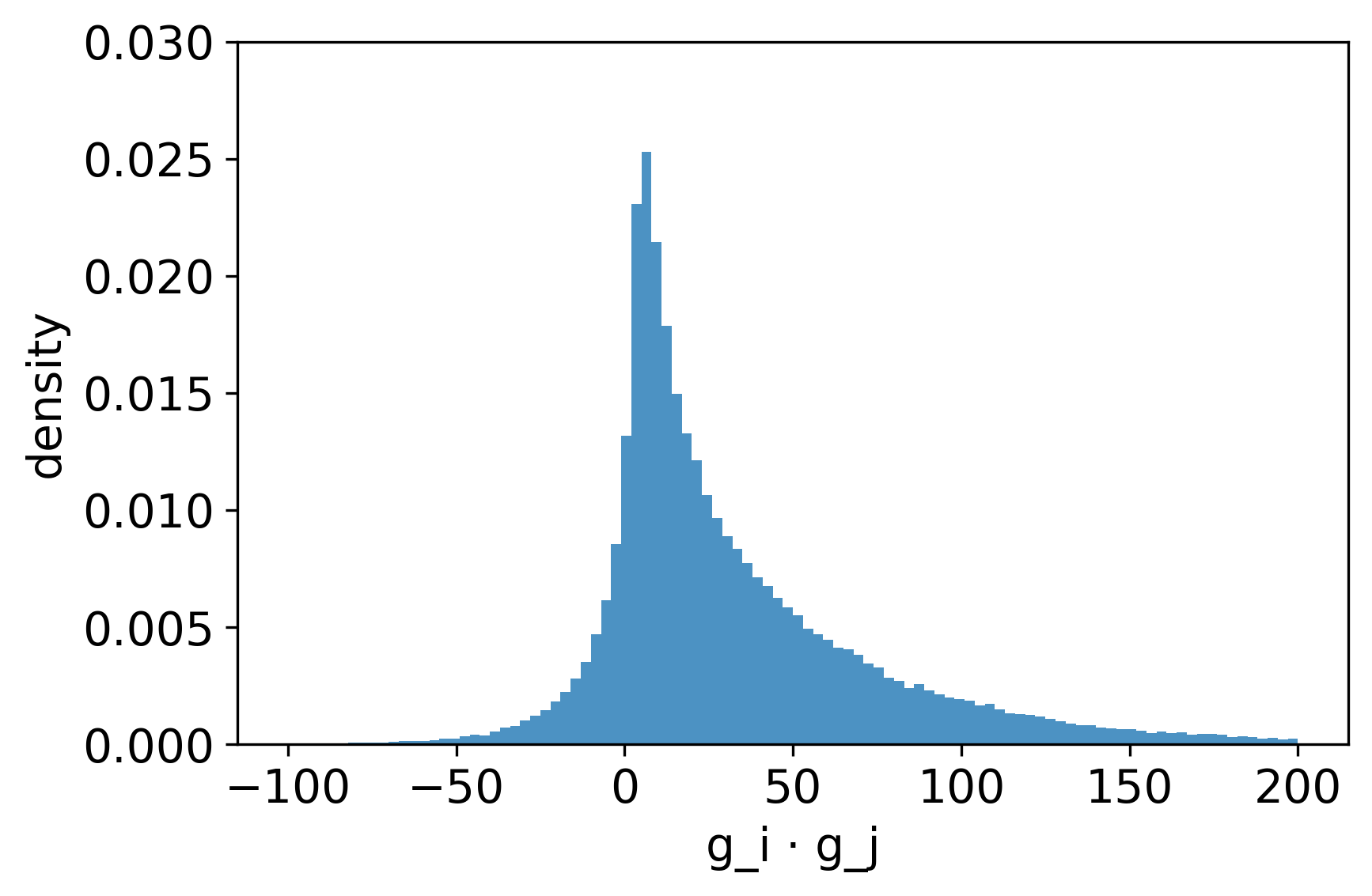}
  \caption{Dot product at convergence}
\end{subfigure}\hfill
\begin{subfigure}[t]{0.32\textwidth}
  \centering \includegraphics[height=3.2cm]{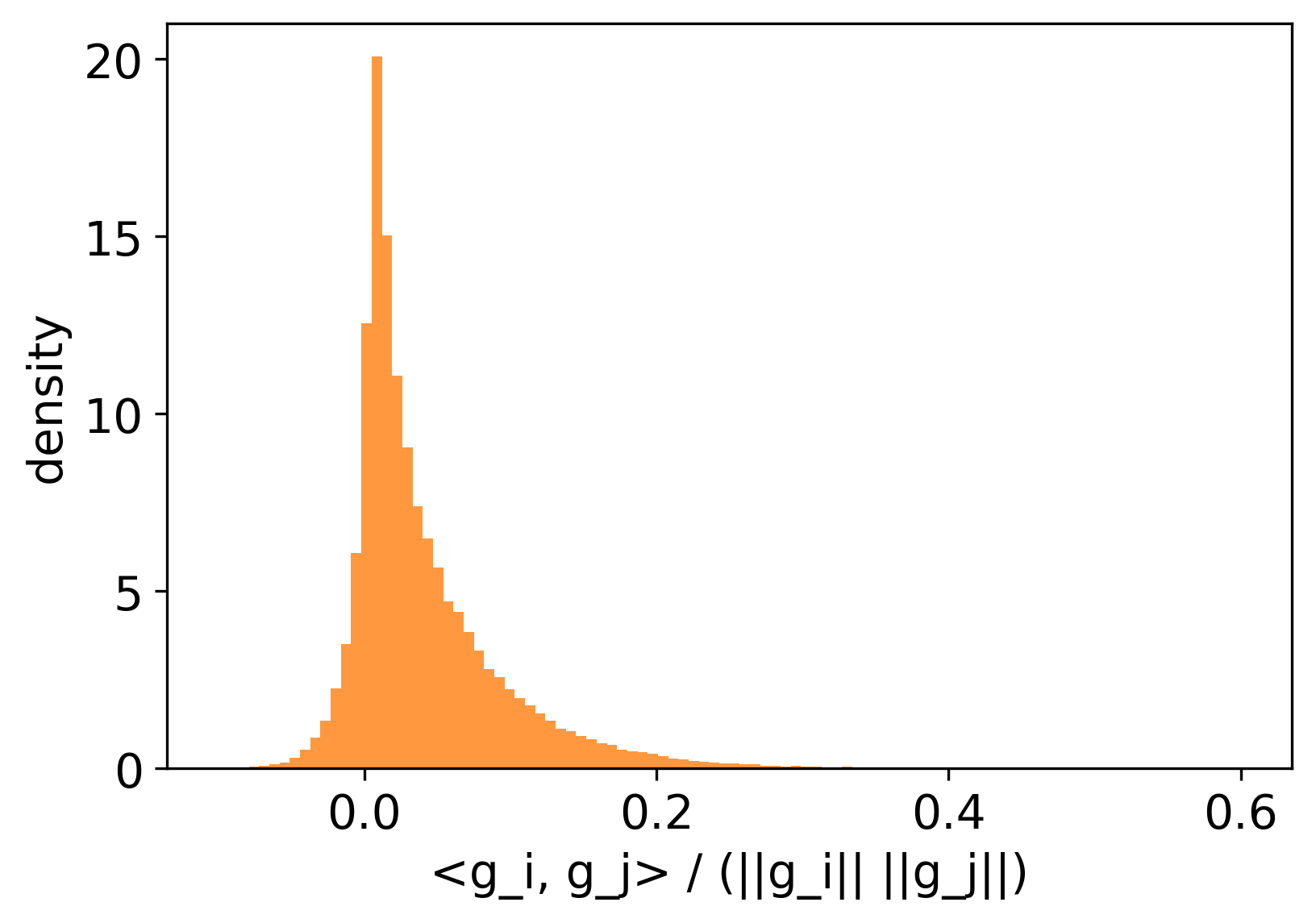}
  \caption{Cosine similarity at convergence}
\end{subfigure}\hfill
\begin{subfigure}[t]{0.32\textwidth}
  \centering \includegraphics[height=3.2cm]{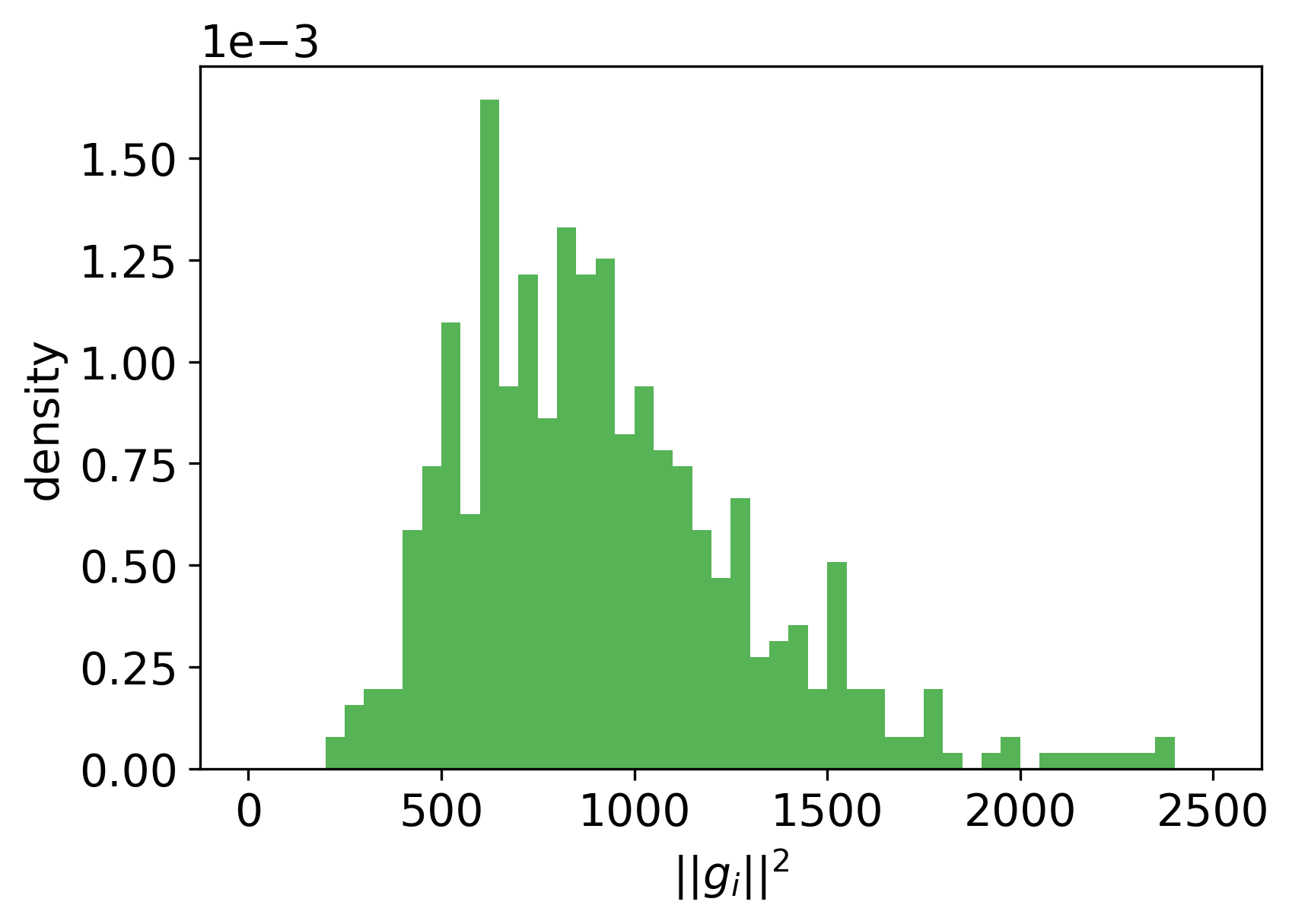}
  \caption{Norms at convergence}
\end{subfigure}

  \caption{\textbf{Model gradients alignment.} Pairwise alignment between model gradients forming the Hessian as the training progresses. We show dot products ($i\neq j$), cosine similarities and the squared norms of the model gradients. Each row corresponds to a stage of training---from initialization, to mid-training (during progressive sharpening), to the later stages (at \textsc{EoSS} and convergence). Notice the gradients become weakly aligned throughout the training (with the cosine similarities clustered around $0.1$), but not completely orthogonal, as it would have been with random vectors. MLP, CIFAR-8k (2 classes), $\eta=0.02$, batch size 32}
  \label{fig:alignmnet_mlp}
\end{figure}

\begin{figure}[htbp]
  \centering

  % Row: step 0 (init)
\begin{subfigure}[t]{0.32\textwidth}
  \centering \includegraphics[height=3.2cm]{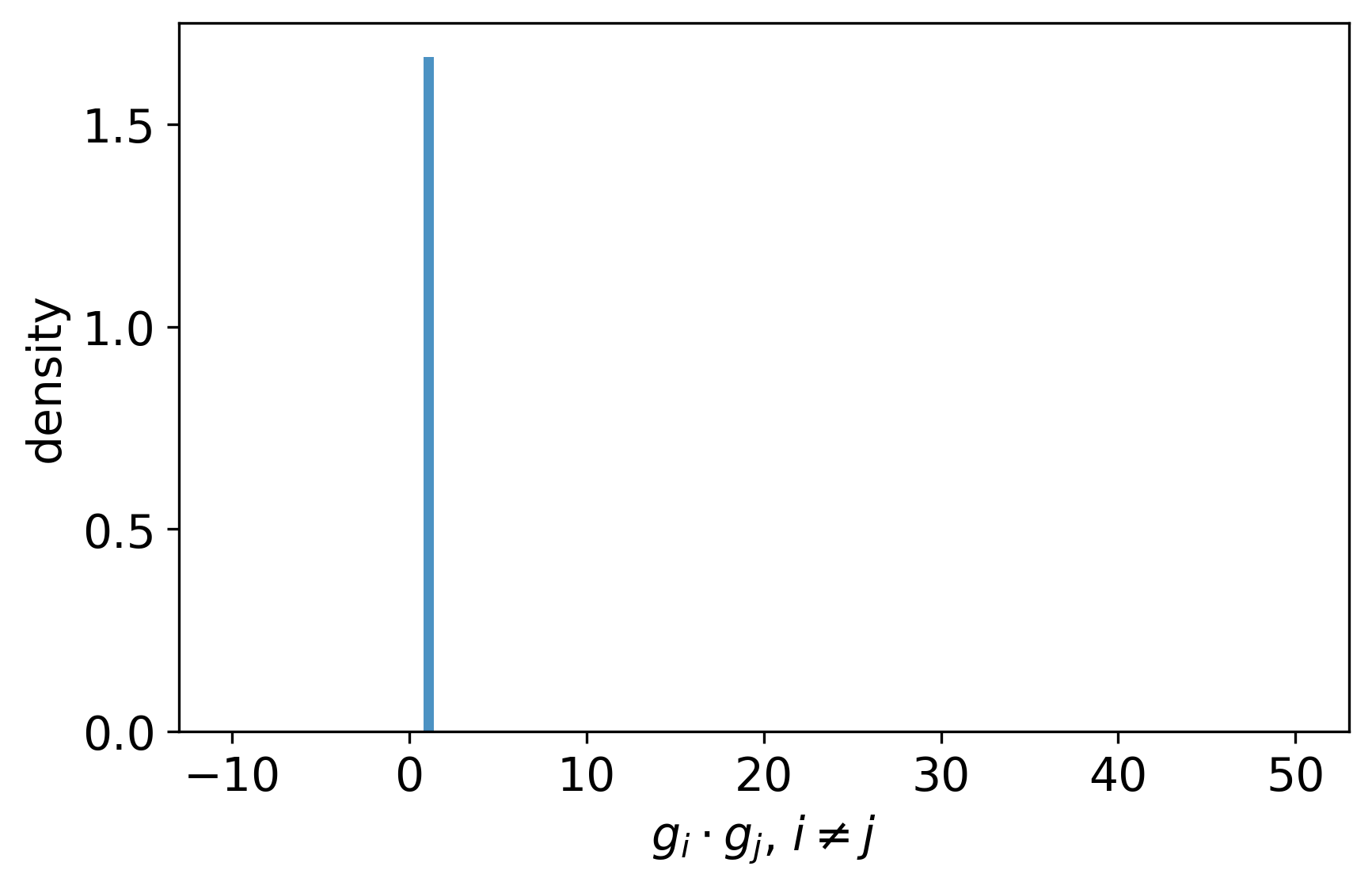}
  \caption{Dot product at init}
\end{subfigure}\hfill
\begin{subfigure}[t]{0.32\textwidth}
  \centering \includegraphics[height=3.2cm]{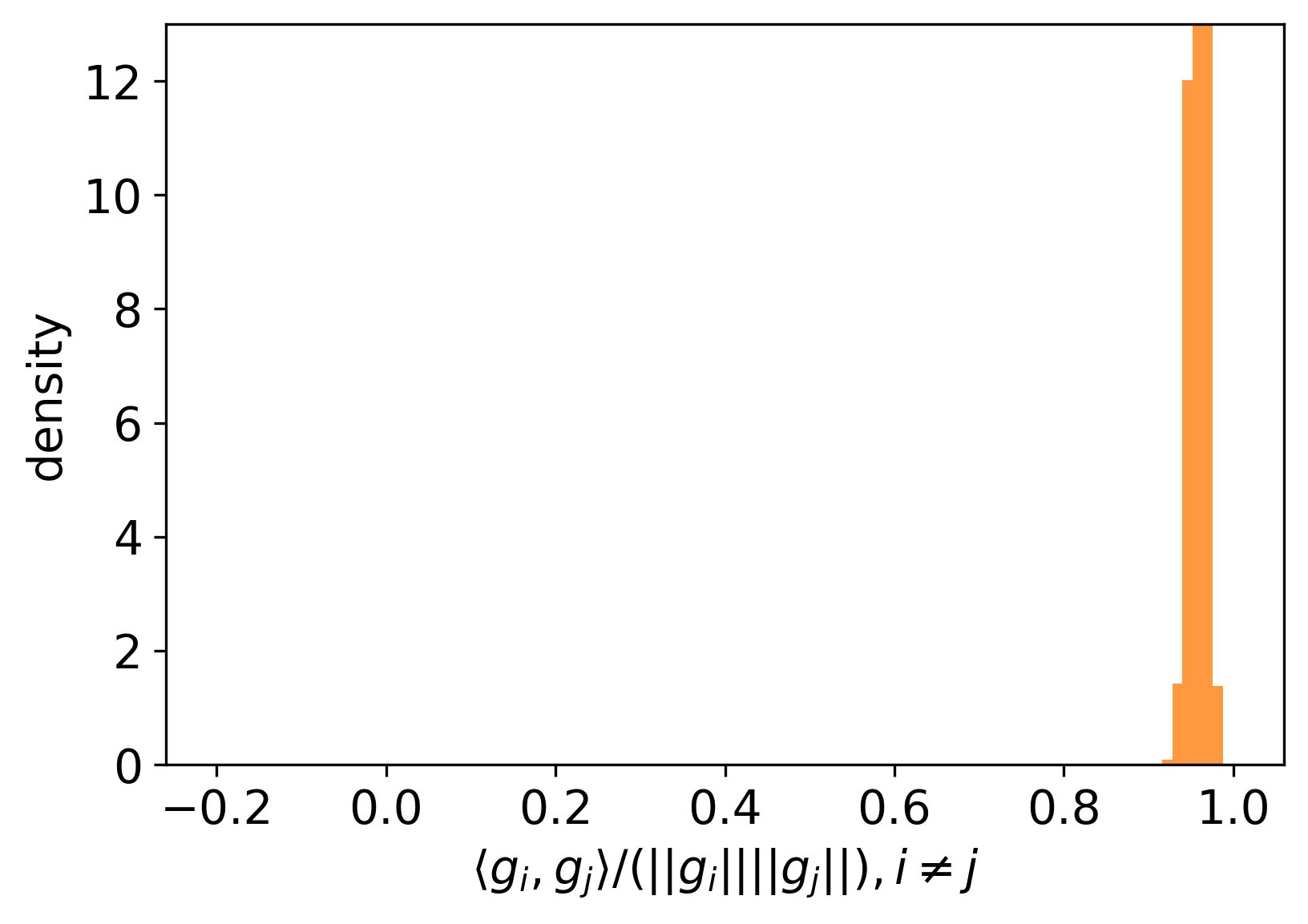}
  \caption{Cosine similarity at init}
\end{subfigure}\hfill
\begin{subfigure}[t]{0.32\textwidth}
  \centering \includegraphics[height=3.2cm]{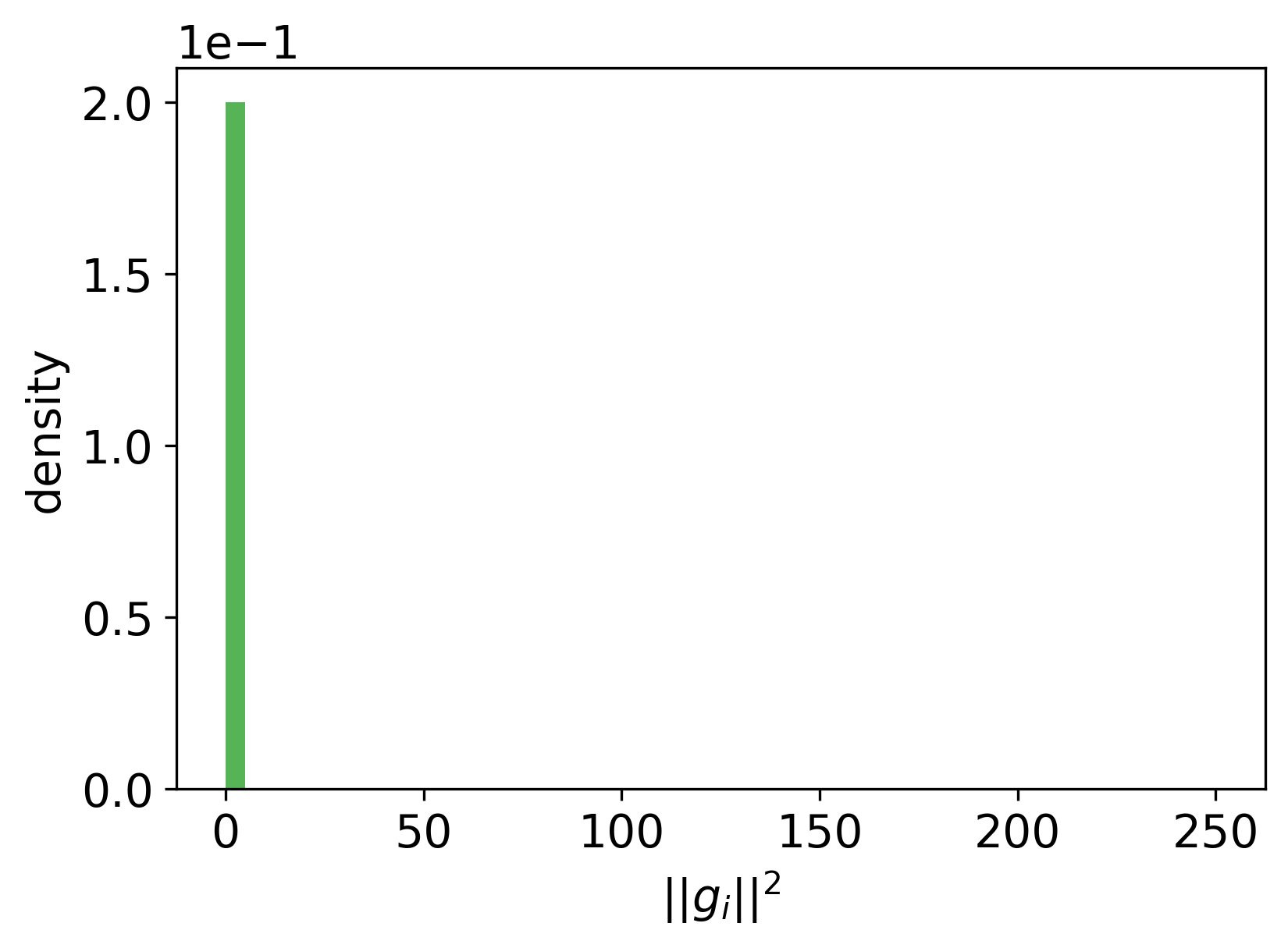}
  \caption{Norms at init}
\end{subfigure}

\medskip

% Row: step 1000 (early-training)
\begin{subfigure}[t]{0.32\textwidth}
  \centering \includegraphics[height=3.2cm]{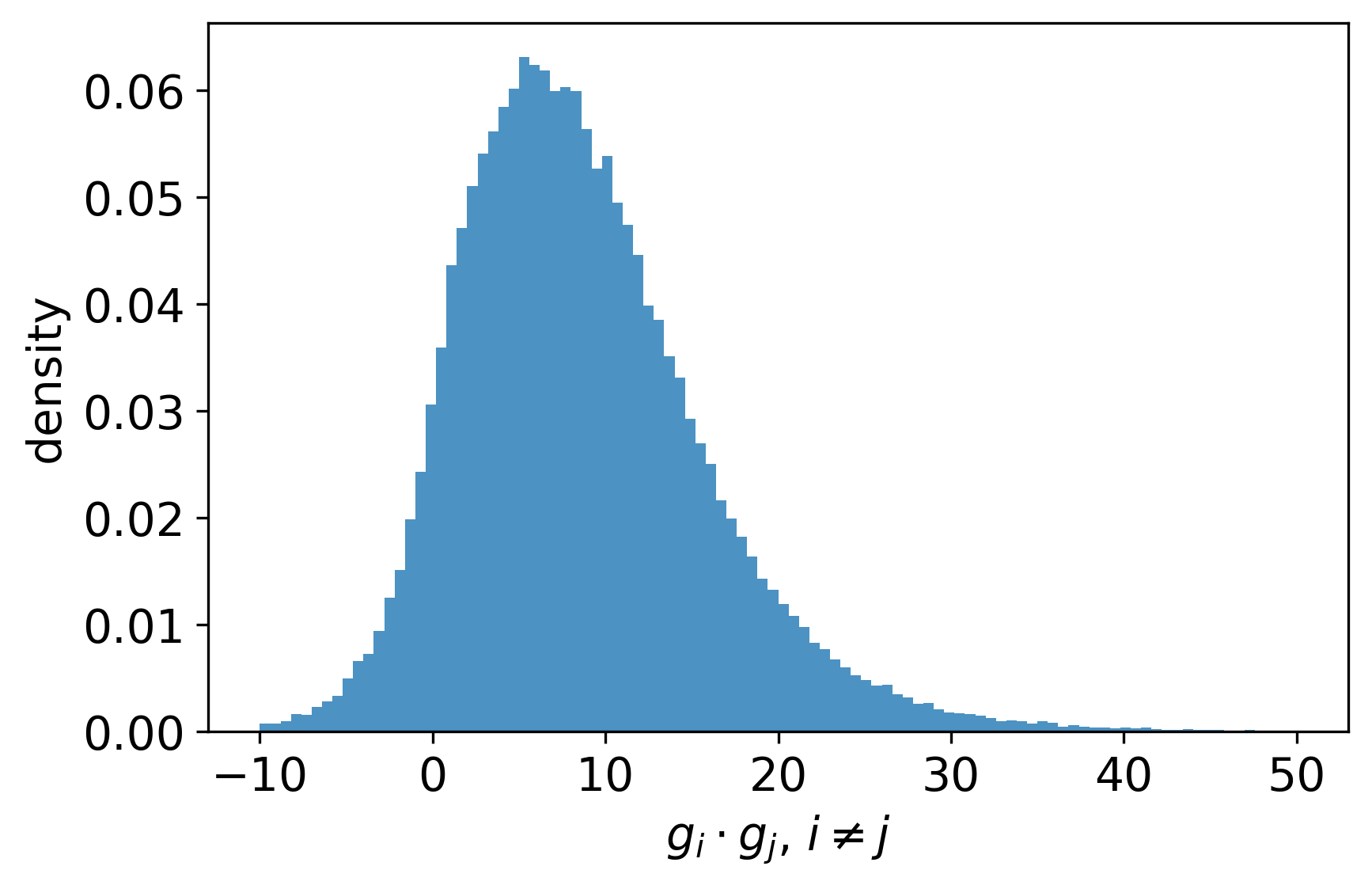}
  \caption{Dot product early-training}
\end{subfigure}\hfill
\begin{subfigure}[t]{0.32\textwidth}
  \centering \includegraphics[height=3.2cm]{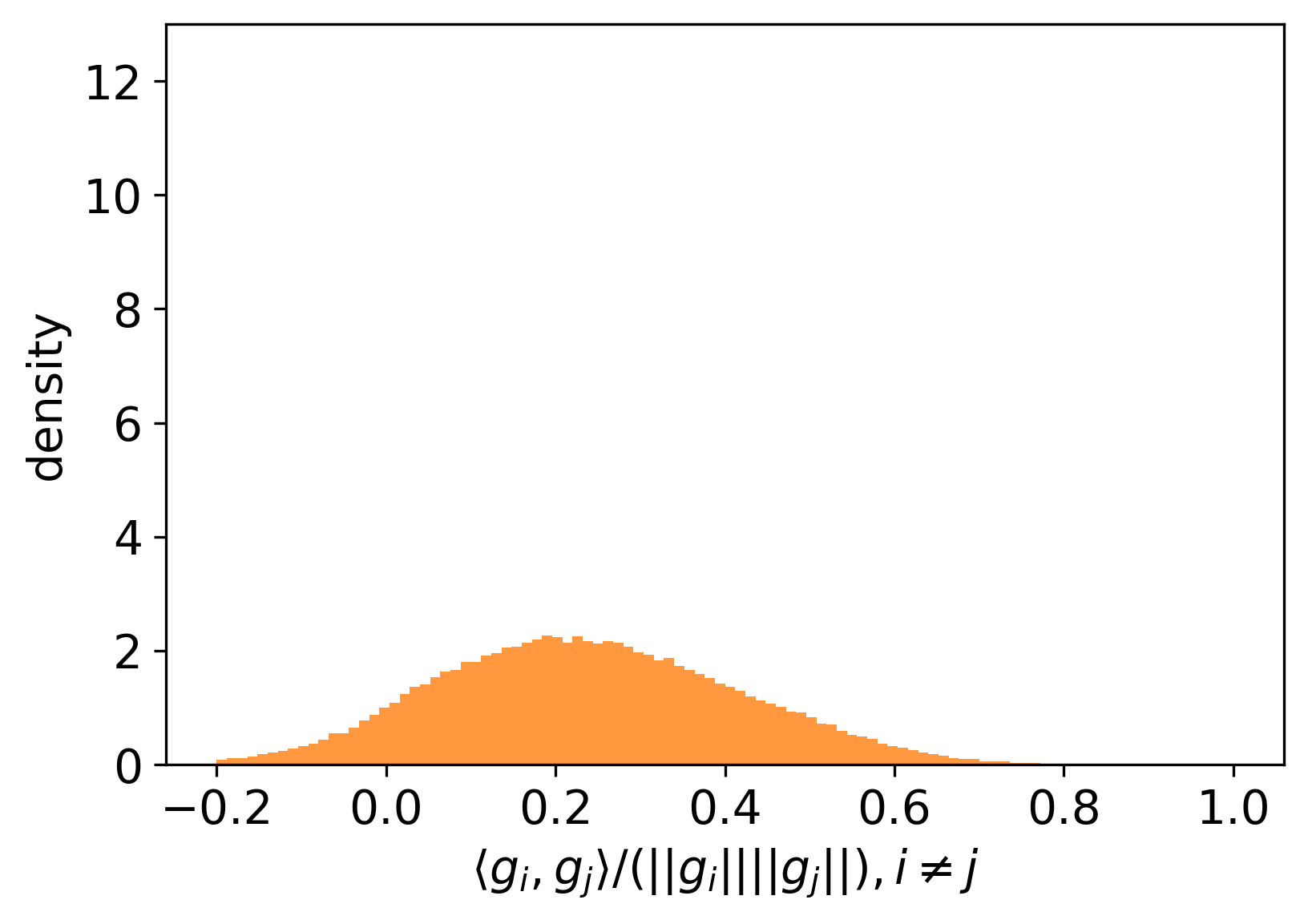}
  \caption{Cosine similarity early-training}
\end{subfigure}\hfill
\begin{subfigure}[t]{0.32\textwidth}
  \centering \includegraphics[height=3.2cm]{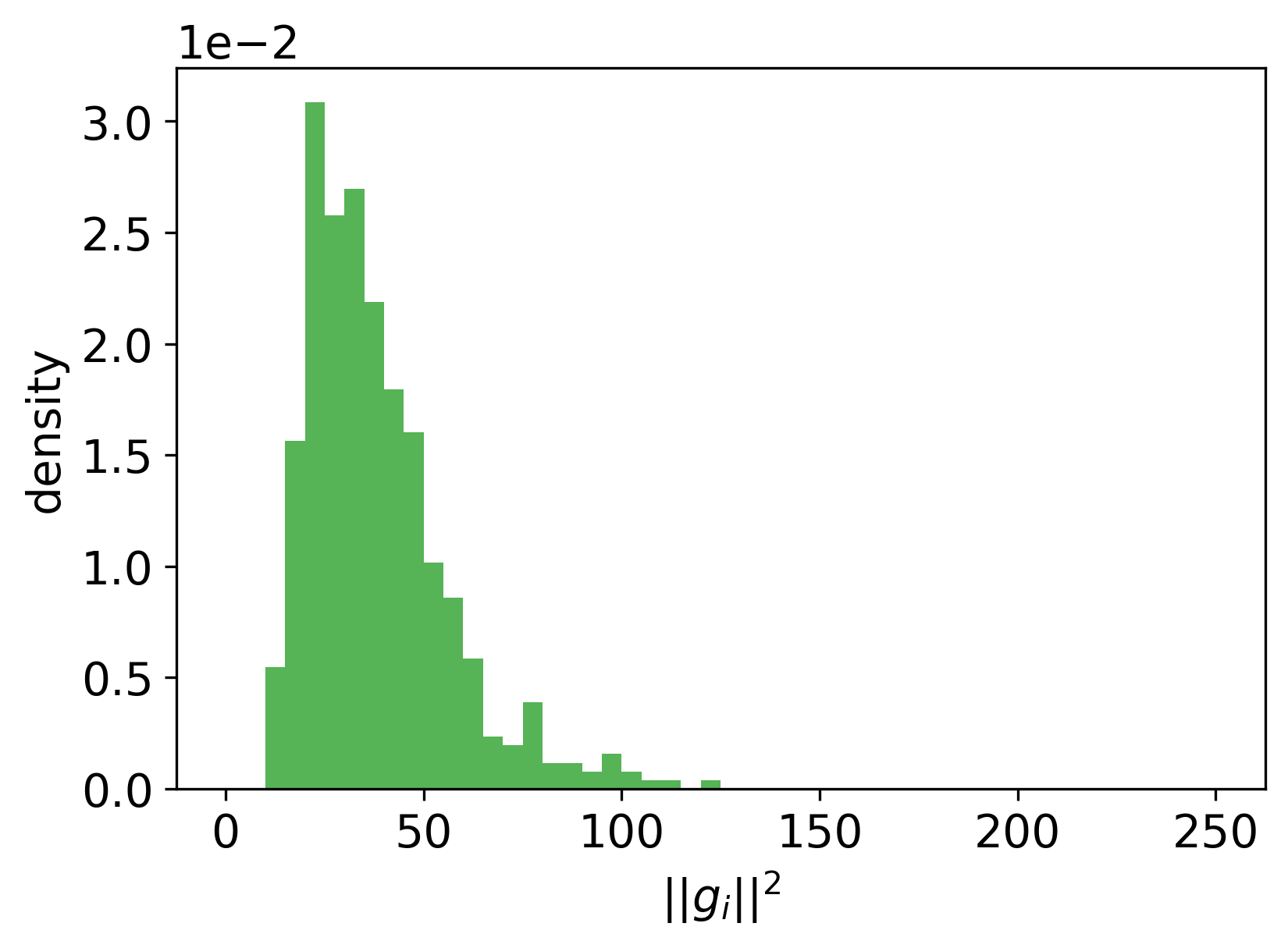}
  \caption{Norms early-training}
\end{subfigure}

\medskip

% Row: step 2000 (mid-training)
\begin{subfigure}[t]{0.32\textwidth}
  \centering \includegraphics[height=3.2cm]{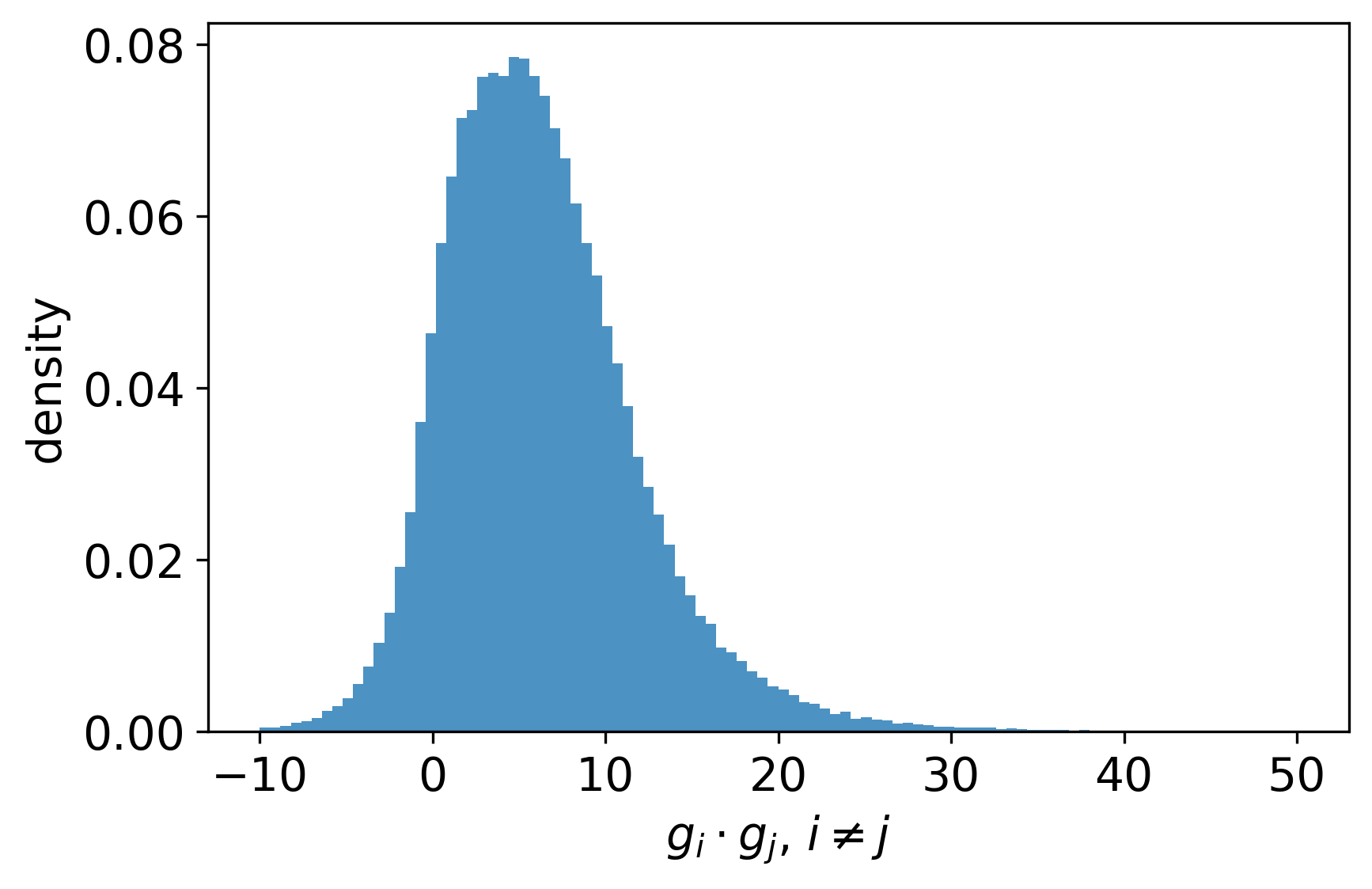}
  \caption{Dot product mid-training}
\end{subfigure}\hfill
\begin{subfigure}[t]{0.32\textwidth}
  \centering \includegraphics[height=3.2cm]{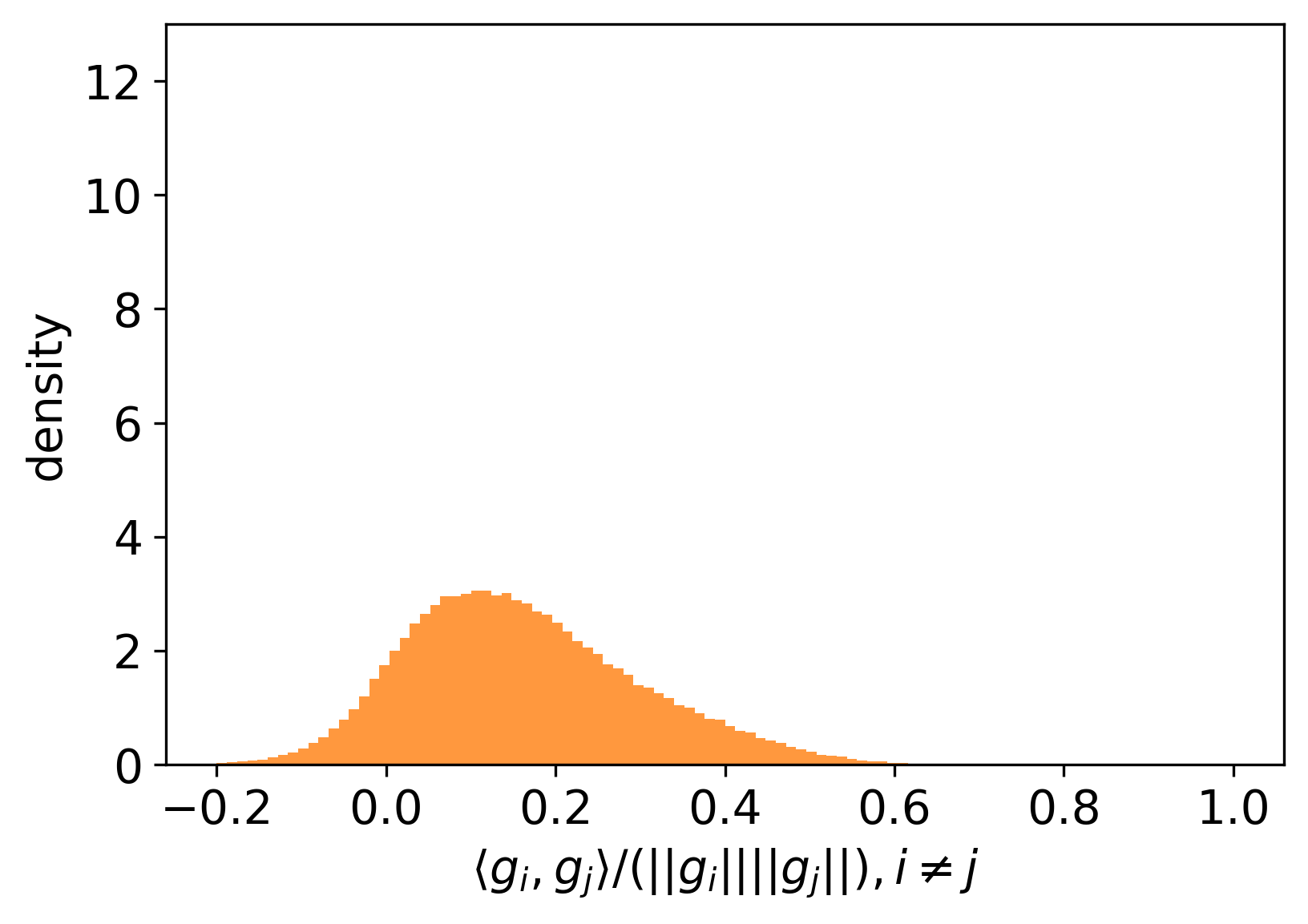}
  \caption{Cosine similarity mid-training}
\end{subfigure}\hfill
\begin{subfigure}[t]{0.32\textwidth}
  \centering \includegraphics[height=3.2cm]{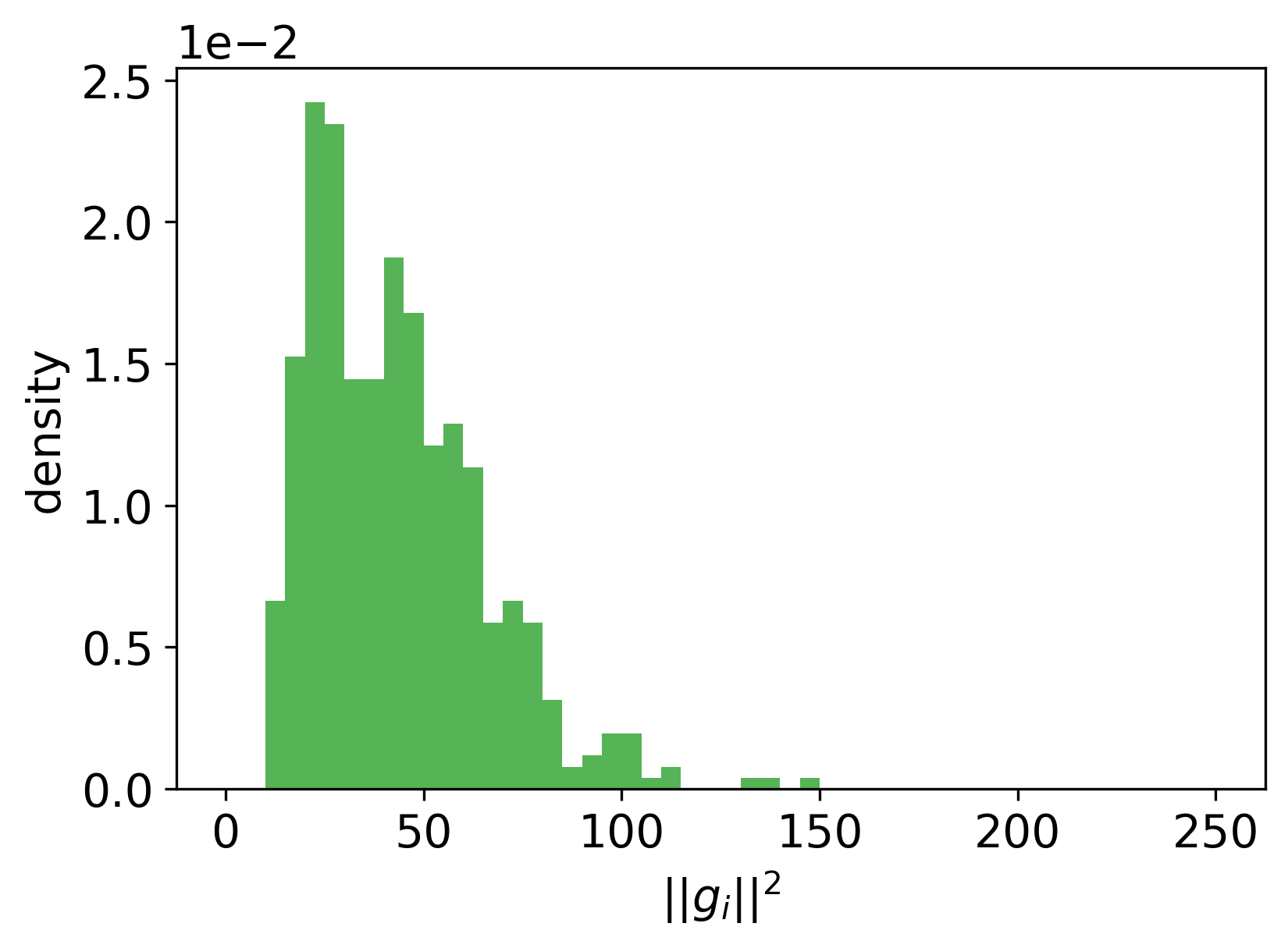}
  \caption{Norms mid-training}
\end{subfigure}

\medskip

% Row: step 4000 (at \textsc{EoSS})
\begin{subfigure}[t]{0.32\textwidth}
  \centering \includegraphics[height=3.2cm]{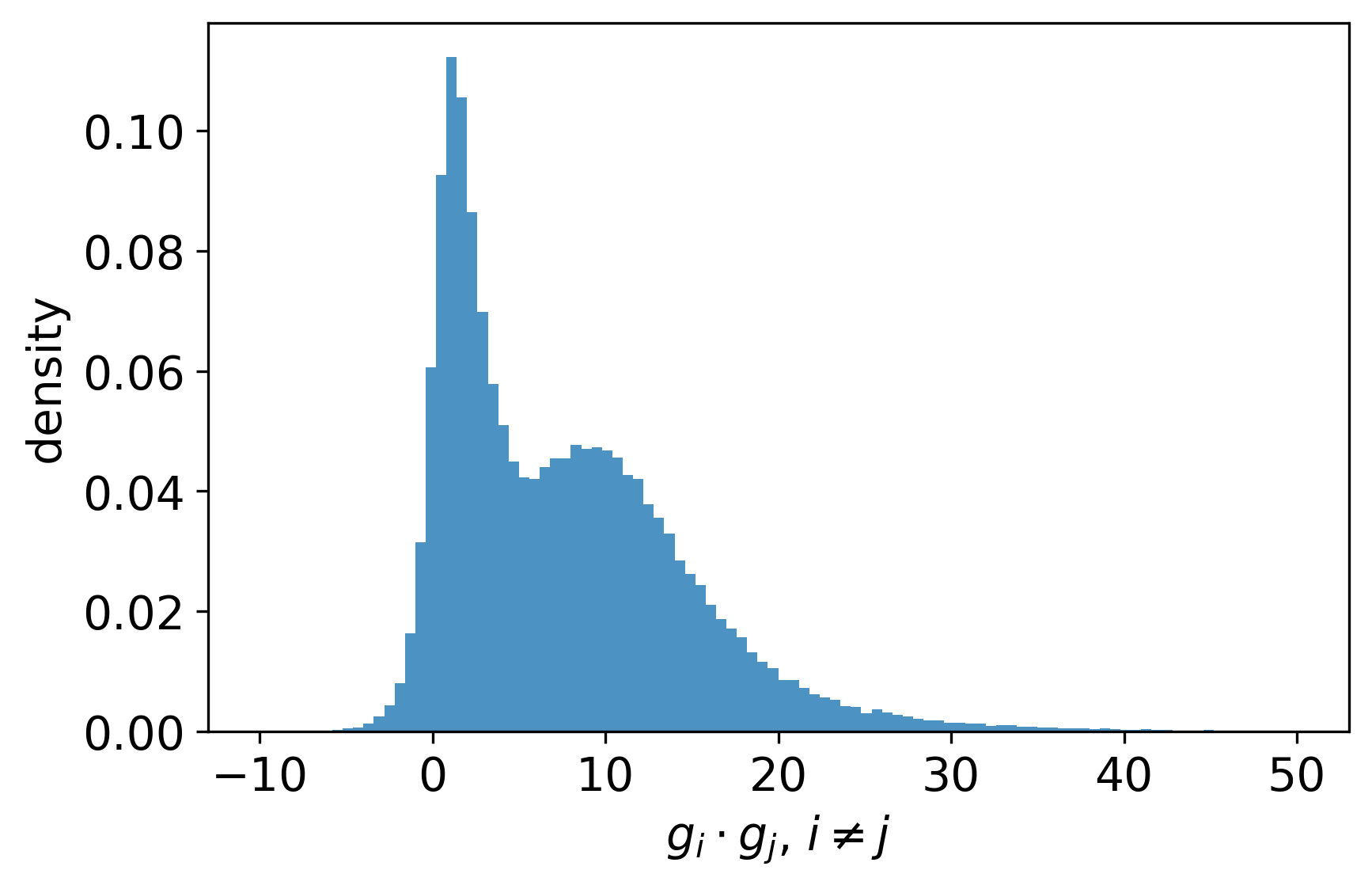}
  \caption{Dot product at \textsc{EoSS}}
\end{subfigure}\hfill
\begin{subfigure}[t]{0.32\textwidth}
  \centering \includegraphics[height=3.2cm]{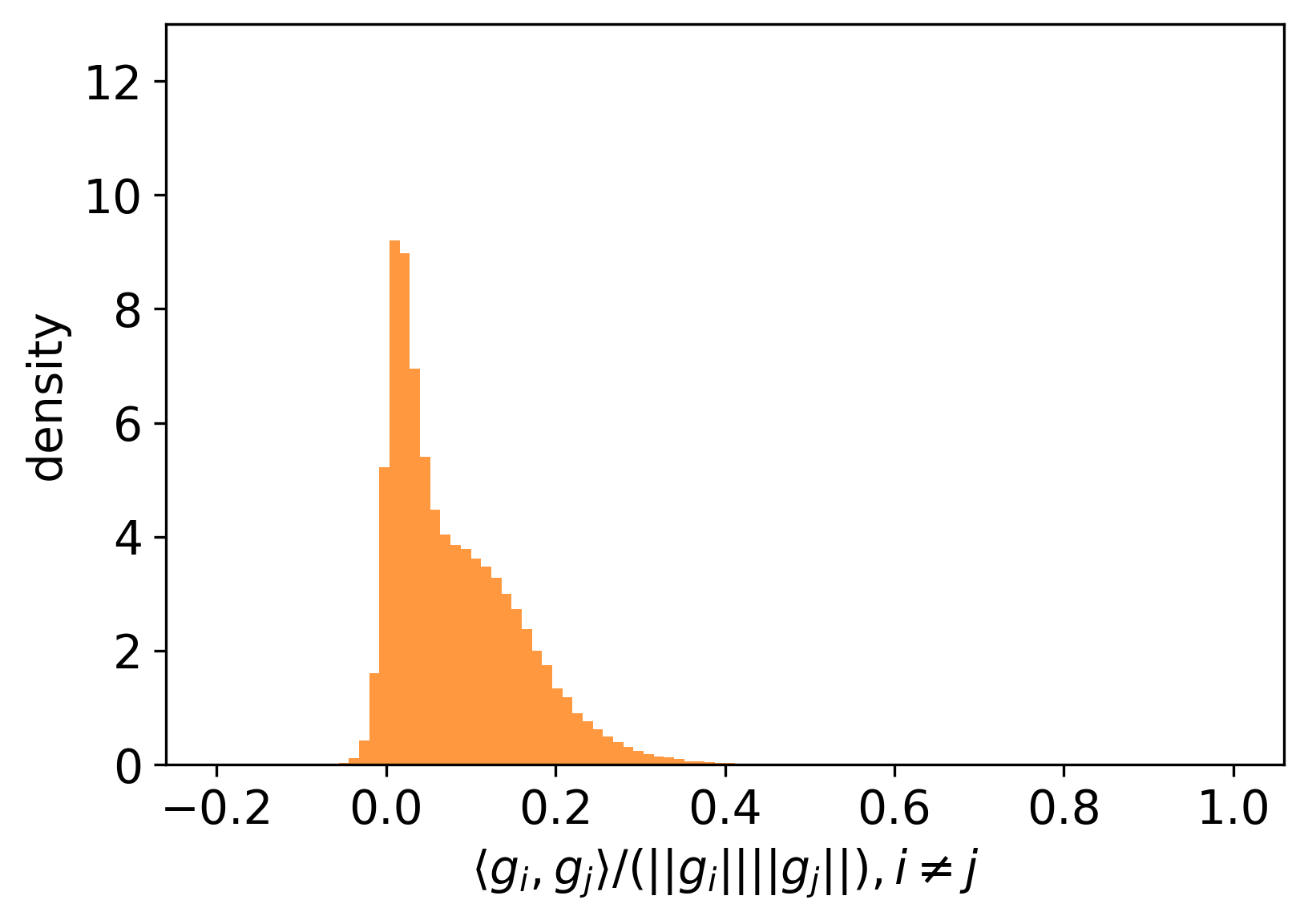}
  \caption{Cosine similarity at \textsc{EoSS}}
\end{subfigure}\hfill
\begin{subfigure}[t]{0.32\textwidth}
  \centering \includegraphics[height=3.2cm]{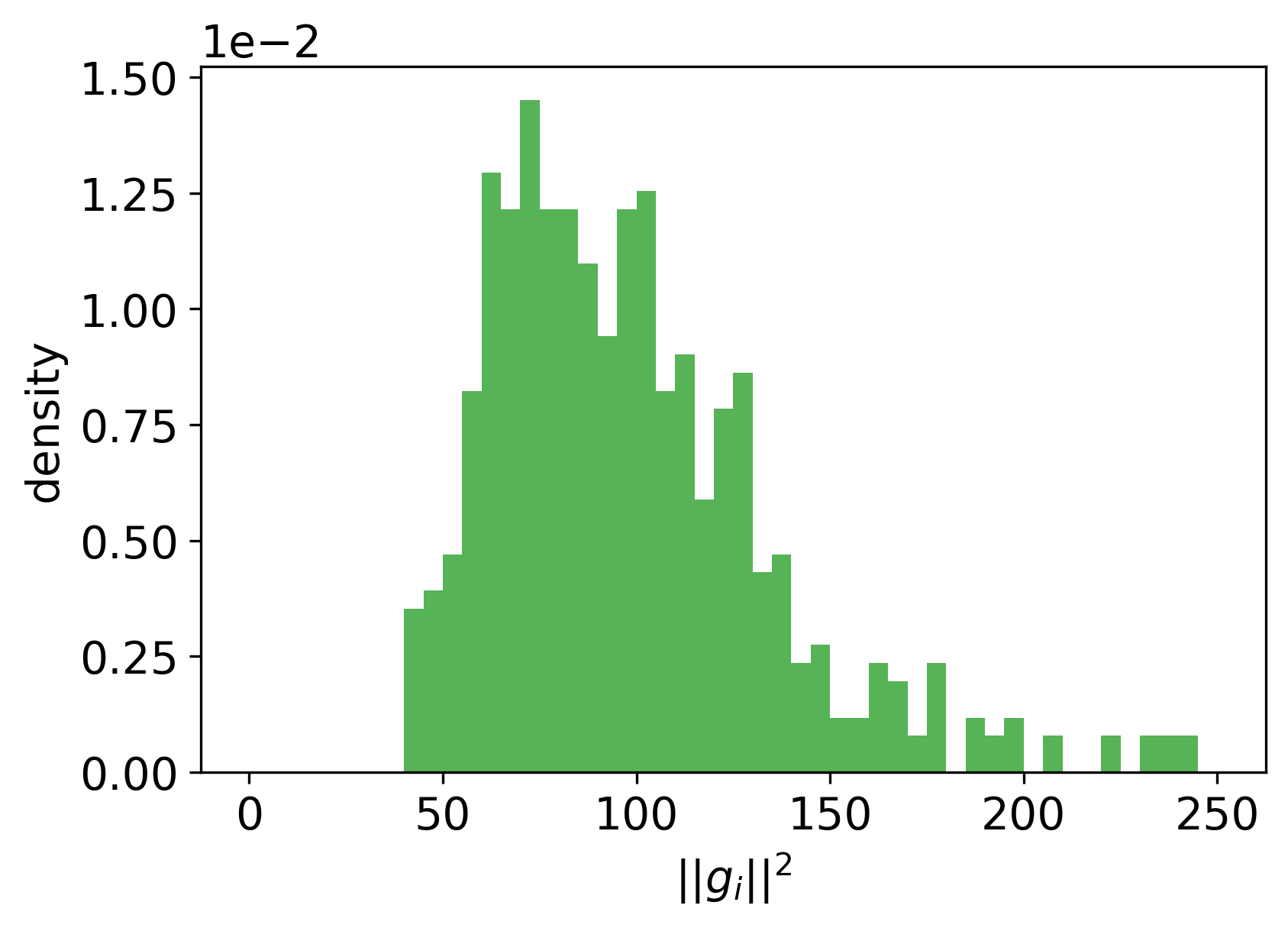}
  \caption{Norms at \textsc{EoSS}}
\end{subfigure}

\medskip

% Row: step 14000 (at convergence)
\begin{subfigure}[t]{0.32\textwidth}
  \centering \includegraphics[height=3.2cm]{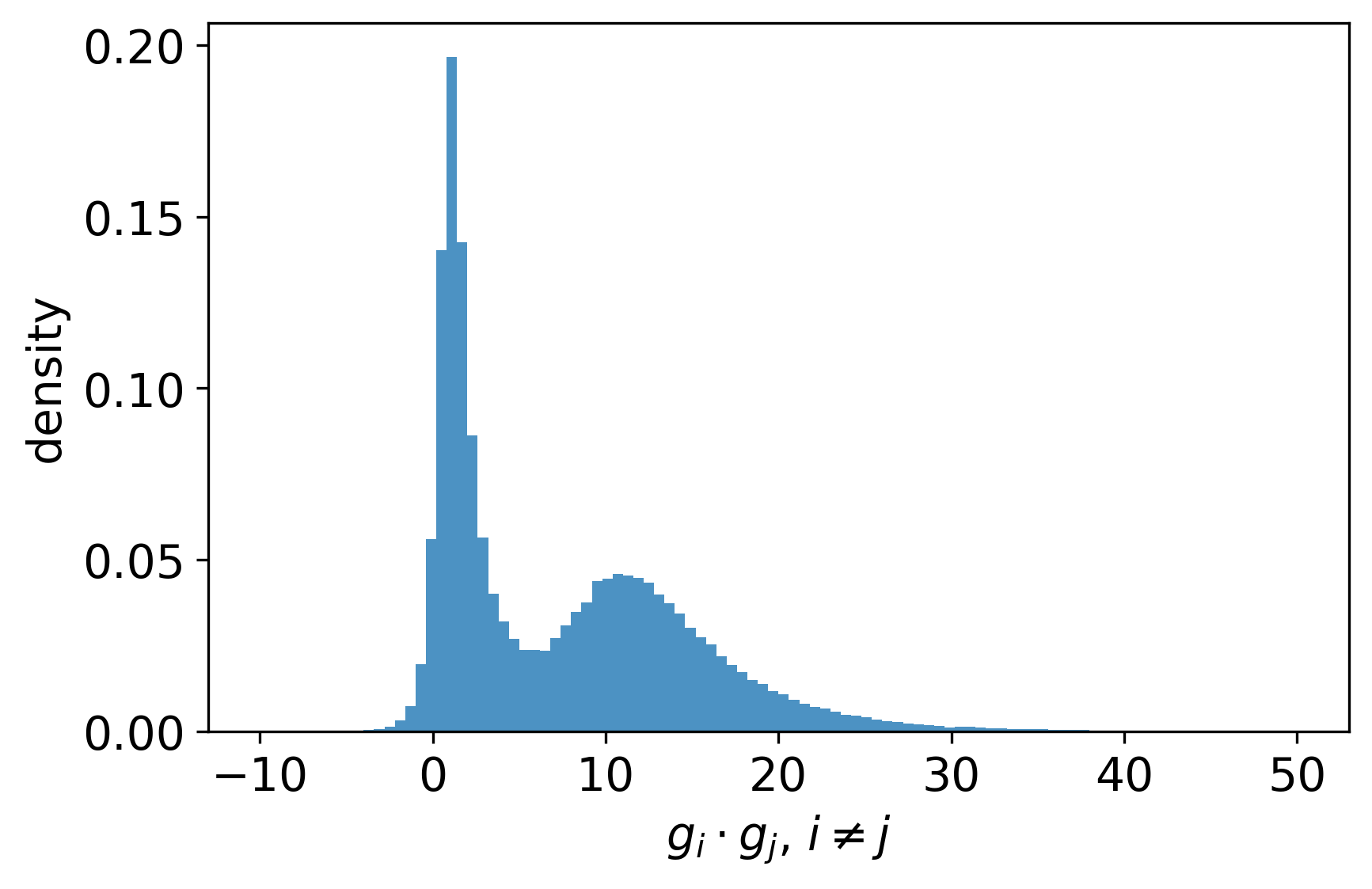}
  \caption{Dot product at convergence}
\end{subfigure}\hfill
\begin{subfigure}[t]{0.32\textwidth}
  \centering \includegraphics[height=3.2cm]{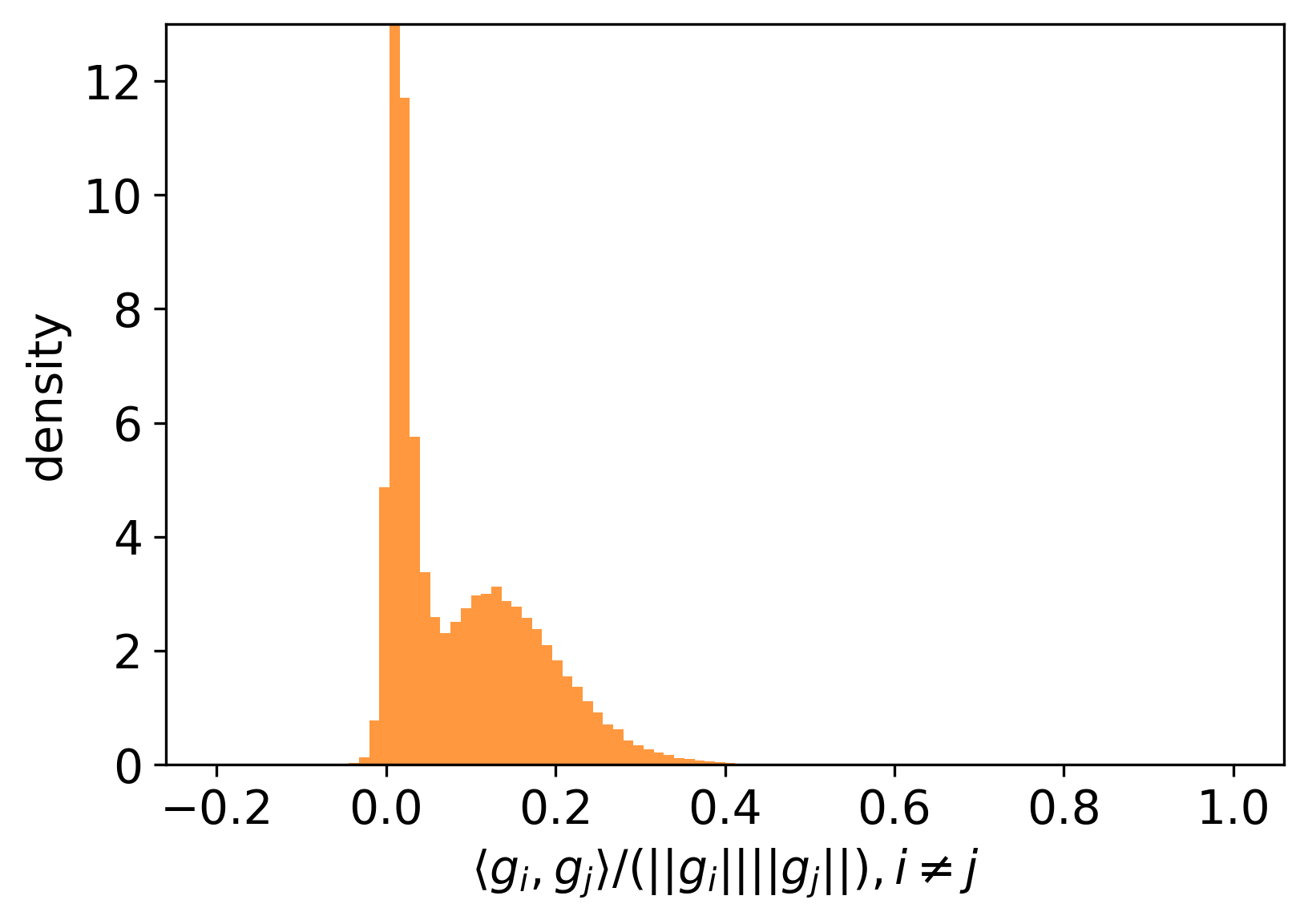}
  \caption{Cosine similarity at convergence}
\end{subfigure}\hfill
\begin{subfigure}[t]{0.32\textwidth}
  \centering \includegraphics[height=3.2cm]{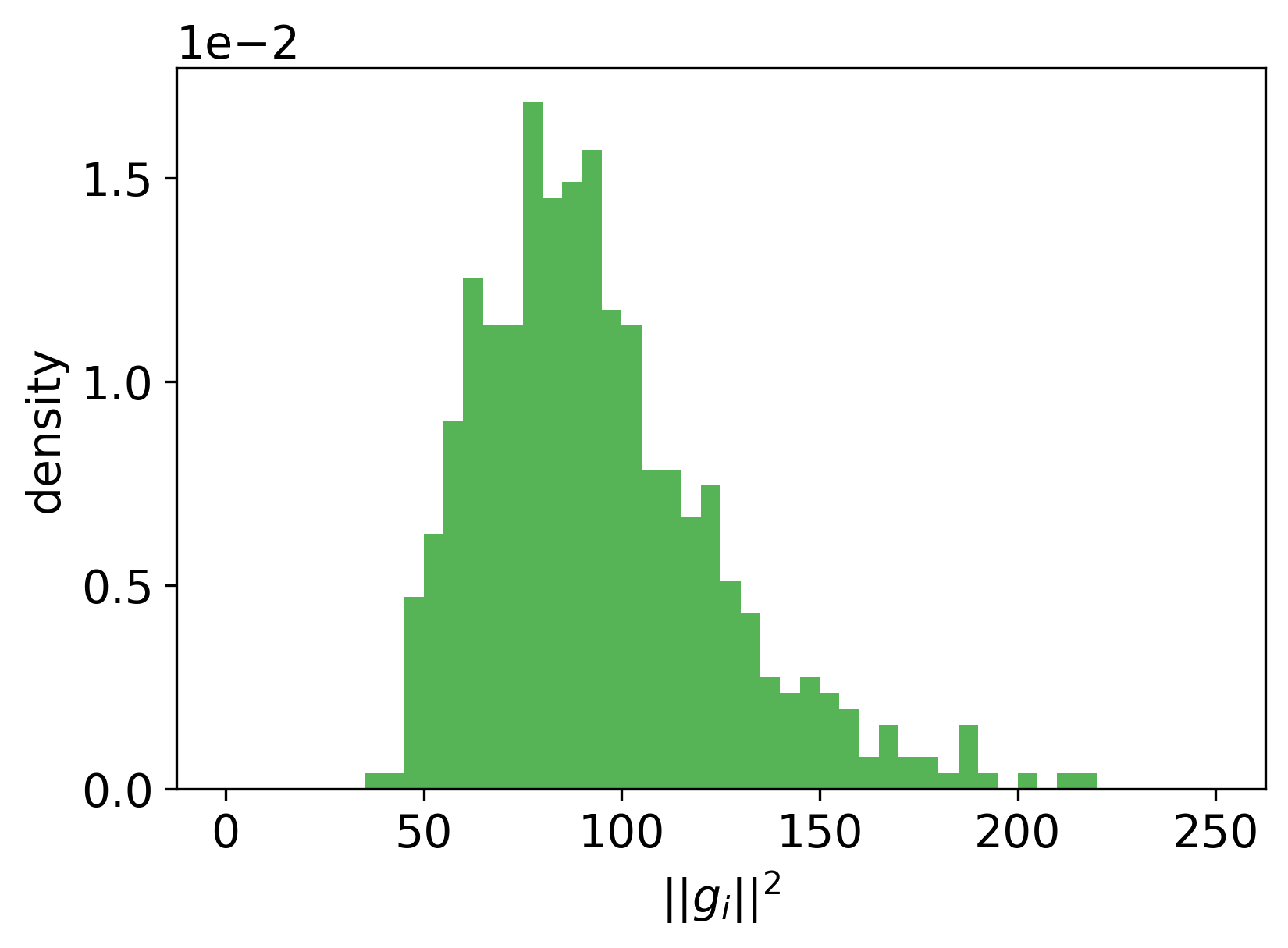}
  \caption{Norms at convergence}
\end{subfigure}

\caption{\textbf{Model gradients alignment.} Same setup as Figure \ref{fig:alignmnet_mlp}, but for CNN. CIFAR-8k (2 classes), $\eta=0.05$, batch size 8.}

  \label{fig:alignmnet_cnn}
\end{figure}

\clearpage

%%%%%%%%%%%%%%%%%%%%%%%%%%%%%%%%%%%%%%%%%%%%%%%%%%%%%%%%%%%%%
%%%%%%%%%%%%%%%%%%%%%%%%%%%%%%%%%%%%%%%%%%%%%%%%%%%%%%%%%%%%%
%%%%%%%%%%%%%%%%%%%%%%%%%%%%%%%%%%%%%%%%%%%%%%%%%%%%%%%%%%%%%
%%%%%%%%%%%%%%%%%%%%%%%%%%%%%%%%%%%%%%%%%%%%%%%%%%%%%%%%%%%%%

\section{Linear Stochastic Stability}
\label{appendix:linear_stochastic_stability}
This appendix extends the discussion from Section \ref{section:framework}, particularly regarding works addressing SGD stability from a linear stability viewpoint, including an analysis of the behavior of the bound introduced by \citet{wu_how_2018}.

\subsection{Notions of Linear Stability}
As the discussion in Section \ref{section:framework} highlights, we need two conditions to establish a regime of instability in mini-batch training: (i) a valid notion of stability as per Definition \ref{def:stab_not}---an inequality whose violation leads to divergence---and (ii) 
% this notion of stability to saturate during the SGD training, and SGD training to proceed while the condition is saturated.
empirical saturation of this stability notion during SGD training, with continued training as the condition remains at saturation.

\citet{wu_how_2018} were the first to analyze linear stability of SGD, establishing a \textit{sufficient} (but, in general, necessary) condition for SGD stability. In particular, they prove that mini-batch gradient descent is stable when 
\begin{equation}
\label{eq:wu_et_al_appendix}
\lambda_{\max}\left( (I - \eta \mathcal{H})^2 + \frac{\eta^2}{b} \E[\mathcal{H}_i^2-\mathcal{H}^2] \right) 
\ = \
\lambda_{\max} \left( \E_B\big[ (I - \eta \mathcal{H}_B)^2 \big] \right) \leq 1.
\end{equation}
This criterion upper-bounds the spectral radius of the second-moment update operator.
Since this condition is only sufficient (and necessary solely when $d=1$), it does not strictly satisfy our criteria for a valid stability notion (Definition \ref{def:stab_not}). Importantly, while \citet{wu_how_2018} explicitly note this limitation, nothing a priori excludes this bound from being \textit{empirically} tight---after all, \textsc{EoSS} is fundamentally an empirical phenomenon. We show here that this criterion does not govern the \textsc{EoSS}---that is, that the "not necessary" part is not vacuous.

% we examine here whether the empirical gap observed in practice is meaningful or simply a theoretical artifact.
% Importantly, \citet{wu_how_2018} are explicit about the fact that this is only a sufficient condition, and necessary only when $d=1$. However, although it is not necessarily a notion of stability per our definition, and rather serves as an upper bound, there is no a priori reason why this bound cannot be \textit{empirically} tight. 
% After all, existence of \textsc{EoSS} regime of instability for SGD is fundamentally an empirical phenomenon, as discussed in Section \ref{section:framework}. Therefore, we show here that this indeed is not the notion of stability defining the \textsc{EoSS} (Definition \ref{def:stab_not}) --- that is, that the "not necessary" part is not vacuous.

\paragraph{Necessary stability conditions.} Conditions derived from linear stochastic stability theory that are indeed valid stability notions often suffer from computational intractability. Indeed, \citet{ma_linear_2021} proved that mini-batch gradient descent is 2nd order linearly stable \textit{if and only if} the operator
\begin{equation}
% \label{eq:ma-ying}
    T_k:=\E_B \Big [(I-\eta H(L_B))^{\otimes 2} \Big ]
\end{equation}
is a contraction on the cone of PSD matrices. Importantly for us, this a necessary condition for stability, which would constitute a notion of stability.
While this condition is necessary and would represent a valid stability notion, it is computationally infeasible in high-dimensional neural networks, as it involves operations on $d^2 \times d^2$ tensors (making even tensor-vector product unfeasible).

\citet{mulayoff_exact_2024} showed that the PSD condition is inactive, which reduces the criterion to one on a spectral norm of this operator. Moreover, they express this notion in the form of (notion of curvature) $\leq$ (step-size dependent threshold). Although potentially useful to find the corresponding maximum stable learning rate, this reformulation did not solve the incomputability problem. \citep{mulayoff_exact_2024} also construct elegant lower bounds, which therefore also serve as a necessary condition for stability, and thus a valid notion of stability. However, as their empirical results show, these bounds never saturate, and thus do not effectively capture the empirical presence of an instability regime in mini-batch training.

\subsection{Empirical Behavior of \texorpdfstring{\citet{wu_how_2018}}{Wu et al.} Criterion}
\begin{figure}[t]
  \centering
  \includegraphics[width=0.99\linewidth]{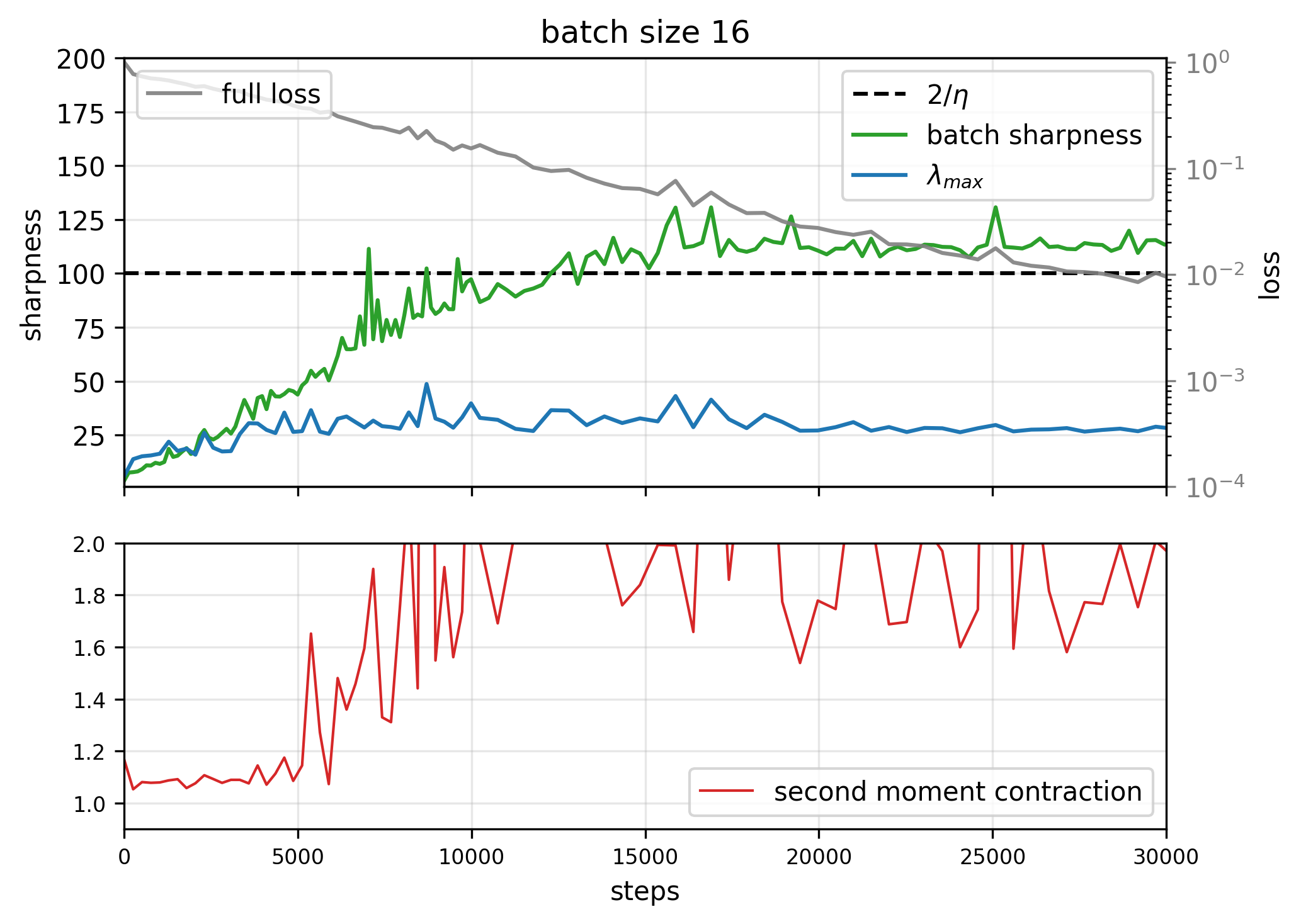}%
  \caption{\textit{Linear stochastic stability.} Tracking the condition of equation \eqref{eq:wu_et_al_appendix} (“second-moment contraction’’) during training. The value stays strictly above $1$.}
  \label{fig:wu1}
\end{figure}

\paragraph{Always above $1$.}
In neural networks negative Hessian eigenvalues are typically present, thus the quantity in Equation \eqref{eq:wu_et_al_appendix}, which we term \textit{second moment contraction}, is always bigger than $1$ (Figure \ref{fig:wu1}).
Ideally, when "the phase transition at \textsc{EoSS}" happens this quantity keeps being bigger than 1, but the highest singular value becomes the biggest eigenvalue instead of the smallest. In the deterministic full-batch algorithm case this can be seen cleanly:
\begin{multline*}
    \lambda_{\max} \left( \E_B\big[ (I - \eta \mathcal{H}_B)^2 \big] \right)
    =
    \lambda_{\max} \left( (I - \eta \mathcal{H})^2 \right) = \\
    \begin{cases}
        1 - \eta \lambda_{\min}(\mathcal{H}) & \sim 1 + \epsilon_1
        \qquad \text{when }\lambda_{\max}\leq 2/\eta \text{ and } \lambda_{\min} < 0 \\
        |\eta \lambda_{\max} - 1| 
        & \sim 1 + \epsilon_2
        \qquad \text{when }\lambda_{\max}\geq 2/\eta \text{ and } \lambda_{\min} < 2-\eta \lambda_{\max} \leq 0.\\
    \end{cases}
\end{multline*}
Thus we can think of plotting the quantity
\begin{equation}
\label{eq:wu_equivalent}
    \lambda_{\max} \left( - 2\eta \mathcal{H} + \eta^2 \E_B\big[ \mathcal{H}(L_B)^2 \big] \right) \lessgtr 0.
\end{equation}

% Figure
\begin{figure}[t]
  \centering

  % Row 1
  \begin{subfigure}[t]{0.48\linewidth}
    \centering
    \includegraphics[width=\linewidth]{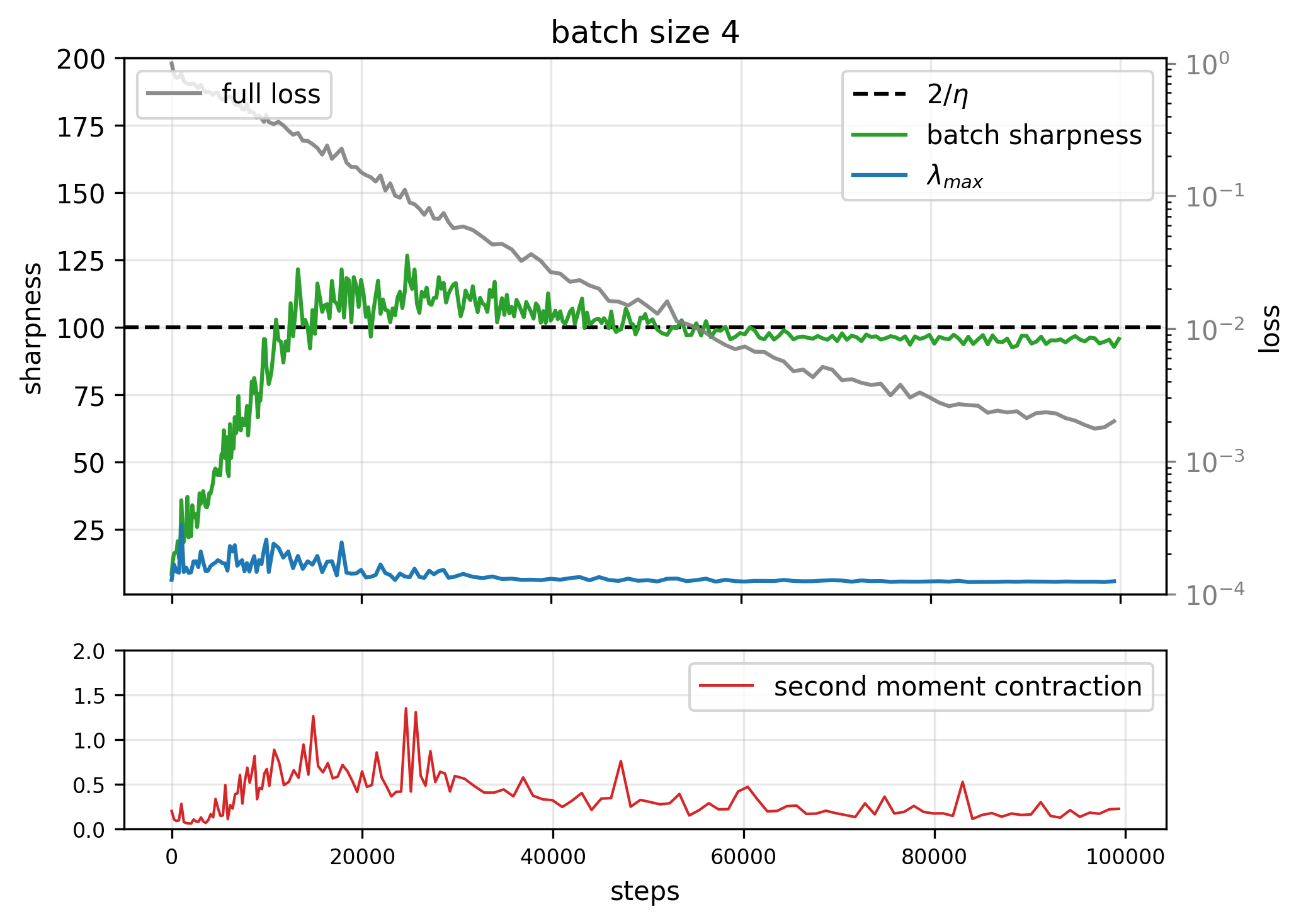}
    % \caption{$X=4$}
    \label{fig:linstab-4}
  \end{subfigure}\hfill
  \begin{subfigure}[t]{0.48\linewidth}
    \centering
    \includegraphics[width=\linewidth]{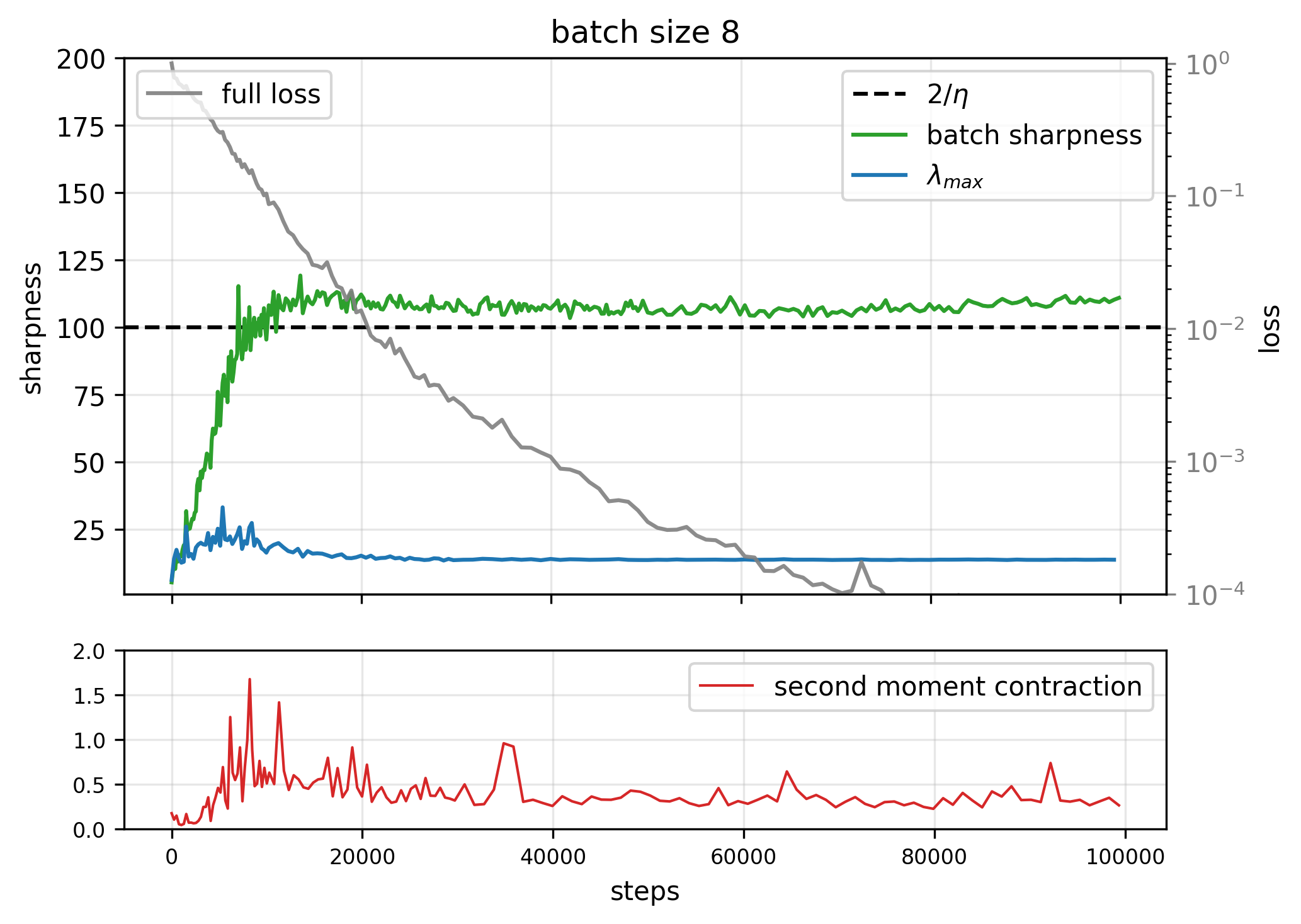}
    % \caption{$X=8$}
    \label{fig:linstab-8}
  \end{subfigure}

  \vspace{0.6em}

  % Row 2
  \begin{subfigure}[t]{0.48\linewidth}
    \centering
    \includegraphics[width=\linewidth]{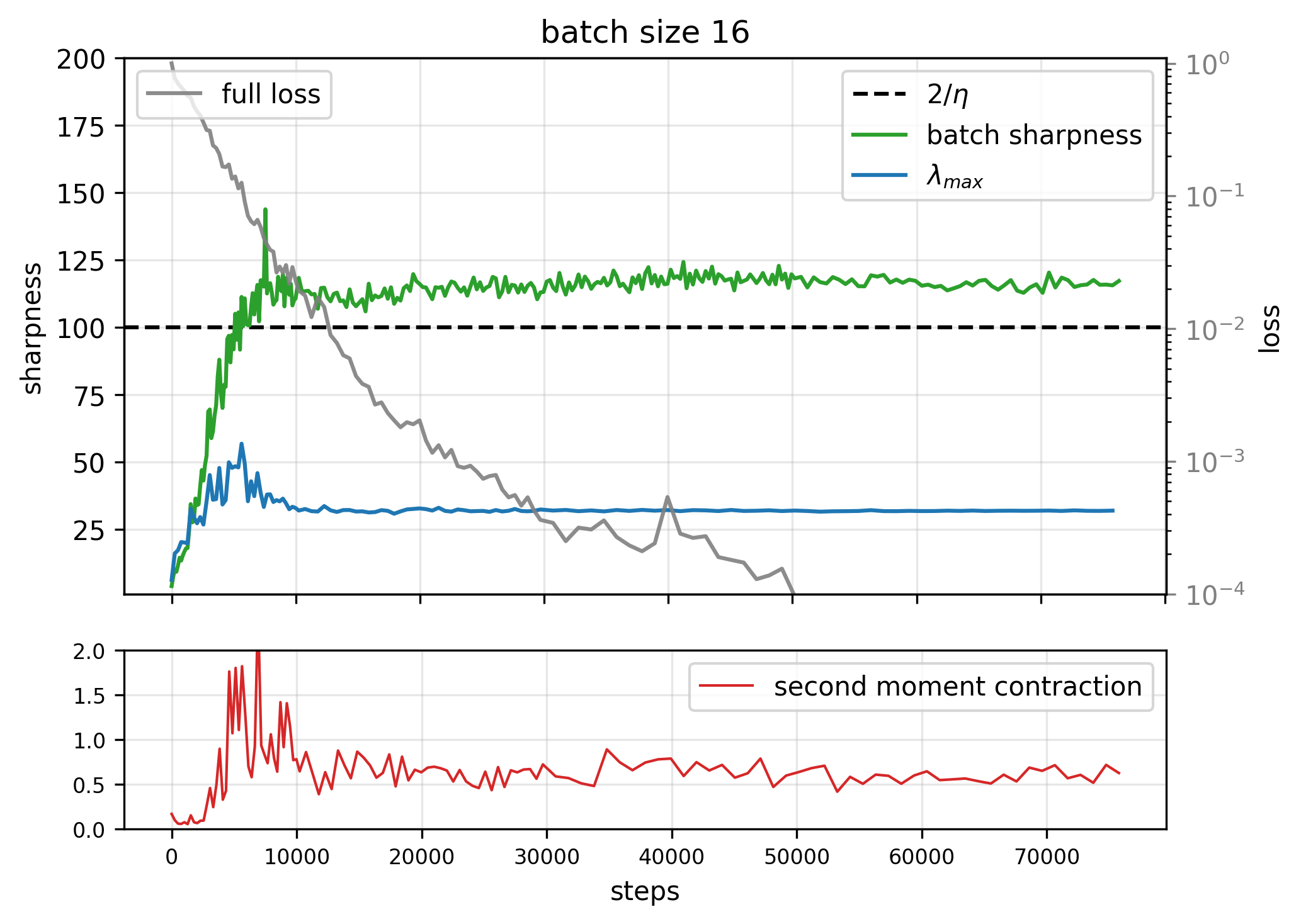}
    % \caption{$X=16$}
    \label{fig:linstab-16}
  \end{subfigure}\hfill
  \begin{subfigure}[t]{0.48\linewidth}
    \centering
    \includegraphics[width=\linewidth]{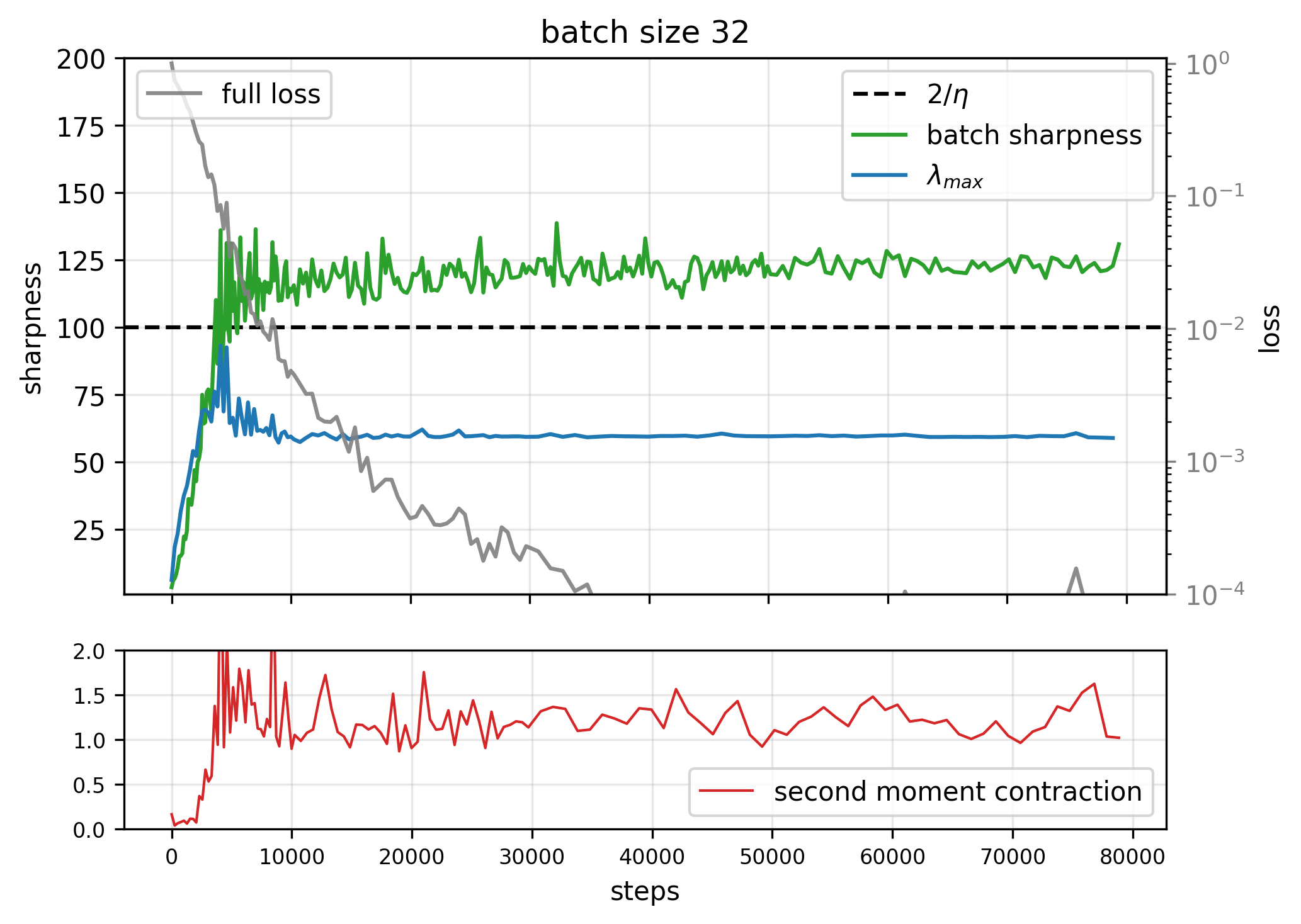}
    % \caption{$X=32$}
    \label{fig:linstab-32}
  \end{subfigure}

  \vspace{0.6em}

  % Row 3 (center the lone panel)
  \hfill
  \begin{subfigure}[t]{0.48\linewidth}
    \centering
    \includegraphics[width=\linewidth]{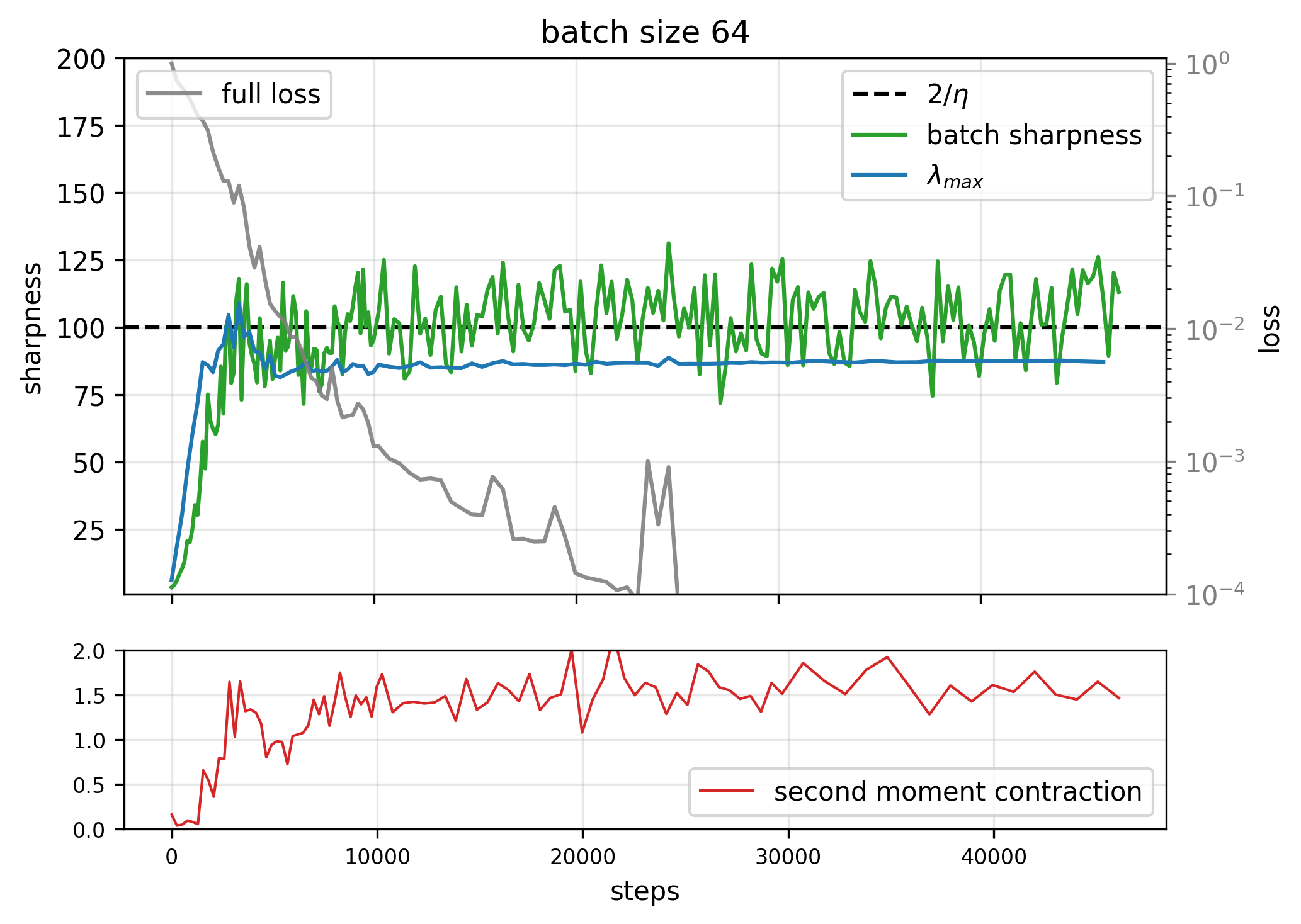}
    % \caption{$X=64$}
    \label{fig:linstab-64}
  \end{subfigure}
  \hfill
  \hfill

  \caption{Tracking the condition of equation \eqref{eq:wu_equivalent} (the one without the identity), \textit{second moment contraction}, for different batch sizes. MLP, CIFAR-8k, $\eta=0.02$. Note how the stabilization levels is significantly above the threshold, and inconsistent across batch sizes.}
  \label{fig:wu2}
\end{figure}

As illustrated in Figure \ref{fig:wu2}, the condition \eqref{eq:wu_equivalent} is initially slightly above zero (due to negative eigenvalues). Then it undergoes the aforementioned phase transition and starts growing---although the exact moment of that transition is unclear. After that it seems to stabilize around the time that $\lmax$ stabilizes, and \textit{Batch Sharpness} stabilizes around $2/\eta$.

\paragraph{Not a stability measure empirically.}
Empirically, the criterion \eqref{eq:wu_equivalent} behaves in a way that precludes it from being a quantity that governs stability in a \textsc{EoS}-like fashion, as evidenced by the experiments in Figure \ref{fig:wu2}:
\begin{itemize}
    \item \textbf{The bound is not tight.} In particular, if \textit{empirically} the \textit{second moment contraction} was governing stability of SGD training, the condition of equation \eqref{eq:wu_et_al_appendix} would only be violated by a small margin, just like is the case with that condition in the full-batch GD---equivalently, as it is the case with $\lmax$ being just slightly above $2/\eta$ during GD training. Instead, \textit{second moment contraction} hovers at $2$ (equivalently, the quantity of Equation \eqref{eq:wu_equivalent} hovering around $1$).
    \item \textbf{No up and down oscillations.} The higher-order-term driven \textsc{EoS} stabilization mechanism of \citet{damian_self-stabilization_2023} and the dynamics in \citet{cohen_understanding_2024} prescribe a notion of stability to go up and down around the stability threshold. On top of the above point of being significantly above the threshold level, \textit{second moment contraction} does not oscillate in a way prescribed by a stabilization that's based on higher order terms.
    \item \textbf{Inconsistent level of stabilization.} Finally, a notion that would govern SGD dynamics would present a consistent level of stabilization, independent of hyperparameters. This does not happen in the case of \textit{second moment contraction}. 
\end{itemize}
% - way beyond the threshold if the bound was tight
% - unstable stabilization level dependent on batch size
% - no up and down motion
% We believe from our plots that this is not the measure of stability
% inducing the \textsc{EoSS} as it stabilizes differently (always bigger than 1) for different seeds and different
% batch sizes (Figure ??).

\subsection{Implications}

\paragraph{Alignment matters.} The reason why the condition of \citet{wu_how_2018} (equation \eqref{eq:wu_et_al_appendix}) is only a sufficient one, while the condition of \citet{ma_linear_2021} (equation \eqref{eq:ma-ying}) is necessary and sufficient, is that the mini-batch Hessians are not commuting/not simultaneously diagonalizable. In particular, they would have been simultaneously diagonalizable if either the model gradients forming their Gauss-Newton approximation were either all the same or all orthogonal, which is hypothetically possible as we have $N \ll d$. Now, in Appendix \ref{appendix:alignment} it was show that neither is the case---they are misaligned, but not completely orthogonal. Now, the condition \eqref{eq:wu_et_al_appendix} not being a governing quantity of \textsc{EoSS}, and not being tight as an upper bound on instability condition, is evidence that this not-complete misalignment has non-trivial effects. That is, unlike in deterministic full-batch gradient descent settings, instability in SGD is dependent on the alignment between notions of curvatures. Therefore, a true notion of stability has to involve a notion of alignment, not only the magnitude of curvature---and, correspondingly, \textit{Batch Sharpness} does consider the alignment, as opposed to, for example, $\lmax$ or $\lmax^b$.

\paragraph{Importance of instability (not "stability").}
We define \textsc{EoSS} as being at the edge of instability. This quantity of equation \eqref{eq:wu_et_al_appendix} smaller than $1$ is \textit{only} a sufficient condition for stability. 
% \citet{ma_linear_2021} and \citet{mulayoff_exact_2024} show that the reverse, a similar quantity that if bigger than $1$ induces instability is the spectral radius of a 4-order tensor acting on the variance of the iterates, which we did not figure how to compute in $\mathcal{O}_d(d)$ computations\footnote{This is an interesting open problem.}. 
The fact that breaking Eq.\eqref{eq:wu_et_al_appendix} is not enough for assessing the behavior of SGD is a further proof that what matters is the instability, not the stability. What matters is an inequality that implies divergence if broken, not that implies convergence if satisfied.

\clearpage
\section{The Hessian and Gauss-Newton Approximation Overlap}
\label{appendix:gauss_newton_approx}
% General setup

In the theoretical analysis of stability of SGD dynamics it is assumed that the loss Hessian can be well-approximated by its Gauss-Newton approximation---in particular, it is often assumed we are at the minima, where there is an equality between the two. 
Concretely, having $C$ classes, we have:
\[
L(\theta) \;=\; \frac{1}{N}\sum_{i=1}^{N} \ell\big(z_i(\theta),\, y_i\big),
\qquad
z_i(\theta) \;=\; f_\theta(x_i)\in\mathbb{R}^{C}.
\]
\[
J_i \;:=\; \frac{\partial z_i(\theta)}{\partial \theta}\in\mathbb{R}^{C\times d},\quad
g_{z,i} \;:=\; \nabla_{z}\,\ell\big(z_i(\theta),y_i\big)\in\mathbb{R}^{C},\quad
H_{z,i} \;:=\; \nabla^{2}_{z}\,\ell\big(z_i(\theta),y_i\big)\in\mathbb{R}^{C\times C},
\]
and for \(j=1,\dots,C\), let \(\nabla_\theta^{2} f_j(x_i)\in\mathbb{R}^{d\times d}\) denote the Hessian (w.r.t.\ \(\theta\)) of the \(j\)-th output component. With this notation, we have:
\[
\nabla_\theta L(\theta) \;=\; \frac{1}{N} \sum_{i=1}^{N} J_i^\top g_{z,i}.
\]
and
% \begin{align*}
\begin{gather*}
\nabla_\theta^{2}{L}(\theta)
= \frac{1}{N} \sum_{i=1}^{N} \Big( J_i^\top H_{z,i}\, J_i \;+\; \sum_{j=1}^{C} \big[g_{z,i}\big]_j\, \nabla_\theta^{2} f_j(x_i) \Big) \\
= \underbrace{\frac{1}{N}\sum_{i=1}^{N} J_i^\top H_{z,i}\, J_i}_{\text{Gauss-Newton approx}} \;+\;
\underbrace{\frac{1}{N}\sum_{i=1}^{N}\sum_{j=1}^{C} \big[g_{z,i}\big]_j\, \nabla_\theta^{2} f_j(x_i)}_{\text{remainder / model-curvature term}}.
\end{gather*}
% Specialization: multi-output MSE
For MSE, this simplifies to:
\begin{align*}
\nabla_\theta^{2}\mathcal{L}(\theta)
&= \underbrace{\frac{1}{N}\sum_{i=1}^{N} J_i^\top J_i}_{\text{Gauss-Newton for MSE}}
\;+\; \underbrace{\frac{1}{N}\sum_{i=1}^{N}\sum_{j=1}^{C} r_{i,j}\, \nabla_\theta^{2} f_j(x_i)}_{\text{remainder / model-curvature term}}.
\end{align*}
in particular, if we are at minima, the residuals are 0, so the second terms completely disappears.

Yet, the dynamics enter the \textsc{EoSS} regime away from minima (which is showcased by the continued decrease of the loss), where the Gauss-Newton approximation might not hold. In this Appendix we illustrate empirically that the Gauss-Newton approximation is close to the actual loss Hessian---at least from the perspective of \textsc{EoSS} and SGD stability. In particular, we compare \textit{Batch Sharpness}, $\lmax$ and $\lambda^b_{\max}$ when computed on the actual loss Hessian and on its Gauss-Newton approximations. \cref{fig:gn_mlp,fig:gn_cnn,fig:gn-approx} illustrate that the computed quantities coincide throughout the \textit{whole} training.
% see Figure \ref{fig:gn-approx}. 
% In particular, we look at the two quantities: $\lmax$ and $\lmax^b$ and check their values when computed using the actual Hessians (full-batch and mini-batch, respectively), and the Gauss-Newton approximation. 
Notice that due to the fact that the Gauss-Newton approximation and the NTK have the same spectrum, the $\lmax$ and $\lmax^b$ results also apply to the highest eigenvalues of the full-batch and mini-batch NTKs ($\frac{1}{B}J_B J_B^\top$). Note that this agrees with the findings in literature, see e.g. \citet{papyan_full_2019}, but it was not clear whether the Gauss-Newton approximation holds before convergence. 
Our experiments demonstrate that it does the throughout the whole training, in particular, during \textsc{EoSS}.

% In your preamble:
% \usepackage{graphicx}
% \usepackage{subcaption}

% --------- MLP FIGURE ---------
\begin{figure}[t]
    \centering
    % Hessian (MLP)
    \begin{subfigure}[b]{0.48\textwidth}
        \centering
        \includegraphics[width=\linewidth]{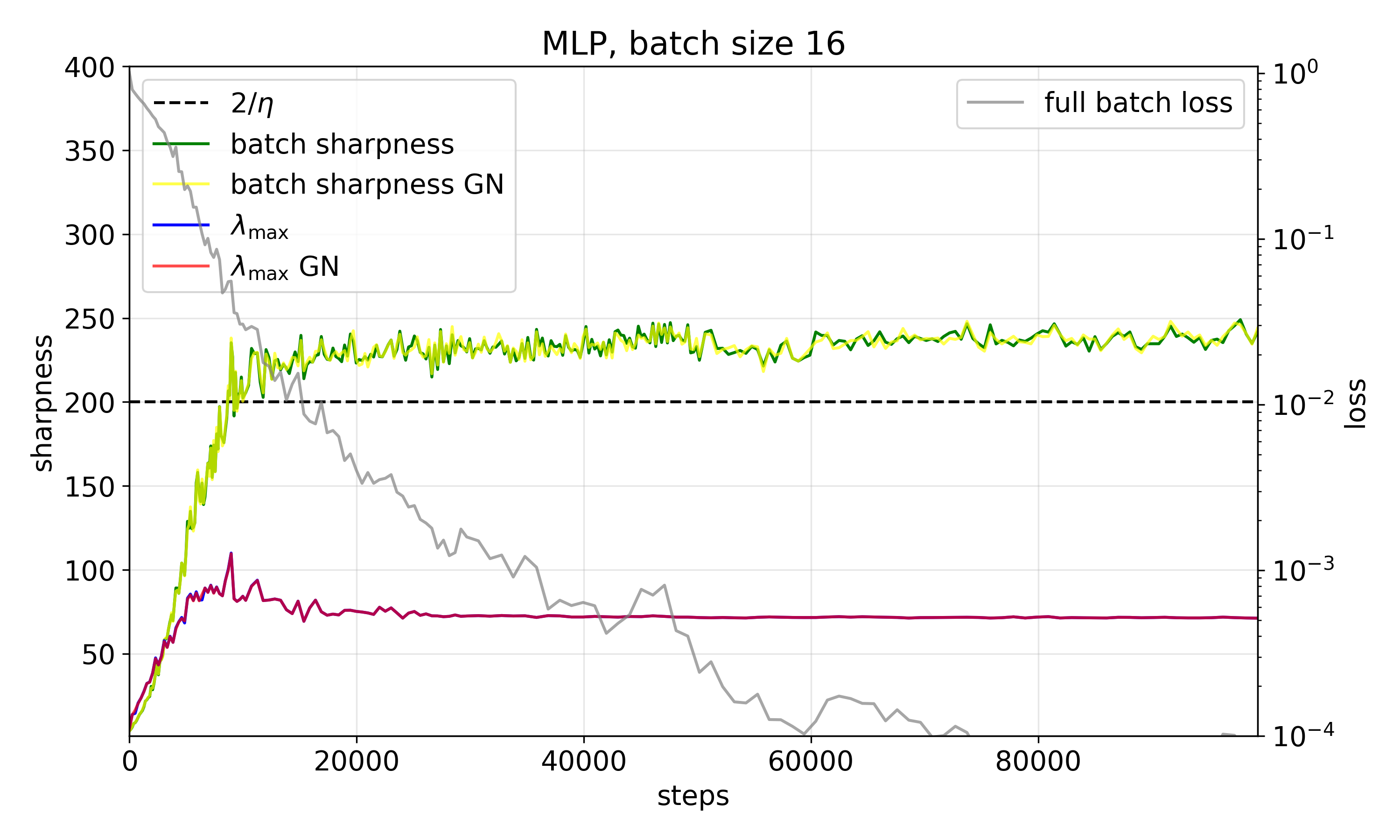}
        % \caption{Exact Hessian}
        \label{fig:mlp_hessian}
    \end{subfigure}
    \hfill
    % Gauss–Newton (MLP)
    \begin{subfigure}[b]{0.48\textwidth}
        \centering
        \includegraphics[width=\linewidth]{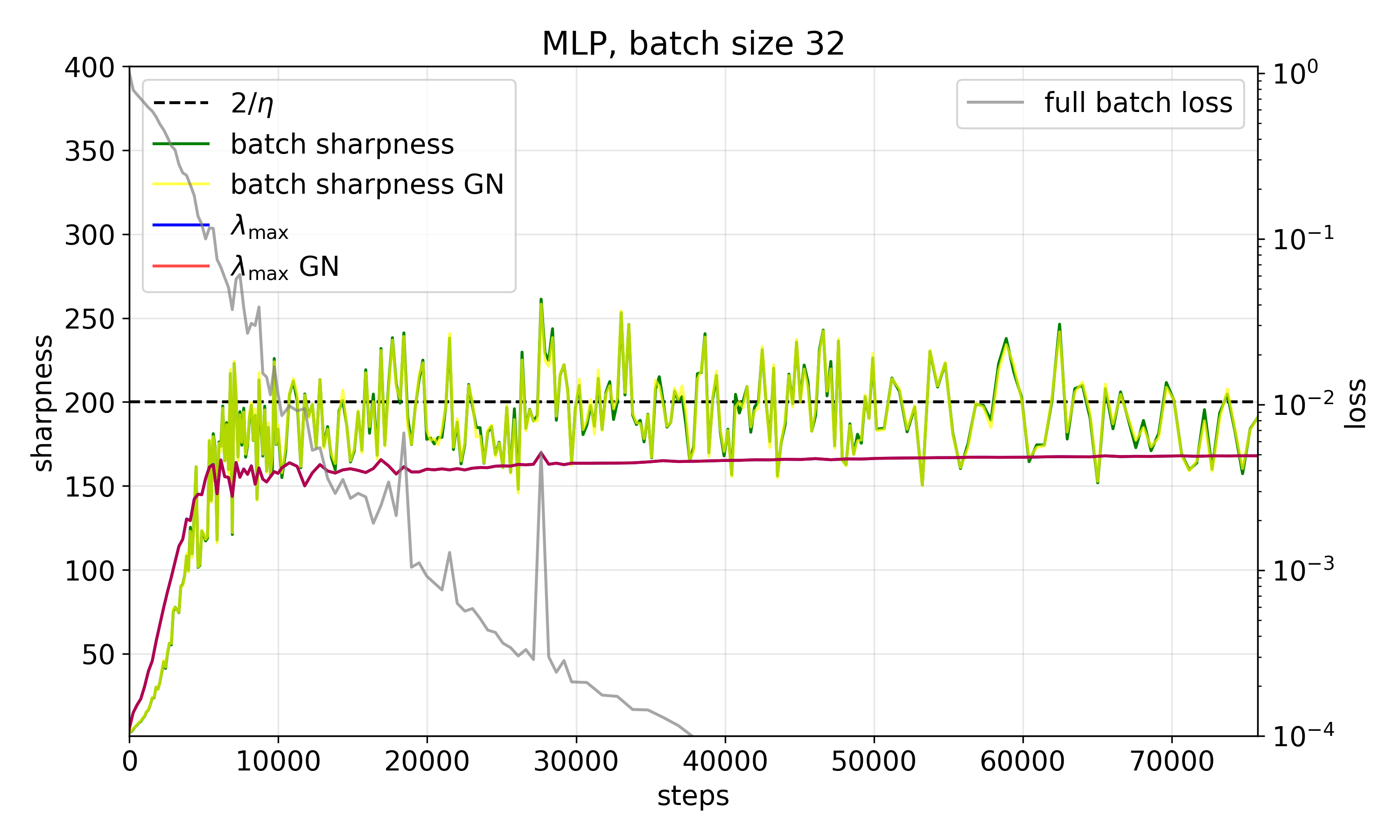}
        % \caption{Gauss--Newton approximation}
        \label{fig:mlp_gn}
    \end{subfigure}
    \caption{
    \textbf{Gauss--Newton approximation (MLP).} 
    Comparison of \textit{Batch Sharpness} and $\lmax$ computed with the true loss Hessian and its Gauss--Newton approximation, showing the validity of approximation. Both of the lines overlap almost perfectly}
    % MLP: \textit{Batch Sharpness} and largest eigenvalue $\lambda_{\max}$ computed using the exact Hessian (left) and its Gauss--Newton approximation (right). The two curvature estimates yield essentially identical \textit{Batch Sharpness} and spectral statistics, indicating that the Gauss--Newton approximation closely matches the true Hessian.}
    \label{fig:gn_mlp}
\end{figure}

% --------- CNN FIGURE ---------
\begin{figure}[t]
    \centering
    % Hessian (CNN)
    \begin{subfigure}[b]{0.48\textwidth}
        \centering
        \includegraphics[width=\linewidth]{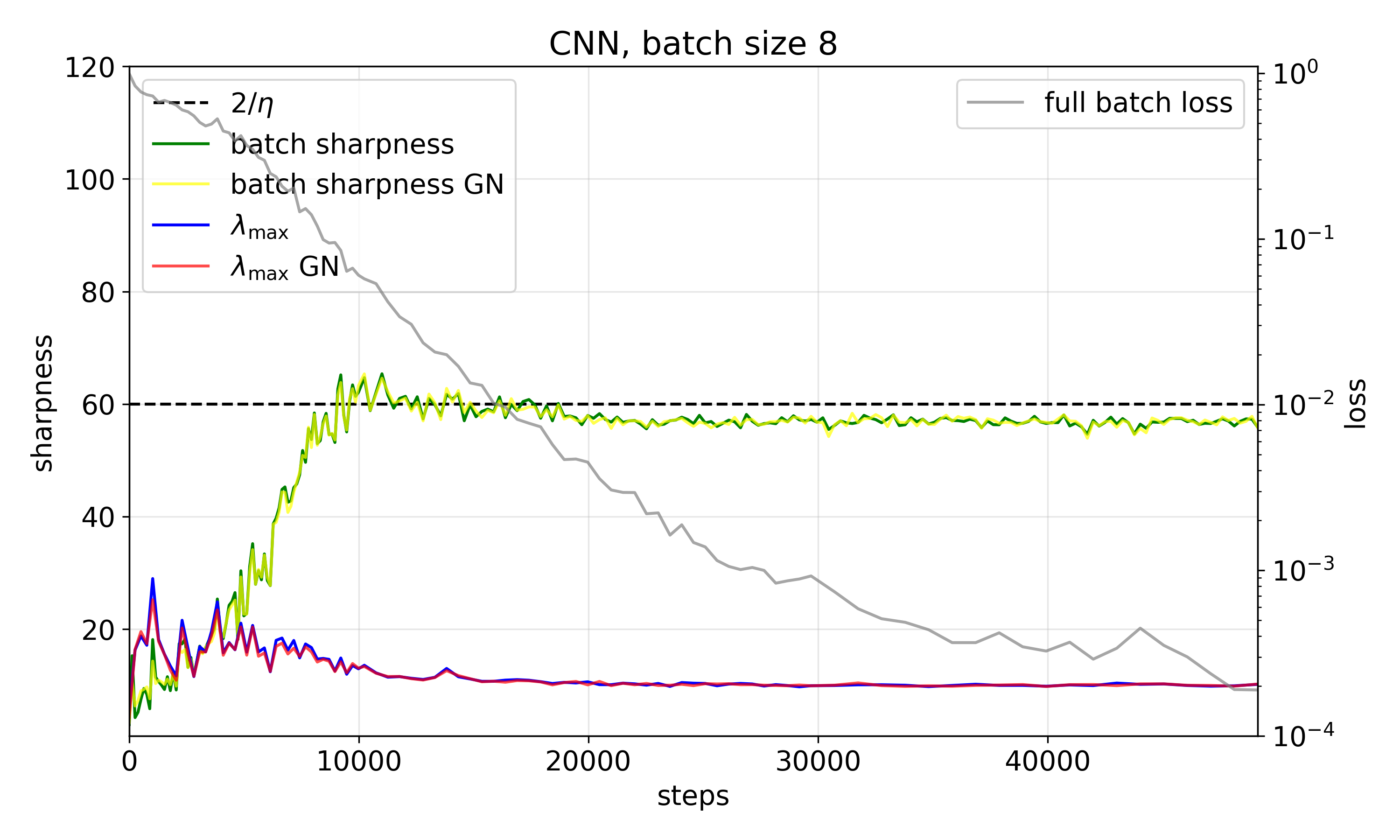}
        % \caption{Exact Hessian}
        % \label{fig:cnn_hessian}
    \end{subfigure}
    \hfill
    % Gauss–Newton (CNN)
    \begin{subfigure}[b]{0.48\textwidth}
        \centering
        \includegraphics[width=\linewidth]{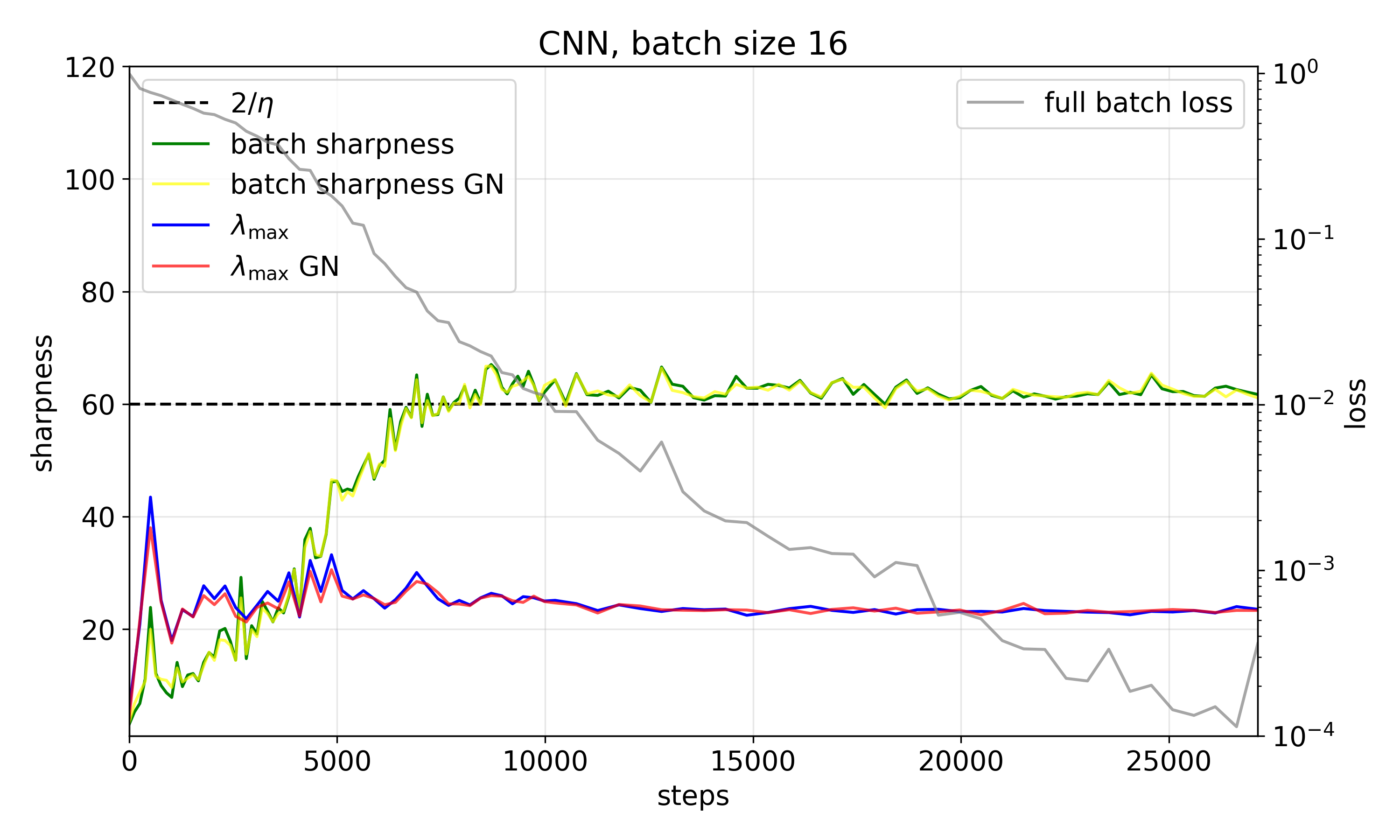}
        % \caption{Gauss--Newton approximation}
        % \label{fig:cnn_gn}
    \end{subfigure}
    \caption{
    \textbf{Gauss--Newton approximation (CNN).} 
    Comparison of \textit{Batch Sharpness} and $\lmax$ computed with the true loss Hessian and its Gauss--Newton approximation, showing the validity of approximation. Both of the lines overlap almost perfectly}
    \label{fig:gn_cnn}
\end{figure}

\begin{figure}[htbp]
  \centering

  % Row 1
  \begin{subfigure}[t]{0.48\textwidth}
    \centering
    \includegraphics[width=\linewidth]{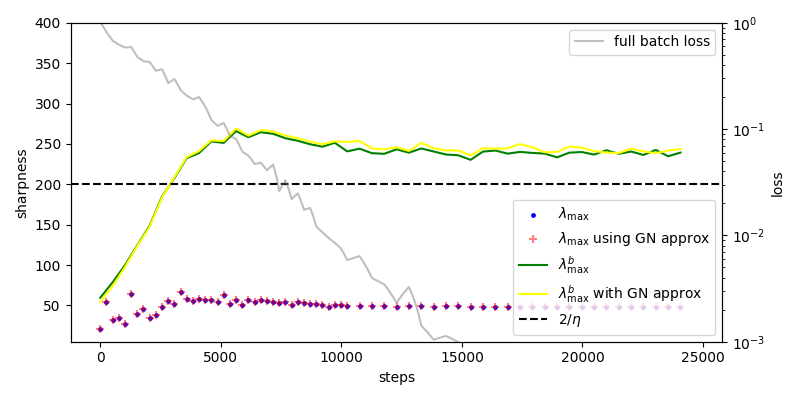}
    \subcaption{batch size 8}
    \label{fig:gn-8}
  \end{subfigure}\hfill
  \begin{subfigure}[t]{0.48\textwidth}
    \centering
    \includegraphics[width=\linewidth]{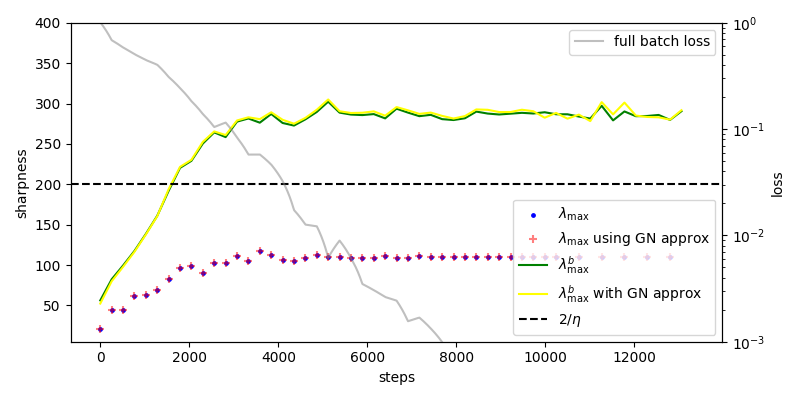}
    \subcaption{batch size 16}
    \label{fig:gn-16}
  \end{subfigure}

  \smallskip

  % Row 2
  \begin{subfigure}[t]{0.48\textwidth}
    \centering
    \includegraphics[width=\linewidth]{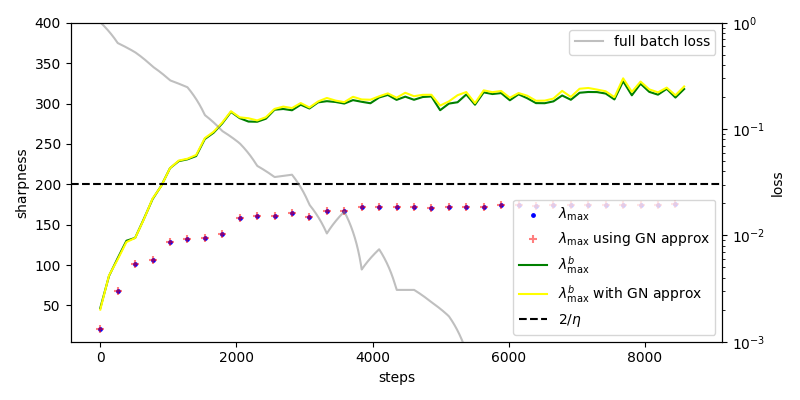}
    \subcaption{batch size 32}
    \label{fig:gn-32}
  \end{subfigure}\hfill
  \begin{subfigure}[t]{0.48\textwidth}
    \centering
    \includegraphics[width=\linewidth]{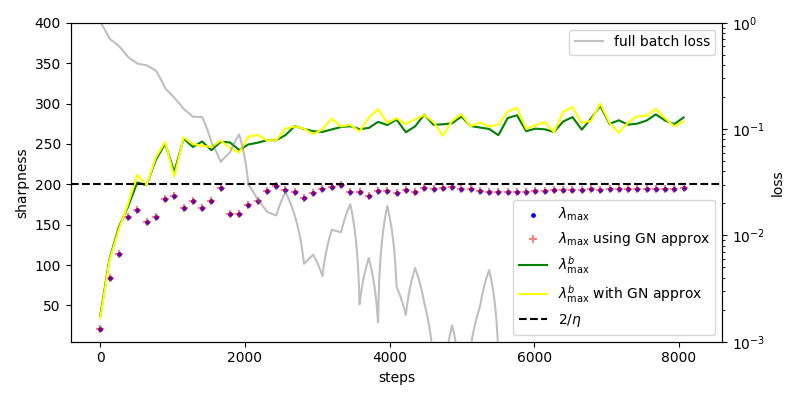}
    \subcaption{batch size 64}
    \label{fig:gn-64}
  \end{subfigure}

  \caption{\textbf{Gauss-Newton approximation.} Showcasing the correctness of using the Gauss-Newton approximation to the loss Hessian by comparing the $\lmax$ and $\lmax^b$ computed with the true loss Hessian and its GN-approximation across different batch sizes.}
  \label{fig:gn-approx}
\end{figure}

% \begin{figure}[ht!]
%     \centering
%     \includegraphics[width=0.5\linewidth]{img/FIM_8.png}
%     \includegraphics[width=0.5\linewidth]{img/FIM_16.png}
%     \includegraphics[width=0.5\linewidth]{img/FIM_32.png}
%     \includegraphics[width=0.5\linewidth]{img/FIM_64.png}
%     \caption{Ranging different batch sizes, the $\lambda_{\max}^b$ corresponds to the largest eigenvalue of the averaged mini-batch NTK and $\frac{1}{b} J_B^\top J_B$, which corresponds with the FIM in vision classification tasks.}
%     \label{fig:FIM}
% \end{figure}

\clearpage

% \section{The Cosine Similarity Between Eigenvectors}
% \label{section:cos_sim}

\section{Other Quantities of SGD Dynamics}
\label{appendix:other_quantities}

In this Appendix we explore other quantities that describe the SGD dynamics, and discuss their role from the point of view of governing stability. In particular, we are covering the following quantities: trace of full-batch loss Hessian, $\lmax^b$ (average max eigenvalues of mini-batch Hessians), and a modified version of \textit{Batch Sharpness}.

\subsection{Trace of the Loss Hessian}
\label{appendix:trace}
\begin{figure}[t]
  \centering
  \begin{subfigure}{0.48\linewidth}
    \includegraphics[width=\linewidth]{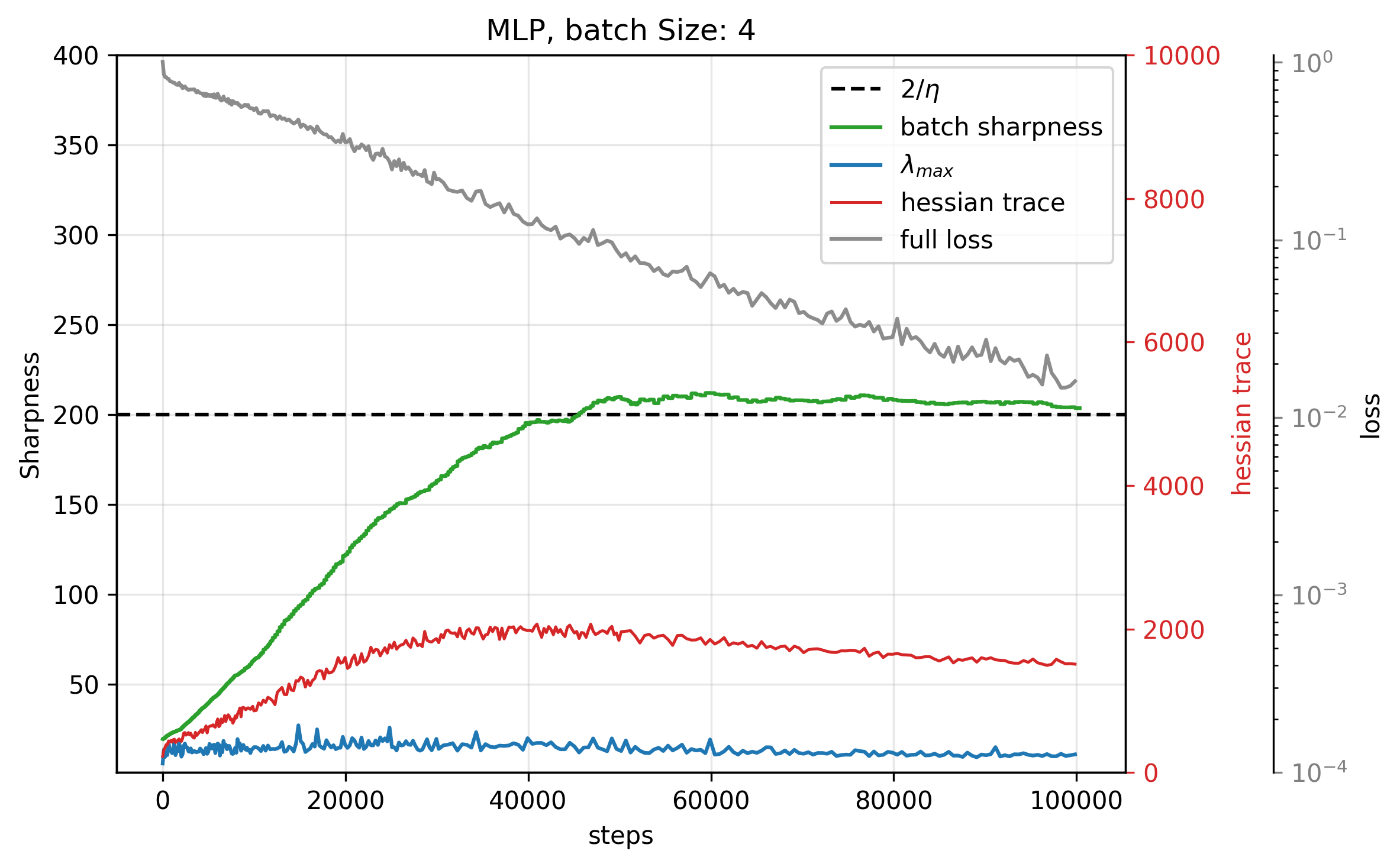}
    \caption{batch size 4}
  \end{subfigure}\hfill
  \begin{subfigure}{0.48\linewidth}
    \includegraphics[width=\linewidth]{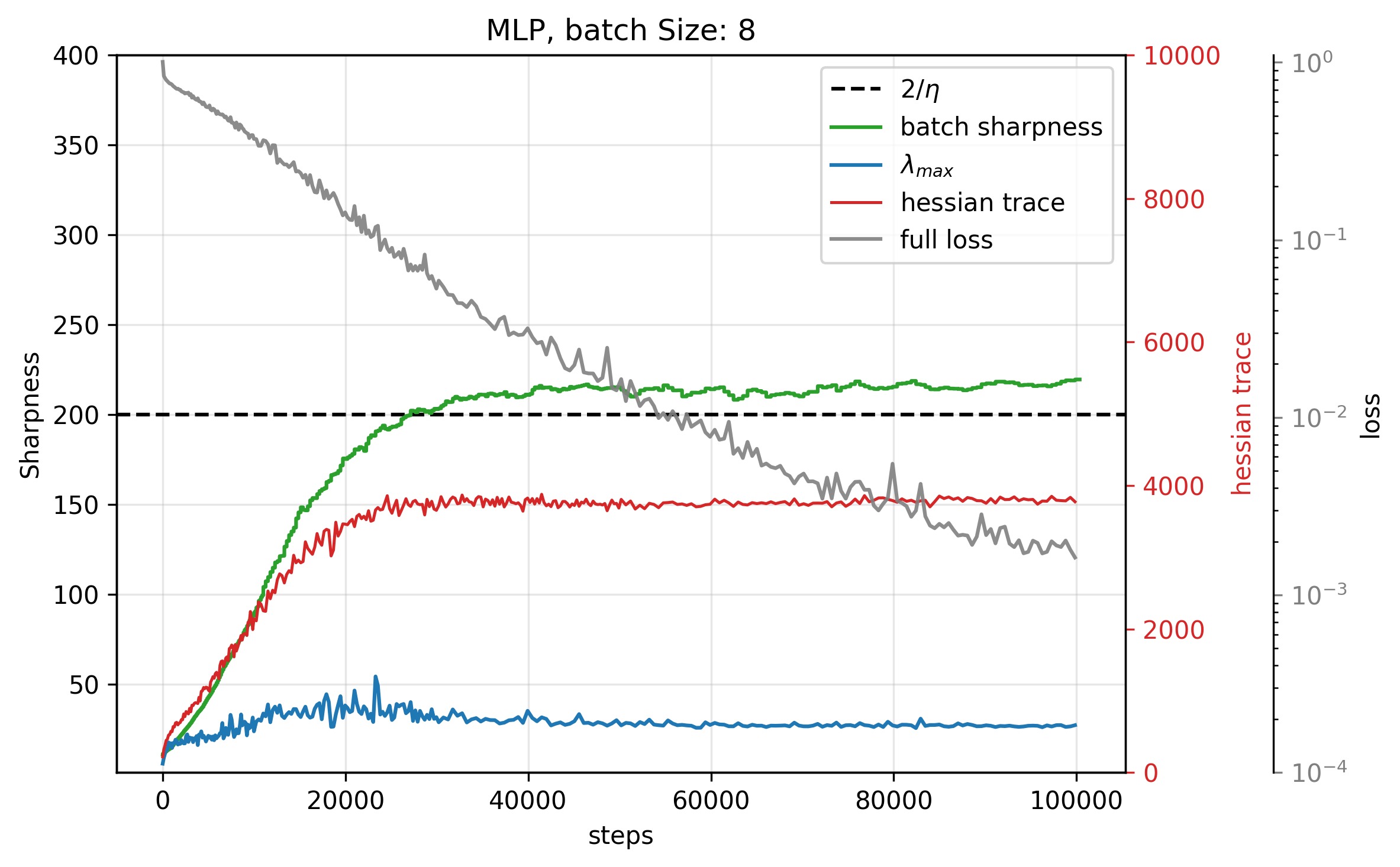}
    \caption{batch size 8}
  \end{subfigure}

  \vspace{0.6em}

  \begin{subfigure}{0.48\linewidth}
    \includegraphics[width=\linewidth]{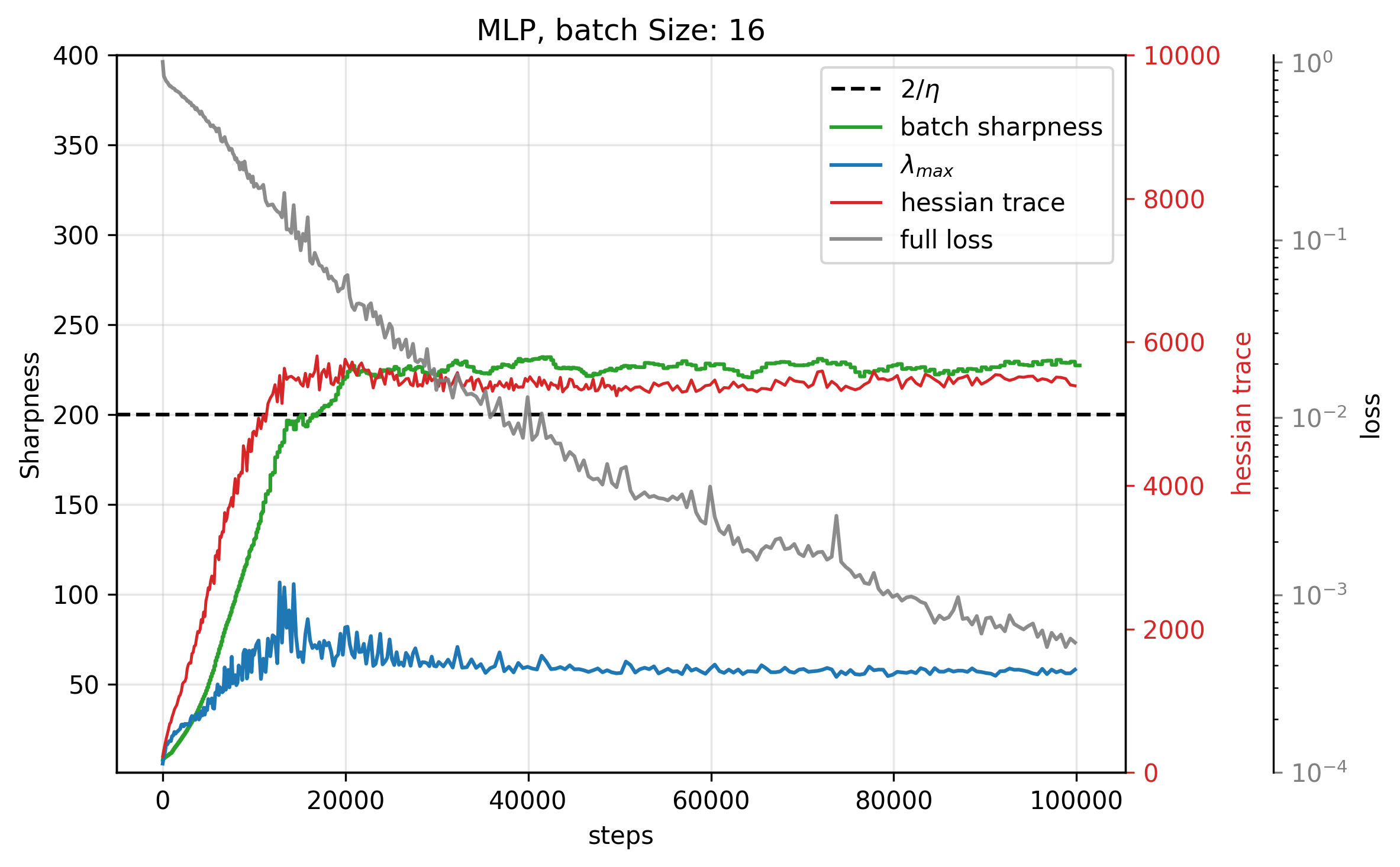}
    \caption{batch size 16}
  \end{subfigure}\hfill
  \begin{subfigure}{0.48\linewidth}
    \includegraphics[width=\linewidth]{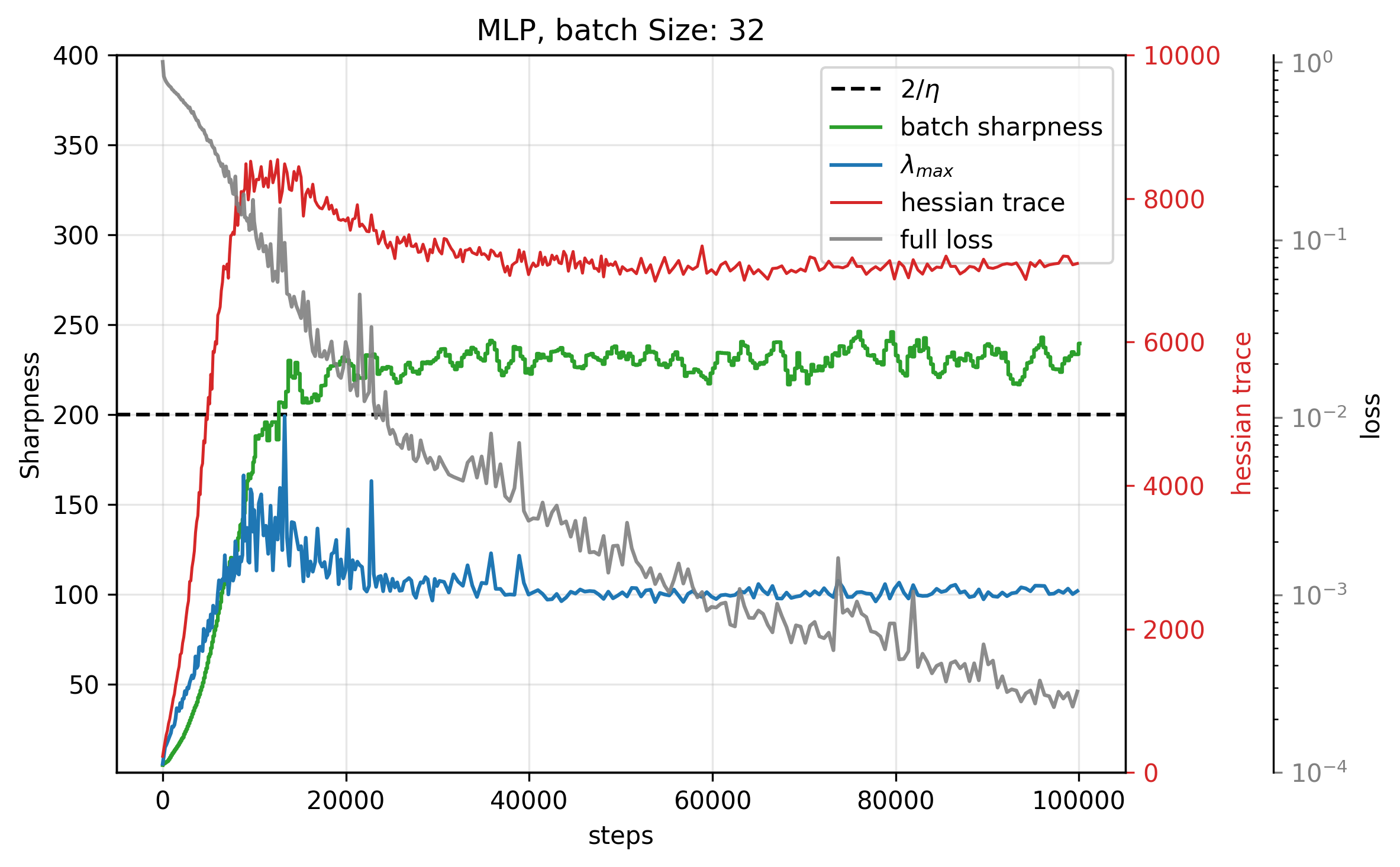}
    \caption{batch size 32}
  \end{subfigure}

  \caption{\textbf{Trace of the Hessian.} We plot the trace of the full-batch loss Hessian (red), together with the usual \emph{Batch Sharpness} (green) and $\lmax$ (blue). Notice that the scale of the trace of the Hessian is much bigger than the rest of the quantities, and it follows the axis on the right (in particular, has no particular relation to $2/\eta$. The plots showcase that trace behaves in a similar manner as $\lmax$---its level of stabilization is highly dependent on the batch size, it raises as long as \emph{Batch Sharpness} is rising, and it is stabilizes as \textit{Batch Sharpness} stabilizes. Here, we are doing experiments with MLP on CIFAR-10-8k and $\eta=0.01$}
  \label{fig:trace-mlp}
\end{figure}
A number of works \citet{ma_linear_2021,wu_implicit_2023,agarwala2024high} have linked the trace of the full-batch loss Hessian to implicit regularization by SGD. We plot in Figure \ref{fig:trace-mlp} and \ref{fig:trace-cnn}: $\lambda_{\max}$, \textit{Batch Sharpness}, and the trace of the Hessian along the training for a variety of models and batch sizes.
We observe here that trace of the Hessian behaves very similarly to the previously studied $\lmax$. In particular, it does not have a consistent stabilization level, and depends significantly on the batch size---with smaller batch sizes leading to lower stabilization level of the trace (aka flatter solutions). Also analogous to $\lmax$, it undergoes progressive sharpening, as long as \emph{Batch Sharpness} is under $2/\eta$. Analogously, the stabilization of \emph{Batch Sharpness} leads to stabilization of the trace. All of this showcases that trace of the Hessian is not the quantity that governs stability of the SGD dynamics. Yet, it might be a useful indicator of the end phase of progressive sharpening---in the potential situation when we have $\lmax$ stabilize, but other eigenvalues continue growing, as illustrated, for example, in \citet{cohen_understanding_2024}.

It is noteworthy that, in the context of MSE loss combined with piecewise-linear activation functions (e.g., ReLU), the trace of the full-batch loss Hessian coincides with the trace of its Gauss–Newton approximation. Furthermore, under MSE loss, the trace of the Gauss–Newton matrix is equal to the trace of the NTK. Consequently, in our setup (MLP with ReLU under MSE) evaluating the trace of the loss Hessian subsumes these cases.

\begin{figure}[t]
  \centering
  \begin{subfigure}{0.48\linewidth}
    \includegraphics[width=\linewidth]{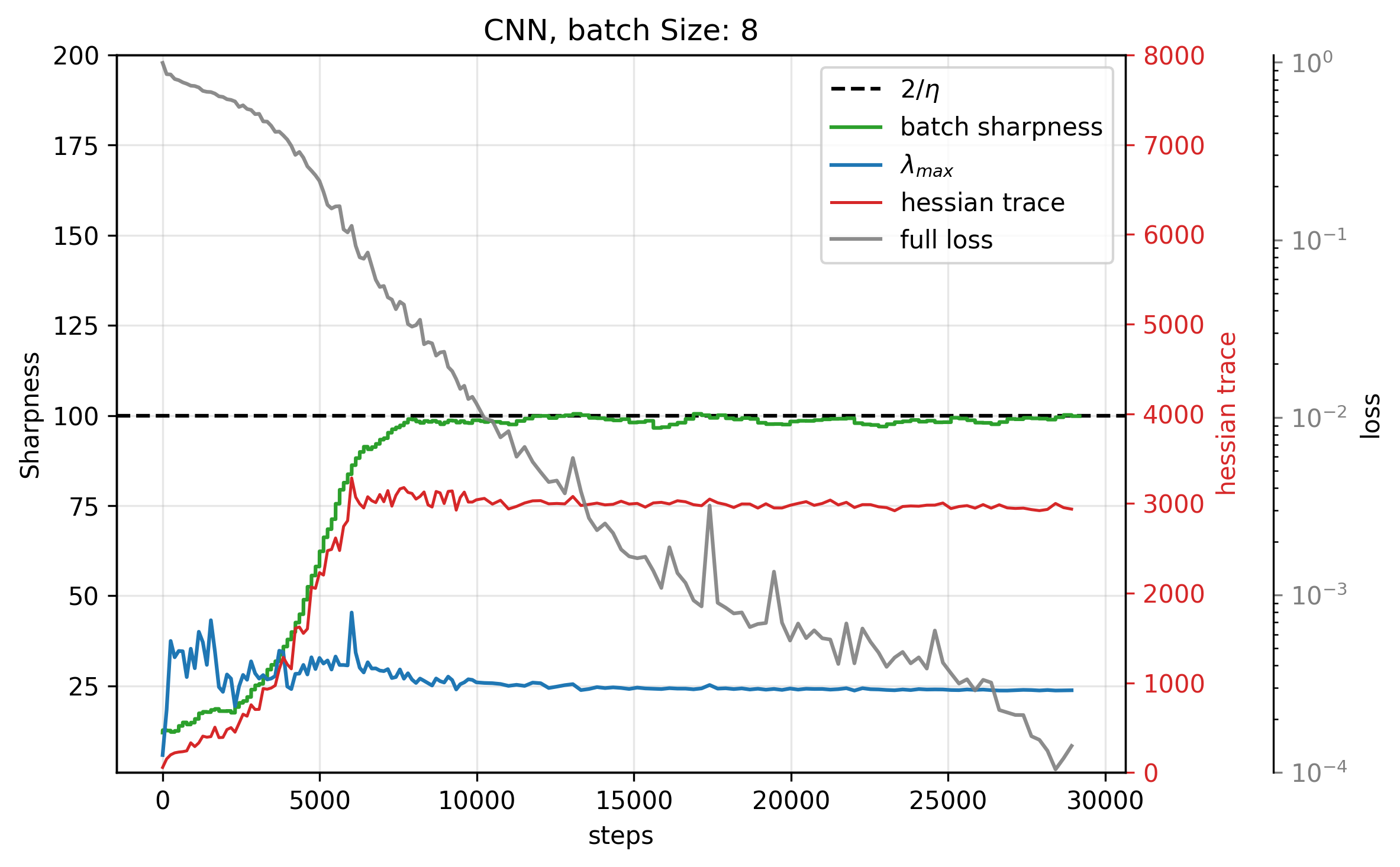}
    \caption{batch size 8}
  \end{subfigure}\hfill
  \begin{subfigure}{0.48\linewidth}
    \includegraphics[width=\linewidth]{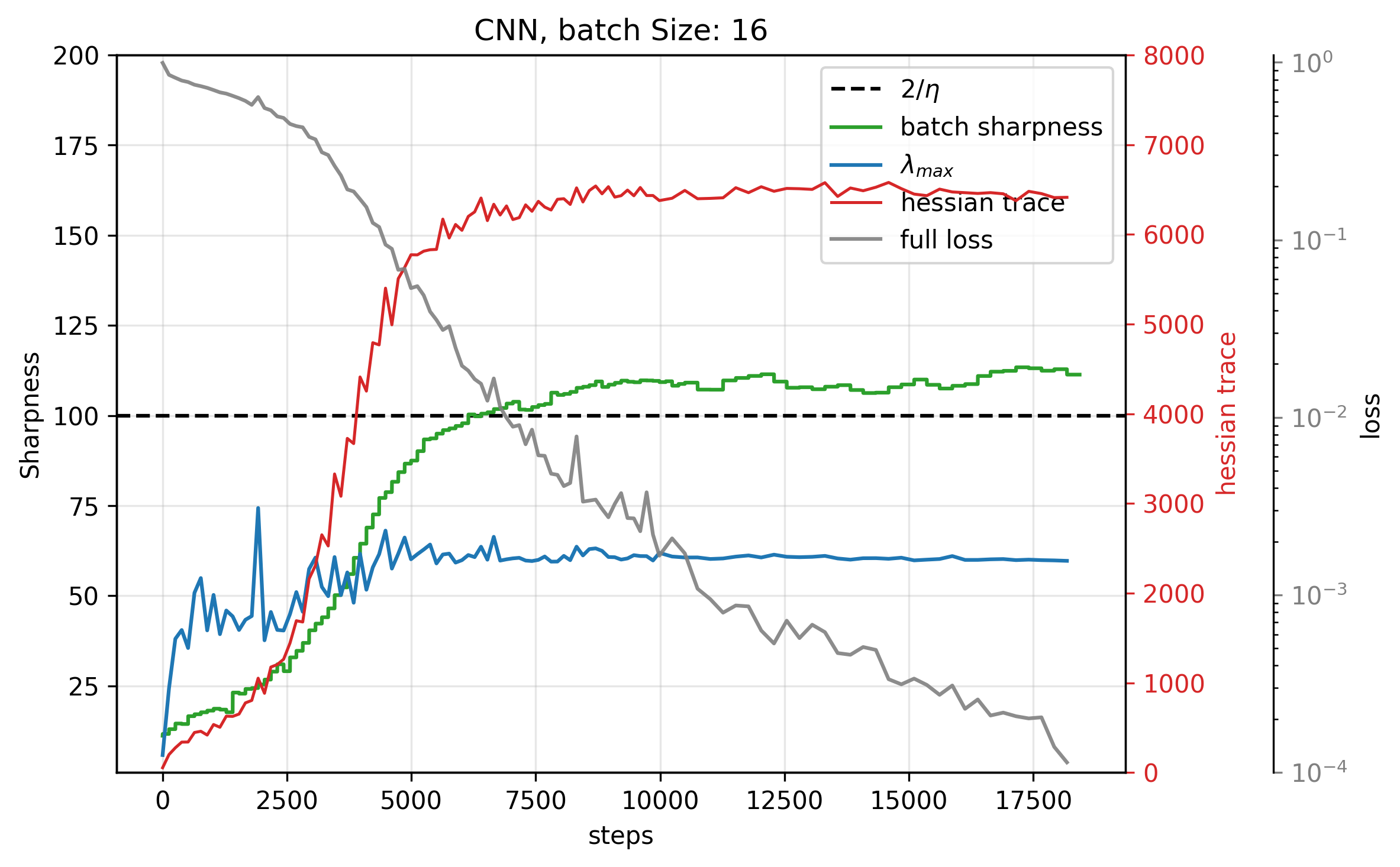}
    \caption{batch size 16}
  \end{subfigure}

  \caption{\textbf{Trace of the Hessian.} Similar to \ref{fig:trace-mlp}, but for CNN, and with $\eta=0.02$}
  \label{fig:trace-cnn}
\end{figure}

\subsection{$\lmax^b$: Expected Highest Eigenvalue of Mini-Batch Hessians}
\label{appendix:lambda_b_max}
In the early versions of this work we have looked at another promising quantity that we term $\lmax^b$:
\[
\lmax^b := \E_{B \sim \mathcal{P}_b}\Big[\lambda_{\max}(\mathcal{H}(L_\mathbf{B}))\Big].
\]
\begin{wrapfigure}{r}{0.5\columnwidth}  
    % \vspace{-0.8cm}% r = place on the right
    % \vspace{-6pt}                             % trim top white-space
    \centering
    \includegraphics[width=\linewidth]{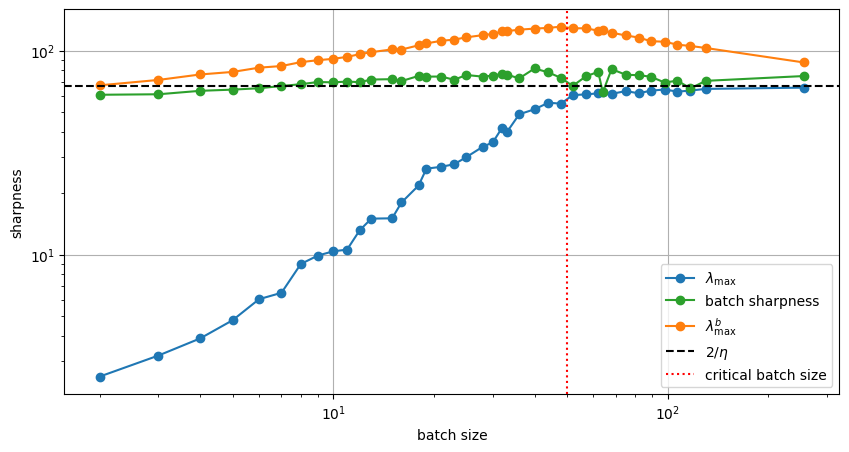}
    \vspace{-0.8cm}                             % tighten gap below graphic
    \caption{\textit{Final} stabilization levels of \emph{Batch Sharpness}, $\lmax$ and $\lmax^b$ \textit{vs} batch size. Only the stabilization level of \textit{Batch Sharpness} does not depend on batch size. Setting: MLP, CIFAR10-8k, $\eta=0.03$}
    \label{fig:bs_vs_lmax_vs_lmaxb}
    % \vspace{-1cm}                             % trim space before text resumes
\end{wrapfigure}
In particular, the significance of this quantity lies in its characterization of the worst-case sharpness of mini-batch loss landscapes. 
Yet, the reason why this quantity does not govern SGD dynamics arises from the very phenomenon distinguishing SGD dynamics from full-batch gradient descent---the misalignment of the mini-batch Hessians, see \ref{appendix:alignment}. 
Specifically, while individual mini-batch Hessians may exhibit considerable sharpness in their individual directions, these directions typically fail to align, preventing the emergence of a single dominant sharp direction. This scenario closely mirrors the behavior of the operator analyzed in Appendix \ref{appendix:linear_stochastic_stability}, illustrating why \textit{Batch Sharpness}, which dictates the stability of SGD dynamics, relies on both the size of the mini-batch Hessians together with their alignment with the step direction.

% Preamble:
% \usepackage{graphicx}
% \usepackage{subcaption}

\begin{figure}[htbp]
  \centering
  \begin{subfigure}[t]{0.31\textwidth}
    \centering
    \includegraphics[width=\linewidth]{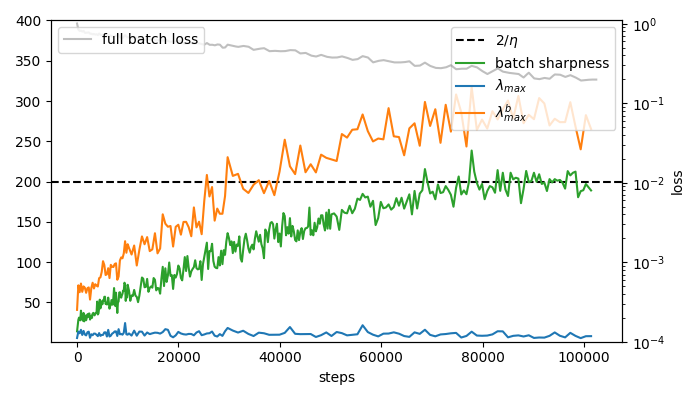}
    \subcaption{batch size 2}
    \label{fig:batch-2}
  \end{subfigure}\hfill
  \begin{subfigure}[t]{0.31\textwidth}
    \centering
    \includegraphics[width=\linewidth]{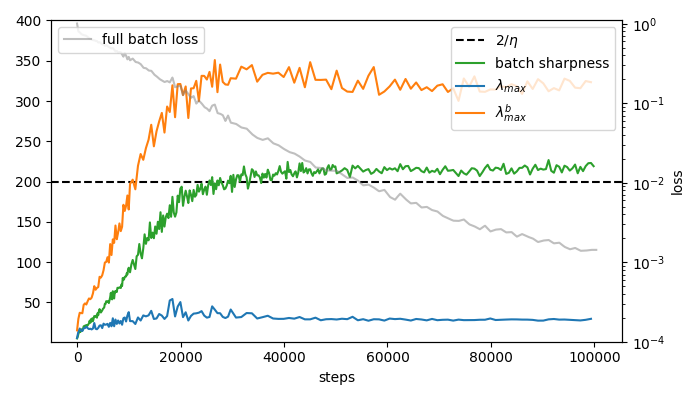}
    \subcaption{batch size 8}
    \label{fig:batch-8}
  \end{subfigure}\hfill
  \begin{subfigure}[t]{0.31\textwidth}
    \centering
    \includegraphics[width=\linewidth]{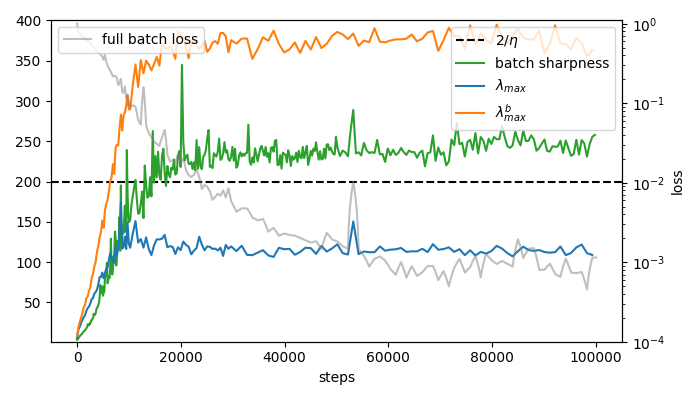}
    \subcaption{batch size 32}
    \label{fig:batch-32}
  \end{subfigure}
  \caption{\textbf{Behavior of $\lmax^b$:} $\lmax^b$ stabilizes higher than $2/\eta$, with its stabilization level dependent on the batch size. \textit{Batch Sharpness} and $\lmax$ also shown for comparison. Setting: MLP, $\eta=0.01$, CIFAR10-8k}
  \label{fig:lambda_b_max}
\end{figure}

Consequently, $\lmax^b$ stabilizes at a level higher than the threshold $2/\eta$ and \textit{Batch Sharpness}. Moreover, the precise stabilization level is sensitive to the chosen batch size, as showcased in Figure \ref{fig:lambda_b_max}, with the dependence of the level of stabilization on batch size shown in Figure \ref{fig:bs_vs_lmax_vs_lmaxb}. For additional experiments with illustration of behavior of $\lmax^b$ refer to Appendix \ref{appendix:further_lambda_b_max}. Now, this inconsistency of stabilization means that $\lmax^b$ does not govern the stability of SGD, and its stabilization is a by-product of \textsc{EoSS} and \emph{Batch Sharpness} stabilization.

In particular, we establish that:
\begin{enumerate}[label=(\roman*)]%[leftmargin=1em,itemsep=0.2em,parsep=0.1em]
    \item $\lambda_{\max}^b$ also stabilizes. 
    \item $\lambda_{\max}^b$ stabilizes at a level ranging between $2/\eta$ and $4/\eta$. The level is lower for very small and very large batch sizes, and higher for intermediate batch sizes.
    \item $\lambda_{\max}^b$ increases concurrently with \emph{Batch Sharpness} and stabilizes simultaneously, indicating insights into the nature of \emph{Batch Sharpness} growth and progressive sharpening.
    \item $\lambda_{\max}^b$ is by construction greater than both $\lambda_{\max}$ and \emph{Batch Sharpness}. The stabilization of \emph{Batch Sharpness} around $2/\eta$ for SGD and $\lambda_{\max}$ for GD ensures that $\lambda_{\max}^b$ stabilizes at or above $2/\eta$ in the \textsc{EoS/EoSS} regime.
\end{enumerate}
Concerning (iv), the inequality $\lambda_{\max}^b \geq$ \emph{Batch Sharpness} follows directly from the definition of \emph{Batch Sharpness} as an expectation of Rayleigh quotients. Furthermore, the inequality $\lambda_{\max}^b \geq \lambda_{\max}$ results from the following reasoning.
The largest singular value of the Hessian matrix derived from single data points is positive. This observation is crucial in establishing the following well-known property of matrix eigenvalues.
\begin{lemma}
\label{lemma:bigger}
    Let $m,b \in \N$ and consider $m$ matrices $M_1,M_2,\ldots,M_b \in \R^{m \times m}$ satisfying $\lambda_{\max} > |\lambda_{\min}|$. Then, the largest eigenvalue of their sum satisfies
\begin{equation}
    \lambda_{\max}\left( \sum_{i=1}^b M_i \right) 
    \, \ \leq \ \ 
    \sum_{i=1}^b \lambda_{\max}\left( M_i \right)
\end{equation}
    with equality only if all $M_i$ are identical. 
\end{lemma}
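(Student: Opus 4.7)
The plan is to prove the inequality by the standard variational (Rayleigh quotient) argument, which is the cleanest route when the $M_i$ are symmetric (as is the case for the mini-batch Hessians in the paper). Concretely, I would first recall that for any symmetric matrix $A$,
\[
\lambda_{\max}(A) \;=\; \max_{\|v\|=1} v^{\!\top} A\, v,
\]
and that the sum of symmetric matrices is symmetric, so the same characterisation applies to $\sum_{i=1}^b M_i$.

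The main inequality then drops out in two steps. First, pick $v^\star$ a unit-norm eigenvector of $\sum_i M_i$ associated with $\lambda_{\max}\bigl(\sum_i M_i\bigr)$. By linearity of the quadratic form,
\[
\lambda_{\max}\!\Bigl(\sum_{i=1}^b M_i\Bigr) \;=\; (v^\star)^{\!\top}\!\Bigl(\sum_{i=1}^b M_i\Bigr) v^\star \;=\; \sum_{i=1}^b (v^\star)^{\!\top} M_i\, v^\star.
\]
Second, for every index $i$ the Rayleigh quotient bound gives $(v^\star)^{\!\top} M_i v^\star \le \max_{\|v\|=1} v^{\!\top} M_i v = \lambda_{\max}(M_i)$. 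Summing yields the claimed inequality. In this route, the hypothesis $\lambda_{\max} > |\lambda_{\min}|$ plays no role; it would instead be needed for the alternative proof through operator norms, where one writes $\|M_i\|_{\mathrm{op}} = \lambda_{\max}(M_i)$ under that hypothesis and invokes the triangle inequality for $\|\cdot\|_{\mathrm{op}}$, together with $\lambda_{\max}(A)\le\|A\|_{\mathrm{op}}$.

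The subtle part — and what I expect to be the main obstacle — is the equality clause. Tracing the two inequality steps, equality in the second requires $(v^\star)^{\!\top} M_i v^\star = \lambda_{\max}(M_i)$ \emph{for every} $i$, which forces $v^\star$ to be a top eigenvector of every $M_i$ simultaneously with eigenvalue $\lambda_{\max}(M_i)$. This is a \emph{weaker} condition than ``all $M_i$ identical''; the precise necessary-and-sufficient condition is that the $M_i$ share a common top eigenvector with the associated top eigenvalue. I would therefore either (i) reformulate the equality clause in terms of a shared top eigenvector, or (ii) argue that for generic (e.g.\ randomly drawn) mini-batch Hessians, which almost surely have simple leading eigenvalues and no accidental alignment of eigenvectors, the condition collapses to ``all $M_i$ proportional (hence, up to normalisation, identical).'' In either case, the plan is to state a corrected equality clause and remark that the original wording captures the generic picture that matters in the paper's setting, where the $M_i$ are mini-batch Hessians that differ across batches.
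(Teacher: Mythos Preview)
Your argument for the inequality is correct. The paper's own justification is different: it invokes ``the convexity of the operator norm in matrices and the fact that the largest eigenvalue is positive in our setting,'' i.e.\ it takes exactly the operator-norm route you sketch as an alternative. Under the hypothesis $\lambda_{\max}(M_i)>|\lambda_{\min}(M_i)|$ one has $\|M_i\|_{\mathrm{op}}=\lambda_{\max}(M_i)$, so the triangle inequality gives $\lambda_{\max}\bigl(\sum_i M_i\bigr)\le\bigl\|\sum_i M_i\bigr\|_{\mathrm{op}}\le\sum_i\|M_i\|_{\mathrm{op}}=\sum_i\lambda_{\max}(M_i)$. Your Rayleigh-quotient proof is cleaner in that it dispenses with this hypothesis entirely (at the cost of assuming symmetry, which is the relevant case here anyway); the paper's route explains why the hypothesis appears in the statement at all.

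On the equality clause you are right, and this is a genuine inaccuracy in the lemma as stated: equality requires only that the $M_i$ share a common top eigenvector (with the top eigenvalue attained there for each), not that they be identical. Simple $2\times2$ diagonal counterexamples show this. The paper does not address this point; your proposed fix---restate the equality condition in terms of a shared top eigenvector and note that for generic mini-batch Hessians this fails unless the matrices coincide---is the right way to handle it.
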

This lemma is a direct consequence of the convexity of the operator norm in matrices and the fact that the largest eigenvalue is positive in our setting. In our setting, it implies that with non-simultaneously-diagonalizeable matrices, the maximum eigenvalue of the sum is strictly less than the sum of the maximum eigenvalues of the individual matrices.
To illustrate, consider eigenvalue sequences for batch sizes that are powers of four, though the result generalizes to any $b_1 < b_2$:
\begin{equation}
    \lambda_{\max}^1 
    \ > \
    \lambda_{\max}^4 
    \ > \
    \lambda_{\max}^{16}
    \ > \
    \lambda_{\max}^{64} 
    \ > \
    \lambda_{\max}^{256} 
    % \ < \
    % \lambda_{\max}^{512}
    \ > \
    \ldots
\end{equation}
Importantly, this ordering is the case only for "static" model -- i.e. when we take a model, and without changing the weights, evaluate $\lmax^b$ as we change the batch size. 

As noted in point (ii) above, this ordering does not hold in the \textit{trained} case, as different batch sizes affect also the progressive sharpening and the nature of the mini-batch Hessians. Specifically, since \emph{Batch Sharpness} stabilizes at $2/\eta$ at \textsc{EoSS}, the level of stabilization of $\lmax^b$ depends on its relation to \textit{Batch Sharpness}. Since the two are quite similar, with the difference that \textit{Batch Sharpness} also takes into account the alignment of the mini-batch landscapes sharpest directions with the mini-batch gradients---the gap between $\lambda_{\max}^b$ and \textit{Batch Sharpness} is governed precisely by this alignment.
As illustrated in Figure \ref{fig:bs_vs_lmax_vs_lmaxb}, the level of stabilization of $\lmax^b$ is similar to that of \textit{Batch Sharpness} for very small and very large batch sizes. 
For large batch sizes, this result is straightforward, as the dynamics approach full-batch GD, in which all relevant quantities equalize at \textsc{EoS}. 
Conversely, the small-batch case emerges because, for smaller batch sizes, the mini-batch Hessian (or its Gauss-Newton approximation) comprises averages of only a few per-sample \textit{model} gradient outer products, causing mini-batch gradients to align closely with the largest eigenvalues.

This alignment diminishes as the batch size increases, leading to a widening gap between \textit{Batch Sharpness} and $\lambda_{\max}^b$. Intriguingly, our experiments reveal that this gap only widens up to the aforementioned \textit{critical batch size}, which also serves as a switch between SGD and GD dynamics from the point of view of $\lmax$ stabilization. Beyond the \textit{critical batch size} the gap begins to narrow again, as depicted in Figure \ref{fig:bs_vs_lmax_vs_lmaxb}. Clarifying this phenomenon fully would be a key outcome of a comprehensive theory of progressive sharpening and SGD stability.

Another significant consequence of the stabilization of $\lambda_{\max}^b$ is that it provides insights into the mechanisms underlying the progressive sharpening of \textit{Batch Sharpness}.
Specifically, the growth in \textit{Batch Sharpness} could be attributed either to a general increase in the sharpness of the mini-batch landscapes or to an increase in alignment between mini-batch Hessians and gradients.
Notably, throughout the period of \textit{Batch Sharpness} increase, both $\lambda_{\max}^b$ and the trace of the loss Hessian consistently rise and stabilize simultaneously with \textit{Batch Sharpness}.
This suggests that 
 at least portion of the increase in \textit{Batch Sharpness} arises from the overall sharpening of the mini-batch landscapes, rather than solely from alignment of the mini-batch gradients and Hessians. Consequently, \textit{Batch Sharpness} appears closely linked with the progressive sharpening phenomenon itself, with its eventual stabilization marking the end of progressive sharpening.
 This points to the fact that \textit{Batch Sharpness} is closely connected to progressive sharpening.

\subsection{Modified \textit{Batch Sharpness}}
% Preamble:
% \usepackage{graphicx}
% \usepackage{subcaption}

In the earlier versions of this work we also looked at a modified definition of \textit{Batch Sharpness}:
\begin{definition}[Modified \textit{Batch Sharpness}.]
\label{def:mod_BS}
    We call Modified \textit{Batch Sharpness} the quantity defined as
    \[
    \text{Modified \textit{Batch Sharpness}}(\theta)
    \quad := \quad
\frac{
\E_{B \sim \mathcal{P}_b}\Big[
\nabla L_B(\theta)^\top\,\mathcal{H}(L_B)\,\nabla L_B(\theta)\Big]
}{
\E_{B \sim \mathcal{P}_b}\Big[
\|\nabla L_B(\theta)\|^2
\Big ]
}.
\]
\end{definition}
The difference from the definition of \textit{Batch Sharpness} is that in this one the expectation over batches in taken inside the fraction. The intuition for this quantity comes from a notion of average stability on mini-batch landscapes. That is,
\[
\frac{
\E_{B \sim \mathcal{P}_b}\Big[
\nabla L_B(\theta)^\top\,\mathcal{H}(L_B)\,\nabla L_B(\theta)\Big]
}{
\E_{B \sim \mathcal{P}_b}\Big[
\|\nabla L_B(\theta)\|^2
\Big ]
} \leq 2/\eta \quad
\iff
\E\Big[L_B(\theta^B_{t+1}) - L_B(\theta_t) \Big] \leq 0
\]
where $\theta^B_{t+1}$ is the parameters that we are getting if we are stepping on the given mini-batch.
This means that \textit{Modified Batch Sharpness} $<2/\eta$ is equivalent to "on average, the mini-batch loss does not increase when stepping the corresponding landscape". This formulation is an attempt to extend the descent lemma to the mini-batch landscapes that govern the SGD dynamics insted of the descent lemma on the full-batch landscape that govern GD. Empirically, it turns out that \textit{Modified Batch Sharpness} also stabilizes, but its stabilization level is higher than that of the \textit{Batch Sharpness} and therefore $2/\eta$, as illustrated in Figure \ref{fig:modified-batch-sharpness}. Moreover, its stabilization level is dependent on the batch size.

\paragraph{Modified \textit{Batch Sharpness} and mini-batch gradients.}
Importantly, we show in Proposition \ref{Prop:mod_BS} that Modified \textit{Batch Sharpness} is a valid Instability Criterion and it governs the explosion of the expectation of the norm squared of the mini-batch gradients. 
\[
\text{Modified \textit{Batch Sharpness}}(\theta_{t}) \ > \ 2/\eta + c\eta
    \quad \implies \quad \E_{B \sim \mathcal{P}_b}[\| \nabla L_B (\theta_{t+1})\|^2_2] \ > \ \E_{B \sim \mathcal{P}_b}[\| \nabla L_B (\theta_{t})\|^2_2].
\]

\begin{figure}[htbp]
  \centering

  % Row 1
  \begin{subfigure}[t]{0.48\textwidth}
    \centering
    \includegraphics[width=\linewidth]{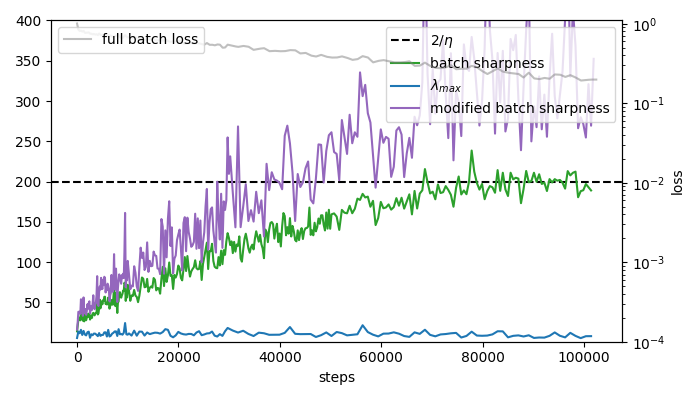}
    \subcaption{batch size 2}
    \label{fig:batch-2}
  \end{subfigure}\hfill
  \begin{subfigure}[t]{0.48\textwidth}
    \centering
    \includegraphics[width=\linewidth]{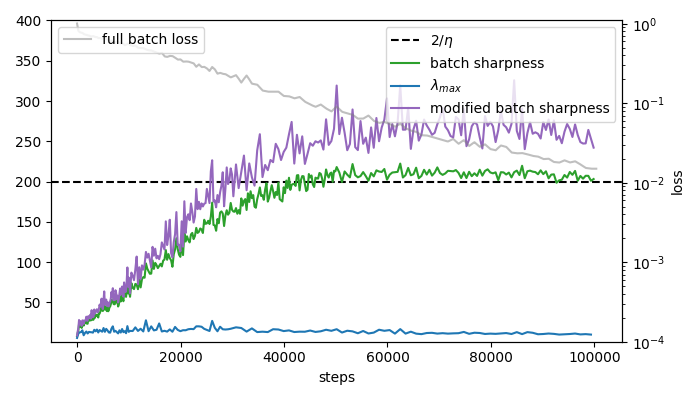}
    \subcaption{batch size 4}
    \label{fig:batch-4}
  \end{subfigure}

  \smallskip

  % Row 2
  \begin{subfigure}[t]{0.48\textwidth}
    \centering
    \includegraphics[width=\linewidth]{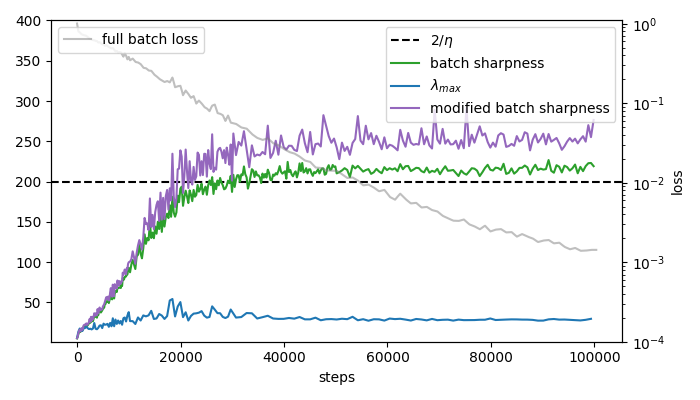}
    \subcaption{batch size 8}
    \label{fig:batch-8}
  \end{subfigure}\hfill
  \begin{subfigure}[t]{0.48\textwidth}
    \centering
    \includegraphics[width=\linewidth]{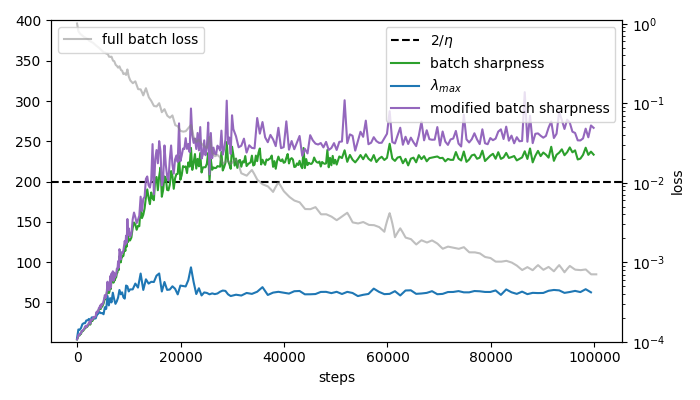}
    \subcaption{batch size 16}
    \label{fig:batch-16}
  \end{subfigure}

  \smallskip

  % Row 3
  \begin{subfigure}[t]{0.48\textwidth}
    \centering
    \includegraphics[width=\linewidth]{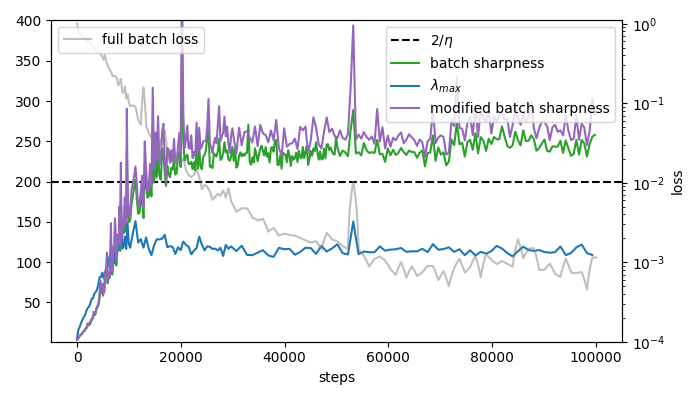}
    \subcaption{batch size 32}
    \label{fig:batch-24}
  \end{subfigure}\hfill
  \begin{subfigure}[t]{0.48\textwidth}
    \centering
    \includegraphics[width=\linewidth]{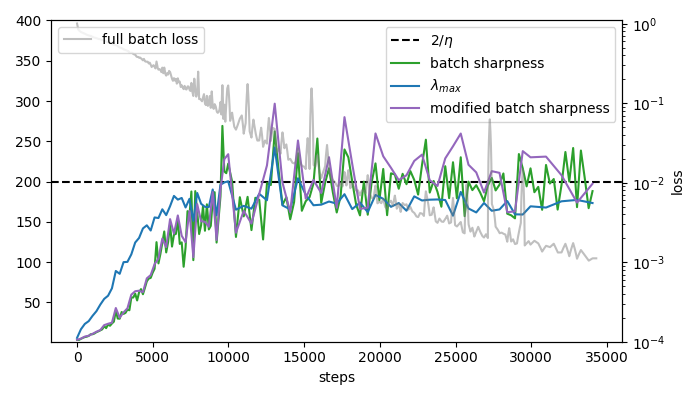}
    \subcaption{batch size 64}
    \label{fig:batch-32}
  \end{subfigure}

  \smallskip

  \caption{\textbf{Modified Batch Sharpness}: Behavior of \textit{Modified Batch Sharpness}, definition of which is similar to \textit{Batch Sharpness}, but with the expectation "inside" the fraction ($\E[\nabla L_B(\theta)^T H_B \nabla L_B(\theta)] / \E[\|\nabla L_B(\theta)\|^2]$). It stabilizes above $2/\eta$ with its stabilization level highly dependent on the batch size.}
  \label{fig:modified-batch-sharpness}
\end{figure}

\clearpage

%%% Here until 07/19/25 there was the previous version of this section commented

%%% Here until 10/18/25 there was a section here "On Largest Eigenvalues.."

\clearpage

\section{Modified \textit{Batch Sharpness} Mathematically}
\label{appendix:mod_insta}
We show here that Modified \textit{Batch Sharpness} (Definition \ref{def:mod_BS}) is a locally valid instability criterion (Definition \ref{def:stab_not}).
\begin{mdframed}
\textit{Importantly, while it is a valid instability criterion, it does not stabilize at $2/\eta$ in practice, thus it is not the quantity that self-stabilization tames, but its stabilization is a byproduct of \textit{Batch Sharpness} stabilizing.}
\end{mdframed}
We now compute what the update of the norm of the gradients $\E_i [\mednorm{Y_i}_2^2]$ after one step is. Precisely, with the notations of Appendix \ref{appendix:proof_theo}, we are computing here the value of $\E_{t} \E_i [\mednorm{Y_i^{t+1}}_2^2]$ so the average over the iterations of the update to the quantity $\mathcal{C}$ above.
Precisely we here prove the following Proposition.
\begin{proposition}
\label{prop:div}
    There exists an absolute constant $c>0$ such that when Modified \textit{Batch Sharpness} $> 2/\eta + c\eta$, then $\E\mednorm{\nabla L_B}_2^2$ increases in size exponentially and the trajectory diverges (is quadratically unstable, see Definition \ref{def:insta}).
\end{proposition}

\begin{proof}
In the proof we use the notations  of Appendix \ref{appendix:order}.

\paragraph{Step 1: One step on the gradient's second moment}.
Remind that the SGD iterate satisfy
\[
  \theta_{t+1}
     \;=\;
     \theta_t - \eta\,Y_{j_t}(\theta_t),
     \quad i_t \stackrel{\text{i.i.d.}}{\sim} \mathcal D,
\]
and define a \emph{fresh}, independent index \(j\) used only for the
outer expectation in \(\mathcal{C}^{t+1}\).  Because \(j\perp i_t\) we may write
\[
     Y_i(\theta_{t+1})
       \;=\;
       H_i\!\bigl(\theta_{t+1}-x_i\bigr)
       = Y_i(\theta_t)\;-\;\eta\,H_iY_{j_t}(\theta_t).
\]
Squaring, expanding, and averaging over \(j\) gives
\begin{equation}
\begin{split}
   \mathcal{C}^{t+1}
     &= \E_i\bigl\|Y_i(\theta_t)-\eta H_iY_{j_t}(\theta_t)\bigr\|^{2}\\
     &= \mathcal{C}^t
        -2\eta\,\underbrace{
             \E_{i,j_t}\!\bigl[
               Y_i(\theta_t)^{\!\top}H_iY_{j_t}(\theta_t)
             \bigr]}_{\text{cross term}}
        +\eta^{2}\,
          \underbrace{
            \E_{i,j_t}\!\bigl[
              Y_{j_t}(\theta_t)^{\!\top}H_i^{2}Y_{j_t}(\theta_t)
            \bigr]}_{\text{variance term}}.
\end{split}
\end{equation}

\paragraph{Step 2: Decoupling the indices.}
Note that \eqref{eq:Delta_A_B} in the proof of Lemma \ref{lem:cross-new} establishes that 
\begin{equation}
\label{eq:cross_prod}
    2\E_{i,j_t}\!\bigl[
    Y_i(\theta_t)^{\!\top}H_iY_{j_t}(\theta_t)
    \bigr]
    \; = \;
    \mathcal{A} - \mathcal{B} - \widetilde \Delta.
\end{equation}
This implies that we can rewrite
\begin{equation}
\begin{split}
   \mathcal{C}^{t+1}
     &= \mathcal{C}^t
        -\eta\,(\mathcal{A} - \mathcal{B} - \widetilde \Delta)
        +\eta^{2}\,
          (\text{variance term}).
\end{split}
\end{equation}
Next note that if we are at the \textsc{EoSS}, then $\mathcal{A} \approx \tfrac{2}{\eta}(1+\delta) \mathcal{C}$ for some $\delta \in \R$. This implies that we can rewrite the term above as
\begin{equation}
\begin{split}
   \mathcal{C}^{t+1} 
     \;&\approx \; -(1+2\delta)\,\mathcal{C}^t 
     \; + \;
     \underbrace{\eta \mathcal{B} + \eta \widetilde \Delta
     + \eta^{2}\,
          (\text{variance term})}_{\text{rest}}.
    % \\&=\;
    %  \big(1 + \eta\delta \big)^2 \, \mathcal{C}^t
    %  \; + \;
    %  \text{rest}.
\end{split}
\end{equation}
Let us know understand the size of the rest, the trajectory diverges if and only if:
\begin{equation}
    \eta \mathcal{B} + \eta \widetilde \Delta + \eta^2 
    \E_{i,j_t}\!\bigl[
              Y_{j_t}(\theta_t)^{\!\top}H_i^{2}Y_{j_t}(\theta_t)
            \bigr]
    \; > \; 2(1+\delta)\,\mathcal{C}^t.
\end{equation}
Next note that by applying Jensen inequality to the term multiplied by $\eta^2$ we obtain that 
\begin{equation}
    \sqrt{\underbrace{\E_{i,j_t}\!\bigl[
    Y_{j_t}(\theta_t)^{\!\top}H_i^{2}Y_{j_t}(\theta_t)
    \bigr]}_{\text{variance term}}
    \cdot \underbrace{\E_{i}\!\bigl[
    Y_{i}(\theta_t)^{\!\top} Y_i(\theta_t)
    \bigr]}_{\mathcal{C}}} \; \geq \; \underbrace{\E_{i,j_t}\!\bigl[
    Y_{j_t}(\theta_t)^{\!\top} H_i \cdot Y_i(\theta_t)
    \bigr]}_{\mathcal{D}}.
\end{equation}
\paragraph{Step 3: Final algebra.}
Plugging this above, we obtain that the trajectory diverges when
\begin{equation}
    \eta \mathcal{B} + \eta \widetilde \Delta + \eta^2 
    \frac{\mathcal{D}^2}{\mathcal{C}}
    \; > \; 2(1+\delta)\,\mathcal{C}.
\end{equation}
Again applying \eqref{eq:cross_prod} we obtain that this is equivalent to
\begin{equation}
    \eta \mathcal{B} + \eta \widetilde \Delta + \eta^2 
    \frac{\big( \mathcal{A} - \mathcal{B} - \widetilde \Delta \big)^2}{4\mathcal{C}}
    \; > \; 2(1+\delta)\,\mathcal{C}.
\end{equation}
Since $\eta \mathcal{A} = 2(1+\delta) \mathcal{C}$, then $\eta^2 \mathcal{A}^2 = 4(1+\delta)^2 \mathcal{C}^2$ to asking
\begin{equation}
    \eta \mathcal{B} + \eta \widetilde \Delta + \eta^2 
    \frac{\mathcal{B}^2 + \widetilde \Delta^2 -2 \mathcal{A} \widetilde \Delta - 2 \mathcal{A}\mathcal{B} + 2 \mathcal{B} \widetilde \Delta }{4\mathcal{C}}
    \; > \; 2(1+\delta)\,\mathcal{C} - \frac{4(1+\delta)^2 \mathcal{C}^2}{4 \mathcal{C}} .
\end{equation}
Furthermore, equivalent to asking
\begin{equation}
    \eta \mathcal{B} + \eta \widetilde \Delta - \frac{2(1+\delta)}{2}\eta \widetilde \Delta - \frac{2(1+\delta)}{2}\eta\mathcal{B} + \eta^2 
    \frac{\mathcal{B}^2 + \widetilde \Delta^2 + 2 \mathcal{B} \widetilde \Delta }{4\mathcal{C}}
    \; > \; (1-\delta+\delta^2) \,\mathcal{C}
\end{equation}
or, even further simplified
\begin{equation}
    \eta \delta (\mathcal{B} + \widetilde \Delta) + \eta^2 
    \frac{(\mathcal{B} + \widetilde \Delta)^2}{4\mathcal{C}}
    \; > \; (1-\delta+\delta^2) \,\mathcal{C}.
\end{equation}
Here we plug in \eqref{eq:Delta_A_B} again and we can rewrite this as
\begin{equation}
    \eta \delta (\mathcal{A} - 2\mathcal{D}) + \eta^2 
    \frac{(\mathcal{A} - 2\mathcal{D})^2}{4\mathcal{C}}
    \; > \; (1-\delta+\delta^2) \,\mathcal{C}.
\end{equation}
By plugging, as before, $\eta \mathcal{A} = 2(1+\delta) \mathcal{C}$ we obtain
\begin{equation}
    2\delta (1 + \delta) \mathcal{C} - 2\eta \delta 2\mathcal{D} - 2\eta (1+\delta) \mathcal{D} + \eta^2 
    \frac{\mathcal{D}^2}{\mathcal{C}}
    \; > \; \big(1-\delta+\delta^2-(1+\delta)^2\big) \,\mathcal{C}
\end{equation}
which simplifies as 
\begin{equation}
    \underbrace{2\eta (1+2\delta) \mathcal{D} }_{\mathcal{O}(\eta^2)}
    - \underbrace{\eta^2 \frac{\mathcal{D}^2}{\mathcal{C}}}_{\mathcal{O}(\eta^4)}
    \; < \; \delta (5+2\delta) \,\underbrace{\mathcal{C}}_{\mathcal{O}_\eta(1)}.
\end{equation}
% Next note that $\mathcal{D}$ could be negative and $|\mathcal{D}| = |\mu^\top \E_i[H_iY_i]| < \mu  \mathcal{C}$ always  --- chiedere a Chatgpt di giocare con Jensen.
Thus there exists a constant $c > 0$, such that if $\delta > c \eta^2$ the trajectory diverges exponentially, if $\delta < c \eta^2$ the trajectory is stable.

\end{proof}

\clearpage

\section{Hardware \& Compute Requirements}
All experiments were executed on a single NVIDIA A100 GPU (80 GB) with 256 GB of host RAM.
The software stack comprises Python 3.12 and PyTorch 2.5.1 (built with the default CUDA tool-chain supplied by the wheel).
The biggest constraint is the VRAM when computing $\lmax$, as it requires double-backprop of the computational graph of the NN. It is possible to run the MLP experiments on MIGs with 10 GB of memory, other architectures require GPUs with higher VRAM.

\paragraph{Baseline MLP (2M parameters, Section \ref{appendix:further_bs})}
Training for 100k steps on the 8 k-image CIFAR-10 subset finishes in $\approx$ 5 min wall-clock while computing step sharpness every 8 steps, \textit{Batch Sharpness} every 128 steps and $\lambda_{\max}$ every 256 steps.
Peak device memory is less than 1GB during ordinary training and $\approx$ 2 GB while estimating $\lambda_{\max}$ on a 8k subset, comfortably fitting a 10 GB VRAM GPU.

\paragraph{Algorithmic caveats.}
We rely on power iteration for $\lambda_{\max}$; while Lanczos would reduce the number of Hessian–vector products, the official PyTorch implementation remains CPU-only. In the later versions we switched to using LOBPCG, adopted from the JAX implementation. For some speed-up, we cache the forward and the first backward pass through the network to reuse when calculating the Hessian-vector products. We cache the eigenvectors from previous $\lmax$ calculations to do either of the above algorithms with warm restarts, which provides significant speed-ups (e.g. reducing the number of LOBPCG iterations from $\sim 20$ to $\sim 1$). This provides another reason for choice of power iteration or LOBPCG over Lanczos, despite the one mentioned above---as most of the Lanczos implementations do not support warm restarts.

We also tried additional techniques to speed-up computations or reduce memory consumption. Batching vectors in the Hessian-vector products (for use in LOBPCG) did not provide any noticeable speed-up. Using forward-mode AD did not provide any noticeable reduction in memory consumption.

% \clearpage

\section{Experiments: $\lambda_{\max}^b$}
\label{appendix:further_lambda_b_max}

In this appendix, we provide additional empirical evidence for both emergence of \textsc{EoSS} and to Appendix \ref{appendix:lambda_b_max}, varying across models, step sizes, and batch sizes. Consistent with our primary findings, we observe that $\lambda_{\max}^b$ consistently stabilizes within the interval \(\bigl(2/\eta,\; 2\times 2/\eta\bigr]\), in particular always higher than \textit{Batch Sharpness} and $\lambda_{\max}$. We are conducting experiments on MLP, CNN and ResNet-20 in Figures \ref{fig:appendix_blmax_mlp}, \ref{fig:appendix_blmax_cnn}, \ref{fig:appendix_blmax_resnet} respectively. 

Note that the fact that $\lambda_{\max}^b$ consistently stabilizes above $2/\eta$ implies that the supremum of Lipchitz constants of the gradient of individual mini-batch losses also stabilizes above $2/\eta$, thus clearly indicating that the usual assumptions in theory works on SGD about step size break. Same applies for supremum of Lip constants of gradients of per-sample losses. The former fact trivially follows from the inequality between sup and mean, and the second one from the same plus \cref{lemma:bigger}.

\begin{figure}[t]
    \centering
    \begin{tabular}{cc}
        \includegraphics[width=0.45\textwidth]{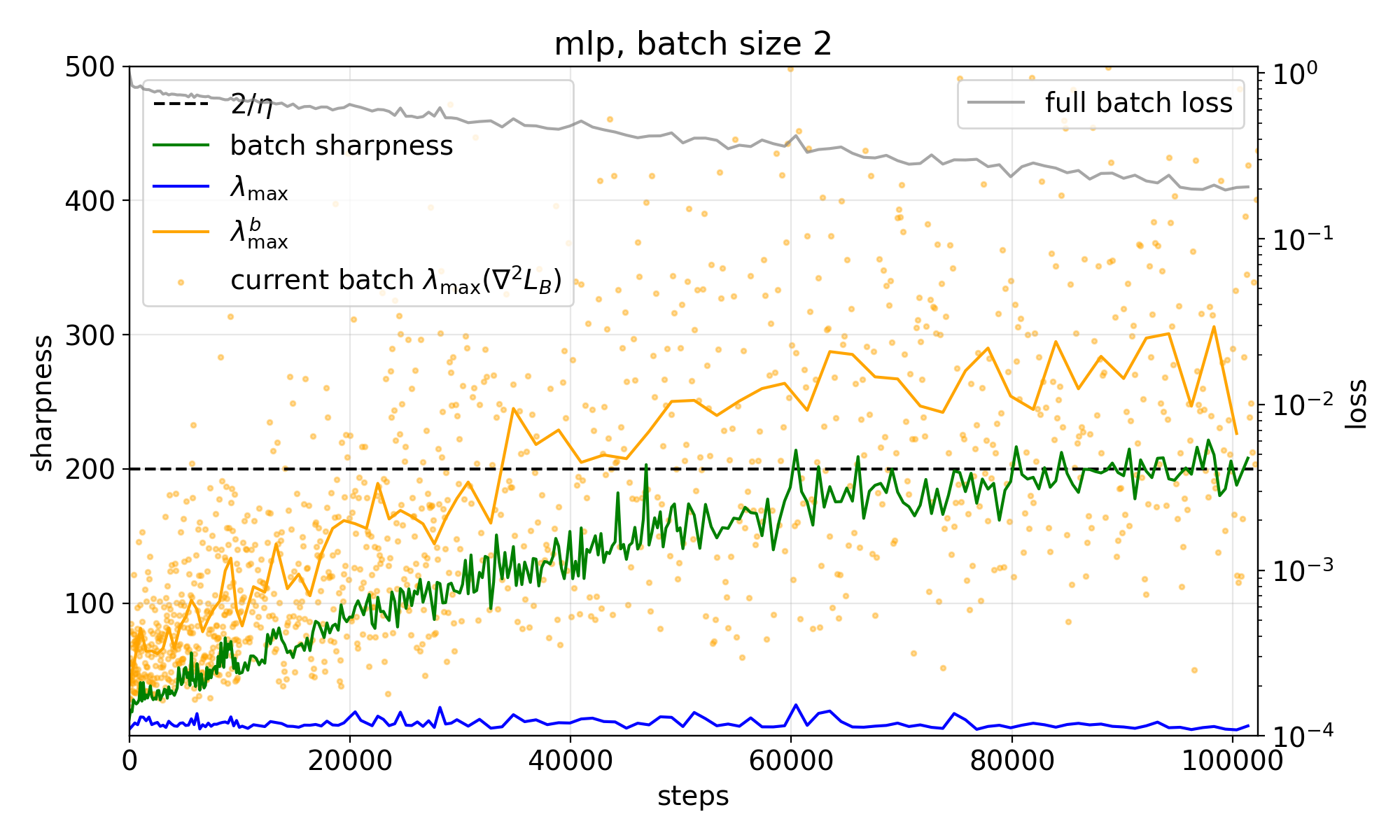} &
        \includegraphics[width=0.45\textwidth]{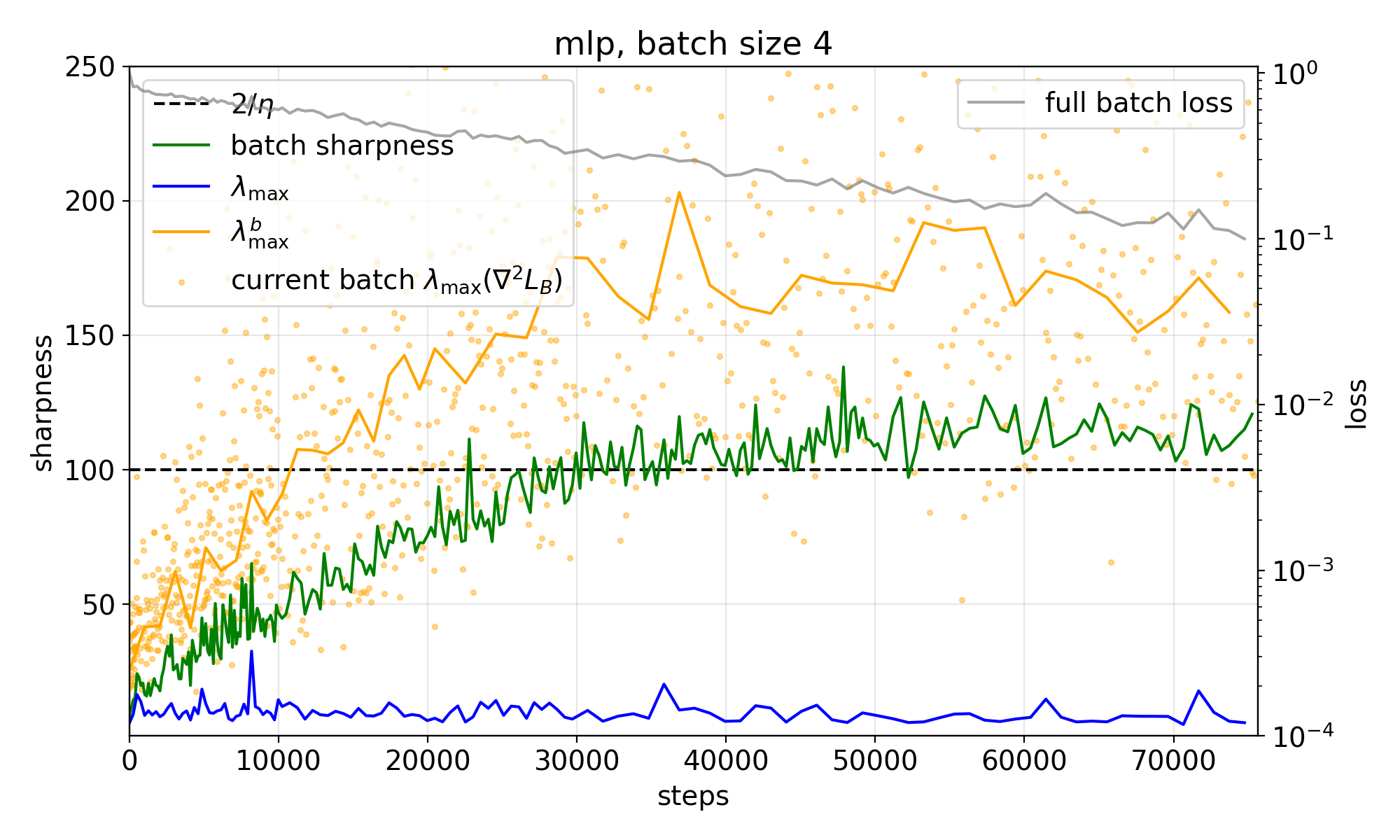} \\
        \includegraphics[width=0.45\textwidth]{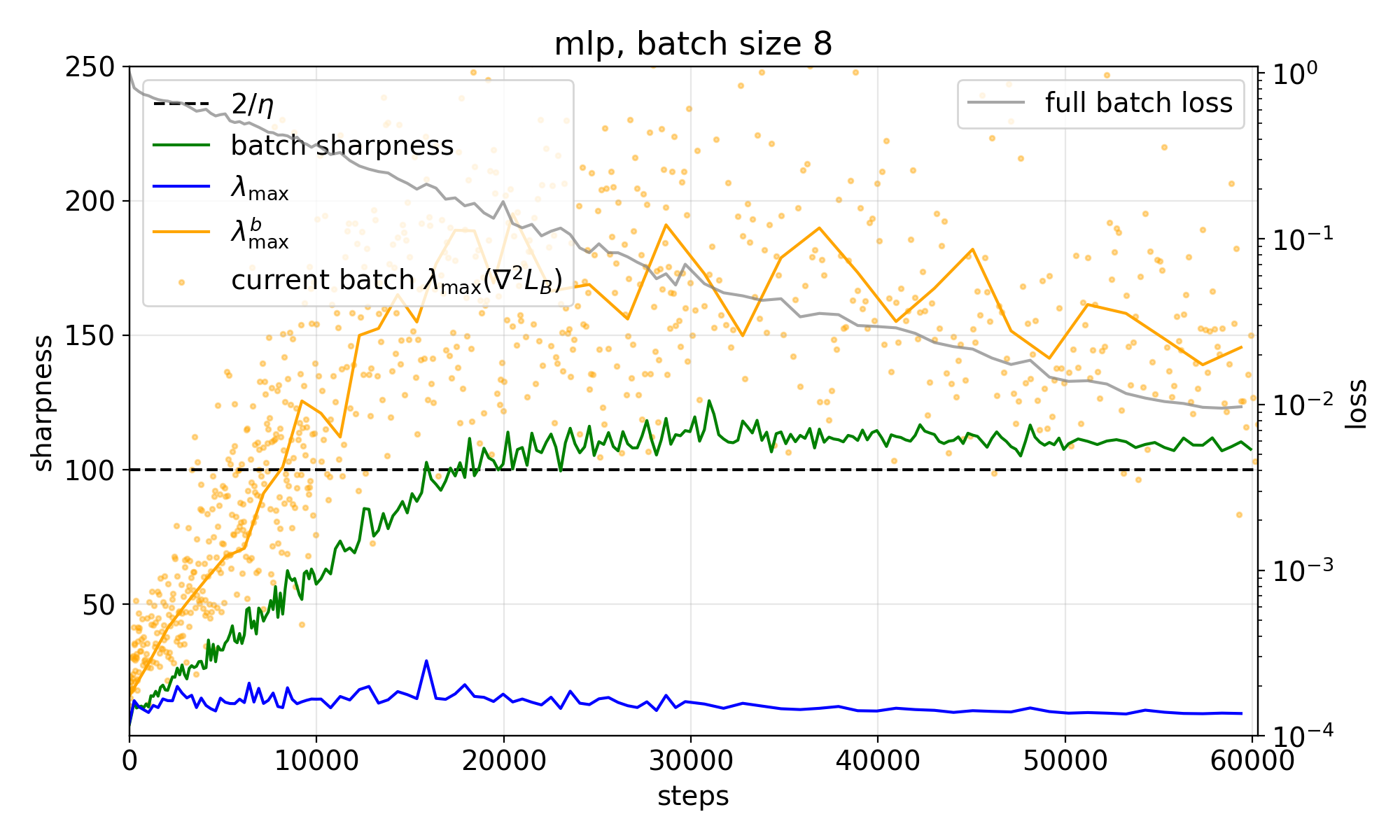} &
        \includegraphics[width=0.45\textwidth]{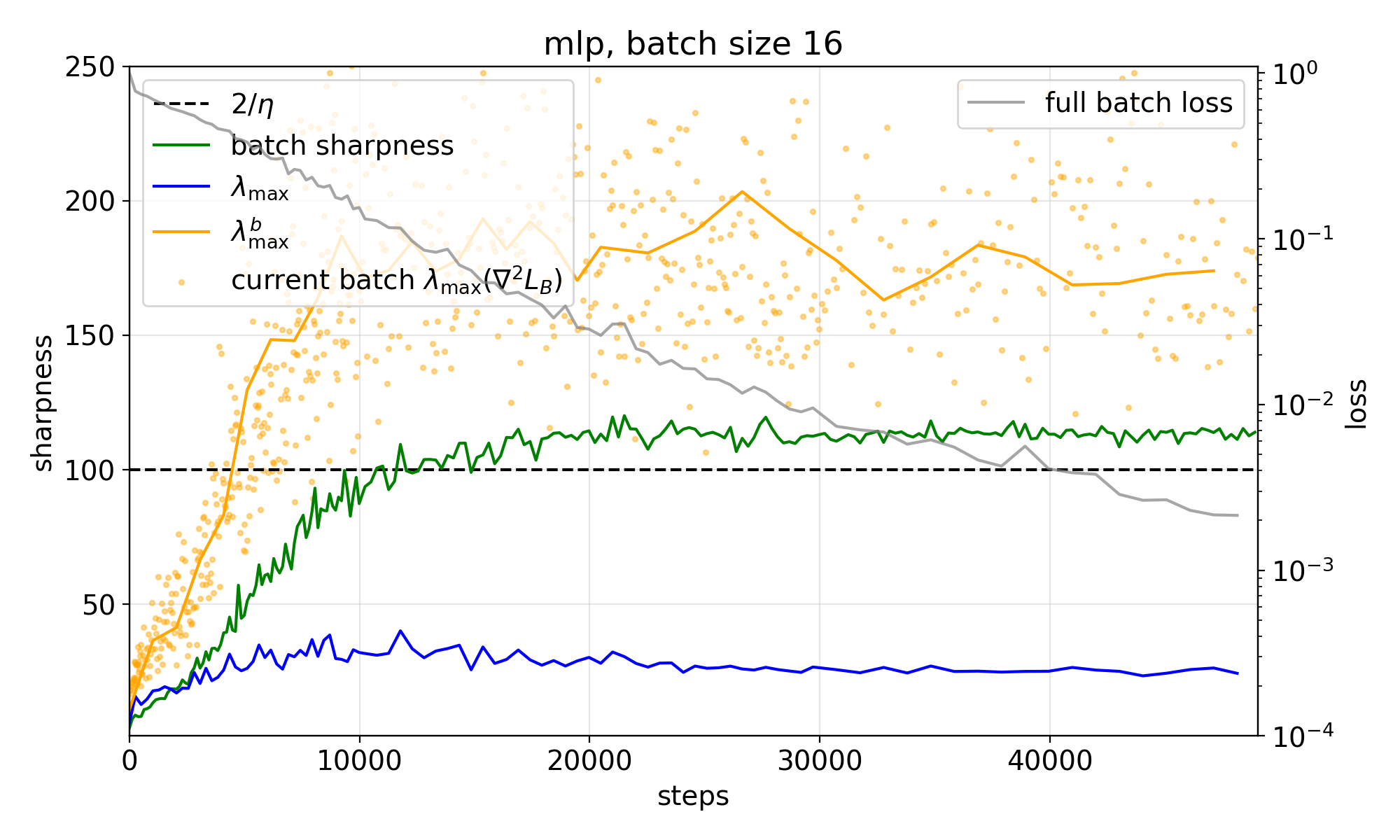} \\
        \includegraphics[width=0.45\textwidth]{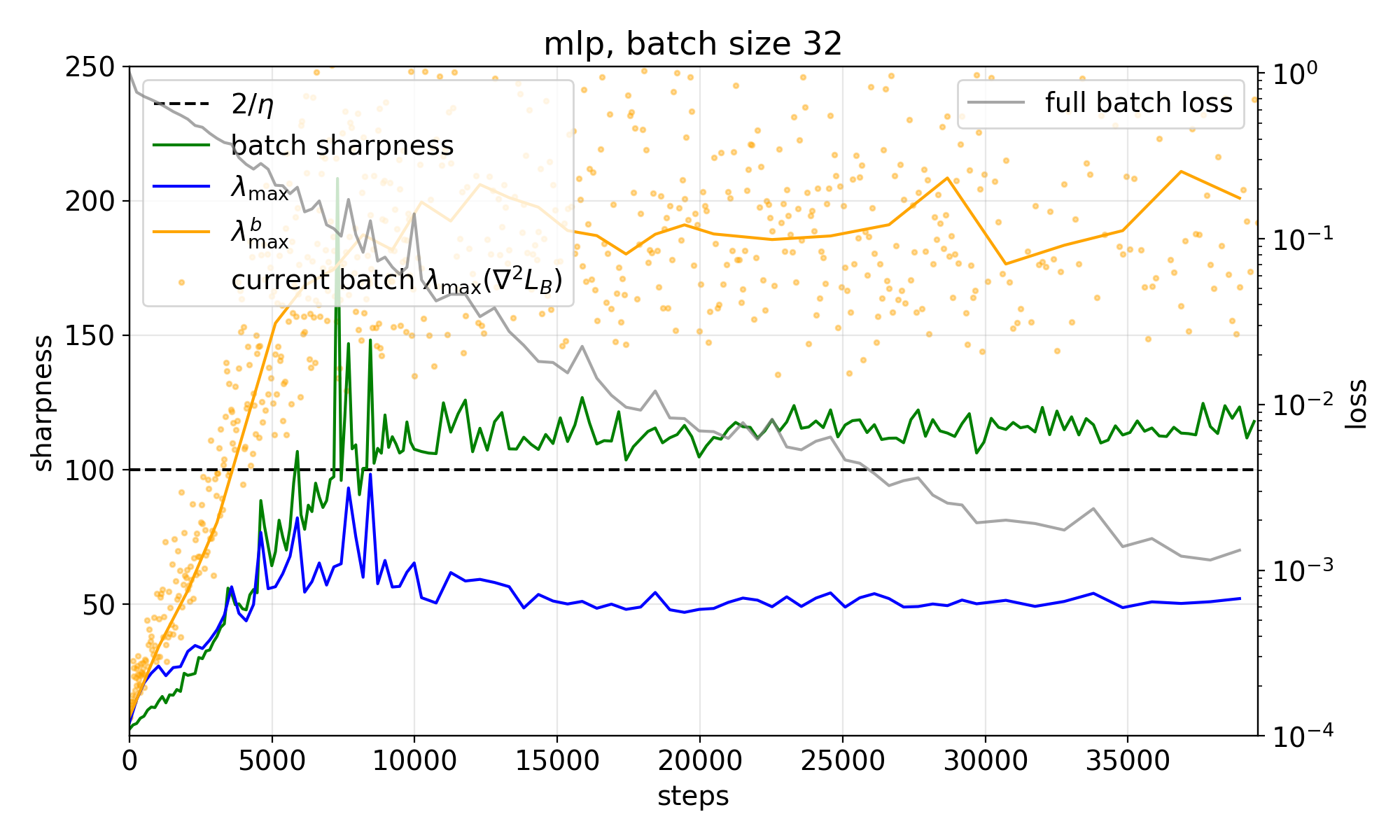} &
        \includegraphics[width=0.45\textwidth]{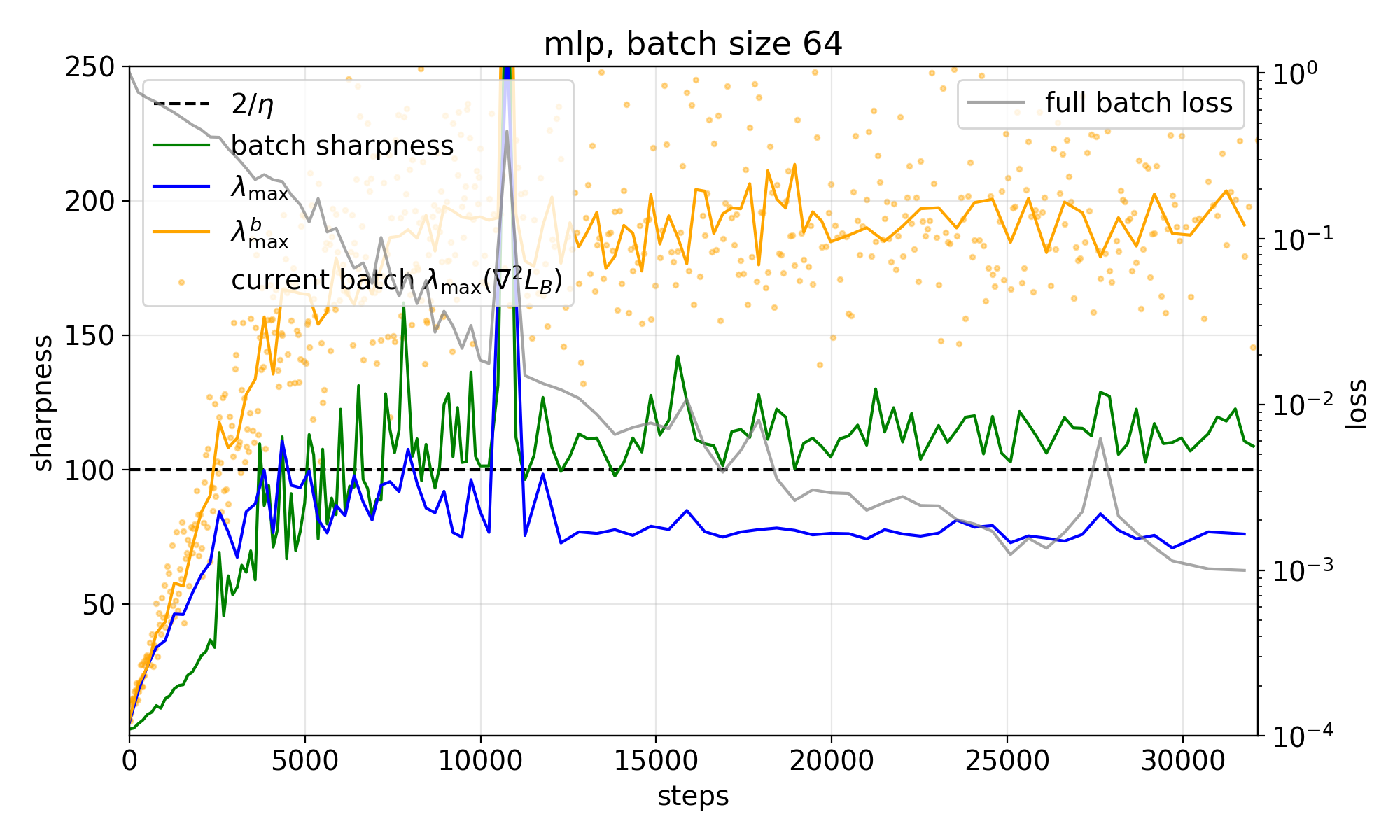} \\
        \includegraphics[width=0.45\textwidth]{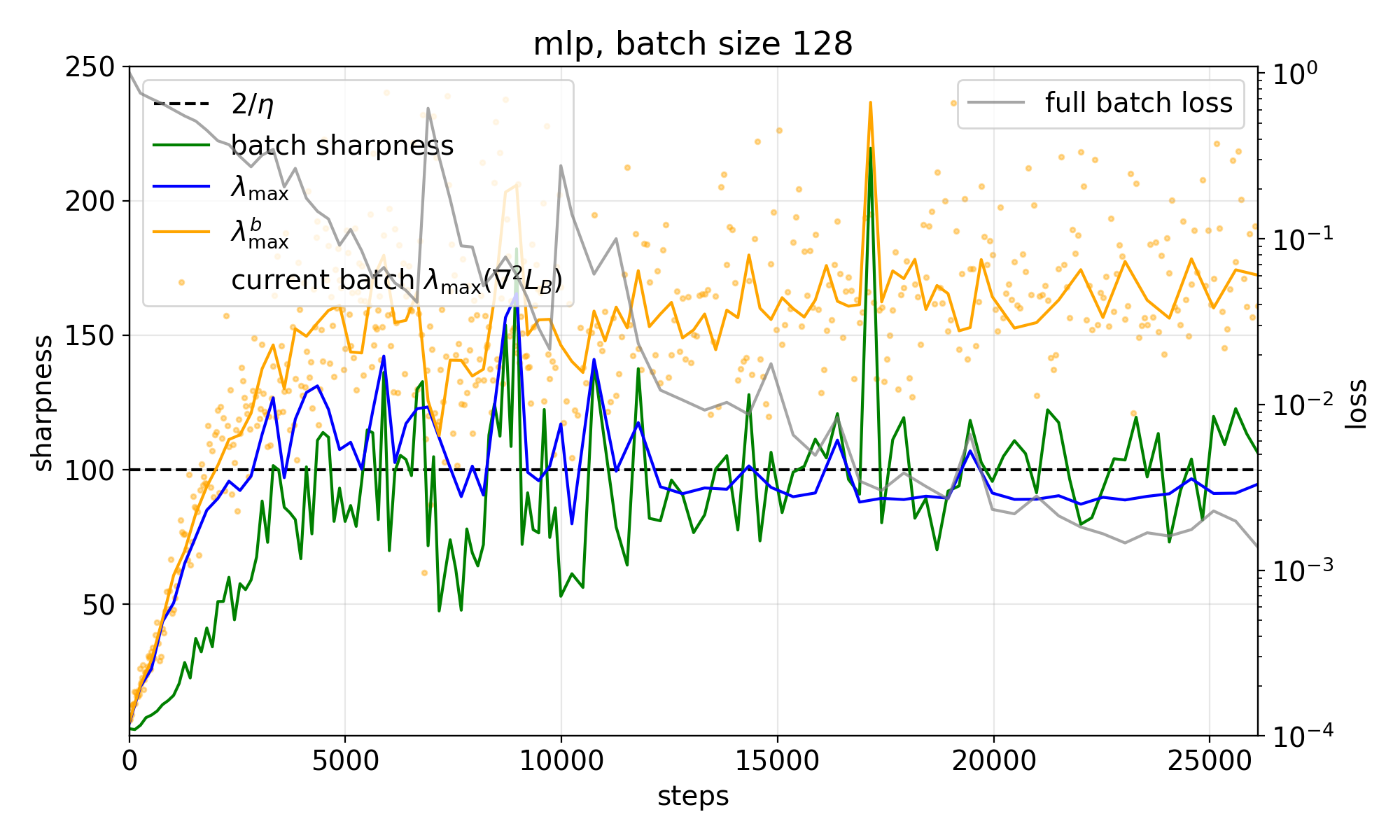} &
        \includegraphics[width=0.45\textwidth]{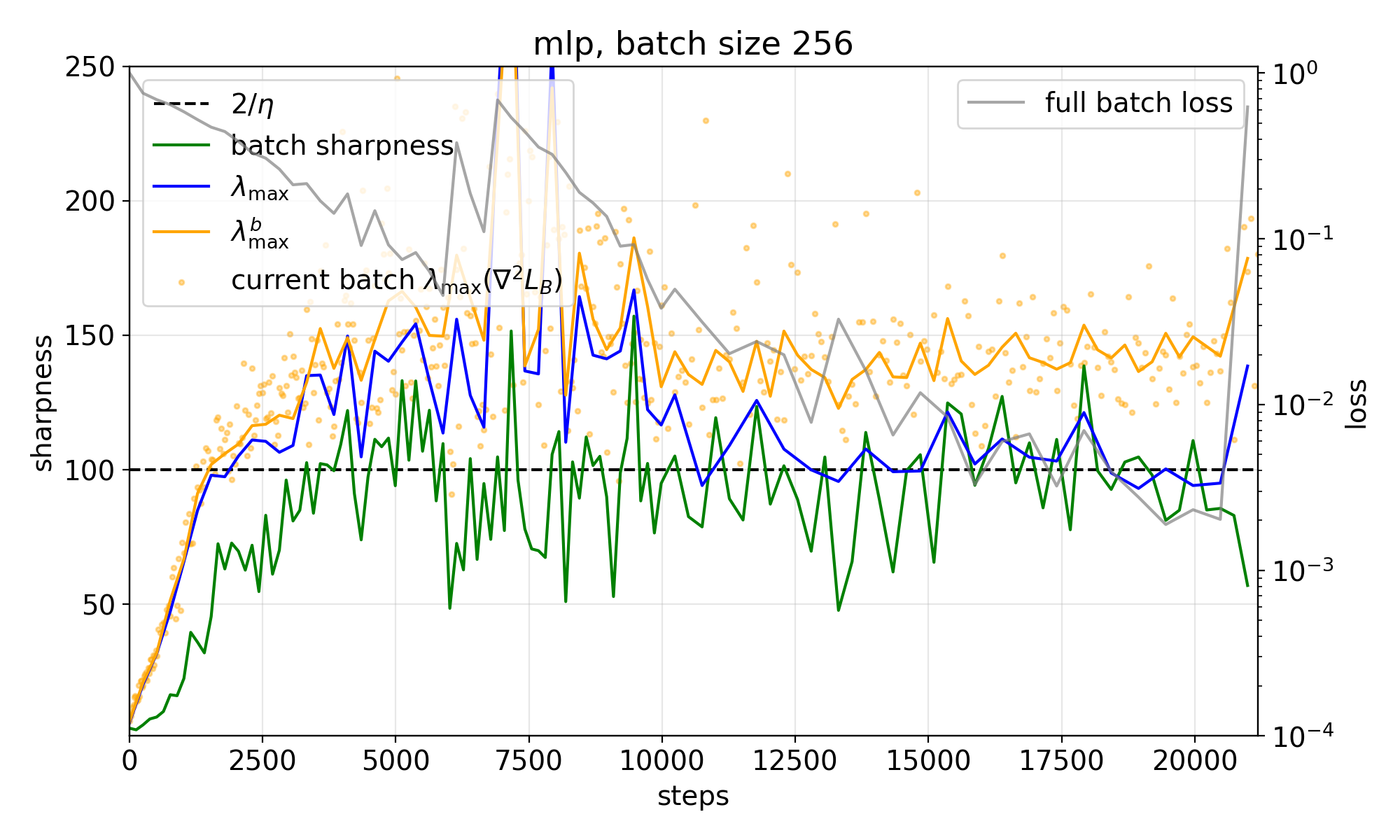} \\
    \end{tabular}
    \caption{\textbf{Tracking $\boldsymbol{\lambda^b_{\max}}$, MLP.}
    MLP with 2 hidden layers of width 512, step size $0.02$, trained on an $8$k subset of CIFAR-10.
    Comparison between the highest eigenvalue of the Hessian of the current mini-batch loss
    (orange dots, time-smoothed $\approx \lambda^b_{\max}$), \textit{Batch Sharpness} (green line), $\lambda_{\max}$ (blue line), and
    $\lambda^b_{\max}$ (orange line). Note that \textit{Batch Sharpness} stabilizes as $2/\eta$, while $\lambda^b_{\max}$ is above it, and $\lambda_{\max}$ is below for small batch sizes. Note that for batch size 2 we were using a lower step size of $0.01$, as otherwise the network wasn't converging.}
    \label{fig:appendix_blmax_mlp}
\end{figure}

% ===================== CNN =====================
\begin{figure}[t]
    \centering
    \begin{tabular}{cc}
        \includegraphics[width=0.45\textwidth]{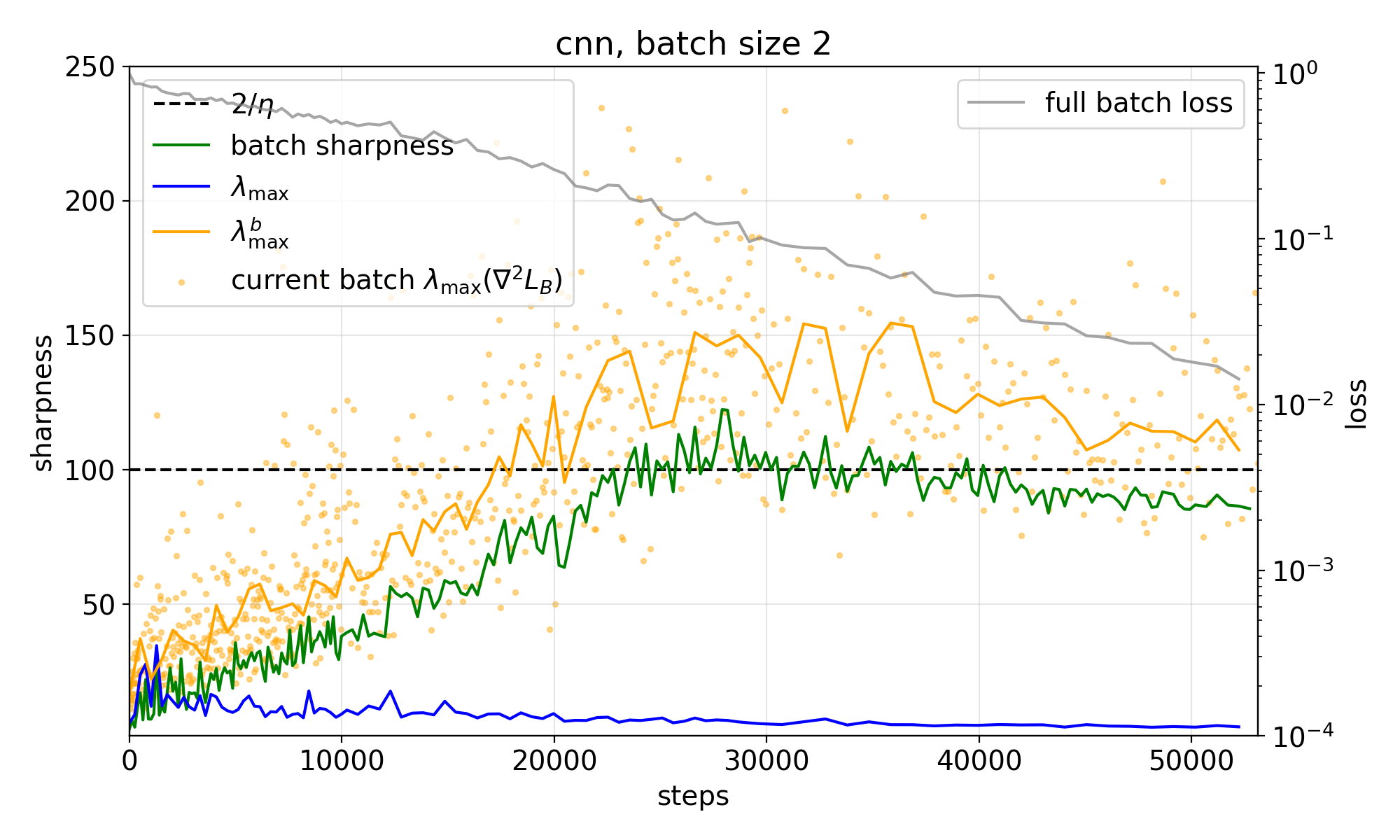} &
        \includegraphics[width=0.45\textwidth]{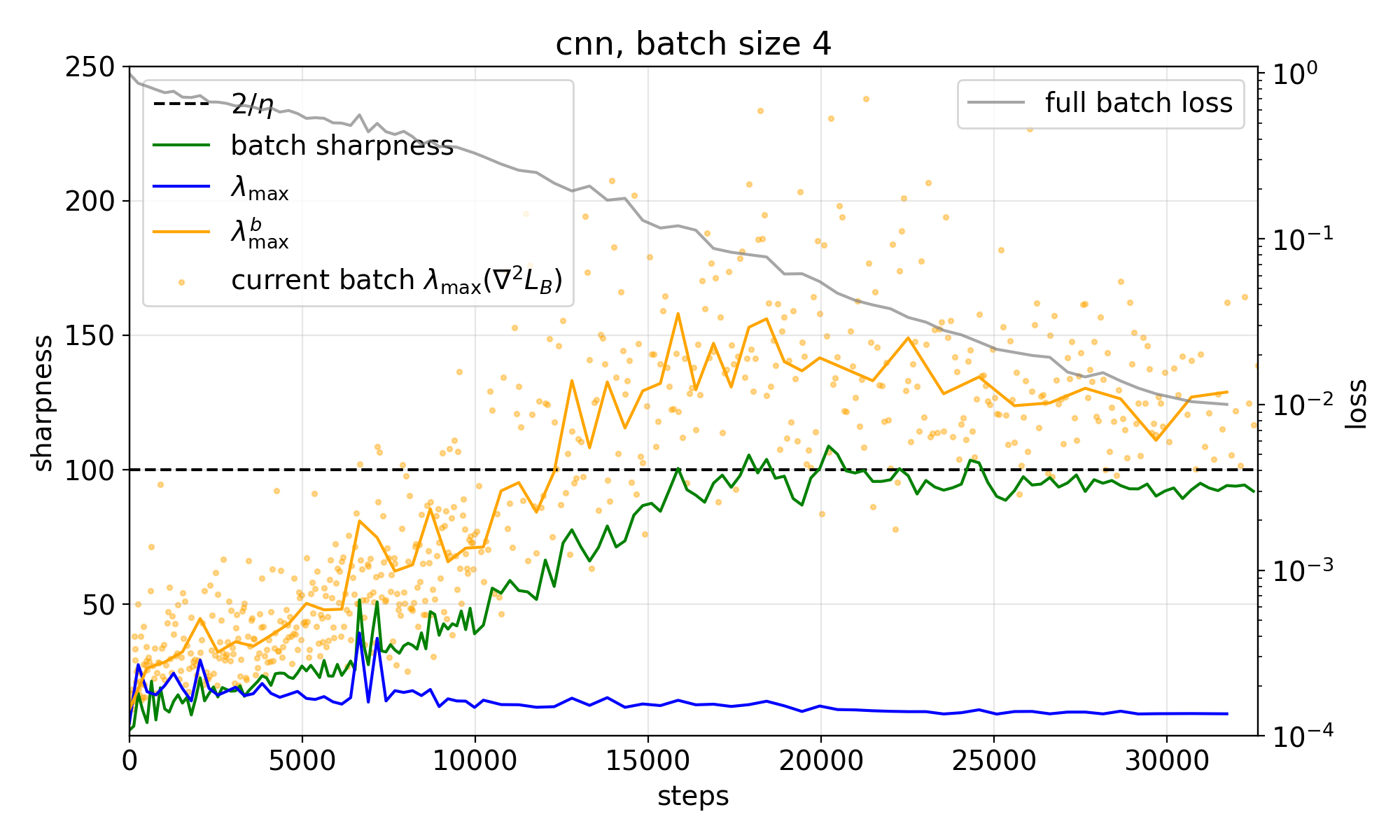} \\
        \includegraphics[width=0.45\textwidth]{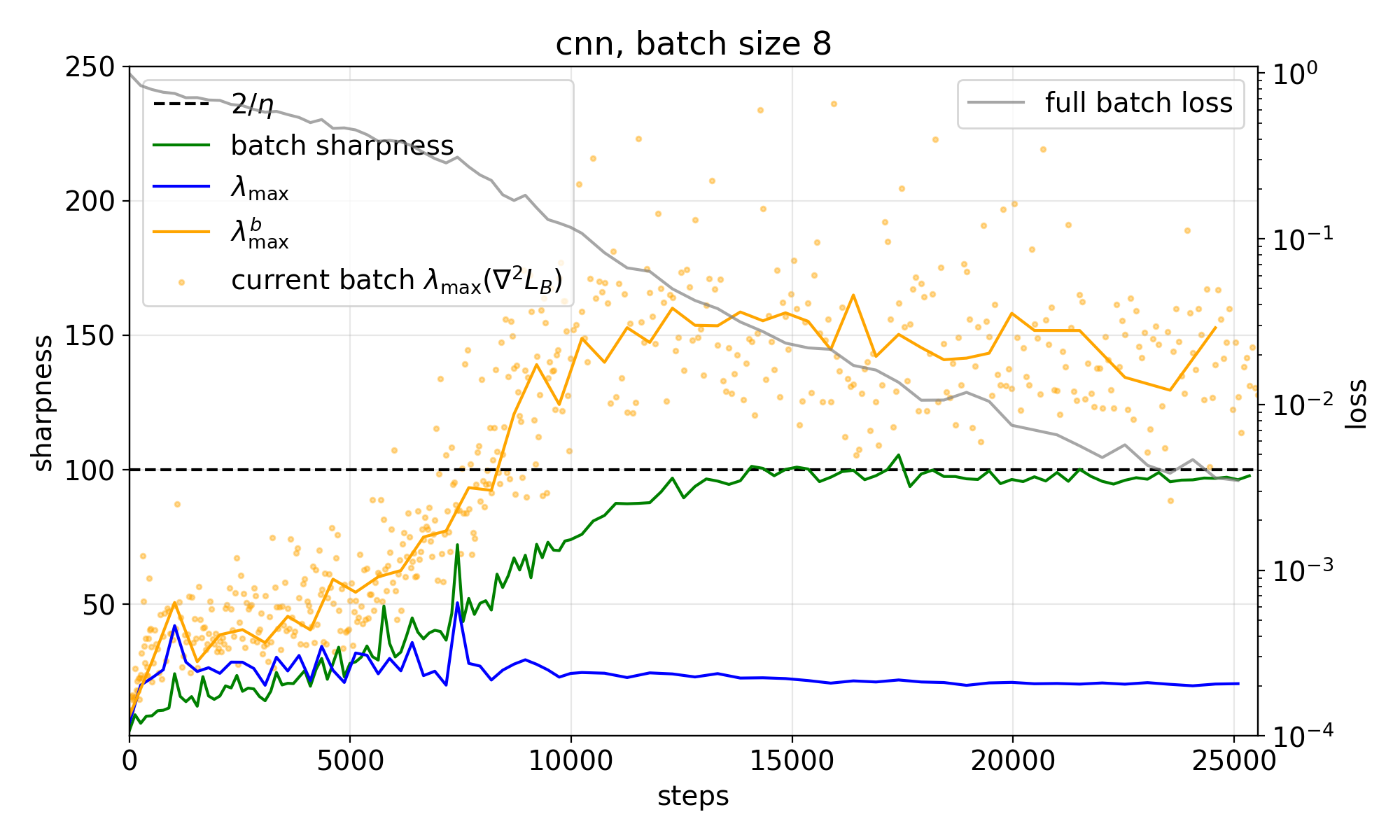} &
        \includegraphics[width=0.45\textwidth]{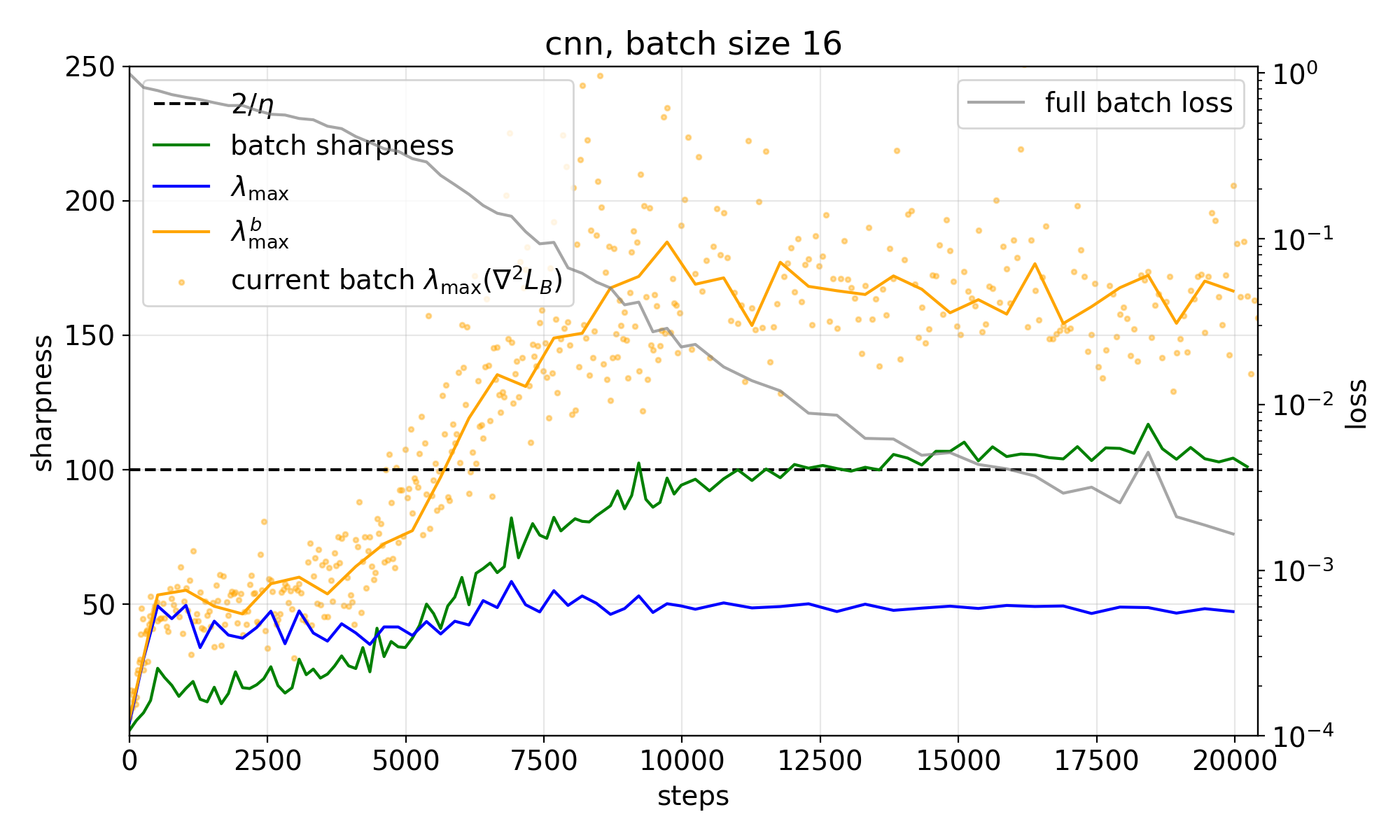} \\
        \includegraphics[width=0.45\textwidth]{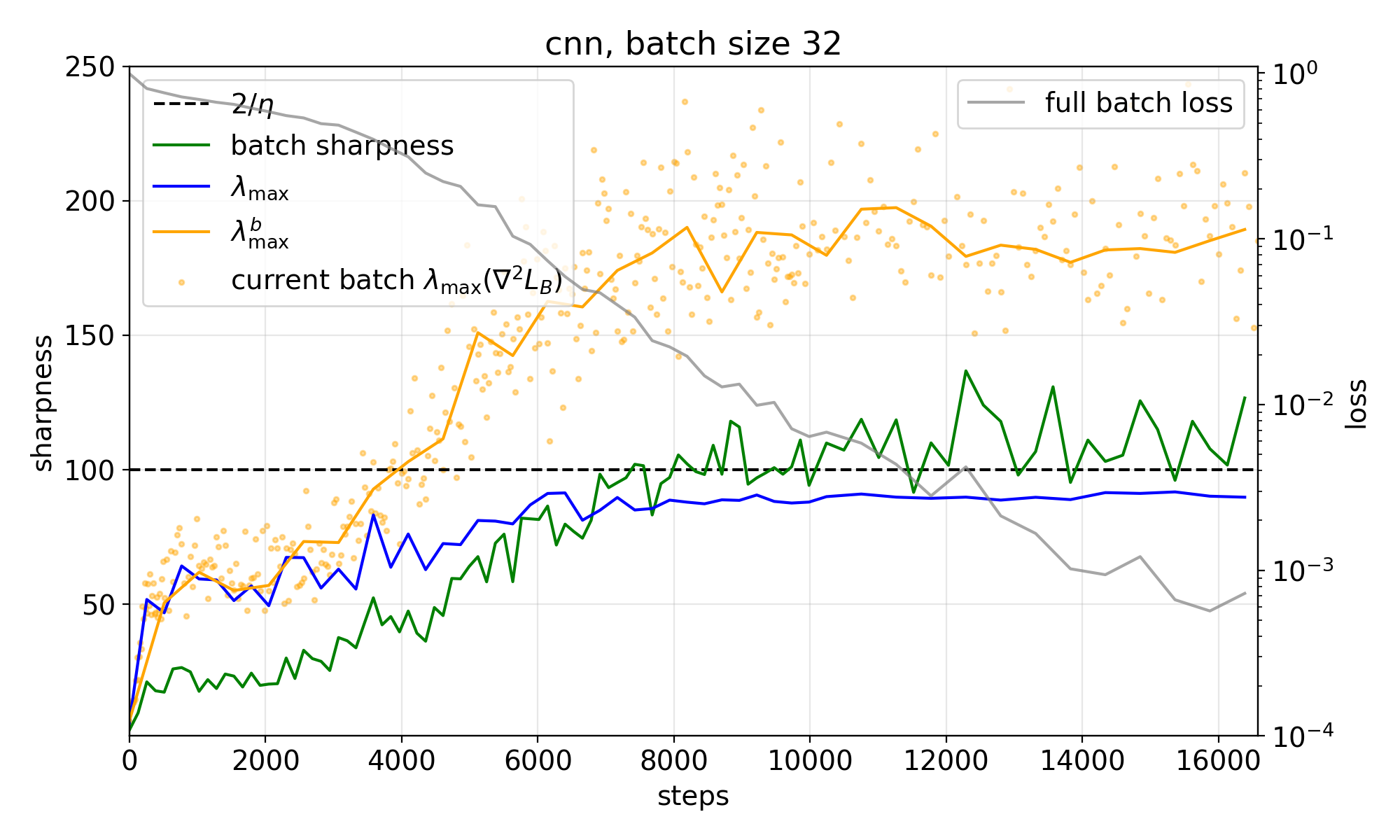} &
        \includegraphics[width=0.45\textwidth]{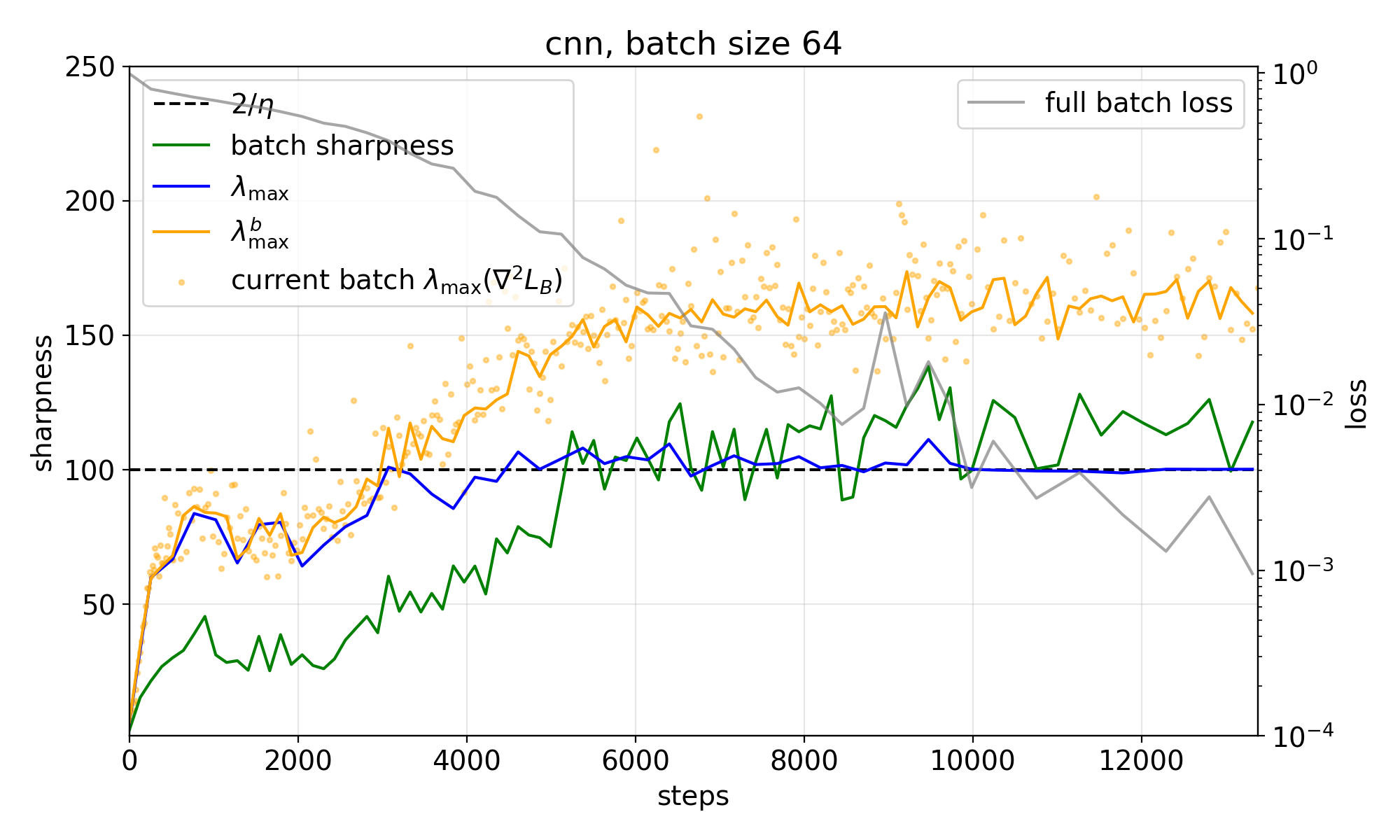} \\
        \includegraphics[width=0.45\textwidth]{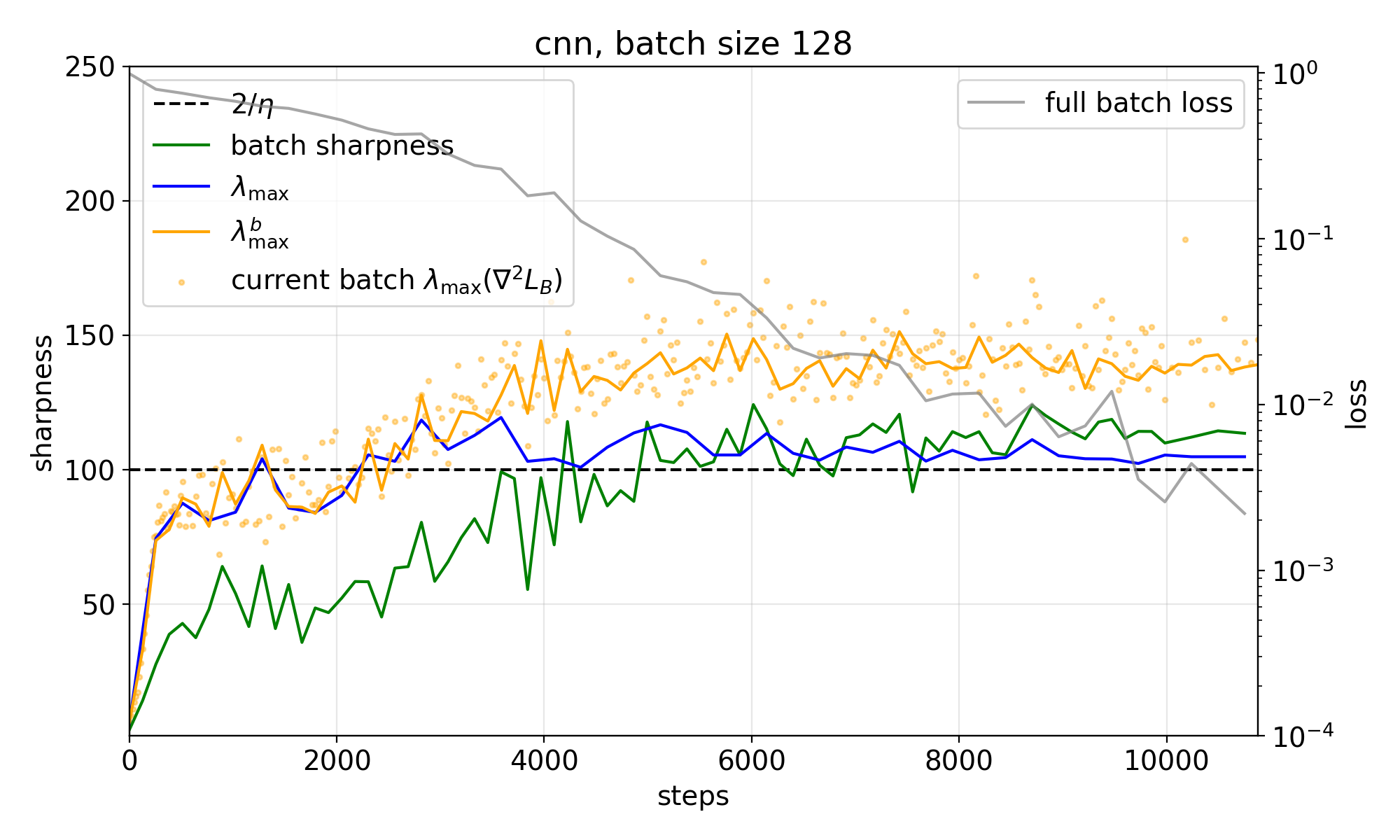} &
        \includegraphics[width=0.45\textwidth]{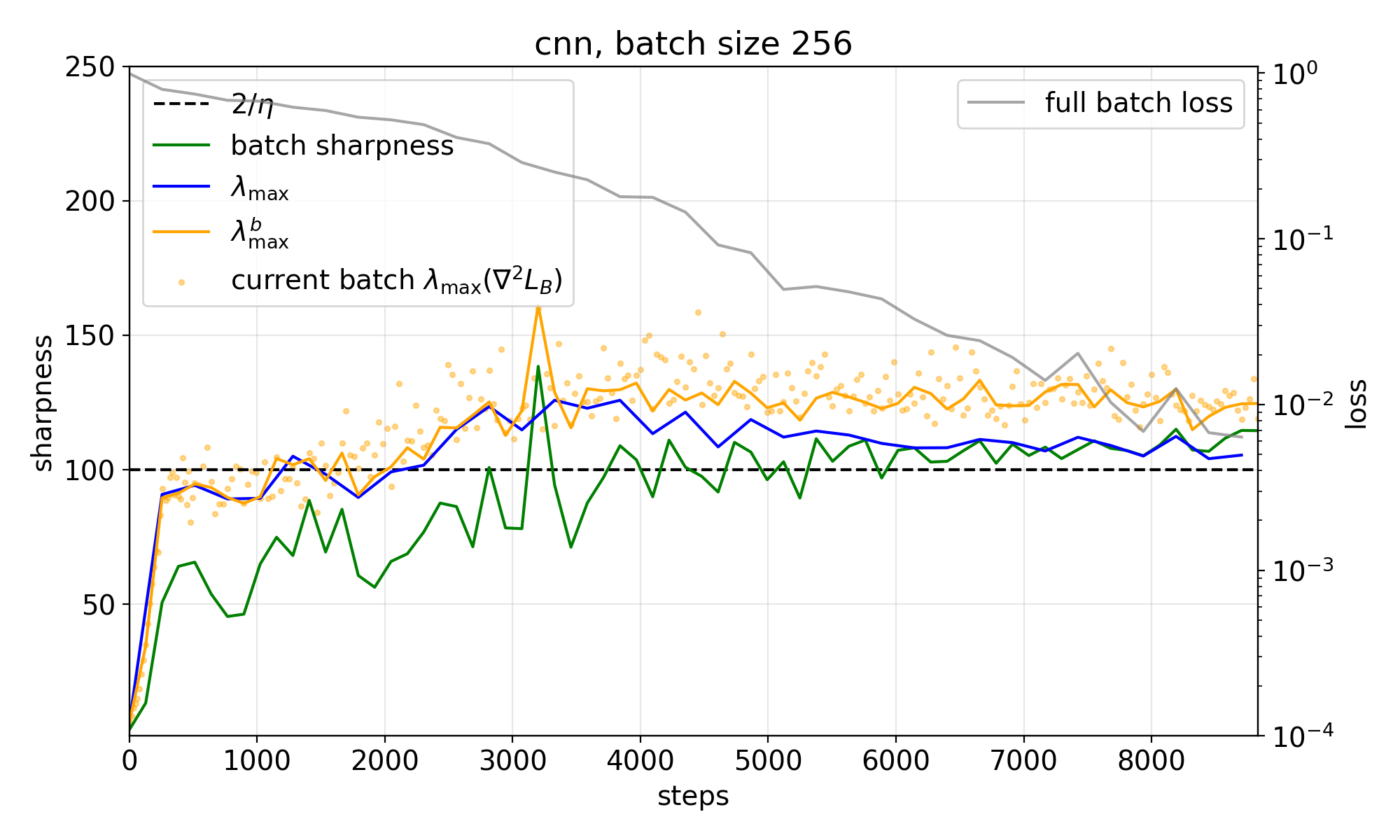} \\
    \end{tabular}
    \caption{\textbf{Tracking $\boldsymbol{\lambda^b_{\max}}$, CNN.}
    CNN with 5 layers (3 convolutional, 2 fully connected), step size $0.03$, trained on an $8$k
    subset of CIFAR-10. Comparison between the highest eigenvalue of the Hessian of the current mini-batch loss
    (orange dots, time-smoothed $\approx \lambda^b_{\max}$), \textit{Batch Sharpness} (green line), $\lambda_{\max}$ (blue line), and
    $\lambda^b_{\max}$ (orange line). Note that \textit{Batch Sharpness} stabilizes as $2/\eta$, while $\lambda^b_{\max}$ is above it, and $\lambda_{\max}$ is below for small batch sizes.}
    \label{fig:appendix_blmax_cnn}
    
\end{figure}

% ===================== ResNet-20 =====================
\begin{figure}[t]
    \centering
    \begin{tabular}{cc}
        \includegraphics[width=0.45\textwidth]{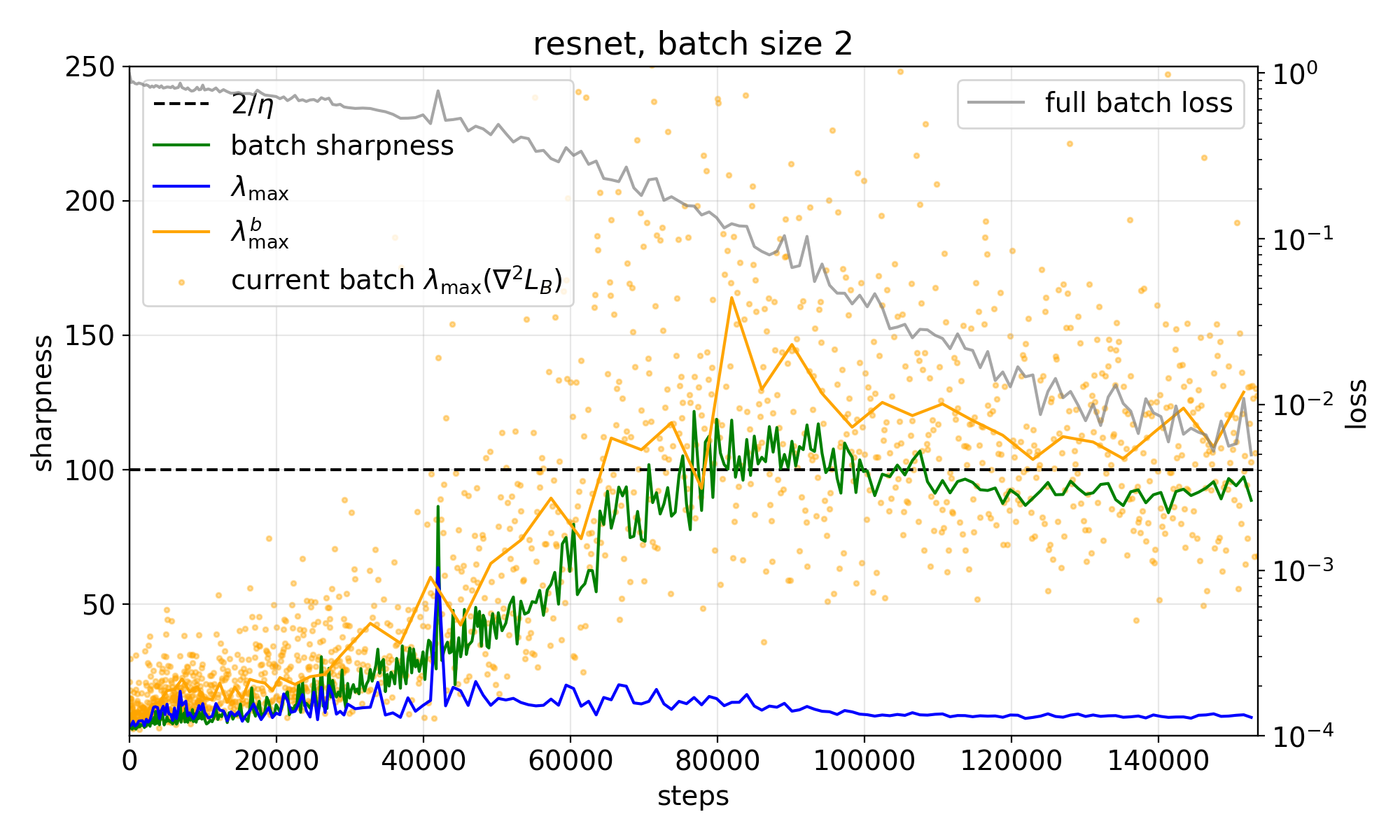} &
        \includegraphics[width=0.45\textwidth]{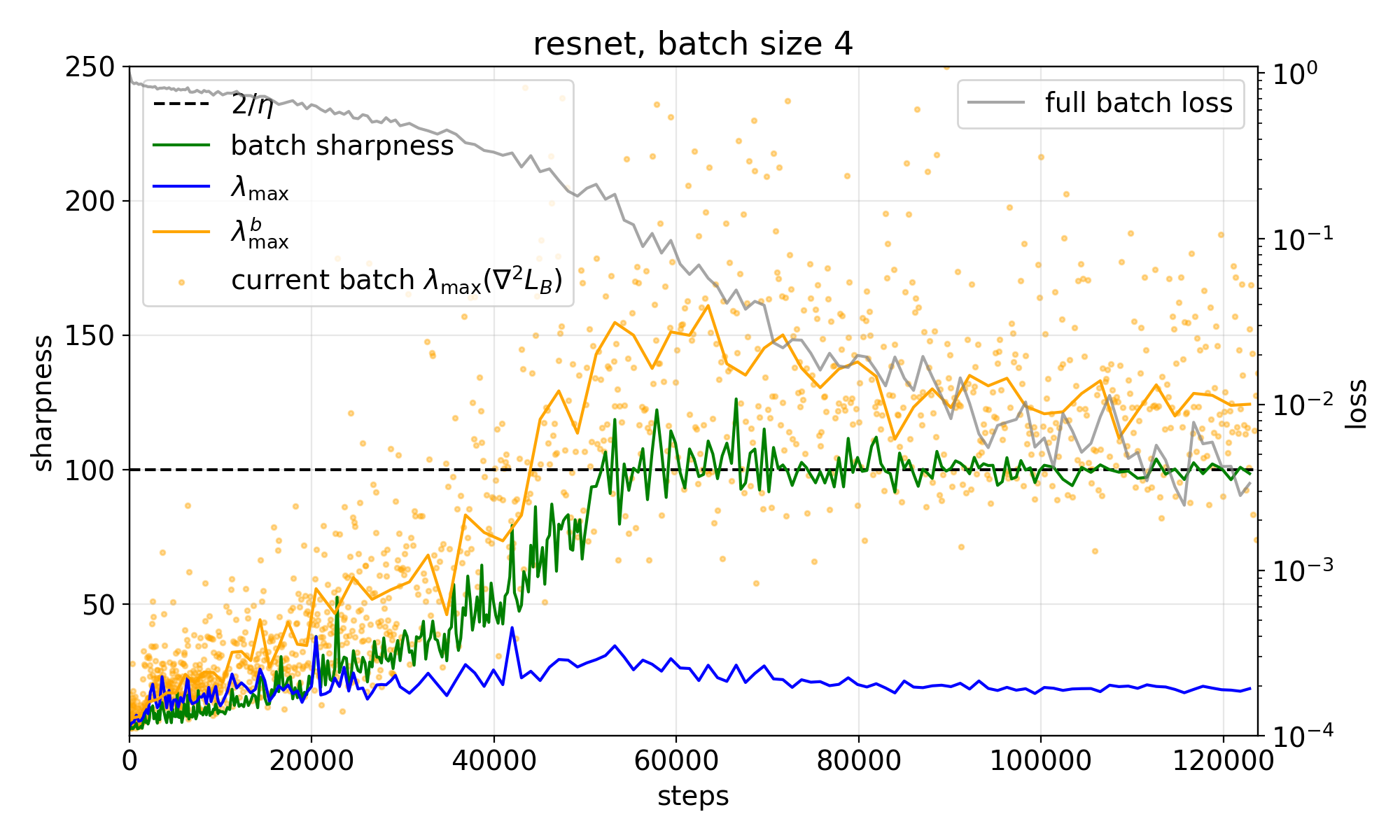} \\
        \includegraphics[width=0.45\textwidth]{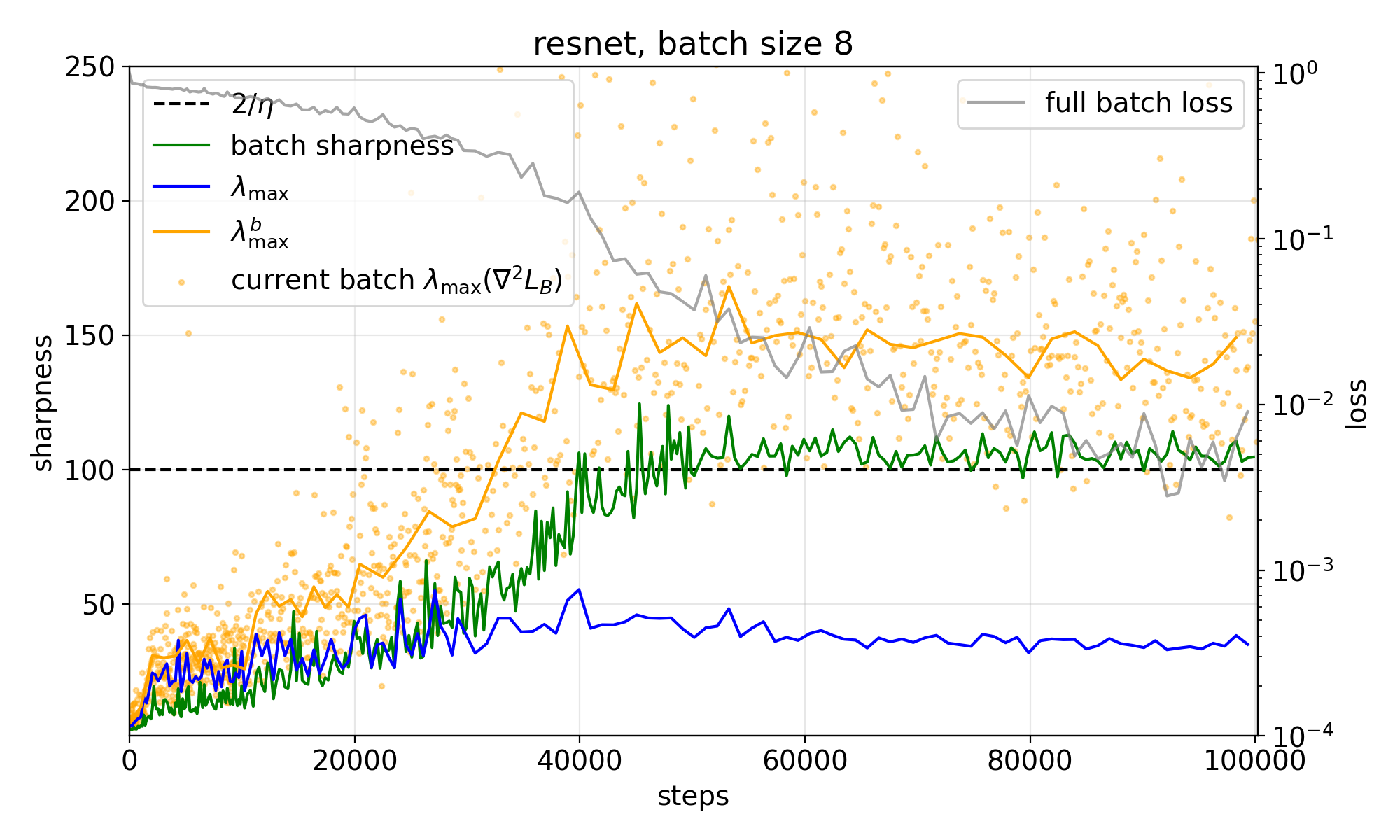} &
        \includegraphics[width=0.45\textwidth]{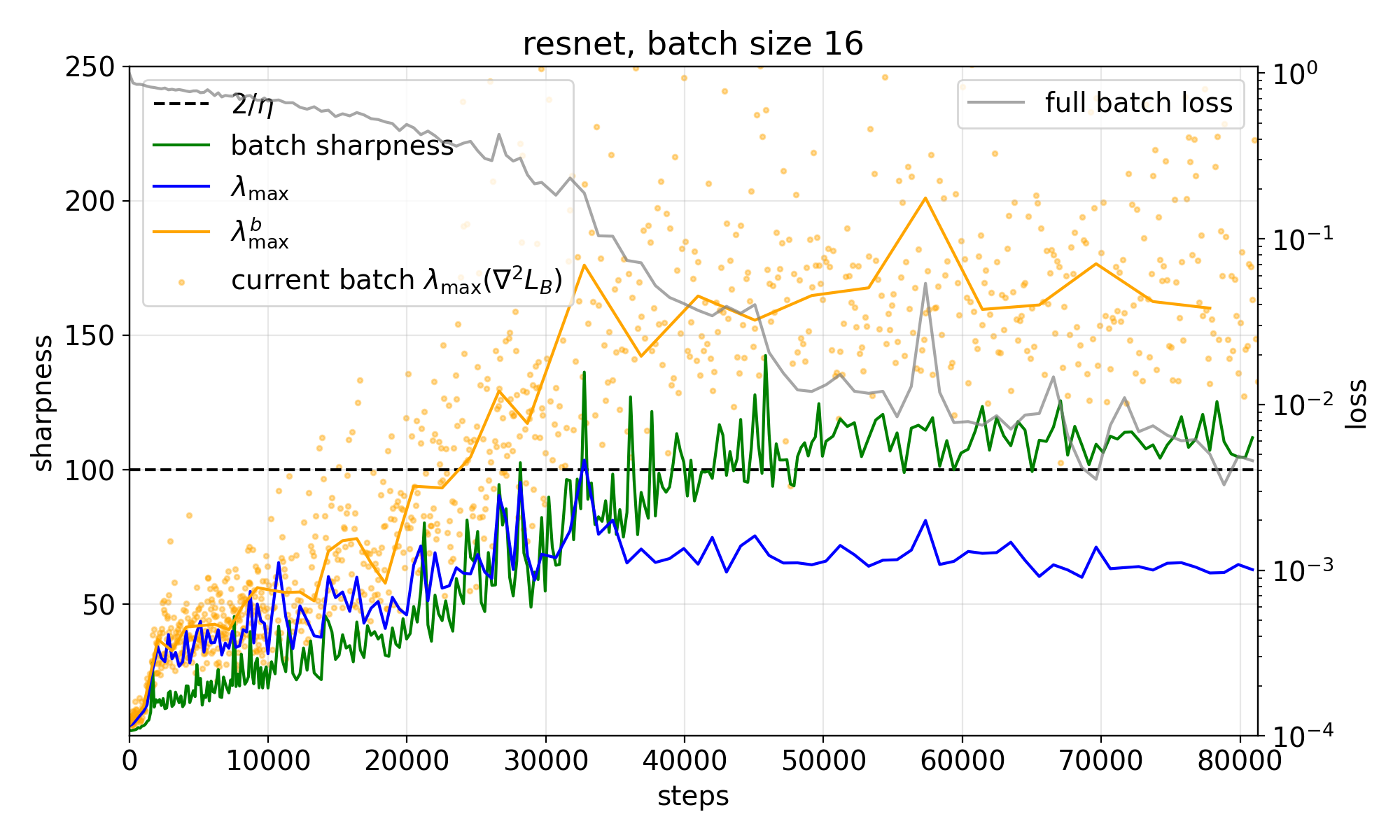} \\
        \includegraphics[width=0.45\textwidth]{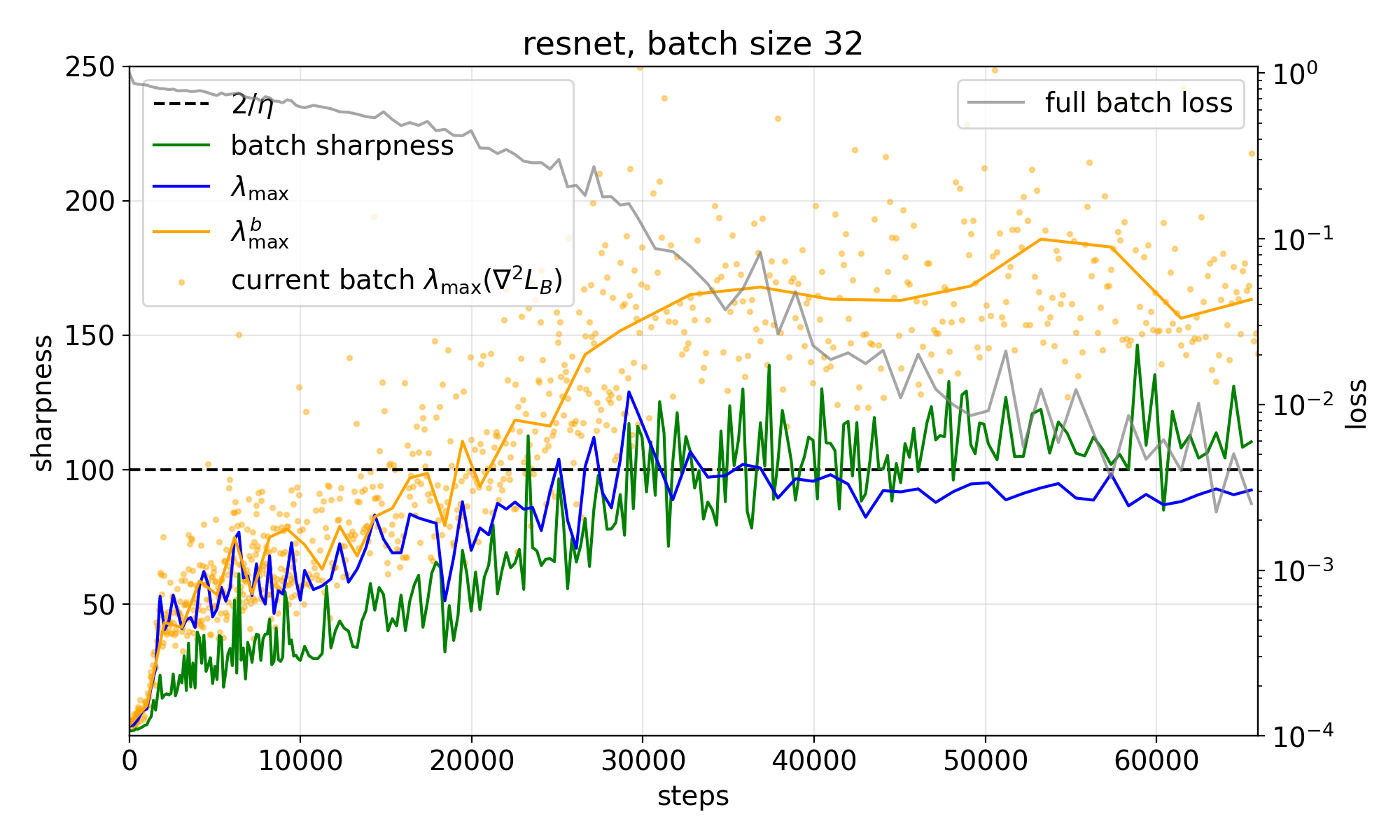} &
        \includegraphics[width=0.45\textwidth]{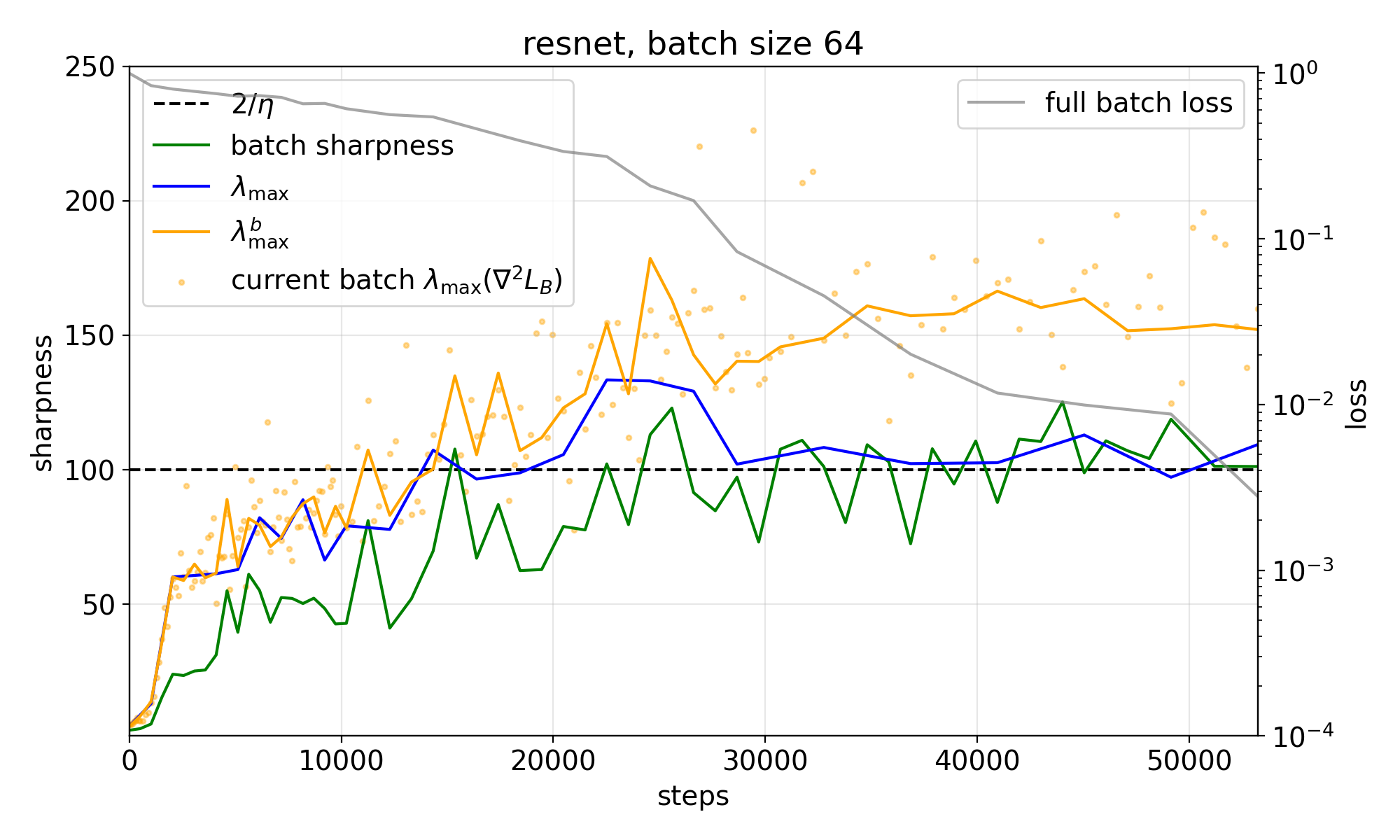} \\
        \includegraphics[width=0.45\textwidth]{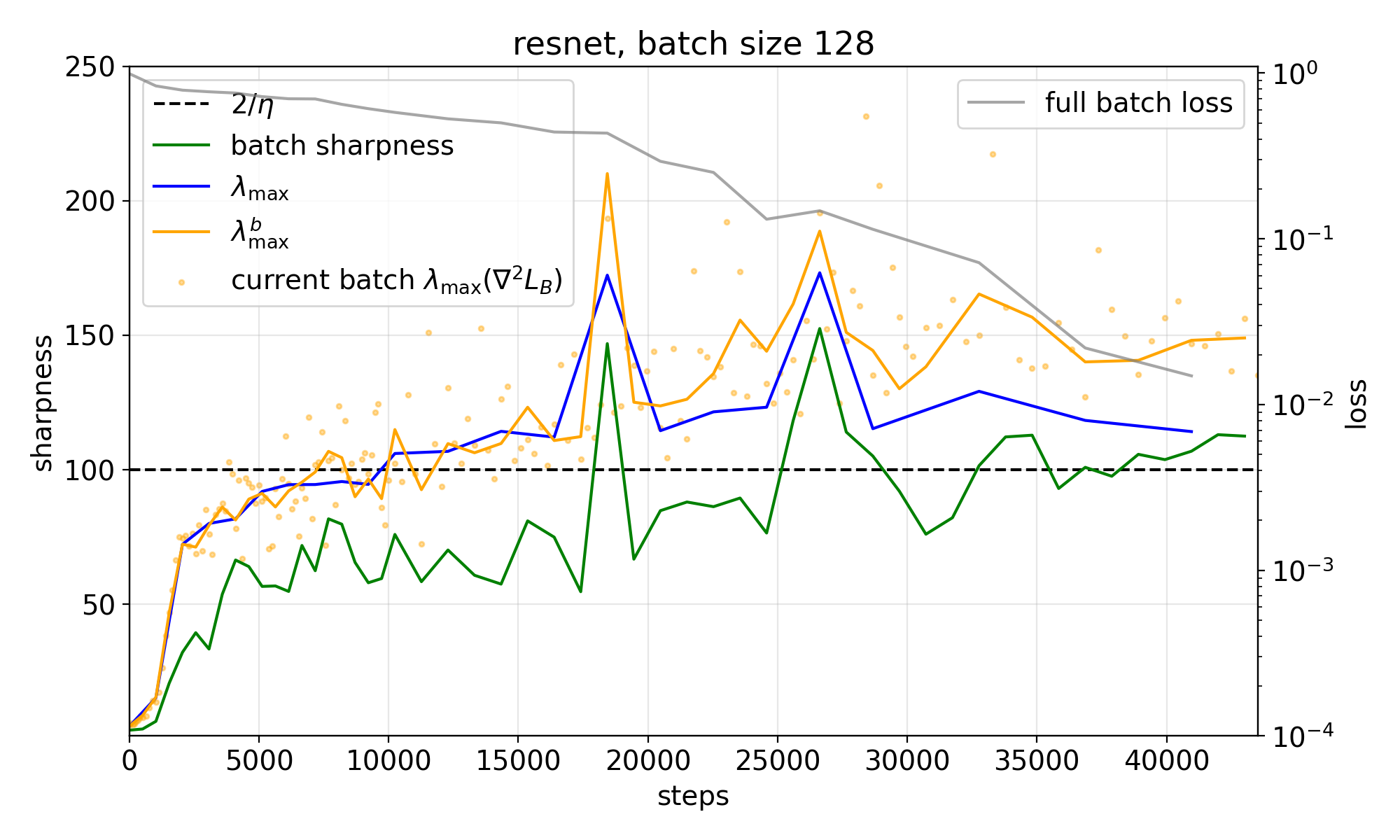} &
        \includegraphics[width=0.45\textwidth]{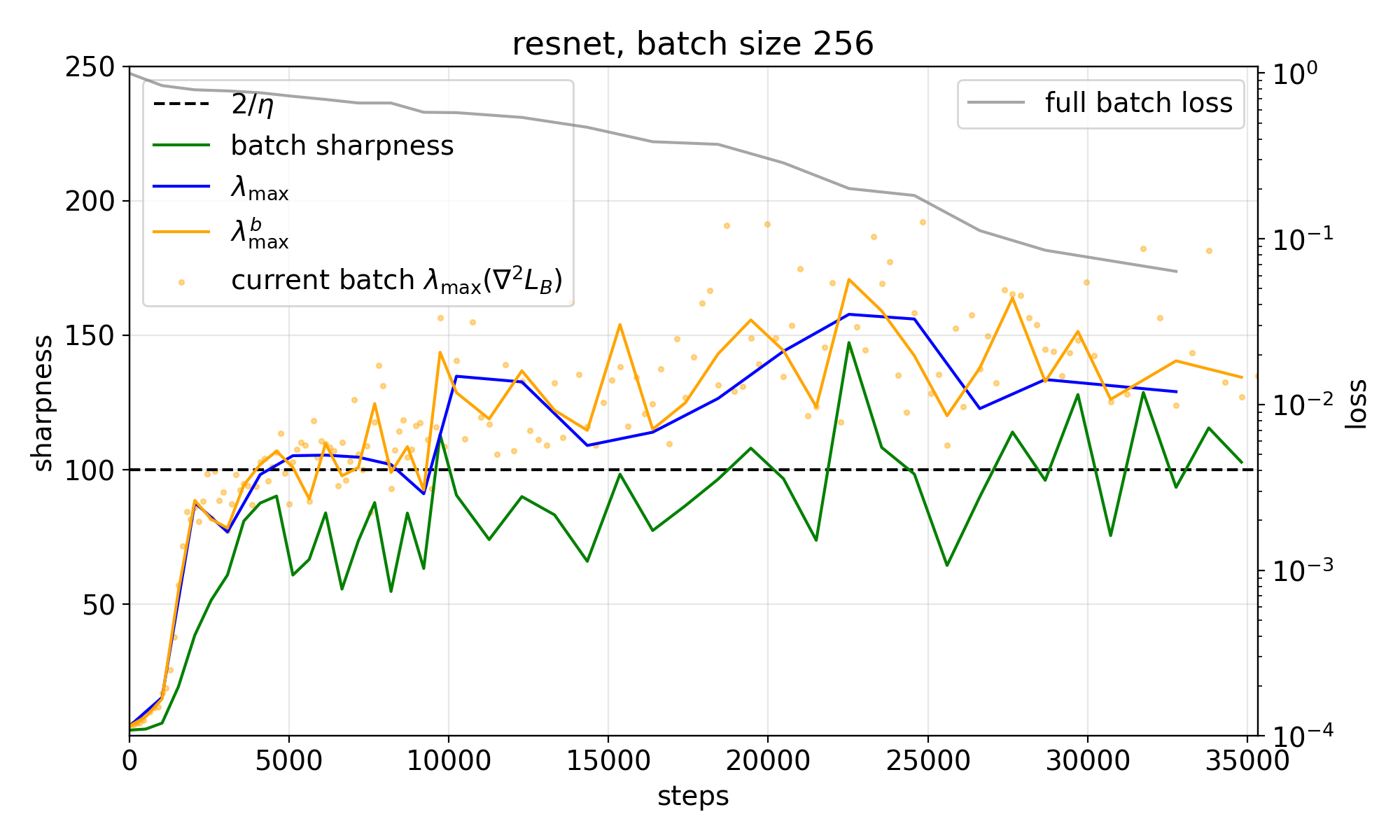} \\
    \end{tabular}
    \caption{\textbf{Tracking $\boldsymbol{\lambda^b_{\max}}$, ResNet-20.}
    ResNet-20 on CIFAR-10 with step size $0.02$, trained on an $8$k subset of the dataset.
    Comparison between the highest eigenvalue of the Hessian of the current mini-batch loss
    (orange dots, time-smoothed $\approx \lambda^b_{\max}$), \textit{Batch Sharpness} (green line), $\lambda_{\max}$ (blue line), and
    $\lambda^b_{\max}$ (orange line). Note that \textit{Batch Sharpness} stabilizes as $2/\eta$, while $\lambda^b_{\max}$ is above it, and $\lambda_{\max}$ is below for small batch sizes.}
    \label{fig:appendix_blmax_resnet}
\end{figure}

\clearpage

\section{Experiments: Varying Architecture}
\label{appendix:further_bs}

In this appendix, we provide further empirical evidence that \textsc{EoSS} arises robustly across a variety of architectures, step sizes, and batch sizes. For each experiment, we plot three quantities: $\lmax$, \textit{Batch Sharpness}, and \textit{step sharpness} as a point cloud, which constitutes \textit{Batch Sharpness} without expectation, and measured only on the \textit{current} batch. Notice that time-averaging \textit{step sharpness} is approximately the same as taking expectation over batches (albeit with slowly changing parameters), so it is approximately equal to \textit{Batch Sharpness}, which takes this expectation at a point. Consistent with our main observations, we find that \textit{Batch Sharpness} invariably stabilizes around \(2/\eta\). Also refer to \cref{appendix:further_lambda_b_max} for additional experiments illustrating \textsc{EoSS}

\paragraph{MLP (2-Layer) Baseline.}
Figure~\ref{fig:eoss_bs_mlp} illustrates \textsc{EoSS} for our baseline network, an \textsc{MLP} with two hidden layers of dimension 512, trained on an $8192$-sample subset of \textsc{CIFAR}-10 with step size \(\eta = 0.01\).  As the training proceeds, \textit{Batch Sharpness} stabilizes around \(2/\eta\), whereas $\lambda_{\max}$ plateaus strictly below \textit{Batch Sharpness}.  
% Decreasing the step size to \(\eta = 0.002\) (see Figure~\ref{fig:eoss_bs_mlp}) rescales the plateau of \textit{Batch Sharpness} around the new threshold \(2/\eta\), in line with the behavior discussed in the main text.

\paragraph{5-Layer CNN.}
We further confirm the \textsc{EoSS} regime in a five-layer \textsc{CNN}.  As depicted in Figures~\ref{fig:eoss_bs_cnn}, \textit{Batch Sharpness} continues to plateau near the instability threshold for two distinct step sizes, while $\lambda_{\max}$ once again settles at a lower level. Notably, as we vary the batch size, the gap between \textit{Batch Sharpness} and $\lambda_{\max}$ increases for smaller batches, mirroring the patterns described in Section~\ref{section:lambda_max}. 
% \red{correct the reference!}

\paragraph{ResNet-14.}
Finally, we demonstrate that the \textsc{EoSS} regime also emerges for a deeper, residual architectures. In our case we are using \textsc{ResNet}-14 without BatchNor. Figure~\ref{fig:eoss_bs_resnet} highlights the same qualitative behavior, with \textit{Batch Sharpness} stabilizing at $2/\eta$.

Overall, these experiments provide further confirmation that \textsc{EoSS} is a robust phenomenon across different architectures, step sizes, and batch sizes. 

\paragraph{CNN with Full CIFAR-10.}
We also demonstrate in Figure \ref{fig:full-cifar} the emergence of \textsc{EoSS} when training on the full CIFAR-10 dataset. Consistent with the rest of the experiments, \textit{Batch Sharpness} consistently stabilizes at $2/\eta$. Notably, in these experiments we also include a plot of the accuracy on the training set, to illustrate that \textsc{EoSS} happens away from the manifold of minima, and thus cannot be attributed solely to the structure around the manifold of minima.
% Preamble:
% \usepackage{graphicx}
% \usepackage{subcaption}

% Preamble:
% \usepackage{graphicx}
% \usepackage{subcaption}

\begin{figure}[htbp]
  \centering

  % Row 1
  \begin{subfigure}[t]{0.48\textwidth}
    \centering
    \includegraphics[width=\linewidth]{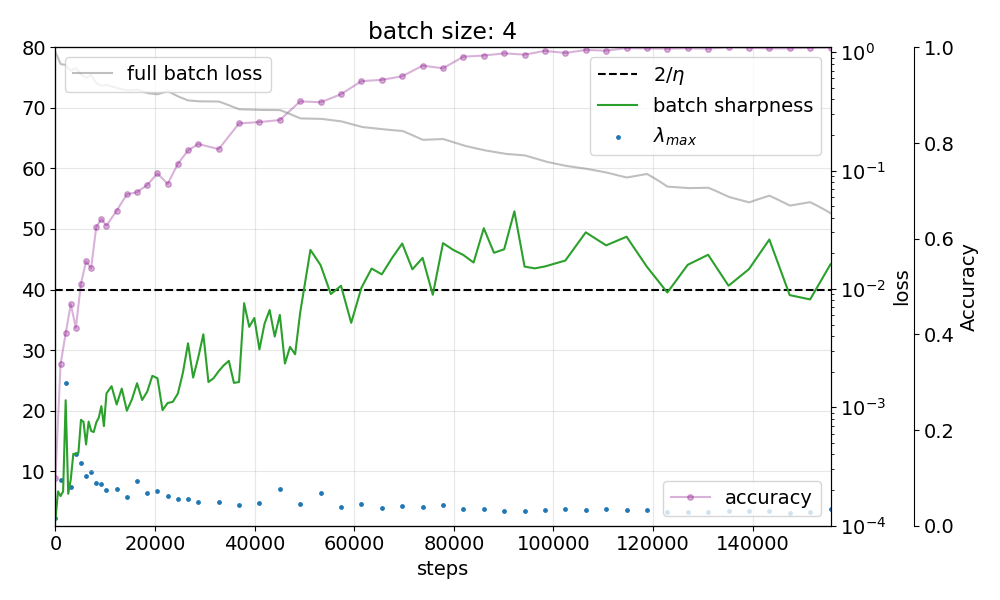}
    \subcaption{batch size 4}
    \label{fig:cifar-4}
  \end{subfigure}\hfill
  \begin{subfigure}[t]{0.48\textwidth}
    \centering
    \includegraphics[width=\linewidth]{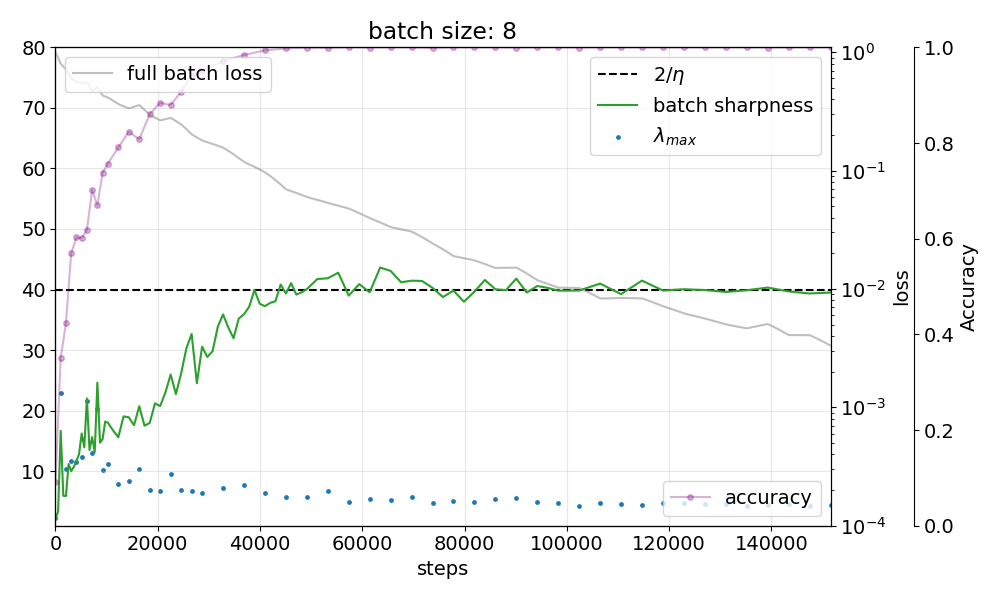}
    \subcaption{batch size 8}
    \label{fig:cifar-8}
  \end{subfigure}

  \smallskip

  % Row 2
  \begin{subfigure}[t]{0.48\textwidth}
    \centering
    \includegraphics[width=\linewidth]{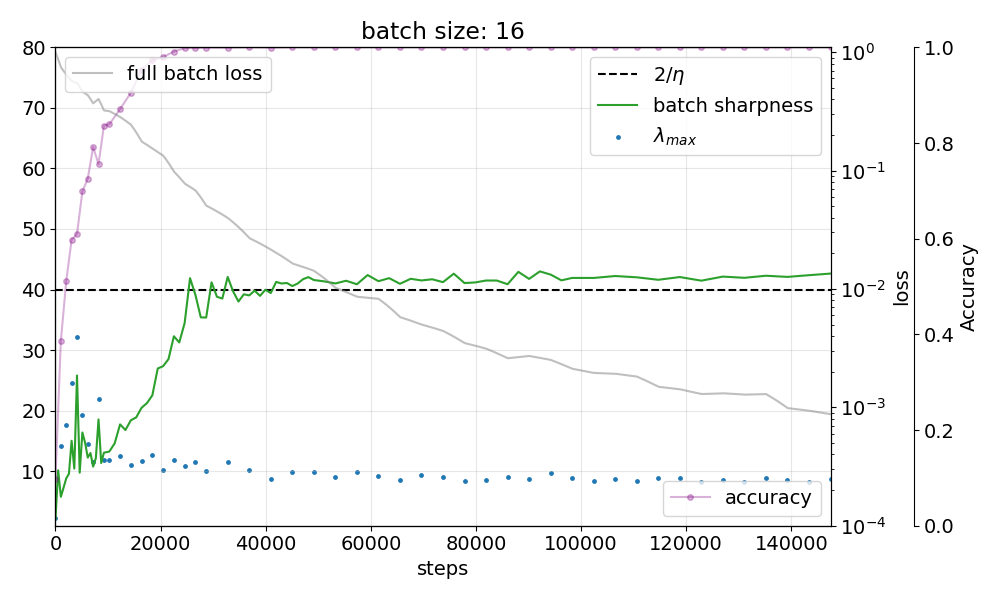}
    \subcaption{batch size 16}
    \label{fig:cifar-16}
  \end{subfigure}\hfill
  \begin{subfigure}[t]{0.48\textwidth}
    \centering
    \includegraphics[width=\linewidth]{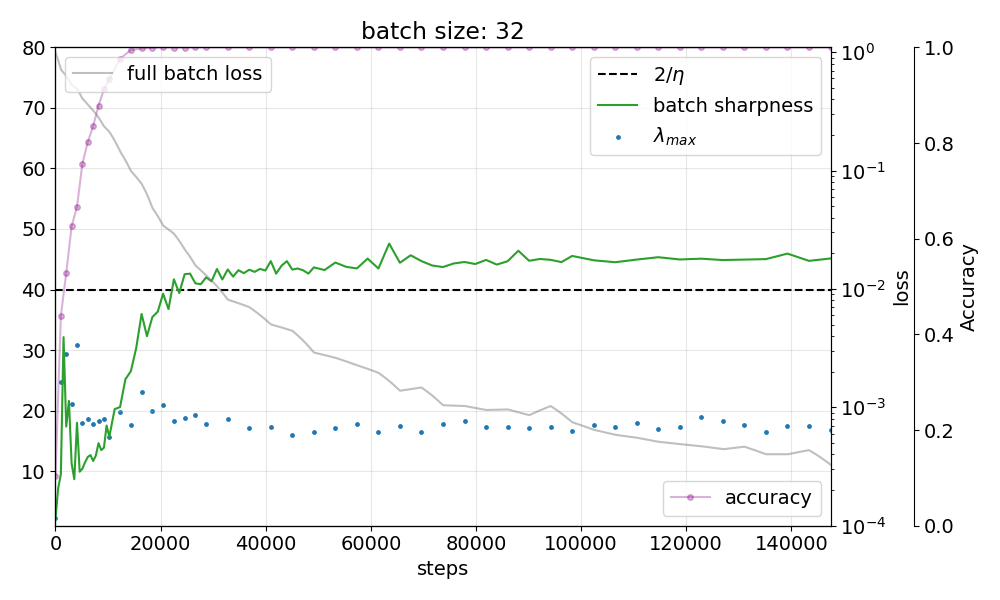}
    \subcaption{batch size 32}
    \label{fig:cifar-32}
  \end{subfigure}

  \caption{\textbf{CNN on Full CIFAR-10:} Same architecture as in Figure \ref{fig:eoss_bs_cnn}, but trained at a larger $\eta=0.05$ on the full CIFAR-10 dataset, illustrating the emergence of \textsc{EoSS} away from manifold of minima.}
  \label{fig:full-cifar}
\end{figure}

\begin{figure}[ht!]
    \centering
    \begin{tabular}{cc}
      \includegraphics[width=0.45\textwidth]{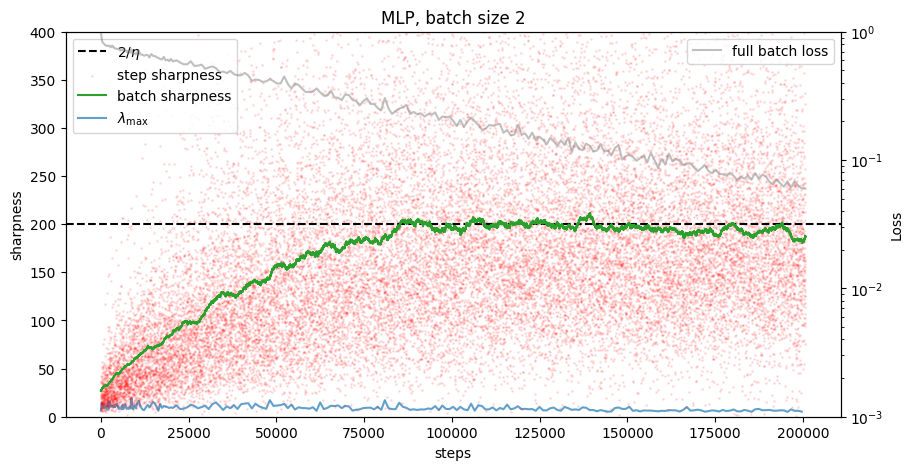} & \includegraphics[width=0.45\textwidth]{img/main_new/cifar/mlp/4.png} \\
      \includegraphics[width=0.45\textwidth]{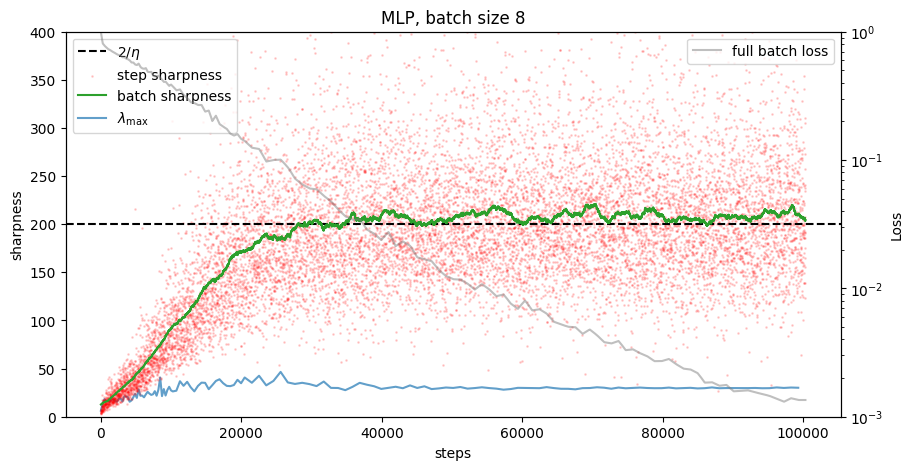} & \includegraphics[width=0.45\textwidth]{img/main_new/cifar/mlp/16.png} \\
      \includegraphics[width=0.45\textwidth]{img/main_new/cifar/mlp/32.png} & \includegraphics[width=0.45\textwidth]{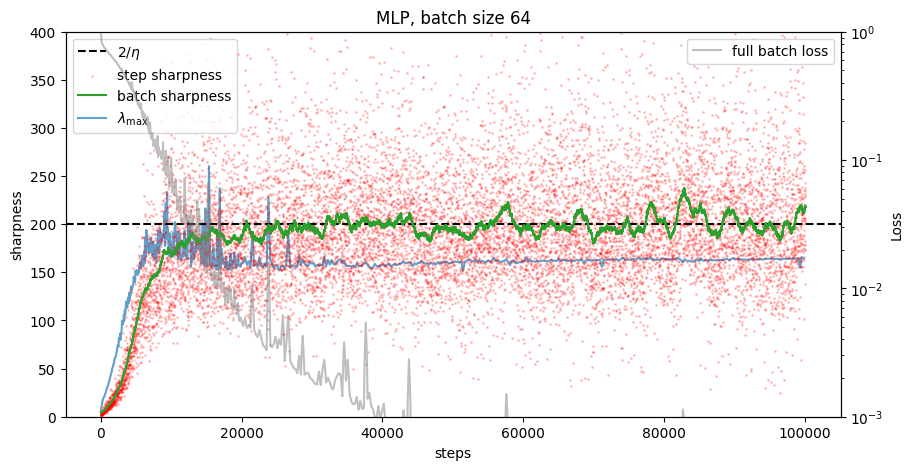} \\
      \includegraphics[width=0.45\textwidth]{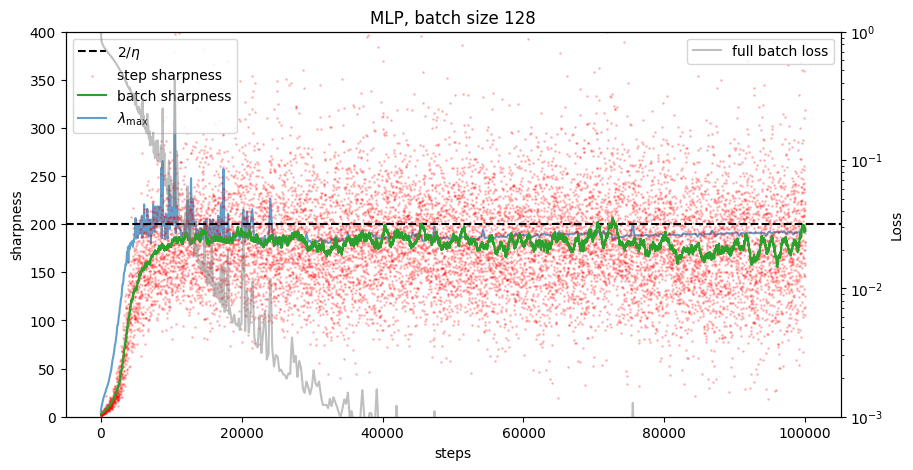} & \includegraphics[width=0.45\textwidth]{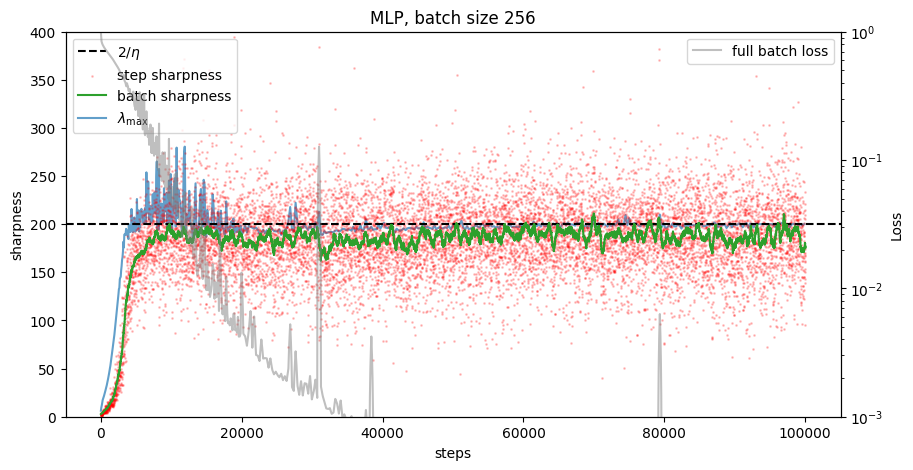} \\
    \end{tabular}
    \caption{\textbf{MLP}: 2 hidden layers, hidden dimension 512; \textbf{step size 0.01}, 8k subset of CIFAR-10. Comparison between: step sharpness, aka \textit{Batch Sharpness} without expectation over batches and measured on the current batch (red dots, time-smoothing would be $\approx$ \textit{Batch Sharpness}), the empirical \textit{Batch Sharpness} (green line), the $\lambda_{\max}$ (blue line).}
    \label{fig:eoss_bs_mlp}
\end{figure}

\newpage

\begin{figure}[ht!]
    \centering
    \begin{tabular}{cc}
      \includegraphics[width=0.45\textwidth]{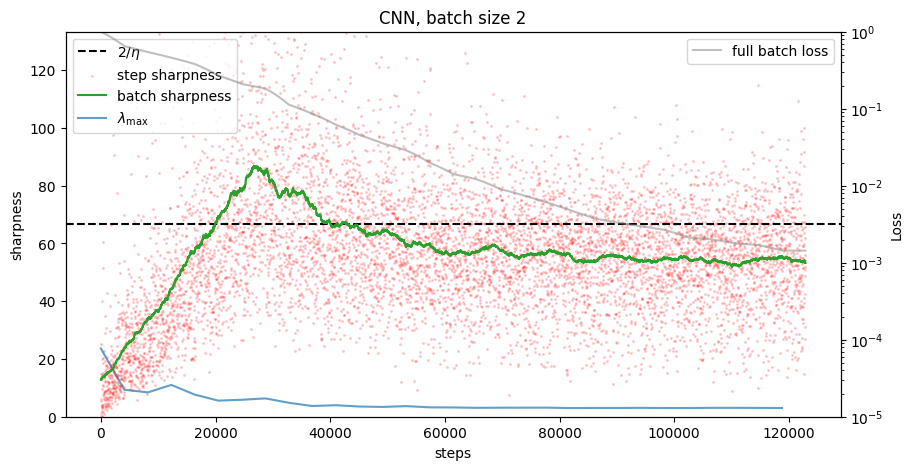} & \includegraphics[width=0.45\textwidth]{img/main_new/cifar/cnn/4.png} \\
      \includegraphics[width=0.45\textwidth]{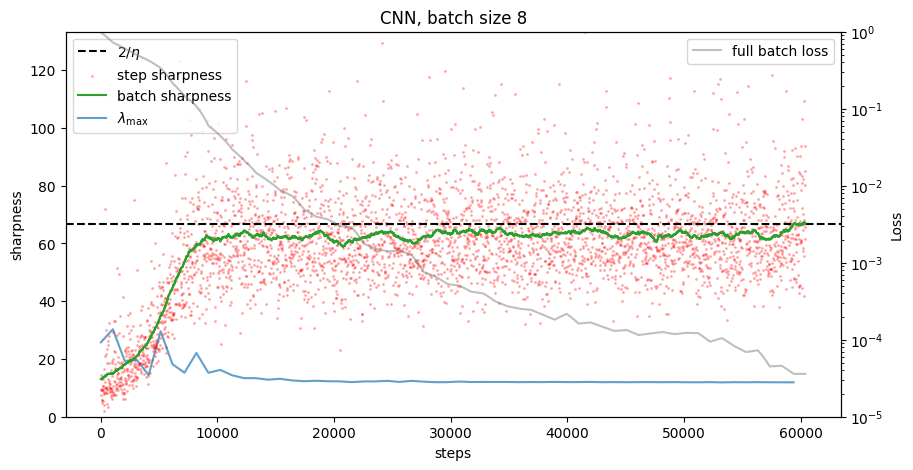} & \includegraphics[width=0.45\textwidth]{img/main_new/cifar/cnn/16.png} \\
      \includegraphics[width=0.45\textwidth]{img/main_new/cifar/cnn/32.png} & \includegraphics[width=0.45\textwidth]{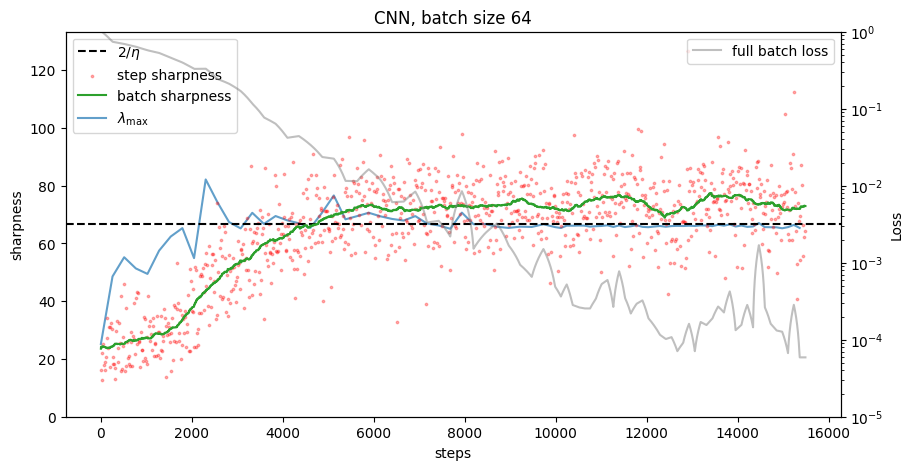} \\
      \includegraphics[width=0.45\textwidth]{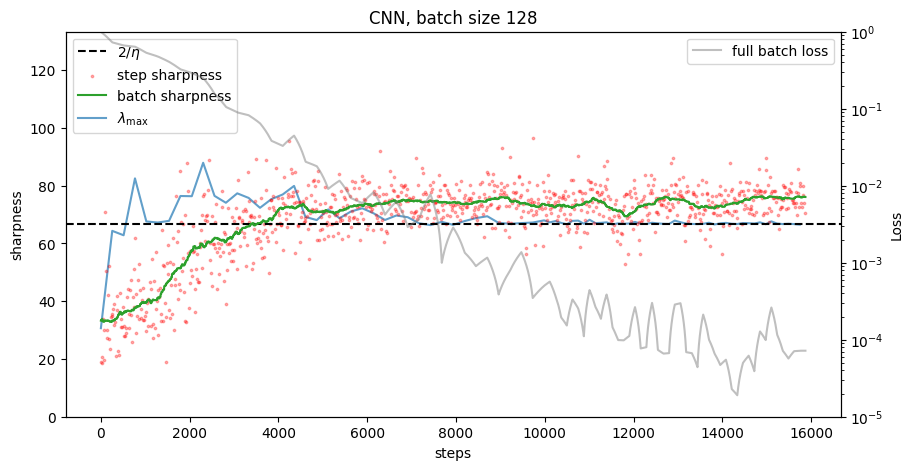} & \includegraphics[width=0.45\textwidth]{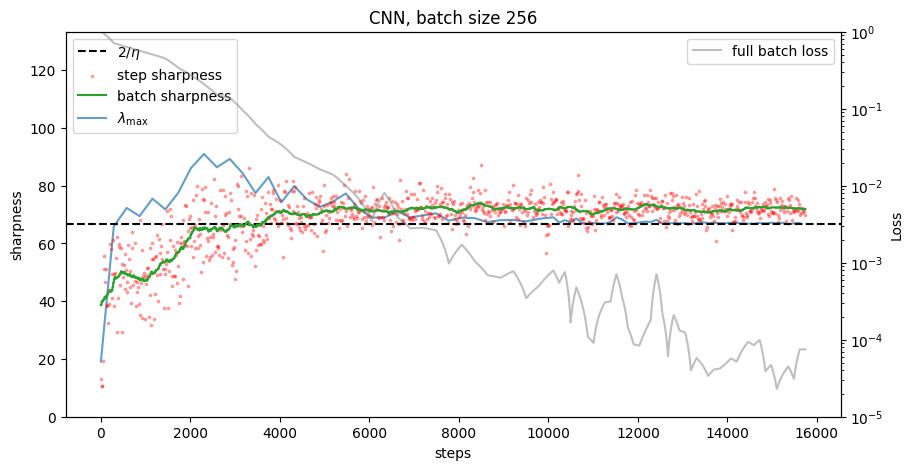} \\
    \end{tabular}
    \caption{\textbf{CNN}: 5 layers (3 convolutional, 2 fully-connected), \textbf{step size 0.03}, 8k subset of CIFAR-10. Comparison between: step sharpness, aka \textit{Batch Sharpness} without expectation over batches and measured on the current batch (red dots, time-smoothing would be $\approx$ \textit{Batch Sharpness}), the empirical \textit{Batch Sharpness} (green line), the $\lambda_{\max}$ (blue line).}
    \label{fig:eoss_bs_cnn}
\end{figure}

\newpage

\begin{figure}[ht!]
    \centering
    \begin{tabular}{cc}
      \includegraphics[width=0.45\textwidth]{img/main_new/cifar/resnet/4.png} &
      \includegraphics[width=0.45\textwidth]{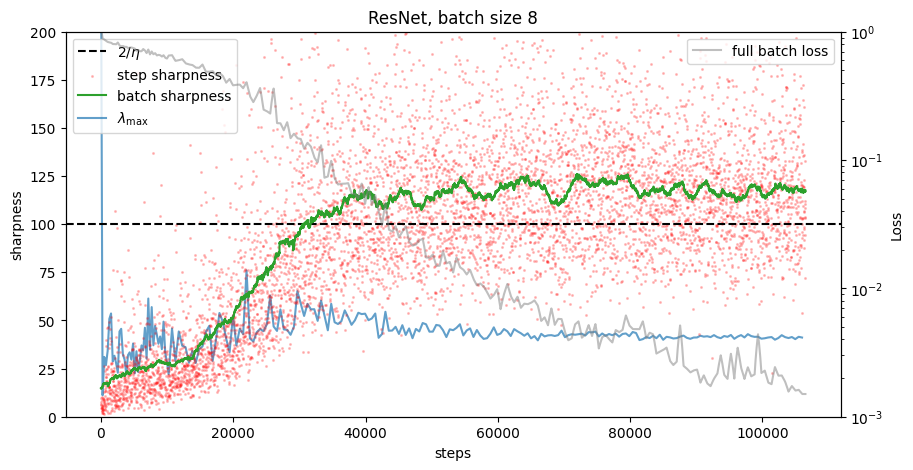} \\ \includegraphics[width=0.45\textwidth]{img/main_new/cifar/resnet/16.png} &
      \includegraphics[width=0.45\textwidth]{img/main_new/cifar/resnet/32.png} \\ \includegraphics[width=0.45\textwidth]{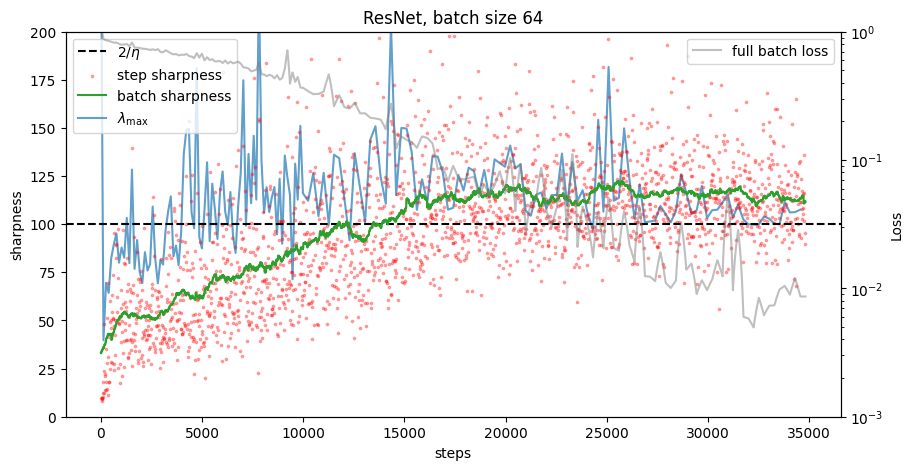} &
      \includegraphics[width=0.45\textwidth]{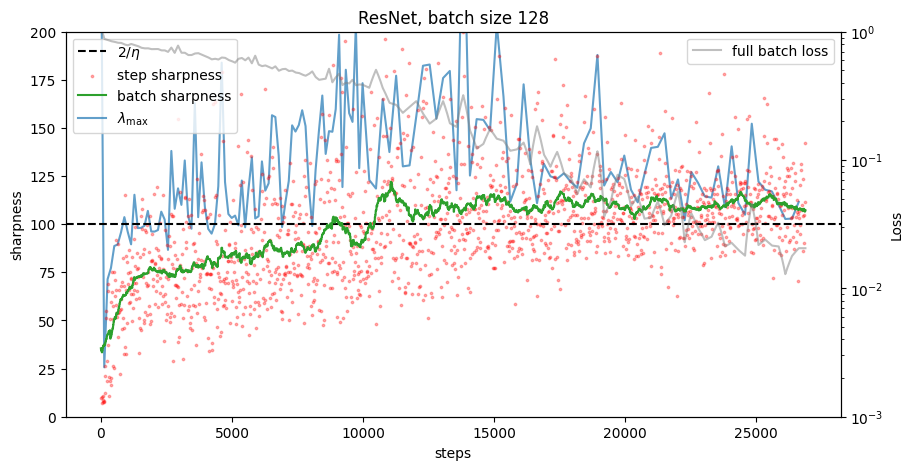} \\ \includegraphics[width=0.45\textwidth]{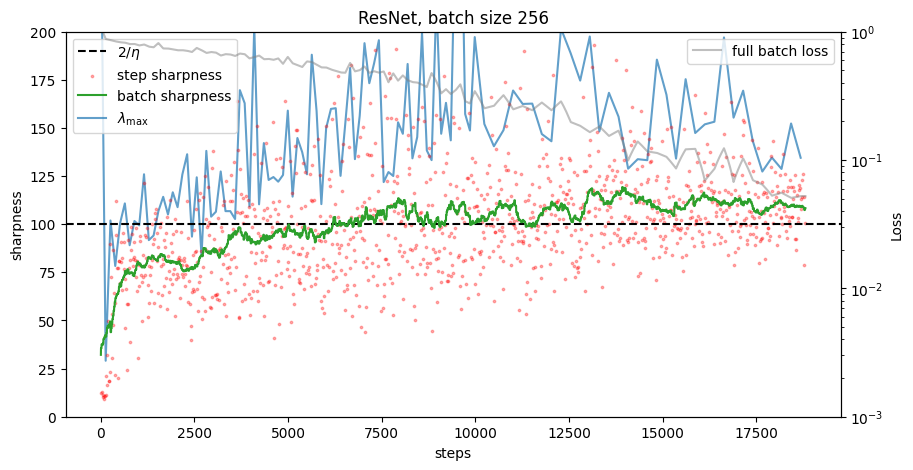}
    \end{tabular}
    \caption{\textbf{ResNet-10}, step size 0.005, 8k subset of CIFAR-10. Comparison between: step sharpness, aka \textit{Batch Sharpness} without expectation over batches and measured on the current batch (red dots, time-smoothing would be $\approx$ \textit{Batch Sharpness}), the empirical \textit{Batch Sharpness} (green line), the $\lambda_{\max}$ (blue line).}
    \label{fig:eoss_bs_resnet}
\end{figure}

\clearpage

\section{Experiments: Varying Datasets}
\label{appendix:further_bs_svhn}

This appendix complements Appendix~\ref{appendix:further_bs} by verifying that the \textsc{EoSS} phenomena are not specific to CIFAR-10 but persist under a change of dataset. We repeat the experiments of Appendix~\ref{appendix:further_bs}—sweeping architectures (MLP, CNN, ResNet) and batch sizes—on an 8k subset of the SVHN dataset, and track step sharpness, Batch Sharpness, and the full-batch $\lambda_{\max}$ along the training trajectory. Across all settings we again observe progressive sharpening followed by stabilization of \textit{Batch Sharpness} at $2/\eta$, catapult-like spikes, and suppression of $\lambda_{\max}$ below $2/\eta$, mirroring the behavior seen on CIFAR-10 and supporting the claim that \textsc{EoSS} is a robust feature of mini-batch SGD on standard vision benchmarks.

\begin{figure}[ht!]
    \centering
    \begin{tabular}{cc}
      \includegraphics[width=0.45\textwidth]{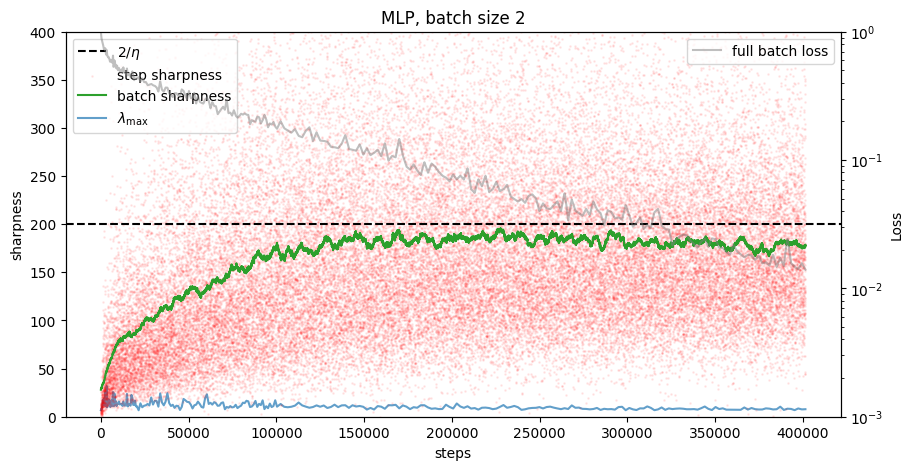} & \includegraphics[width=0.45\textwidth]{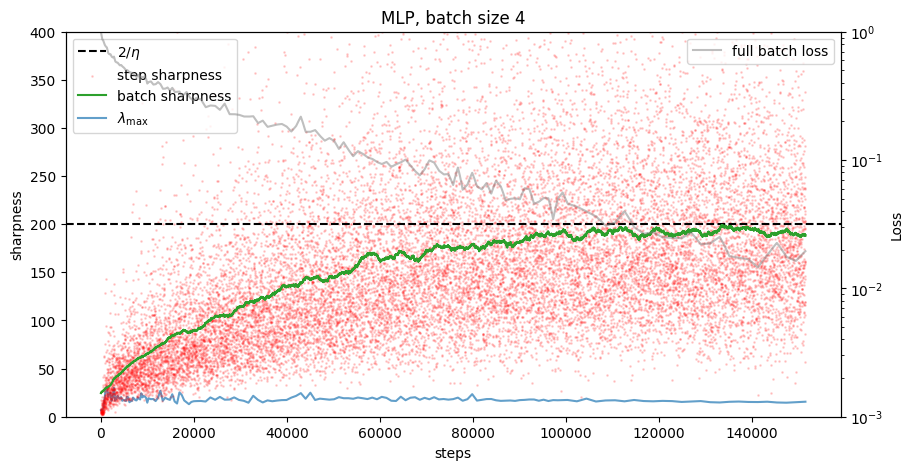} \\
      \includegraphics[width=0.45\textwidth]{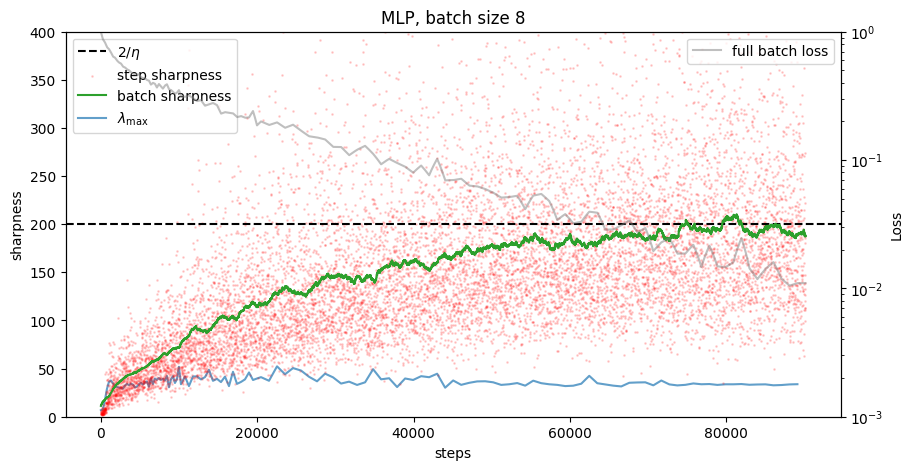} & \includegraphics[width=0.45\textwidth]{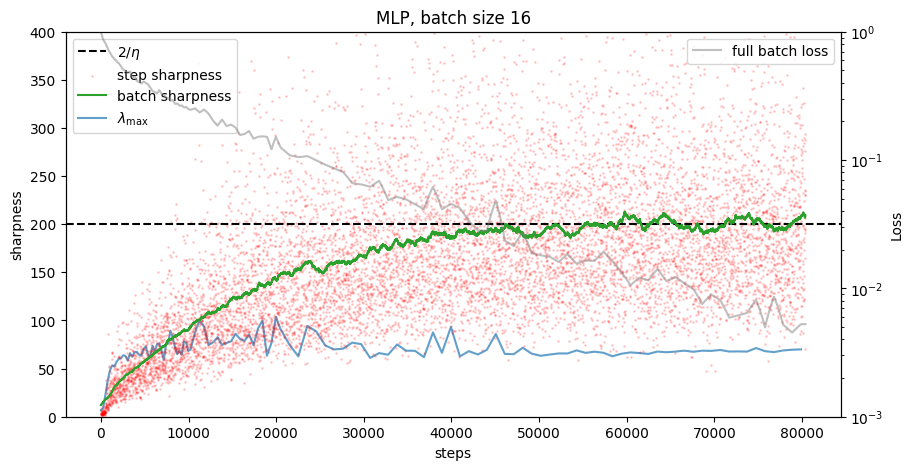} \\
      \includegraphics[width=0.45\textwidth]{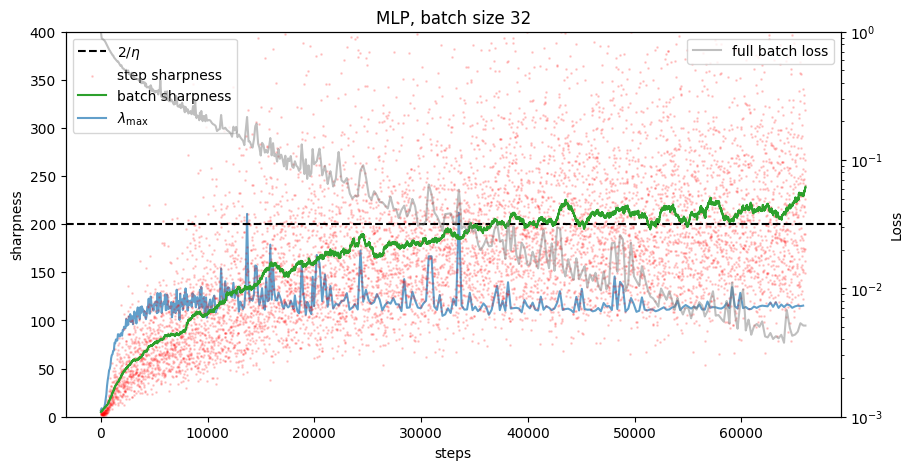} & \includegraphics[width=0.45\textwidth]{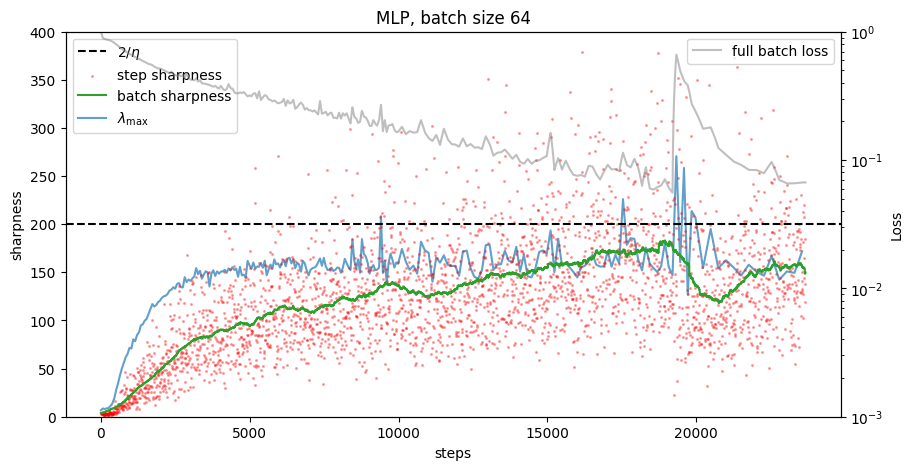} \\
      \includegraphics[width=0.45\textwidth]{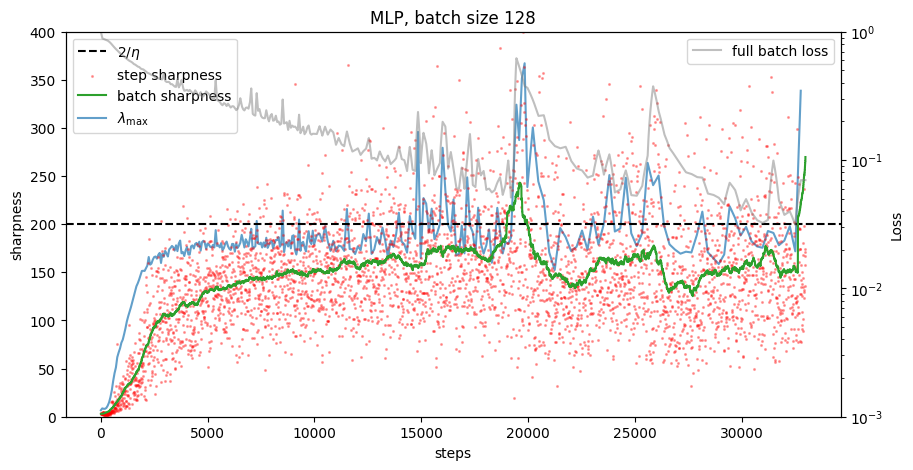} & \includegraphics[width=0.45\textwidth]{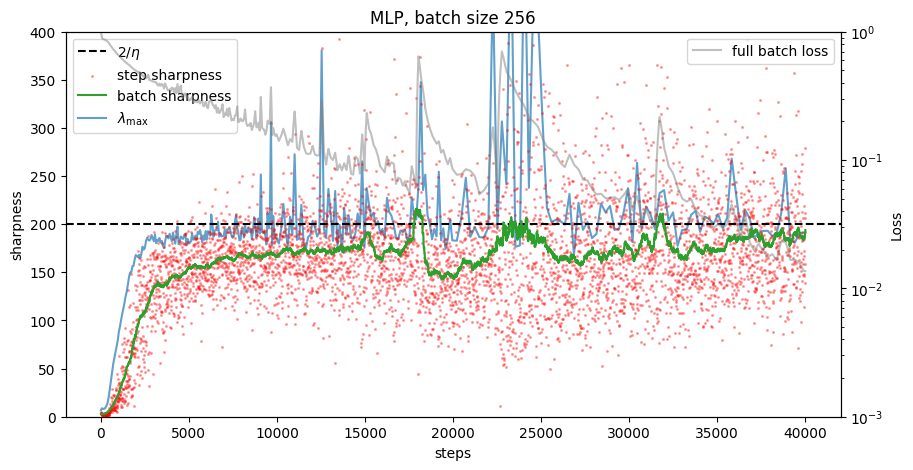} \\
    \end{tabular}
    \caption{\textbf{MLP}: 2 hidden layers, hidden dimension 512; \textbf{step size 0.01}, 8k subset of \textbf{SVHN}. Comparison between: step sharpness, aka \textit{Batch Sharpness} without expectation over batches and measured on the current batch (red dots, time-smoothing would be $\approx$ \textit{Batch Sharpness}), the empirical \textit{Batch Sharpness} (green line), the $\lambda_{\max}$ (blue line).}
    \label{fig:eoss_svhn_bs_mlp}
\end{figure}

\begin{figure}[ht!]
  \centering
  \begin{tabular}{cc}
    \includegraphics[width=0.45\textwidth]{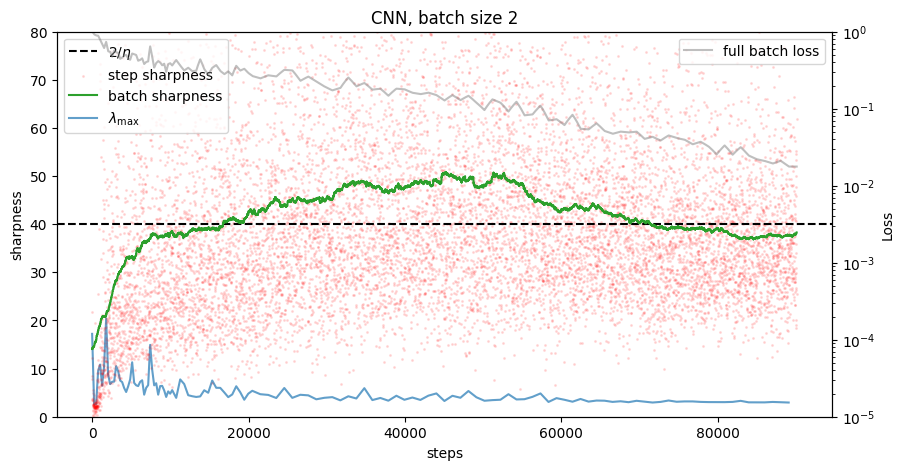} & \includegraphics[width=0.45\textwidth]{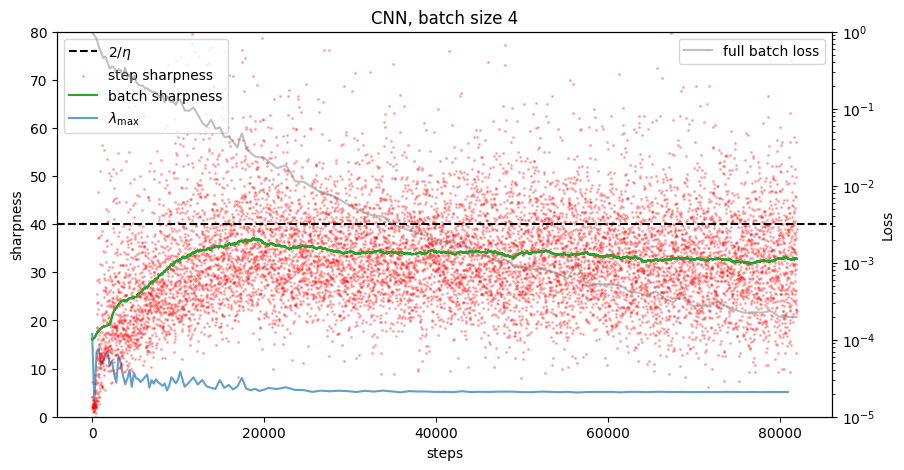} \\
    \includegraphics[width=0.45\textwidth]{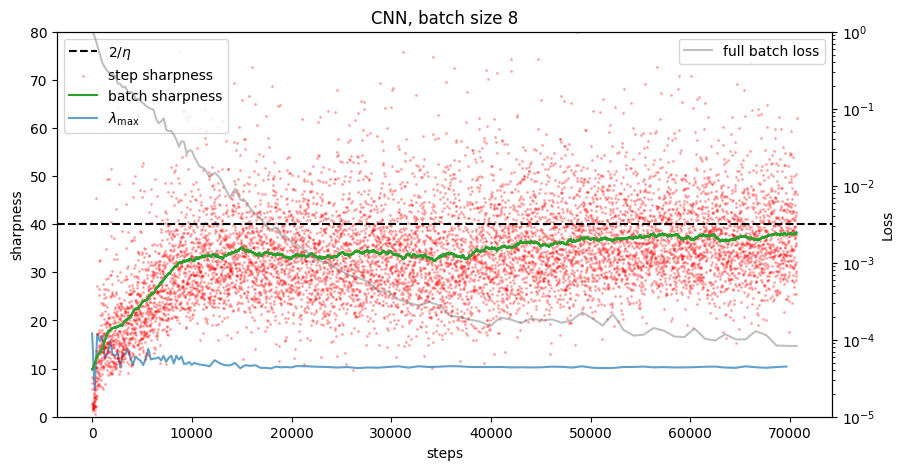} & \includegraphics[width=0.45\textwidth]{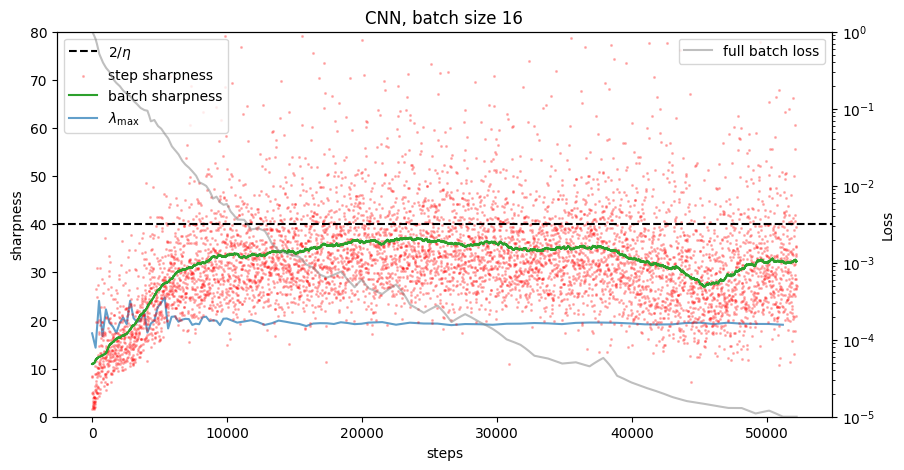} \\
    \includegraphics[width=0.45\textwidth]{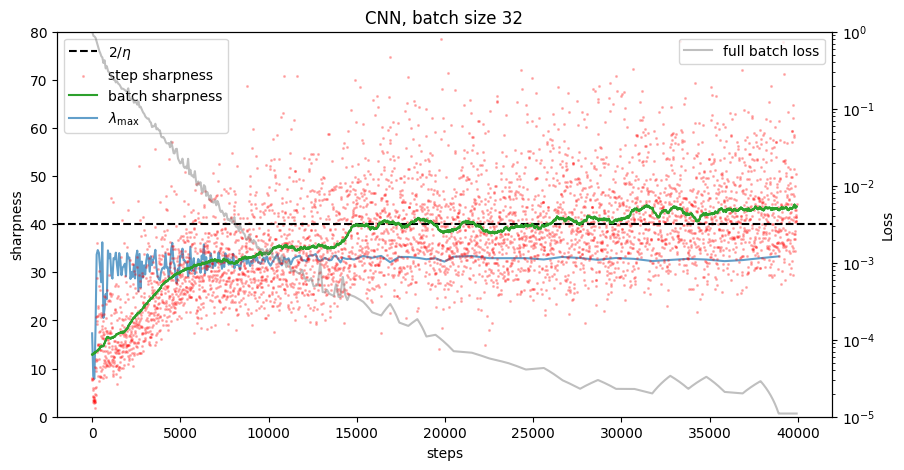} & \includegraphics[width=0.45\textwidth]{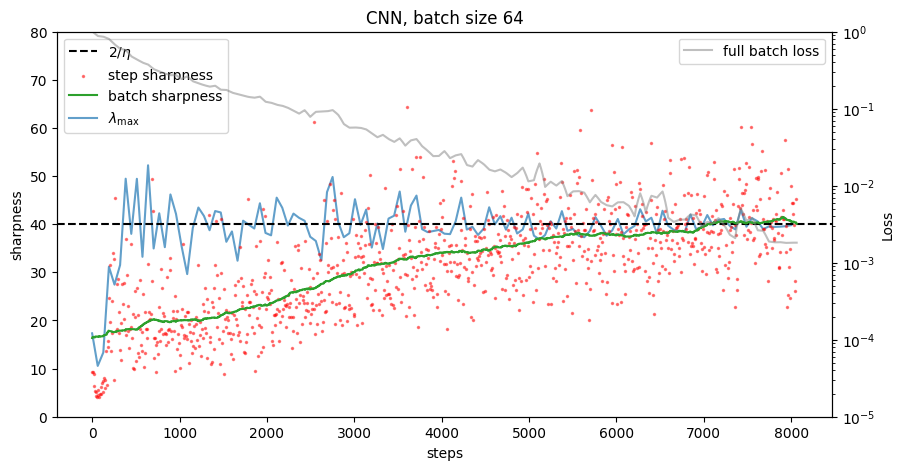} \\
    \includegraphics[width=0.45\textwidth]{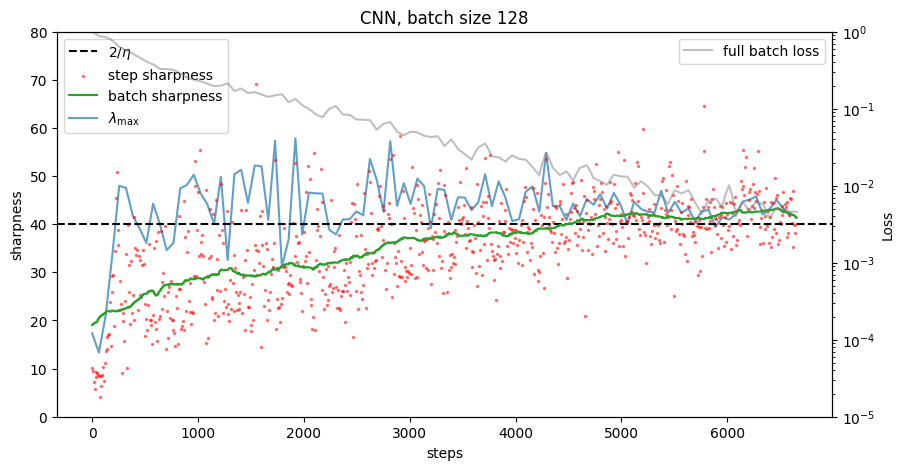} & \includegraphics[width=0.45\textwidth]{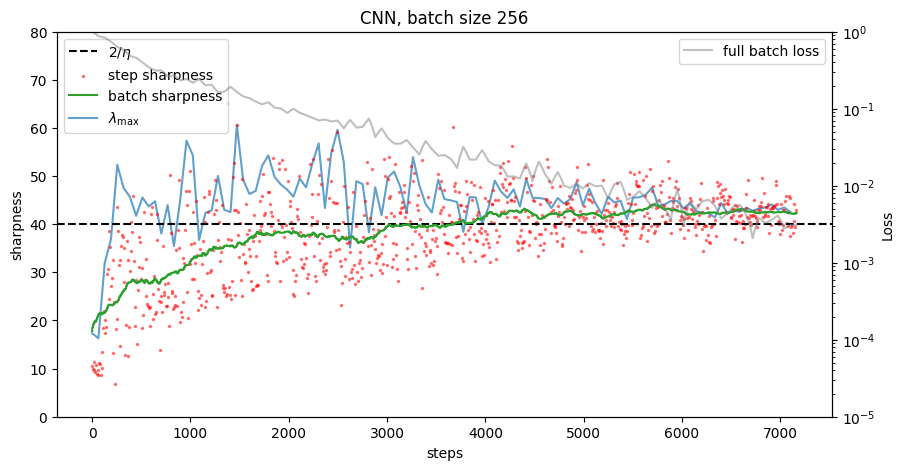} \\
  \end{tabular}
  \caption{\textbf{CNN}: 5 layers (3 convolutional, 2 fully-connected), \textbf{step size 0.05}, 8k subset of \textbf{SVHN}. Comparison between: step sharpness, aka \textit{Batch Sharpness} without expectation over batches and measured on the current batch (red dots, time-smoothing would be $\approx$ \textit{Batch Sharpness}), the empirical \textit{Batch Sharpness} (green line), the $\lambda_{\max}$ (blue line).}
  \label{fig:eoss_svhn_bs_cnn}
\end{figure}

\newpage

\begin{figure}[ht!]
  \centering
  \begin{tabular}{cc}
    \includegraphics[width=0.45\textwidth]{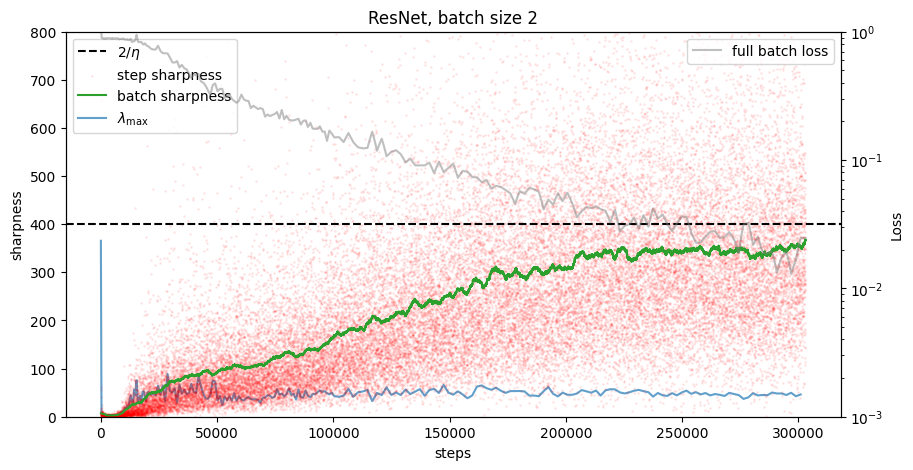} & \includegraphics[width=0.45\textwidth]{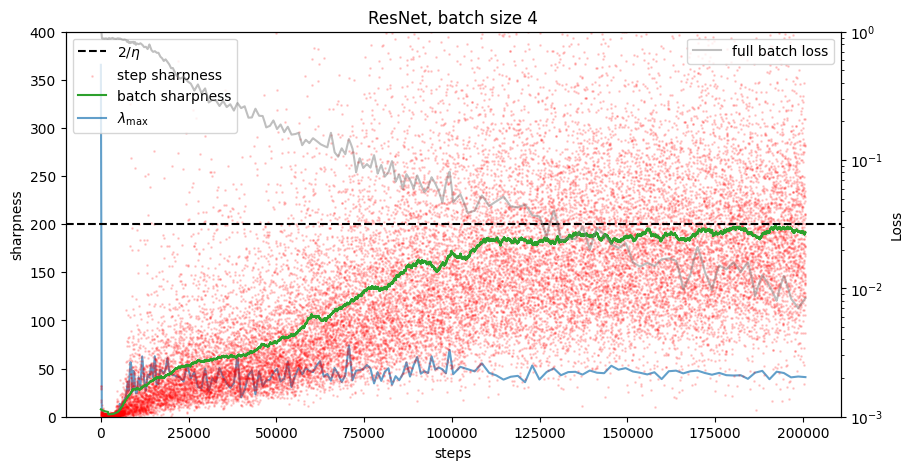} \\
    \includegraphics[width=0.45\textwidth]{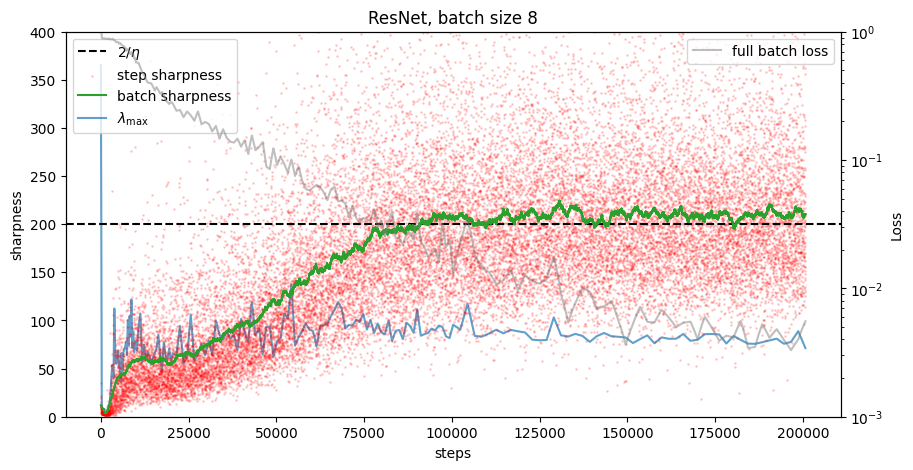} & \includegraphics[width=0.45\textwidth]{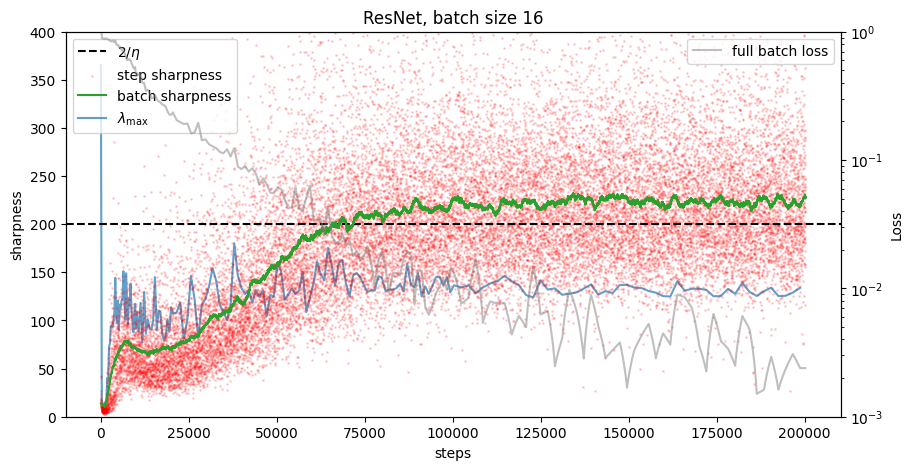} \\
    \includegraphics[width=0.45\textwidth]{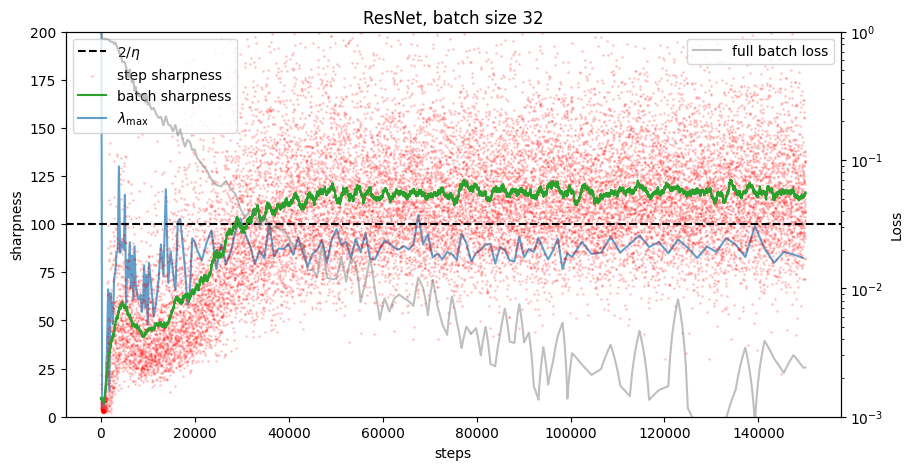} & \includegraphics[width=0.45\textwidth]{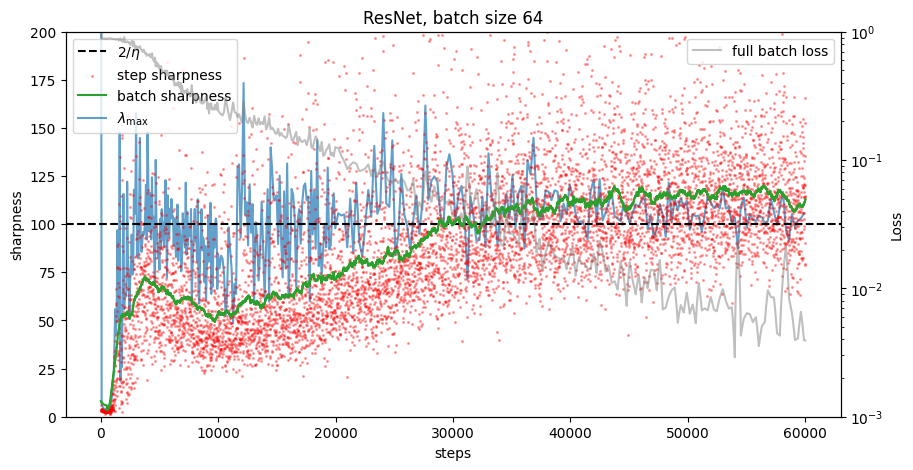} \\
    \includegraphics[width=0.45\textwidth]{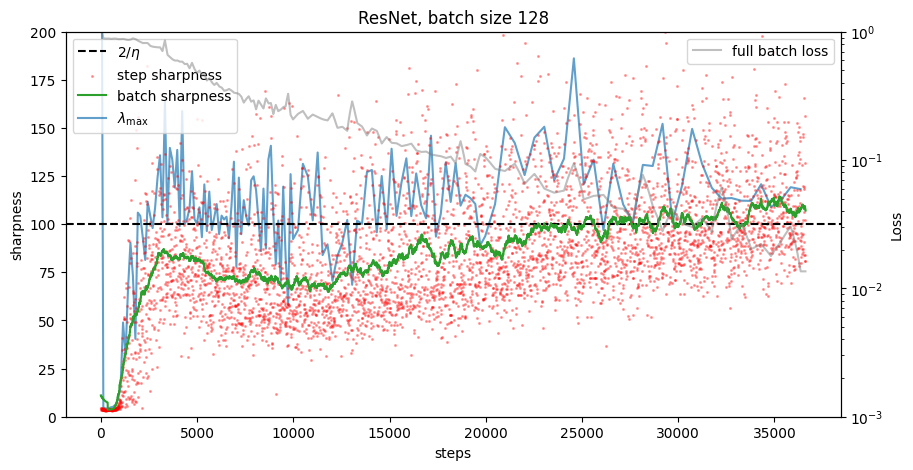} & \includegraphics[width=0.45\textwidth]{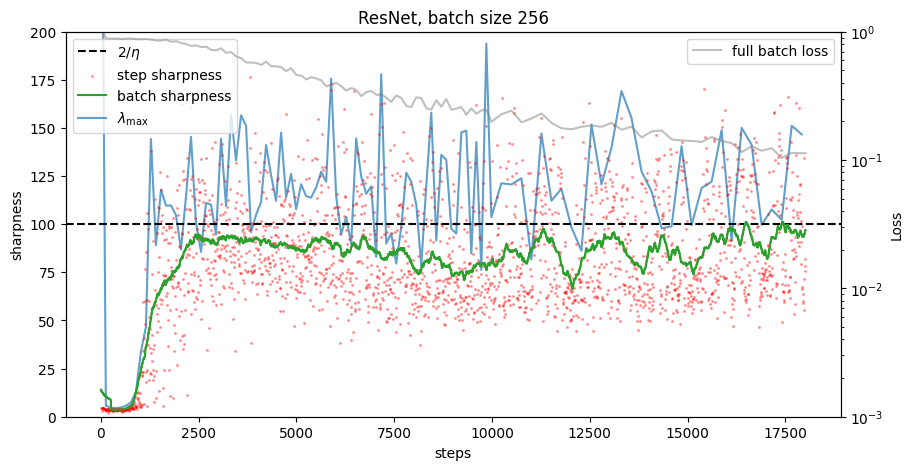} \\
  \end{tabular}
  \caption{\textbf{ResNet-10}, step size 0.005, 8k subset of \textbf{SVHN}. Comparison between: step sharpness, aka \textit{Batch Sharpness} without expectation over batches and measured on the current batch (red dots, time-smoothing would be $\approx$ \textit{Batch Sharpness}), the empirical \textit{Batch Sharpness} (green line), the $\lambda_{\max}$ (blue line).}
  \label{fig:eoss_bs_resnet}
\end{figure}

%%% Here until 07/19/25 there was the previous version of this section commented

\clearpage

\end{document}